\documentclass[mnsc,nonblindrev]{informs3-hide-journal} 

\OneAndAHalfSpacedXI

\usepackage{natbib}
 \bibpunct[, ]{(}{)}{,}{a}{}{,}%
 \def\bibfont{\small}%
\TheoremsNumberedThrough     

\EquationsNumberedThrough    

\MANUSCRIPTNO{} 

\usepackage{microtype}
\usepackage{graphicx}
\usepackage{subcaption}
\usepackage{booktabs} 

\usepackage{hyperref}

\usepackage{amsmath}
\usepackage{amssymb}
\usepackage{mathtools}
\usepackage{graphicx}    
\usepackage{microtype}
\usepackage{booktabs}
\usepackage{algorithm}
\usepackage{algorithmic}
\usepackage{natbib}
\usepackage{xcolor}
\usepackage{enumitem}
\usepackage{multirow}
\usepackage{tabularx}
\usepackage{tikz}
\usetikzlibrary{arrows.meta,positioning,fit,calc,shapes.misc}
\usepackage{bm}

\usepackage[capitalize,noabbrev]{cleveref}

\setlist{nosep,leftmargin=*}

\hypersetup{
    colorlinks=true,
    linkcolor=blue!70!black,
    citecolor=blue!70!black,
    urlcolor=blue!70!black
}

\newcommand{\Cov}{\mathrm{Cov}}

\newcommand{\bU}{\mathbf{U}}
\newcommand{\bx}{\mathbf{x}}

\newcommand{\R}{\mathbb{R}}
\newcommand{\E}{\mathbb{E}}
\newcommand{\Pe}{\mathbb{P}}

\newcommand{\Ivec}{\mathbf{I}}
\newcommand{\Xvec}{\mathbf{X}}
\newcommand{\xvec}{\mathbf{x}}

\newcommand{\fvec}{\mathbf{f}}
\newcommand{\zvec}{\mathbf{z}}
\newcommand{\yvec}{\mathbf{y}}
\newcommand{\gvec}{\mathbf{g}}
\newcommand{\Jvec}{\mathbf{J}}
\newcommand{\betavec}{\bm{\beta}}
\newcommand{\Sigmavec}{\bm{\Sigma}}

\newcommand{\Psivec}{\bm{\Psi}}
\newcommand{\tauvec}{\bm{\tau}}
\newcommand{\thetavec}{\bm{\theta}}

\newcommand{\zetavec}{\bm{\zeta}}
\newcommand{\pivec}{\bm{\pi}}

\newcommand{\mB}{\mathcal{B}}

\newcommand{\mD}{\mathcal{D}}

\newcommand{\mX}{\mathcal{X}}

\newcommand{\Gn}{\mathbb{G}_n}

\renewcommand{\Pr}{{\mathrm{P}}}

\renewcommand{\hat}{\widehat}

\renewcommand{\Pr}{{\mathrm{P}}}

\renewcommand{\hat}{\widehat}
\renewcommand{\leq}{\leqslant}
\renewcommand{\geq}{\geqslant}

\newcommand{\supp}{\mathrm{supp}}

\usepackage[textsize=tiny]{todonotes}

\renewcommand{\arraystretch}{0.92} 
\usepackage{algorithmic}

\newcommand{\mx}[1]{\textcolor{black}{#1}}

\newcommand{\cl}[1]{\textcolor{black}{#1}}
\newcommand{\rev}[1]{\textcolor{black}{#1}}

\begin{document}


\RUNAUTHOR{Lu et al.}

\RUNTITLE{Generative Augmented Inference}
\RUNTITLE{Generative Augmented Inference}

\TITLE{Generative Augmented Inference of\\LLM-generated Data for Market Research:\\Theory and Empirical Evidence}

\ARTICLEAUTHORS{%
\AUTHOR{Cheng Lu}
\AFF{Olin Business School, Washington University in St.\ Louis, USA, \EMAIL{cheng.lu@wustl.edu}}
\AUTHOR{Mengxin Wang}
\AFF{Naveen Jindal School of Management, University of Texas at Dallas, USA, \EMAIL{mengxin.wang@utdallas.edu}}
\AUTHOR{Dennis J.\ Zhang}
\AFF{Olin Business School, Washington University in St.\ Louis, USA, \EMAIL{denniszhang@wustl.edu}}
\AUTHOR{Heng Zhang}
\AFF{W.\ P.\ Carey School of Business, Arizona State University, USA, \EMAIL{hengzhang24@asu.edu}}
} 

\ABSTRACT{%
Marketing research often relies on parameters estimated from costly human-generated data, such as conjoint survey responses, purchase decisions, and field experiment outcomes. Recent advances in large language models (LLMs) and other AI systems offer inexpensive auxiliary data, but introduce a new challenge: AI outputs are not direct observations of the target outcomes, but could involve high-dimensional representations with complex and unknown relationships to human labels. Conventional methods leverage AI predictions as direct proxies for true labels, which can be inefficient or unreliable when this relationship is weak or misspecified. We propose Generative Augmented Inference (GAI), a general framework that incorporates AI-generated outputs as informative features for estimating models of human-labeled outcomes. GAI uses an orthogonal moment construction that enables consistent estimation and valid inference with a flexible, nonparametric relationship between LLM-generated outputs and human labels. We establish asymptotic normality and a key dominance result: under random labeling, GAI is optimal within a unified class of debiased estimators—including human-data-only estimators and state-of-the-art debiasing methods—and delivers strict improvements under a mild informativeness condition. Even when the labeled sample is not representative of the target population, an extended variant of GAI still dominates the weighted human-data-only estimator. Empirically, GAI outperforms benchmarks across diverse marketing research settings. In a conjoint analysis, it halves estimation error and reduces human labeling requirements by over 75\%. In a pricing study, it consistently outperforms alternative estimators when all methods receive identical auxiliary inputs. In a health insurance study, it saves over 90\% of labels while preserving decision accuracy.
}

\KEYWORDS{Large Language Model, Data Augmentation, Causal Inference, Demand Estimation}

\maketitle
%

\section{Introduction}
\label{sec:intro}

Marketing research across virtually every subfield relies on human-generated data that is expensive, time-consuming, or difficult to collect at scale. Estimating consumer preferences requires conjoint surveys costing \$5--20 per respondent \citep{green1990conjoint, bradlow2004modeling}; calibrating demand and choice models depends on transaction records that capture realized demand \citep{guadagni1983logit, berry1993automobile}; learning how customers respond heterogeneously to marketing policies typically requires large-scale customer-level data \citep{ban2021personalized, chen2022privacy, lu2025optimizing}; and measuring treatment effects demands carefully designed field experiments \citep{sahni2016advertising, huang2020social}. In each case, the cost of obtaining reliable human-generated labels fundamentally constrains sample sizes and, consequently, the precision of parameter estimates that inform marketing and business decisions.

The emergence of large language models (LLMs) and AI-based generation systems offers a potential solution to this data scarcity problem \citep{brown2020language, openai2023gpt4}. AI systems can generate annotations, simulate survey responses, and produce predictions at costs orders of magnitude lower than human labeling---often \$0.01--0.10 per observation. The digital twin paradigm, which uses LLM-based simulations of individual decision-makers to generate synthetic behavioral data, exemplifies this opportunity \citep{toubia2025database}. If AI-generated data could reliably augment scarce human labels, researchers could achieve the statistical precision of large samples while collecting only a fraction of the human-labeled observations.

However, realizing these efficiency gains while maintaining valid statistical inference presents a fundamental challenge for two reasons. First, when AI outputs are intended as noisy substitutes for human labels, they can differ systematically from human judgments in complex, context-dependent ways. In our retail pricing application, LLM-based digital twins \citep{toubia2025database} predict purchases at a 30\% rate when the true rate is 44\%---a substantial and systematic bias. 
Naively pooling such AI-generated data with human labels would introduce bias that could overwhelm any efficiency gains, resulting in misleading parameter estimates and invalid confidence intervals. Second, and more importantly, AI outputs can often be structured auxiliary representations rather than just noisy labels---they may include reasoning traces, confidence scores, persona descriptions, or high-dimensional embeddings that are categorically different from the outcome of interest. This distinction fundamentally reframes the statistical problem: rather than correcting inaccurate predictions, the goal is to leverage auxiliary information efficiently for valid inference.


Therefore, the core methodological question is: \emph{How can AI-generated signals be incorporated into statistical estimation without requiring them to serve as accurate surrogates for the outcome?} Existing approaches to AI-augmented inference, most notably Prediction-Powered Inference (PPI; \citealp{angelopoulos2023prediction, angelopoulos2023ppi++}), utilize AI-predicted labels as noisy proxies of the true outcome. Unlike the ``proxy" view, modern generative AI systems can produce outputs that are categorically different from the outcomes researchers seek to estimate. This highlights the need for an inferential framework that does not require AI outputs to function as surrogate labels. 

We propose \emph{Generative Augmented Inference} (GAI), a framework that resolves this tension through a conceptual shift: rather than treating AI outputs as proxies for outcomes, we treat them as \emph{informative features} that help predict outcomes (\cref{fig:method}). This distinction is subtle but consequential. Under the proxy view, AI outputs must approximate human labels to be useful; under the feature view, AI outputs need only be \emph{informative} of human labels to contribute information.
\begin{figure*}[t]
  \centering
  \includegraphics[width=0.8\textwidth]{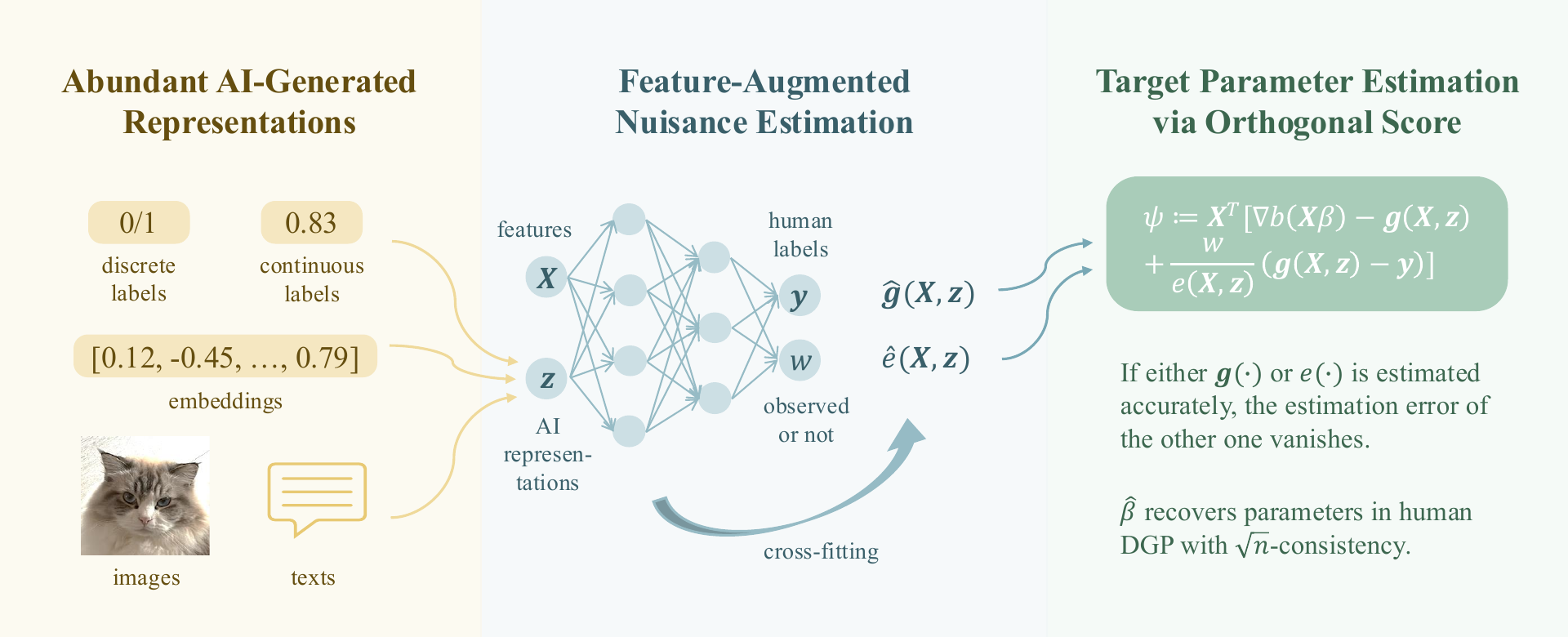}
  \caption{GAI incorporates AI outputs as auxiliary features rather than surrogate labels. In contrast to PPI, which applies corrections at the loss-function level and relies on AI predictions approximating the outcome, GAI embeds auxiliary signals within a Neyman-orthogonal score function. This construction allows the estimator to leverage auxiliary data for bias correction and efficiency gains, even when AI representations are biased, or weakly informative.}
  \label{fig:method}
\end{figure*}
To illustrate, consider an LLM predicting whether a consumer will purchase a product. Under the proxy view, its prediction substitutes for the true outcome, and the problem is framed as correcting the bias. Under the feature view, even a biased prediction, together with other AI outputs, can still be useful if it helps predict actual purchases: a predicted “no” may lower the expected purchase probability, whereas a predicted “yes” may raise it. GAI exploits the representations as features rather than requiring the predictions to be accurate proxies of human labels.

Concretely, consider estimating the parameters of a generalized linear model (GLM) from a dataset where human labels are observed for only a subset of observations, but AI-generated auxiliary information is available for all observations. The auxiliary information may be an LLM's binary purchase prediction, a high-dimensional embedding, unstructured texts, or any other AI-generated signal---its structure is left deliberately unrestricted. GAI proceeds in two steps:
\begin{enumerate}
    \item \textbf{Estimate nuisance functions.} Using the labeled subset, we estimate two functions: (i) the \emph{outcome prediction function}---the best prediction of human labels given both covariates and AI outputs, capturing whatever predictive information the AI signal contains about the outcome; and (ii) the \emph{propensity score}---the probability that an observation receives a human label, which may depend on covariates and AI outputs. These nuisance functions can be estimated using flexible machine learning methods such as random forests, neural networks, or gradient boosting.
    \item \textbf{Construct a bias-corrected estimator.} We combine the estimated nuisance functions into a Neyman-orthogonal score function that integrates information from both labeled and unlabeled observations. The Neyman orthogonality property \citep{chernozhukov2018double} ensures that estimation error in the nuisance functions affects the final parameter estimates only at second order. This robustness allows researchers to use flexible, data-adaptive methods for nuisance estimation without compromising the validity of inference on the target parameters.
\end{enumerate}

\subsection{Contribution}

This paper makes three main contributions to the marketing and business research literature:

\begin{enumerate}
    \item \textbf{Problem Formulation and Methodology.} 
    We identify that AI-assisted labeling creates a new statistical regime: AI outputs are not noisy surrogates for human labels, but structured auxiliary representations that can be leveraged as informative features. This distinction leads to a different problem formulation, in which AI outputs augment the covariate space rather than substitute for missing outcomes. Building on this insight, we develop GAI, a unified framework for AI-augmented estimation in GLMs, where we allow model misspecification. The method exploits a natural orthogonal moment structure, yielding a principled estimator that supports valid statistical inference while accommodating the full diversity of modern AI outputs, including discrete predictions, continuous scores, and high-dimensional or unstructured representations such as text embeddings. In this way, GAI provides a general statistical framework for integrating AI-generated signals into classical estimation problems.

    \item \textbf{Theoretical Guarantees.} 
    We establish that the GAI estimator is asymptotically normal, $\sqrt{n}$-consistent, and admits a closed-form variance expression (Theorem~\ref{thm:normality}). \mx{Beyond these standard asymptotic guarantees, we prove a key dominance property: under random labeling, GAI is guaranteed to weakly improve upon a unified family of debiased estimators, which includes human-data-only estimation and PPI-based methods (\cref{thm:dominance}).}
    Thus, \rev{under the random-labeling conditions of \cref{thm:dominance},} incorporating AI outputs can never worsen performance relative to ignoring auxiliary AI data entirely, regardless of how biased or inaccurate those outputs may be. \rev{We further extend the theory to \emph{covariate shift}, where the labeled sample need not be representative of the target population (Section~\ref{sec:covshift}): we develop a weighted GAI estimator for a known density ratio and a doubly robust GAI estimator for an estimated density ratio, each retaining $\sqrt{n}$-asymptotic normality for the target parameter.} \cl{We also prove that the weighted GAI estimator improves efficiency relative to weighted human-data-only estimation.}

    We further show that the efficiency gains from GAI admit an interpretable variance decomposition (Corollary~\ref{cor:variance_decomposition}). Specifically, variance reduction can arise from three distinct sources: (i) \emph{sample expansion}, where auxiliary observations effectively enlarge the sample available for estimation; (ii) \emph{representational power of AI}, where AI outputs provide richer transformations of the covariates and help approximate complex conditional expectations even when they contain no information beyond $\Xvec$; and (iii) \emph{extra predictive information from AI}, where AI outputs contain signal about human labels beyond the observed covariates. This decomposition provides a unified explanation for when and why AI augmentation improves statistical efficiency.

    \item \textbf{Empirical Evidence Across Three Auxiliary Data Regimes.}
    We evaluate GAI across three real-world business applications spanning core estimation tasks in marketing research. The three applications span progressively more favorable regimes for existing methods. In vaccine conjoint analysis, we estimate a logit model using both LLM choice predictions and high-dimensional embeddings of the model’s reasoning. The LLM achieves only 54\% prediction accuracy and observes no additional information beyond features in the outcome model. In retail pricing, all estimators receive the same systematically biased digital-twin purchase prediction \citep{toubia2025database}, which contains extra information from persona characteristics. This controlled comparison isolates the benefit of GAI’s feature-based formulation from the additional reasoning information. Finally, in health insurance coverage analysis, we use accurate, well-calibrated ML predictions constructed by the PPI authors, creating a setting particularly favorable to existing methods. 
    
    GAI delivers consistent gains across all three regimes, demonstrating robustness to differences in predictive quality, auxiliary-data format, and information content.
    Across all settings, GAI substantially improves point estimation accuracy, maintains near-nominal confidence interval coverage in most configurations without inflating interval width, and reduces required human labels by 67--90\%. The consistency of these results across diverse auxiliary data formats, information structures, and prediction quality levels demonstrates the broad applicability of the framework.

\end{enumerate}

\section{Literature Review}
\label{sec:lit}

\textbf{AI and Machine Learning in Marketing Research.} Marketing research has increasingly embraced AI and machine learning methods across a wide range of applications. AI-based algorithms are now used in pricing \citep{miklos2019collusion, zhang2021frontiers}, targeting \citep{rafieian2021targeting, iyer2024competitive, lu2025optimizing, rafieian2025multiobjective}, personalization and recommendation \citep{yoganarasimhan2020search, xu2022scalable, jiang2026sparsity}, product design \citep{toubia2017idea, huang2016consumer, sisodia2025generative}, and sales coaching \citep{luo2021artificial}. Beyond these applications, researchers are exploring how AI can enhance or substitute for traditional data collection approaches. LLMs have shown promise for simulating consumer behavior in market research tasks \citep{chen2023emergence, wang2024llm}. The digital twin paradigm, which uses LLM simulations of individual decision-makers, has emerged as a promising approach for generating synthetic behavioral data \citep{toubia2025database}. Recent work also explores training open-source LLMs to solve optimization problems \citep{huang2025orlm}. However, recent studies have identified important limitations: LLMs may fail to replicate the mechanisms underlying human decisions and produce data with biases that differ from human behavior \citep{gui2023challenge, goli2024frontiers, chen2025manager}. Our work contributes to this literature by providing a principled framework for combining scarce human-labeled data with abundant AI-generated representations while remaining robust to systematic AI errors.

\textbf{Transfer Learning.} Classical semi-supervised methods \citep{nigam2000text, zhu2009introduction, lee2013pseudo, chakrabortty2018efficient} use unlabeled features to improve prediction, while transfer learning \citep{pan2009survey, weiss2016survey} addresses distribution shift between source and target domains, such as leveraging existing knowledge to improve decisions in new settings \citep{chen2025data, duan2024target} or learning policies robust to environment shifts between training and deployment \citep{si2023distributionally}. 
Similar challenges arise in business research: demand models, pricing algorithms, targeting policies, and operations management require real-world behavioral data that may be scarce, expensive, or difficult to collect at scale \citep{bertsimas2006robust, besbes2013demand, levi2015data, kaynar2023estimating, xiong2023large, singhvi2024data, tang2025offline}. The AI-augmented estimator (AAE) framework \citep{wang2024llm} addresses the analogous challenge of combining AI and human data through parametric debiasing. Our setting is distinct: we have AI-generated \textit{representations} (not just labels) for auxiliary data, and utilize them as features in semi-parametric estimations. GAI can be viewed as addressing the sim-to-real problem in inference: how to leverage abundant simulated or AI-generated data flexibly while maintaining valid inference grounded in limited real observations.

\textbf{Prediction-Powered Inference.} \citet{angelopoulos2023prediction, angelopoulos2023ppi++} introduced PPI and PPI++ for constructing valid confidence intervals by leveraging ML predictions on unlabeled data. PPI uses labeled data to estimate and correct for the bias in predictions, yielding valid inference when predictions are reasonably accurate proxies for outcomes. A recent concurrent work by \citet{wang2025efficient} extends this line of research by fine-tuning LLMs for downstream statistical inference and applying PPI-based rectification. \mx{We show that GAI weakly dominates all estimators in a unified family \rev{of} debiased estimators, which includes PPI and PPI++.} We compare GAI and PPI-based methods more comprehensively in Online Appendix \ref{apdx: example_ppi}

\textbf{Neyman Orthogonality and Doubly Robust Estimation.}
The AI-assisted labeling problem induces a specific data structure: outcomes are partially missing, but for all observations we observe structured AI-generated representations that may be high-dimensional, unstructured, or categorically different from the outcome. This structure is richer than the classical missing-data setting \citep{rubin1976inference, little2019statistical}, where the researcher observes only covariates for units with missing outcomes. In our setting, the auxiliary signal encodes predictive information about the latent response that goes beyond what standard covariates provide.

This problem structure naturally admits an orthogonal moment representation. Semiparametric methods developed for missing outcomes, such as augmented inverse probability weighted (AIPW) estimators, provide useful conceptual tools \citep{robins1994estimation, robins1995semiparametric, hirano2003efficient, bang2005doubly}. 
While the resulting GAI score function shares the algebraic form of AIPW estimators, the role of each component differs substantively.
First, the augmentation term leverages AI-generated representations
that encode high-dimensional predictive information about the latent response, rather than correcting for missing outcomes using observed covariates alone.
Second, our analysis is developed within the double machine learning framework \citep{chernozhukov2018double} that has been adopted and extended in recent econometrics and ML research \citep{kallus2024localized}. We provide a clean set of conditions
and practical implementation tools such as cross-fitting, allowing our estimator to achieve valid inference across a broad class of GLMs. Our contribution is thus not a new semiparametric principle, but rather the recognition that AI-generated representations create a new estimation regime that is well-suited to orthogonal moment methods, together with the formal analysis of the estimator's properties in this context.

An earlier, preliminary version of this work was accepted at the International Conference on Machine Learning (ICML) \citep{anonymous2026gai}. The present manuscript substantially extends that version by \cl{generalizing the dominance theorem to a unified family of debiased estimators,} developing a variance-decomposition result that captures the sources of efficiency gains, \cl{extending the framework to covariate-shift settings, }strengthening the comparison with baseline approaches, providing additional implementation guidance for model selection in nuisance estimation, and adding a new application that extends the experiment to richer AI data regimes.

\section{Problem Formulation}
\label{sec:problem}
In this section, we formulate the problem within a 
\textit{misspecified GLM} framework and describe the corresponding data-generating and data-collection processes.

\subsection{A Misspecified Generalized Linear Model Framework and Applications}
\label{sec:framework}

We consider the following estimation framework: 
We let $\yvec \in \R^k$ denote a human label of interest, associated with a random feature matrix $\Xvec \in \mathcal{X} \subset \R^{k \times d}$. $\yvec\,|\,\Xvec$ follows an \emph{unknown} ground truth distribution. We aim to recover a target parameter $\betavec^* \in \mathcal{B} \subset \R^d$ , which solves:
\begin{equation}
    \betavec^*\,\in\,\arg\min_{\betavec \in \mathcal{B}} \E\left[\ell(\Xvec, \yvec; \betavec)\right],
    \label{eq:population}
\end{equation}
where $\ell(\Xvec, \yvec; \betavec) := b(\Xvec \betavec) - \yvec^{\top}\Xvec \betavec$ denotes a loss function and $b(\cdot): \Theta \to \R$ is a known twice-continuously differentiable convex function on an open convex set $\Theta \subset \R^k$. 
We assume that $\mB$ is an open convex set such that, for some compact set $\Breve{\mX}$ with $\mX \subset \Breve{\mX}$, it holds that $\Xvec \betavec \in \Theta$ whenever $\Xvec \in \Breve{\mX}$ and $\betavec \in \mB$.
Under these conditions, one can verify that 
$\betavec^*$ satisfies the first-order condition:
\begin{equation}
    \E\!\left[\nabla_{\betavec}\ell(\Xvec, \yvec; \betavec^*)\right]
\,=\,
\E\left[\Xvec^\top \left(\nabla b(\Xvec\betavec^*) - \yvec\right)\right]\,=\,0\,.
    \label{eq:foc}
\end{equation}


An important special case of our framework is the class of canonical GLMs, which impose a structured functional form on the conditional distribution of $\yvec \mid \Xvec$. In particular, canonical GLMs assume that $\yvec \mid \Xvec$ belongs to the exponential family, with density $f(\yvec\,|\,\Xvec; b, \betavec^*)  \propto \exp\left\{b(\Xvec\betavec^*) - \yvec^\top \Xvec\betavec^*\right\}$. Under this specification, \eqref{eq:population} corresponds to the expected log-likelihood minimization estimator and $\Xvec^\top \left(\nabla b(\Xvec\betavec^*) - \yvec\right)$ in \eqref{eq:foc} is the canonical score function. Consequently, our framework covers canonical GLMs that are widely used in applied modeling pipelines, including:

\begin{itemize}
    \item \textit{Linear regression:} $k=1$, $b(\theta)=\tfrac12\theta^2$. 
    This corresponds to squared-loss estimation for continuous outcomes and is the workhorse model in empirical studies, including treatment-effect estimation, forecasting, and policy evaluation.
    Note that linear regression, viewed as a special case of GLMs, requires normal densities conditional on $\Xvec$. 

    \item \textit{Logistic Regression and MNL:} for $k$ non-baseline classes,
    $b(\thetavec)=\log\!\big(1+\sum_{j=1}^{k}\exp(\theta_j)\big)$.
    This is the standard model for discrete choice among multiple alternatives, which is widely used for multiclass classification applications, such as text classification, image recognition, and recommendation systems, where predicted utilities determine selection probabilities among competing options. When $k=1$, this reduces to logistic regression.

    \item \textit{Poisson regression:} $k=1$, $b(\theta)=\exp(\theta)$.
    This models count outcomes and event rates, with applications such as demand incidence, click/conversion counts, and arrival processes in computer systems. Poisson GLMs are standard for modeling event rates in systems monitoring, reliability, and online experimentation.
\end{itemize}

While canonical GLMs posit a specific exponential-family form for the conditional distribution of $\yvec \mid \Xvec$, real-world data-generating processes rarely conform exactly to these assumptions. Our framework explicitly accommodates such deviations by relaxing the exponential-family requirement. In particular, we impose no restrictions on the true conditional distribution of $\yvec \mid \Xvec$ and allow for an arbitrary, unknown ground-truth relationship. As a result, our approach permits model misspecification and interprets the target parameter as the best approximation within a selected GLM class: With a $b(\cdot)$ function from GLM family, \eqref{eq:population} seeks a GLM density closest to the ground truth density in the conditional Kullback–Leibler (KL) divergence, i.e., 
$$
\betavec^*\,\in\,\arg\min_{\betavec \in \mathcal{B}}
\E_{\Xvec}\left[
\E_{\yvec \mid \Xvec}\left[
\log\left(
\frac{f^*(\yvec \mid \Xvec)}{f(\yvec \mid \Xvec; b, \betavec)}
\right)
\right]
\right],
$$
where $f^*(\yvec\mid \Xvec)$ corresponds to the true, unknown conditional density.
Therefore, we broadly view \eqref{eq:population} as a \textit{best-in-class} estimator, which can be regarded as the best generalized linear representation of the relationship between $\yvec$ and $\Xvec$.

\subsection{Data Generation and Collection}
\label{sec:data-generation}

To estimate the best-in-class parameter $\betavec^*$, we rely on observed data and an appropriate estimation procedure. In this section, we describe the available data, the underlying data-generating process, and the mechanism by which observations are collected.

Consider a dataset $\mathcal{D} = \{\Xi_i = (\Xvec_i, \yvec_i, w_i, \zvec_i)\}_{i=1}^n$ with \emph{partially observable} human labels. \cl{In the baseline setting (Sections~\ref{sec:problem}--\ref{sec:theory}), the observed data follows the same distribution as the target population underlying \eqref{eq:population}.}
Each data sample is i.i.d. In each sample, $\Xvec_i$ represent the feature matrix, $\yvec_i$ denote the human label of interest. Due to the substantial cost of human labeling, not all human labels are observed. We let $w_i \in \{0,1\}$ indicate whether the human label $\yvec_i$ is observed. While human annotations are expensive, AI-generated labels are comparatively low-cost. Therefore, for each sample we additionally observe $\zvec_i$, which denotes AI-generated auxiliary information, whose structure is left deliberately unrestricted. $\zvec$ may be structured (e.g., categorical labels, probability scores, or continuous values), high-dimensional vectors (e.g., embeddings), or unstructured objects (e.g., texts, images). Given $\mathcal{D}$, we define the \emph{primary} (human-labeled) and \emph{auxiliary} (AI-only) subsamples
$\mD^{\text{\upshape P}}=\{\Xi_i\in\mD:w_i=1\}, ~
\mD^{\text{\upshape A}}=\{\Xi_i\in\mD:w_i=0\}
$
and let $\rev{n_P}=|\mD^{\text{\upshape P}}|$, $\rev{n_A}=|\mD^{\text{\upshape A}}|$.

\subsubsection{Human Annotation Mechanism} 
We assume human labels are unobserved at random conditional on features and AI representations, i.e., $w \perp \yvec \mid \Xvec, \zvec$. This assumption is natural when primary data collection is carefully designed (e.g., random sampling of human annotations). We define $e^*(\Xvec, \zvec) = \E[w \,|\, \Xvec, \zvec] > \kappa$ for some $\kappa > 0$. In particular, $e^*(\Xvec, \zvec)$ represents the probability that the actual human label is observed given $\Xvec$ and $\zvec$. Such a data collection mechanism is common in practice, as the decision to collect real human labels can be based on observed feature information and AI-generated signals. For example, practitioners may prioritize human annotation for samples with specific covariate profiles or for cases where AI-generated predictions exhibit high uncertainty.

\subsubsection{AI Generation Mechanism} 
The core idea of our method is simple: we utilize $\zvec$ to extrapolate $\yvec$ on 
$\mD^{\text{\upshape A}}$ using a principled approach. 
Given AI-generated auxiliary information $\zvec$, we define $\gvec^*(\Xvec, \zvec) = \E[\yvec \,|\, \Xvec, \zvec]$, which serves as a key object that we estimate using machine learning methods in an intermediate step of our method (see Section~\ref{subsec:method}).
The benefit of introducing $\zvec$ is twofold. First, the AI signal $\zvec$ may contain predictive information about $\yvec$ beyond what is captured by $\Xvec$. This can occur when AI outputs are generated using auxiliary information not included in $\Xvec$, making $\zvec$ informative about $\yvec$ even after conditioning on $\Xvec$. Second, even when $\yvec \perp \zvec \mid \Xvec$, querying a powerful AI system can produce representations $\zvec$ that summarize complex patterns in $\Xvec$ relevant for predicting $\yvec$.
Learning $\gvec^*(\Xvec,\zvec)$ using machine learning methods is often easier compared with learning $\E[\yvec \mid \Xvec]$ in practice. For these two reasons, we can use the estimate of $\gvec^*(\Xvec,\zvec)$ to effectively extrapolate $\yvec$ on $\mD^{\text{\upshape A}}$ and 
the convergence requirement in Assumption~\ref{ass:ml_rate} is likely to be satisfied. 
Then, instead of using only 
$\mD^{\text{\upshape P}}$, we can utilize the entire sample $\mD$ to estimate the GLM with provably strong empirical performance. We illustrate with concrete examples below.
\vspace{2mm}

\begin{example}[$\yvec\,\not\perp\,\zvec \mid \Xvec$: Digital Twin Generation]
\label{example:correlated}
We tailor this example to the retail pricing experiment in \cref{sec:pricing}. Each observation corresponds to an individual--product--price query. For simplicity, we let $\Xvec \in \R^{n\times 2}$ denote the primary covariates used in the demand model (an intercept and price).
In practice, there can be other features included in $\Xvec$.
Let $\bU \in \R^{p}$ denote a vector of individual-level persona and demographic features. In the digital-twin setting, the agent is trained on each individual’s persona information, so the (unobserved) purchase outcome depends on both price and $\bU$. Concretely, suppose the ground-truth data-generating process is
\begin{align}
\Pe(y\,=\,1 \mid \Xvec,\bU)
\,=\,
\sigma\!\left(\eta_0^{*} + \eta_1^{*}\,\mathrm{price} + h(\bU)\right),
\label{eqn:pricing_truth_U}
\end{align}
where $\sigma(t)=1/(1+e^{-t})$ and $h(\bU)$ is an unknown (possibly high-dimensional) function capturing heterogeneous baseline propensity to purchase across persona types. The parameter of interest for the pricing study is the \emph{population}  effect of $\Xvec$ in a model that conditions only on $\Xvec$ (rather than on $\bU$). For instance, the analyst may wish to report price sensitivity under counterfactual population compositions of persona/demographic groups.

Now suppose we also observe an AI-generated digital-twin label $z \in \{0,1\}$. Because the digital twin is constructed using $\bU$, its prediction is also a function of $(\Xvec,\bU)$. For instance, we can model the twin output as
\begin{align}
\Pe(z\,=\,1 \mid \Xvec,\bU)
\,=\,
\sigma\!\left(\gamma_0 + \gamma_1\,\mathrm{price} + \tilde h(\bU)\right),
\label{eqn:pricing_twin_U}
\end{align}
where $\tilde h(\bU)$ reflects the twin’s learned representation of persona-driven purchase propensity. Under \eqref{eqn:pricing_truth_U}--\eqref{eqn:pricing_twin_U}, both $y$ and $z$ depend on the latent persona vector $\bU$. Clearly, after conditioning on $\Xvec$, the variables remain correlated since $\bU$ is a common driver of both the true purchase decision and the twin’s prediction. In this sense, $\yvec \not\perp \zvec \mid \Xvec$ and $z$ provides \emph{additional information beyond $\Xvec$} about $y$ by acting as a proxy for latent heterogeneity encoded in $\bU$. 
\hfill $\blacksquare$
\end{example}

\vspace{2mm}

Crucially, we note that the notation $\gvec^*(\Xvec, \zvec)$ does \emph{not} presume that $\yvec$ must be correlated with $\zvec$ conditional on $\Xvec$. In fact, our approach allows $\yvec$ and $\zvec$ to be conditionally independent given $\Xvec$, which is also a common scenario in AI data-generation schemes. In such cases, $\gvec^*(\Xvec, \zvec) = \E[\yvec \,|\, \Xvec, \zvec] = \E[\yvec \,|\, \Xvec]$. When $\gvec^*(\Xvec, \zvec) = \E[\yvec \,|\, \Xvec]$, including $\zvec$ as an argument in $\gvec^*$ simply reflects that AI-generated information may serve as a useful representation for the unknown ground-truth conditional expectation function $\E[\yvec \mid \Xvec]$, as illustrated in Example \ref{example:independent} in Online Appendix \ref{apdx: aux_example}. 

Together, Examples \ref{example:correlated} and \ref{example:independent} illustrate two typical AI data-generation schemes in which $\yvec \,\not\perp\, \zvec \mid \Xvec$ and $\yvec \,\perp\, \zvec \mid \Xvec$. Regardless of whether such conditional correlation holds, our proposed GAI approach achieves variance reduction
\rev{whenever the working GLM mean $\nabla b(\Xvec\betavec^*)$ deviates from the conditional mean $\E[\yvec \mid \Xvec, \zvec]$ in a direction picked up by the covariates---a mild informativeness/misspecification condition (\cref{thm:dominance}(iii))}. 
We analyze the mechanisms formally in \cref{sec:theory} and theoretically justify the resulting efficiency gains. We conduct empirical experiments corresponding to these two examples. \cref{sec:pricing} represents the digital twin generation setting described in \cref{example:correlated}, whereas \cref{sec: conjoint} represents the off-the-shelf LLM generation setting described in \cref{example:independent}. Both experiments yield statistically significant improvements consistent with our theoretical results.

\section{Methodology}
\label{sec:method}
In this section, we present our proposed methodology, GAI. We first discuss the key challenges in using both primary and auxiliary data to estimate $\betavec^*$ (\cref{subsec:challenges}). We formally introduce GAI in \cref{subsec:method} and discuss practical guidance for empirical implementation in \cref{sec:model_selection}. 
Unless noted otherwise, we let $\lVert \cdot \rVert$ denote the $\ell_2$-norm for a matrix or a vector.
Also, $\lVert \cdot \rVert_{F}$ and $\lVert \cdot \rVert_{\infty}$ are the Frobenius norm or the  $\ell_{\infty}$-norm of an appropriate object.  
Furthermore, given a multi-dimensional function $\mathbf{f}$, we write $\lVert \mathbf{f} \rVert_{Q,2} = \left(\E[\lVert\mathbf{f}\rVert^2]\right)^{1/2}.$
We use $\lambda_{\min}(\cdot)$ to denote the minimum eigenvalue. 

\subsection{The Challenges}
\label{subsec:challenges}
Here we formalize the challenges and discuss alternative methods.
The canonical score $\Xvec^\top(\nabla b(\Xvec\betavec)-\yvec)$ in~\eqref{eq:foc} requires observing $\yvec$, so it cannot be evaluated on auxiliary observations with $w=0$. Using only the primary sample $\mathcal{D}^{P}=\{\Xi_i=(\Xvec_i,\yvec_i,\rev{w_i,\zvec_i}): w_i=1\}$, the standard estimator $\widehat{\betavec}^{\,P}$ is the solution to the empirical score equation based on the canonical GLM score:
\begin{align}
\frac{1}{n_P}\sum_{i:w_i=1}\Xvec_i^{\top}\!\left\{\nabla b\!\left(\Xvec_i\betavec\right)-\yvec_i\right\}
\,=\,0\,.
\label{eq:primary_score}
\end{align}
We denote by $\widehat{\betavec}^{\,P}$ the primary-only estimator. By construction, this estimator relies exclusively on the primary sample and is therefore fundamentally constrained by the primary sample size, which is often small due to the high cost of human data collection.
The primary difficulty with this approach is that it ignores the auxiliary sample completely. 
We also remark that when $e(\Xvec, \zvec)$ is not a constant and the model is indeed misspecified, it is not difficult to construct examples where this estimator does not have the desired statistical properties, such as consistency.  

An alternative approach that utilizes the entire dataset is to na\"ively pool the primary (human-labeled) and auxiliary (AI-generated) observations and directly plug them into the empirical score equation as if all labels were equally reliable. Specifically, the naive estimator $\widehat{\betavec}^{\sf Naive}$ is the solution to the following empirical score equation:
\begin{align}
\frac{1}{n_P+n_A}\left\{\sum_{i:w_i=1}\Xvec_i^{\top}\!\left\{\nabla b\!\left(\Xvec_i\betavec\right)-\yvec_i\right\} + \sum_{i:w_i=0}\Xvec_i^{\top}\!\left\{\nabla b\!\left(\Xvec_i\betavec\right)-\zvec_i\right\}\right\}
\,=\,0\,.
\label{eq:naive_score}
\end{align}
While this procedure leverages a larger sample size, it generally yields \textit{biased} estimates when the AI-generated labels do not coincide with the true human outcomes. Additionally, when the auxiliary signal $\zvec$ takes forms beyond a label surrogate, the naive pooling formulation is no longer applicable.


\subsection{Generative Augmented Inference}
\label{subsec:method}

To address these challenges, we propose the following score function:
\begin{equation}
\label{eq:score}
    \Psivec(\Xi; e,\gvec; \betavec)\,:=\,
    \Xvec^\top \left[\nabla b(\Xvec\betavec) - \gvec(\Xvec, \zvec) + \frac{w}{e(\Xvec, \zvec)}(\gvec(\Xvec, \zvec) - \yvec)\right]\,.
\end{equation}
This proposed score can be viewed as an \emph{orthogonalized} version of the complete-data score:
it replaces the missing label $\yvec$ by the regression function $\gvec(\Xvec,\zvec)$ and corrects the residual using IPW through $e(\Xvec,\zvec)$. This construction allows auxiliary data (with $w=0$) to contribute information through $\gvec(\Xvec,\zvec)$, even though $\yvec$ is unobserved. For brevity, we sometimes suppress the dependency on $\Xvec$ and $\zvec$ in $e(\cdot)$ and $\gvec(\cdot)$ when clear from the context.






\begin{algorithm}[t]
\caption{Generative Augmented Inference}
\small
\setlength{\baselineskip}{1\baselineskip}
\begin{algorithmic}[1]
\label{alg: gai}
\REQUIRE Data $\mathcal{D} = \mathcal{D}^P \cup \mathcal{D}^A$, number of folds $K$
\vspace{1mm}

\STATE Randomly partition all data $\mathcal{D}$ into $K$ folds $I_1,\ldots,I_K$

\FOR{$k = 1,\ldots,K$}
    \STATE \textbf{Nuisance estimation:}
    \STATE \quad Estimate $\hat{e}^{(k)}(\Xvec, \zvec)$ using all observations in $\mathcal{D} \setminus I_k$
    \STATE \quad Estimate $\hat{\gvec}^{(k)}(\Xvec, \zvec)$ using primary observations in $\mathcal{D} \setminus I_k$
    \STATE \quad Compute out-of-sample predictions $\hat{e}^{(k)}_i, \hat{\gvec}^{(k)}_i$ for each $i \in I_k$
\ENDFOR

\STATE \textbf{Target estimation:}
\STATE \quad Obtain $\hat{\betavec}$ by minimizing the norm of the average score across folds:
$$        \left\lVert \frac{1}{n}\sum_{k=1}^{K}\sum_{i \in I_k}\Psivec_i(\hat{e}^{(k)}, \hat{\gvec}^{(k)}; \hat{\betavec}) \right \rVert\,\leq\,\inf_{\betavec\in \mB}\left\lVert \frac{1}{n}\sum_{k=1}^{K}\sum_{i \in I_k}\Psivec_i(\hat{e}^{(k)}, \hat{\gvec}^{(k)}; {\betavec}) \right \rVert
        + o_P(n^{-1/2})$$

\STATE \textbf{Variance estimation:}
\STATE \quad Compute $\hat{\Psivec}_i = \Psivec_i(\hat{e}^{(k)}, \hat{\gvec}^{(k)}; \hat{\betavec})$ for each $i \in I_k$, and estimate:
$$\hat{\Jvec}\,=\,\frac{1}{n}\sum_{k=1}^{K}\sum_{i \in I_k} \Xvec_i^\top \nabla^2 b(\Xvec_i\hat{\betavec}) \Xvec_i, \qquad \widehat{\Sigmavec}\,=\,\hat{\Jvec}^{-1} \left(\frac{1}{n}\sum_{k=1}^{K}\sum_{i \in I_k} \hat{\Psivec}_i \hat{\Psivec}_i^\top\right) \hat{\Jvec}^{-1}$$
\STATE \quad where $\hat{\Psivec}_i = \Psivec_i(\hat{e}^{(k)}, \hat{\gvec}^{(k)}; \hat{\betavec})$

\RETURN $\hat{\betavec}$, $\widehat{\Sigmavec}$
\end{algorithmic}
\end{algorithm}

We now propose the GAI algorithm used to estimate the target parameters $\betavec^*$. Algorithm \ref{alg: gai} summarizes the full procedure. GAI implements a cross-fitted estimation procedure that integrates human-labeled and AI-augmented data. We begin by partitioning all data $\mathcal{D}$ into $K$ folds. For each fold $k$, the algorithm proceeds in two steps. In step 1, we estimate the nuisance functions via flexible machine learning methods: the labeling propensity $\hat{e}^{(k)}(\Xvec,\zvec)$ is estimated using all observations not in fold $I_k$, while the conditional expectation $\hat{\gvec}^{(k)}(\Xvec,\zvec)$ is estimated using only primary observations not in fold $I_k$ (since estimating $\gvec$ requires observing $\yvec$). We then compute out-of-sample predictions for observations in fold $I_k$. In step 2, we aggregate the score functions across all folds and obtain $\hat{\betavec}$ by minimizing the norm of this aggregated score. Finally, we estimate the asymptotic variance using the sandwich formula from Theorem~\ref{thm:normality}.

GAI enables valid inference under flexible AI-generated data. In Section \ref{sec:theory}, we provide theoretical support for GAI and formally analyze how it addresses the aforementioned challenges. Section \ref{sec:experiments} presents empirical validations of its practical performance. We now turn to implementation considerations, offering practical guidance for applying GAI in real-world settings.

\subsection{Hyperparameter Selection for Nuisance Estimation}
\label{sec:model_selection}

Because GAI's score function satisfies Neyman orthogonality, first-order errors in estimating the nuisance functions $\gvec(\Xvec,\zvec)$ and $e(\Xvec,\zvec)$ do not affect the asymptotic distribution of the estimator $\hat{\betavec}$ \citep{chernozhukov2018double}. Model selection for the nuisance estimators is therefore not central to the method's validity. A reasonable fixed specification is sufficient in most applications.

When abundant data are available, however, practitioners may still wish to tune hyperparameters to improve finite-sample performance. We propose a nested cross-validation procedure that preserves the independence structure required for valid inference: within each outer fold $k$ of the cross-fitting procedure, an inner cross-validation loop selects the best model configuration using only data from the remaining $K-1$ folds, evaluating prediction loss on primary observations in the inner validation set. This ensures that model selection does not contaminate the out-of-fold predictions used for final inference. We demonstrate this procedure in the retail pricing experiment (Section~\ref{sec:pricing}), searching over 8 configurations spanning three model classes.

\section{Theoretical Guarantees}
\label{sec:theory}
Here we present the theoretical guarantees of GAI: we establish its asymptotic normality and strong dominance over primary-only estimators, and then analyze the sources of its efficiency improvement.

\subsection{Asymptotic Normality}
\label{sec:normality_and_dominance}
The theoretical properties we discuss here are built upon the following assumptions:

\begin{assumption}[Regularity]
\label{ass:regularity}
(i) $b(\cdot)$ is twice continuously differentiable with $\nabla^2 b(\theta) \succ 0$ on $\Theta$; (ii) $\E[\Xvec^\top \Xvec]$ is positive definite; (iii) $\|\Cov(\yvec \,|\, \Xvec,\zvec)\| \leq \tilde{\sigma}^2$; (iv) $\E\big[\lVert \yvec \rVert^2\big] < \infty$.
\end{assumption}

\begin{assumption}[ML Convergence Rate]
\label{ass:ml_rate}
There exists $\alpha(n) \downarrow 0$ and $r_1 + r_2 \geq 1/2$ such that:
\begin{align}
    \|\hat{e}(\Xvec,\zvec) - e^*(\Xvec,\zvec)\|_{Q,2}\,&\leq\,\alpha(n)/n^{r_1} \\
    \|\hat{\gvec}(\Xvec,\zvec) - \gvec^*(\Xvec,\zvec)\|_{Q,2}\,&\leq\,\alpha(n)/n^{r_2}
\end{align}
and $\sup_{\Xvec,\zvec}|\hat{e}(\Xvec,\zvec) - e^*(\Xvec,\zvec)| \to_P 0$.
\end{assumption}

The product rate condition $r_1 + r_2 \geq 1/2$ is a standard requirement in semiparametric estimation with machine-learned nuisance functions. Analogous conditions appear in policy learning \citep{athey2021policy} and causal inference with random forests \citep{wager2018estimation, athey2019generalized}. The condition is easily satisfied when both nuisance estimators converge at rate $n^{-1/4}$ or faster. This mild requirement is met by a wide range of modern ML methods under standard smoothness or sparsity conditions: Lasso and its variants achieve $n^{-1/4}$ rates in sparse high-dimensional models \citep{bickel2009simultaneous, buhlmann2011statistics}; random forests and kernel methods attain these rates under mild regularity conditions \citep{wager2018estimation}; neural networks achieve them when the target function admits a suitable compositional structure \citep{farrell2021deep}; and boosting methods reach comparable rates under appropriate complexity constraints \citep{luo2016high}. We refer to \citet{chernozhukov2018double} for a thorough discussion. Under these assumptions, our estimator has the following properties. Define the information matrix $\Jvec := \E[\Xvec^\top \nabla^2 b(\Xvec\betavec^*) \Xvec]$.

\begin{theorem}[Asymptotic Normality]
\label{thm:normality}
Under Assumptions~\ref{ass:regularity}--\ref{ass:ml_rate},
\begin{equation}
    \sqrt{n}(\hat{\betavec} - \betavec^*)\,\rightsquigarrow\,N(0, {\Sigmavec}^{\text{\upshape GAI}})
\end{equation}
where ${\Sigmavec}^{\text{\upshape GAI}} = \Jvec^{-1} \E[\Psivec\big(\Xi; e^*,\gvec^*; {\betavec}^*\big)\Psivec\big({\Xi}; e^*,\gvec^*; {\betavec}^*\big)^\top] \Jvec^{-1}$.
\end{theorem}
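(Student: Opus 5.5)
The plan is to follow the double/debiased machine learning template of \citet{chernozhukov2018double}, adapted to the Z-estimator defined by the score \eqref{eq:score}.

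\textbf{Step 1: population properties of the score.} First I verify two facts about the population score at the true nuisances.

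\emph{Identification.} Since $w \perp \yvec \mid \Xvec,\zvec$, we have $\E[w(\gvec^*-\yvec)/e^* \mid \Xvec,\zvec] = 0$. Hence $\E[\Psivec(\Xi;e^*,\gvec^*;\betavec)] = \E[\Xvec^\top(\nabla b(\Xvec\betavec)-\yvec)]$. This vanishes uniquely at $\betavec^*$ by \eqref{eq:foc}, because the Jacobian $\E[\Xvec^\top\nabla^2 b(\Xvec\betavec)\Xvec]$ is positive definite under Assumption~\ref{ass:regularity}.

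\emph{Neyman orthogonality.} Take Gateaux derivatives along $\gvec^*+t(\gvec-\gvec^*)$ and $e^*+t(e-e^*)$.
\begin{itemize}
\item The $\gvec$-direction gives $\E[\Xvec^\top(w/e^*-1)(\gvec-\gvec^*)]=0$, because $\E[w\mid\Xvec,\zvec]=e^*$.
\item The $e$-direction gives $-\E[\Xvec^\top w(\gvec^*-\yvec)(e-e^*)/e^{*2}]=0$, by conditional unbiasedness of $\yvec$ given $(\Xvec,\zvec,w)$.
\end{itemize}

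\textbf{Step 2: linearize the cross-fitted estimating equation.} Write $\hat\Psivec_n(\betavec)=n^{-1}\sum_k\sum_{i\in I_k}\Psivec_i(\hat e^{(k)},\hat\gvec^{(k)};\betavec)$.

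\emph{Consistency.} The $\betavec$-dependence of $\Psivec$ enters only through $\Xvec^\top\nabla b(\Xvec\betavec)$, which does not involve the nuisances. So $\hat\Psivec_n(\betavec)-\Psivec_n^*(\betavec)$ is free of $\betavec$, and I only need it to be $o_P(1)$. Combined with a uniform LLN for $\Xvec^\top\nabla b(\Xvec\betavec)$ and identifiability, this gives consistency of $\hat\betavec$. The uniform LLN uses compactness of $\breve{\mX}$ and continuity of $\nabla b$.

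\emph{Expansion.} A mean-value expansion of $n^{-1}\sum\Xvec_i^\top\nabla b(\Xvec_i\betavec)$ around $\betavec^*$, together with consistency and $\hat\Psivec_n(\hat\betavec)=o_P(n^{-1/2})$, gives
\[
\sqrt n(\hat\betavec-\betavec^*) = -\Jvec^{-1}\sqrt n\,\hat\Psivec_n(\betavec^*)+o_P(1).
\]
This also uses that $\hat\Psivec_n(\betavec^*)=O_P(n^{-1/2})$, which follows from Step 3.

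\textbf{Step 3 (main obstacle): replace estimated by true nuisances at $\betavec^*$.} I need to show $\sqrt n\,[\hat\Psivec_n(\betavec^*)-\Psivec_n^*(\betavec^*)]=o_P(1)$. Fix a fold $k$, write $\hat\gvec=\hat\gvec^{(k)}$ and $\hat e=\hat e^{(k)}$, and condition on the complementary data so the nuisances are fixed. Then decompose the difference of scores into three terms:
\[
\Xvec^\top\Big(1-\tfrac{w}{e^*}\Big)(\gvec^*-\hat\gvec),\qquad
\Xvec^\top w(\gvec^*-\yvec)\Big(\tfrac1{\hat e}-\tfrac1{e^*}\Big),\qquad
\Xvec^\top w(\hat\gvec-\gvec^*)\Big(\tfrac1{\hat e}-\tfrac1{e^*}\Big).
\]
\begin{itemize}
\item \emph{First two terms.} Each has conditional mean zero by Step 1. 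Their conditional second moments are bounded by $\|\hat\gvec-\gvec^*\|_{Q,2}^2$ and by $\tilde\sigma^2\|\hat e-e^*\|_{Q,2}^2$ respectively. This uses boundedness of $\Xvec$ on $\breve{\mX}$, $e^*>\kappa$, and $\hat e>\kappa/2$ with probability tending to one, which follows from the sup-norm convergence in Assumption~\ref{ass:ml_rate}. By conditional Chebyshev, their $\sqrt n$-scaled empirical averages are therefore $o_P(1)$.
\item \emph{Product term.} Its mean is bounded via Cauchy--Schwarz by $C\|\hat\gvec-\gvec^*\|_{Q,2}\|\hat e-e^*\|_{Q,2}\le C\alpha(n)^2 n^{-(r_1+r_2)}=o(n^{-1/2})$. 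Its centered fluctuation is $o_P(n^{-1/2})$ by the same variance argument.
\end{itemize}
Care is needed to track the fold sizes $n/K$ and to pass from conditional to unconditional convergence in probability.

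\textbf{Step 4: conclude.} $\sqrt n\,\Psivec_n^*(\betavec^*)$ is a normalized sum of i.i.d. mean-zero vectors with finite second moments, by Assumption~\ref{ass:regularity}(iv) and $e^*>\kappa$. The multivariate CLT and Slutsky then yield the limit $N(0,\Jvec^{-1}\E[\Psivec\Psivec^\top]\Jvec^{-1})=N(0,\Sigmavec^{\text{\upshape GAI}})$.
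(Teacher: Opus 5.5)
Your Steps 1, 3 and 4 match the paper in substance. Step 3 is exactly the decomposition into terms (I)--(III) in the proof of Lemma~\ref{lem: orthogonality}. Your linearization is also sound: a mean-value expansion of $\Pe_n\Xvec^\top\nabla b(\Xvec\betavec)$ works because the nuisances only shift the score by a $\betavec$-free vector. It is a more elementary substitute for the paper's Donsker/stochastic-equicontinuity step.

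The gap is in Step 2, wherever you need control over all of $\mB$. Here $\mB$ is open and possibly unbounded, e.g.\ $\mB=\R^d$ for logistic or Poisson regression. For consistency you invoke a uniform LLN for $\Xvec^\top\nabla b(\Xvec\betavec)$ plus identifiability. Over a noncompact $\mB$ no such uniform LLN holds in general: for Poisson, $b=\exp$, the supremum over $\betavec\in\R^d$ of the gap between empirical and population averages is typically infinite. Moreover, a unique zero is not the same as a well-separated zero.

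The missing ingredient is convexity. The cross-fitted score is the gradient of the convex function $\betavec\mapsto\Pe_n b(\Xvec\betavec)-\hat{\tauvec}_n^\top\betavec$, where $\hat{\tauvec}_n$ is the cross-fitted average of the $\betavec$-free part of the score. It is therefore monotone along rays from $\betavec^*$. The paper's Lemma~\ref{lem: consistency} exploits exactly this:
\begin{itemize}
\item uniform convergence is needed only on a compact ball, via Pollard's convexity lemma;
\item a quadratic-growth bound holds on an annulus;
\item monotonicity of the gradient then shows $\lVert\Pe_n\Psivec(\betavec)\rVert$ stays bounded away from zero for every $\betavec\in\mB$ outside the ball.
\end{itemize}

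Second, you take $\hat{\Psivec}_n(\hat{\betavec})=o_P(n^{-1/2})$ for granted. Algorithm~\ref{alg: gai} only guarantees $\lVert\hat{\Psivec}_n(\hat{\betavec})\rVert\le\inf_{\betavec\in\mB}\lVert\hat{\Psivec}_n(\betavec)\rVert+o_P(n^{-1/2})$. You must therefore show the infimum itself is $o_P(n^{-1/2})$, i.e.\ that an approximate root exists. It need not exist in finite samples, e.g.\ for logistic regression under separation.

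The paper handles this in Lemma~\ref{lem: empirical-scores}. The convex empirical loss with true nuisances has a minimizer $\check{\betavec}_n\to\betavec^*$, which is an exact root. Lemma~\ref{lem: orthogonality} then transfers it to the estimated nuisances, because the difference is $\betavec$-free.

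Within your framework the quickest fix is the one-step point $\bar{\betavec}=\betavec^*-\Jvec^{-1}\hat{\Psivec}_n(\betavec^*)$. It lies in $\mB$ with probability tending to one. Your own mean-value expansion then gives $\hat{\Psivec}_n(\bar{\betavec})=(\mathbf{I}-\hat{\Jvec}_n\Jvec^{-1})\hat{\Psivec}_n(\betavec^*)=o_P(n^{-1/2})$, where $\hat{\Jvec}_n\to_P\Jvec$ is the averaged empirical Hessian along the segment from $\betavec^*$ to $\bar{\betavec}$. With these two repairs your argument goes through.
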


The proof (Online Appendix~\ref{apdx: proof of normality}) proceeds by: (1) establishing score validity and Neyman orthogonality (Lemma~\ref{lem: orthogonality}) and rates for the empirical scores (Lemma~\ref{lem: empirical-scores}); (2) proving consistency via convexity arguments (Lemma~\ref{lem: consistency}); (3) deriving the asymptotic expansion using Donsker theory.


\subsection{Efficiency with Uniform Random Sampling}

In typical applications, human labels are collected via uniform random sampling---for instance, randomly selecting respondents for annotation, randomly assigning observations to human review, or randomly subsampling from a larger dataset. 
Therefore, we focus on this natural scenario in this section and adopt the next assumption throughout. 

\begin{assumption}[Uniform Random Sampling]
\label{ass:random_labeling}
The labeling indicator $w$ is independent of $(\Xvec, \yvec, \zvec)$, so that $e^*(\Xvec, \zvec) = \Pr(w = 1) = \rho \in (0, 1]$ is a known constant.
\end{assumption}

Because $\rho$ is known, the propensity need not be estimated for GAI.
Therefore, we can set $\hat{e} \equiv \rho$ in Algorithm~\ref{alg: gai}.
Under this assumption,  we further show that GAI achieves efficiency in a unified family of debiased estimators, including primary-only (see for example \eqref{eq:primary_score}), PPI \citep{angelopoulos2023prediction}, and PPI++ \citep{angelopoulos2023ppi++}. To this end, for $\lambda \in [0,1]$ and $\fvec(\Xvec, \zvec) \in \R^{{k}}$, we define a unified score function
\begin{equation}
\label{eq:ppi_family_score}
\Psivec_{\lambda, \fvec}(\Xi; \betavec)
\,:=\,
\frac{w}{\rho}\,\Xvec^\top\!\Big[\big(\nabla b(\Xvec\betavec) - \yvec\big) - \lambda\big(\nabla b(\Xvec\betavec) - \fvec(\Xvec, \zvec)\big)\Big]
\;+\;
\frac{1 - w}{1 - \rho}\,\lambda\,\Xvec^\top\big(\nabla b(\Xvec\betavec) - \fvec(\Xvec, \zvec)\big),
\end{equation}
and define $\hat\betavec_{\lambda, \fvec} := \arg\min_{\betavec \in \mB} \big\lVert \Pe_n \Psivec_{\lambda, \fvec}(\cdot\,; \betavec) \big\rVert$.\footnote{It is shown in the proof of Theorem \ref{thm:dominance} that $\Pe_n \Psivec_{\lambda, \fvec}(\cdot\,; \betavec) = \mathbf{0}$ has a solution with probability one, so the minimizer here is well-defined. Alternatively, we can define $\hat\betavec_{\lambda, \fvec}$ as any solution that achieves an objective function value less than $\inf_{\betavec \in \mB}\big\lVert \Pe_n \Psivec_{\lambda, \fvec}(\cdot\,; \betavec) \big\rVert + o_P(1/\sqrt{n})$. The proof is slightly more complex but follows along the same line as Theorem \ref{thm:normality}.}

Let us assume that $\zvec$ can be partitioned into two parts, i.e., $\zvec = (\zvec^{\text{P}}, \zvec^{\text{O}})$, where $\zvec^{\text{P}}\in \R^{{k}}$ is AI-generated prediction of $\yvec$, and $\zvec^{\text{O}}$ is other information generated by AI, such as reasoning among others. 
The unified score function \eqref{eq:ppi_family_score} covers a wide range of debiased estimators, distinguished uniquely by $(\lambda, \fvec)$. Four estimators of interest are members of this family, as summarized in \cref{tab:unified_family}.
In particular, primary-only ($\lambda = 0$) discards the prediction term and reduces
\eqref{eq:ppi_family_score} to the score
$\tfrac{w}{\rho}\Xvec^\top(\nabla b(\Xvec\betavec) - \yvec)$. 
PPI and PPI++ advocate soliciting  the AI response $\zvec^{\text{P}}$ as the prediction and then  take $\fvec = \zvec^{\text{P}}$ \citep[e.g., see][]{kreps2020factors}. 
PPI and PPI++ differ only in the scalar $\lambda$.
PPI takes $\lambda = 1$.
PPI++ tunes {an} instance-specific $\lambda$ to minimize the asymptotic variance.
Let us denote it by $\lambda^{\rev{\star}}$.
GAI takes the
value $\lambda = 1 - \rho$ together with $\fvec = \gvec^*$, recovering the
GAI score \eqref{eq:score}.
That is, $\Psivec_{1 -\rho, \gvec^*}(\Xi; \betavec) = \Psivec(\Xi; e^*, \gvec^*; \betavec)$.
Our main result (Theorem~\ref{thm:dominance}) is that GAI {attains the unique optimal asymptotic variance $\Sigmavec^{\text{\upshape GAI}}$ within this family} in the positive-semidefinite (Loewner) order: it
weakly dominates primary-only, PPI, and PPI++ simultaneously in every direction, and {strictly dominates each competitor in every direction along which the competitor's score correction differs from GAI's on a set of positive probability. The optimal variance is unique; the pair $(\lambda, \fvec)$ attaining it need not be, and GAI is its canonical member.}
For convenience, we define $\mathbf{S} := \Xvec^\top(\nabla b(\Xvec\betavec^*) - \yvec) = \zetavec + \pivec$, with
$\zetavec := \Xvec^\top(\nabla b(\Xvec\betavec^*) - \gvec^*)$,
$\pivec := \Xvec^\top(\gvec^* - \yvec)$, and $\gvec^* = \E[\yvec \mid \Xvec, \zvec]$, so
that $\E[\pivec \mid \Xvec, \zvec] = \mathbf{0}$.

\begin{theorem}[Dominance over the Unified Family]
\label{thm:dominance}
Suppose Assumptions~\ref{ass:regularity} and \ref{ass:random_labeling} hold, with known $e \equiv \rho \in (0,1)$ (so that both labeled and unlabeled units occur), and $\Jvec \succ 0$. 
Also, {assume that} $\hat{\gvec}$ is $L_{Q,2}$-consistent, i.e., $\lVert \hat{\gvec} - \gvec^* \rVert_{Q,2} = o(1)$.
Consider the family
$\{\hat\betavec_{\lambda, \fvec} : \lambda \in \R,\ \fvec \in L_2(\Xvec, \zvec)\}$ defined by
\eqref{eq:ppi_family_score}. Then:
\begin{enumerate}
\item[(i)] Every member is consistent and asymptotically normal for the \emph{same}
$\betavec^*$:
$\sqrt{n}\,(\hat\betavec_{\lambda, \fvec} - \betavec^*) \rightsquigarrow N(\mathbf{0},\, \Sigmavec(\lambda, \fvec))$
with
$\Sigmavec(\lambda, \fvec) \,=\, \Jvec^{-1}\boldsymbol{\Omega}(\lambda, \fvec)\Jvec^{-1}$,
where
$\boldsymbol{\Omega}(\lambda, \fvec) = \E[\Psivec_{\lambda, \fvec}(\Xi; \betavec^*)\Psivec_{\lambda, \fvec}(\Xi; \betavec^*)^\top]$
admits, with $\mathbf{h} := \lambda\Xvec^\top(\nabla b(\Xvec\betavec^*) - \fvec)$ and
$\boldsymbol{\delta} := \mathbf{h} - (1 - \rho)\mathbf{S}$, the closed form
\begin{equation}
\label{eq:ppi_omega_closed}
\boldsymbol{\Omega}(\lambda, \fvec)
\,=\,
\E[\mathbf{S}\mathbf{S}^\top] \,+\, \frac{1}{\rho(1 - \rho)}\,\E[\boldsymbol{\delta}\boldsymbol{\delta}^\top]
\,=\,
\frac{1}{\rho}\,\E\big[(\mathbf{S} - \mathbf{h})(\mathbf{S} - \mathbf{h})^\top\big]
\,+\, \frac{1}{1 - \rho}\,\E[\mathbf{h}\mathbf{h}^\top].
\end{equation}

\item[(ii)] Primary-only, PPI, and PPI++ are the members $\lambda = 0$;
$(\lambda, \fvec) = (1, \zvec^{\text{\upshape{P}}})$; and $(\lambda, \fvec) = (\lambda^\star, \zvec^{\text{\upshape{P}}})$, respectively,
while GAI is the member $(\lambda, \fvec) = (1 - \rho, \gvec^*)$.

\item[(iii)] GAI is the Loewner-minimizer of the family. The optimal score correction is
$\mathbf{h}^\star = (1 - \rho)\zetavec$, attained inside the family by
$\lambda = 1 - \rho$, $\fvec = \gvec^*$, and yielding
\begin{equation}
\label{eq:ppi_omega_gai}
\boldsymbol{\Omega}^{\text{\upshape GAI}}
\,=\,
\E[\zetavec\zetavec^\top] \,+\, \frac{1}{\rho}\,\E[\pivec\pivec^\top].
\end{equation}
For every $(\lambda, \fvec)$, writing the residual
$\mathbf{r} := \lambda(\nabla b(\Xvec\betavec^*) - \fvec) - (1 - \rho)(\nabla b(\Xvec\betavec^*) - \gvec^*) \in \R^{\rev{k}}$,
\begin{equation}
\label{eq:ppi_gap}
\Sigmavec(\lambda, \fvec) - \Sigmavec^{\text{\upshape GAI}}
\,=\,
\frac{1}{\rho(1 - \rho)}\,\Jvec^{-1}\,\E\big[(\Xvec^\top\mathbf{r})(\Xvec^\top\mathbf{r})^\top\big]\,\Jvec^{-1}
\,\succeq\,\mathbf{0},
\end{equation}
with equality if and only if $\Xvec^\top\mathbf{r} = \mathbf{0}$ almost surely (for GAI,
$\mathbf{r} = \mathbf{0}$).

\item[(iv)] {The feasible GAI estimator $\hat{\betavec}$ of Algorithm~\ref{alg: gai} with $\hat{e} \equiv \rho$ is asymptotically equivalent to the oracle GAI member $(1 - \rho, \gvec^*)$: $\sqrt{n}\,(\hat{\betavec} - \betavec^*) \rightsquigarrow N(\mathbf{0},\, \Sigmavec^{\text{\upshape GAI}})$, so the Loewner dominance in part~(iii) extends to it.}
\end{enumerate}
\end{theorem}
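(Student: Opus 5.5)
Part (i) is handled first, as a standard M-estimation argument with a fixed (non-estimated) score. Since $w\perp(\Xvec,\yvec,\zvec)$ and $\E[w]=\rho$, taking expectations in \eqref{eq:ppi_family_score} gives
\[
\E\Psivec_{\lambda,\fvec}(\Xi;\betavec)=\E[\Xvec^\top(\nabla b(\Xvec\betavec)-\yvec)]-\lambda\,\E[\Xvec^\top(\nabla b(\Xvec\betavec)-\fvec)]+\lambda\,\E[\Xvec^\top(\nabla b(\Xvec\betavec)-\fvec)].
\]
The two $\lambda$-terms cancel, so the population score coincides with \eqref{eq:foc} and vanishes only at $\betavec^*$. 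The identification is therefore the same for every member of the family.

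Next, the Jacobian of $\betavec\mapsto\Pe_n\Psivec_{\lambda,\fvec}$ equals $\big[\tfrac{w}{\rho}(1-\lambda)+\tfrac{1-w}{1-\rho}\lambda\big]\Xvec^\top\nabla^2 b(\Xvec\betavec)\Xvec$. Its expectation at $\betavec^*$ is $\Jvec$. I will argue that the empirical equation has a root near $\betavec^*$ with probability tending to one. The tool is a uniform law of large numbers on a compact neighbourhood (using compactness of $\Breve{\mX}$ and continuity of $\nabla^2 b$), combined with the Brouwer/Newton–Kantorovich step used for Lemma~\ref{lem: consistency}. Note that for $\lambda\notin[0,1]$ the weights can be negative, so I will not rely on convexity; local invertibility of $\Jvec\succ0$ is what I will use. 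A Taylor expansion then gives $\sqrt n(\hat\betavec_{\lambda,\fvec}-\betavec^*)=-\Jvec^{-1}\sqrt n\,\Pe_n\Psivec_{\lambda,\fvec}(\cdot;\betavec^*)+o_P(1)$, and the CLT delivers the sandwich $\Jvec^{-1}\boldsymbol\Omega\Jvec^{-1}$, with the needed second moments from Assumption~\ref{ass:regularity}(iv) and $\fvec\in L_2$.

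For the closed form \eqref{eq:ppi_omega_closed}, write $\Psivec_{\lambda,\fvec}(\cdot;\betavec^*)=\tfrac{w}{\rho}(\mathbf S-\mathbf h)+\tfrac{1-w}{1-\rho}\mathbf h$. Because $w(1-w)=0$ and $w$ is independent of $(\mathbf S,\mathbf h)$, the second moment is $\tfrac1\rho\E[(\mathbf S-\mathbf h)(\mathbf S-\mathbf h)^\top]+\tfrac1{1-\rho}\E[\mathbf h\mathbf h^\top]$. Substituting $\mathbf h=\boldsymbol\delta+(1-\rho)\mathbf S$, the coefficients collapse to $\E[\mathbf S\mathbf S^\top]+\E[\boldsymbol\delta\boldsymbol\delta^\top]/(\rho(1-\rho))$; this is routine algebra with $\tfrac{\rho^2}{\rho}+\tfrac{(1-\rho)^2}{1-\rho}=1$.

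Part (ii) is bookkeeping: substitute each $(\lambda,\fvec)$ into \eqref{eq:ppi_family_score}. For GAI, $\tfrac{w}{\rho}(1-(1-\rho))+\tfrac{1-w}{1-\rho}(1-\rho)=w+1-w=1$ multiplies $\Xvec^\top(\nabla b-\gvec^*)$, and the remaining term is $\tfrac w\rho\Xvec^\top(\gvec^*-\yvec)$, which is exactly \eqref{eq:score} with $e=\rho$.

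For part (iii), the key step is the decomposition $\boldsymbol\delta=\Xvec^\top\mathbf r-(1-\rho)\pivec$. It holds because $\mathbf h-(1-\rho)\zetavec=\Xvec^\top\mathbf r$ and $\mathbf S=\zetavec+\pivec$. Since $\Xvec^\top\mathbf r$ is $(\Xvec,\zvec)$-measurable and $\E[\pivec\mid\Xvec,\zvec]=0$, the cross terms vanish, giving
\[
\E\boldsymbol\delta\boldsymbol\delta^\top=\E[(\Xvec^\top\mathbf r)(\Xvec^\top\mathbf r)^\top]+(1-\rho)^2\E\pivec\pivec^\top .
\]
Also $\E\mathbf S\mathbf S^\top=\E\zetavec\zetavec^\top+\E\pivec\pivec^\top$ by the same orthogonality. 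Combining, $\boldsymbol\Omega=\E\zetavec\zetavec^\top+\tfrac1\rho\E\pivec\pivec^\top+\tfrac1{\rho(1-\rho)}\E[(\Xvec^\top\mathbf r)(\Xvec^\top\mathbf r)^\top]$, where I use $1+\tfrac{1-\rho}{\rho}=\tfrac1\rho$. At $\mathbf r=0$ this gives \eqref{eq:ppi_omega_gai}, and conjugating by $\Jvec^{-1}$ gives \eqref{eq:ppi_gap}. For the equality case: $\Jvec^{-1}M\Jvec^{-1}=0$ with $M\succeq0$ forces $M=0$, hence $\E\|\Xvec^\top\mathbf r\|^2=\mathrm{tr}M=0$.

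Part (iv) follows from Theorem~\ref{thm:normality} with $\hat e\equiv\rho=e^*$. Then $r_1$ may be taken arbitrarily large, so the product-rate condition of Assumption~\ref{ass:ml_rate} needs only $\hat{\gvec}$ to be consistent. The main obstacle is here. Re-examining the remainder, the orthogonality bias term is proportional to $(\hat e-e^*)(\hat\gvec-\gvec^*)$, which is identically zero. What remains is the empirical-process term $\mathbb G_n[\Xvec^\top(1-w/\rho)(\hat\gvec-\gvec^*)]$. Under cross-fitting this term has conditional mean zero and conditional variance bounded by $C\|\hat\gvec-\gvec^*\|_{Q,2}^2=o(1)$, using boundedness of $\Xvec$ on $\Breve{\mX}$. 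Chebyshev then makes it $o_P(1)$, so $L_{Q,2}$-consistency of $\hat\gvec$ suffices, and the limit variance is $\Sigmavec^{\text{GAI}}$.
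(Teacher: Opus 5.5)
You have the paper's proof everywhere except the consistency step in part~(i). The cancellation of the two $\lambda$-terms matches Step~1. The regrouping $\Psivec_{\lambda,\fvec}(\Xi;\betavec^*)=\tfrac{w}{\rho}(\mathbf S-\mathbf h)+\tfrac{1-w}{1-\rho}\mathbf h$ with $w(1-w)=0$, and completing the square around $(1-\rho)\mathbf S$, match Step~3. The split $\boldsymbol\delta=\Xvec^\top\mathbf r-(1-\rho)\pivec$, with cross terms removed by $\E[\pivec\mid\Xvec,\zvec]=\mathbf 0$, and the trace argument for the equality case match Step~4. The GAI substitution check matches Step~5. Reducing part~(iv) to the single linear remainder (term (I) in the proof of Lemma~\ref{lem: orthogonality}), which needs only $L_{Q,2}$-consistency of $\hat\gvec$, matches Step~6. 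All of this is correct. Replacing the paper's Donsker/equicontinuity step with a mean-value expansion plus a uniform LLN for the Jacobian is also fine once consistency is in hand, since the $\betavec$-dependence is smooth.

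The gap is consistency of the estimator actually defined in the statement. That estimator is $\hat\betavec_{\lambda,\fvec}=\arg\min_{\betavec\in\mB}\lVert\Pe_n\Psivec_{\lambda,\fvec}(\cdot\,;\betavec)\rVert$, i.e.\ any root of the empirical score anywhere in $\mB$. Your local argument only shows that \emph{some} root lies in a small ball around $\betavec^*$ with probability approaching one. It does not rule out roots far from $\betavec^*$, which the argmin is free to pick. So you have not shown $\hat\betavec_{\lambda,\fvec}\to_P\betavec^*$, and your Taylor expansion has no starting point. You also cannot borrow this from Lemma~\ref{lem: consistency}: that lemma has no Brouwer or Newton--Kantorovich step; it is a convexity argument.

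The missing ingredient is exactly the convexity you set aside. For $\lambda\in[0,1]$ the weights $\tfrac{w}{\rho}(1-\lambda)+\tfrac{1-w}{1-\rho}\lambda$ are nonnegative, so $\Psivec_{\lambda,\fvec}$ is the gradient of the convex objective $\tilde\ell_{\lambda,\fvec}$. Every root is then a global minimizer of $\Pe_n\tilde\ell_{\lambda,\fvec}$. Theorem~2.7 of \cite{newey1994large}, used as in Lemma~\ref{lem: empirical-scores}, then guarantees that a minimizer exists with probability approaching one and that every minimizer is consistent; after that your expansion goes through. This covers primary-only, PPI, GAI, and PPI++ whenever $\lambda^\star\in[0,1]$.

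For $\lambda\notin[0,1]$ some weights are negative and this argument fails. In that range neither your local argument nor the paper establishes consistency of the argmin estimator; the paper's Step~2 explicitly uses $\lambda\in[0,1]$, matching the range in \eqref{eq:ppi_family_score}. You would need an extra global identification condition, such as uniform convergence of the score over $\mB$ with a well-separated zero, or else restate part~(i) for a consistent root.
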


The proof can be found in {Online Appendix~\ref{apdx: proof of unified dominance}}. \cref{thm:dominance} shows that GAI weakly dominates primary-only, PPI, and PPI++ in the Loewner order, and
strictly dominates each whenever the corresponding $\Xvec^\top\mathbf{r} \neq \mathbf{0}$ on
a set of positive probability.
In general, this condition holds for primary-only, PPI and PPI++, even in cases where $\zvec = \zvec^{\text{P}}$.
The optimal score function, and hence
$\boldsymbol{\Omega}^{\text{\upshape GAI}}$ and $\Sigmavec^{\text{\upshape GAI}}$, is unique.
This result provides {theoretical} support for the superior empirical performance of GAI over other estimators reported in the next section.

It is well-known that estimators based on Neyman-orthogonal score functions are semi-parametric efficient in many settings, for example, when the loss function model is correctly specified as the log-likelihood \citep{van2000asymptotic, chernozhukov2018double}.
However, this is not necessarily true in our setting due to model misspecification, so the dominance result given above is remarkable. \cref{thm:dominance} establishes a ``safe default'' property: under random labeling, adopting GAI can never degrade performance relative to ignoring the AI data entirely, and will strictly improve performance
whenever the AI outputs shift the conditional mean of the outcome in a direction reflected in the score (the informativeness condition of \cref{thm:dominance}(iii)). This guarantee is operationally valuable because it eliminates the risk of ``doing worse by trying to use AI data,'' which is a real concern with methods whose performance depends on prediction quality.

\begin{table}[]
\centering
    \begin{tabular}{lll}
    \toprule
    Estimator & $\lambda$ & $\fvec$ \\
    \midrule
    Primary-only (human labels, IPW) & $0$ & --- \\
    PPI & $1$ & $\zvec^\text{P}$ \\
    PPI++ & $\lambda^\star$ (asymptotic-variance/trace minimizer) & $\zvec^{\text{P}}$ \\
    GAI & $1 - \rho$ & $\gvec^* = \E[\yvec \mid \Xvec, \zvec]$ \\
    \bottomrule
    \end{tabular}
\caption{Members of the Unified Debiased Estimator Family}
\label{tab:unified_family}
\end{table}

\subsection{Source of Variance Reduction}

In this part we present deeper insight into why GAI achieves variance reduction. First, we formalize a decomposition of variance in the following corollary when comparing GAI and primary-only.

\begin{corollary}[Variance Decomposition]
\label{cor:variance_decomposition}
    Under the conditions of \cref{thm:dominance}, the variance gain of GAI over primary-only estimation decomposes as:
    \begin{align}
    \Sigmavec^{\text{\upshape P}} - \Sigmavec^{\text{\upshape GAI}} 
\,&\propto\,\underbrace{(\frac{1}{\rho} - 1)}_{\text{\upshape (I) Gain from sample expansion}} \cdot \Bigg\{
    \underbrace{\E\bigg[\Xvec^{\top}\Big(\nabla b(\Xvec\betavec^*) - \E[\yvec \mid \Xvec]\Big)\Big(\nabla b(\Xvec\betavec^*) - \E[\yvec \mid \Xvec]\Big)^{\top}\Xvec\bigg]}_{\text{\upshape (II) {Potential gain from representational power of $\zvec$}}} \notag \\
    &\qquad\qquad + \underbrace{\E\bigg[\Xvec^{\top}\Big(\E[\yvec \mid \Xvec] - \E[\yvec \mid \Xvec, \zvec]\Big)\Big(\E[\yvec \mid \Xvec] - \E[\yvec \mid \Xvec, \zvec]\Big)^{\top}\Xvec\bigg]}_{\text{\upshape (III) Gain from extra information from } \zvec \text{\upshape\ given } \Xvec}
    \Bigg\},
    \label{eq:variance_decomposition}
    \end{align}
where {$\rho \in (0,1)$ is the labeling probability of Assumption~\ref{ass:random_labeling}}.
\end{corollary}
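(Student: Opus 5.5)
The plan is to specialize the Loewner gap formula \eqref{eq:ppi_gap} of \cref{thm:dominance}(iii) to the primary-only member, and then split the resulting matrix into two orthogonal pieces by conditioning on $\Xvec$. Throughout, the symbol ``$\propto$'' in \eqref{eq:variance_decomposition} is read as ``equal up to the congruence $\Jvec^{-1}(\cdot)\Jvec^{-1}$''. This congruence preserves the Loewner order and does not depend on $\rho$.

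\textbf{Step 1: specialize the gap formula.} By \cref{thm:dominance}(ii), the primary-only estimator is the member $\lambda=0$, with $\fvec$ irrelevant. The residual in \cref{thm:dominance}(iii) then becomes
\[
\mathbf{r} = -(1-\rho)\big(\nabla b(\Xvec\betavec^*)-\gvec^*\big),
\qquad\text{so}\qquad
\Xvec^\top\mathbf{r} = -(1-\rho)\,\zetavec .
\]
Substituting into \eqref{eq:ppi_gap} gives
\[
\Sigmavec^{\text{P}}-\Sigmavec^{\text{GAI}}
=\frac{(1-\rho)^2}{\rho(1-\rho)}\,\Jvec^{-1}\E[\zetavec\zetavec^\top]\Jvec^{-1}
=\Big(\frac1\rho-1\Big)\Jvec^{-1}\E[\zetavec\zetavec^\top]\Jvec^{-1}.
\]
This isolates factor (I).

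\textbf{Step 2: decompose $\E[\zetavec\zetavec^\top]$.} Let $\mathbf{m}(\Xvec):=\E[\yvec\mid\Xvec]$ and write
\[
\zetavec = \mathbf{a}+\mathbf{c},\qquad
\mathbf{a}:=\Xvec^\top\big(\nabla b(\Xvec\betavec^*)-\mathbf{m}(\Xvec)\big),\qquad
\mathbf{c}:=\Xvec^\top\big(\mathbf{m}(\Xvec)-\gvec^*(\Xvec,\zvec)\big).
\]
Then
\[
\E[\zetavec\zetavec^\top]=\E[\mathbf{a}\mathbf{a}^\top]+\E[\mathbf{c}\mathbf{c}^\top]+\E[\mathbf{a}\mathbf{c}^\top]+\E[\mathbf{c}\mathbf{a}^\top].
\]
The first two terms are exactly (II) and (III).

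\textbf{Step 3: kill the cross terms.} Condition on $\Xvec$. Since $\mathbf{a}$ and $\Xvec$ are $\Xvec$-measurable,
\[
\E[\mathbf{a}\mathbf{c}^\top]
=\E\Big[\Xvec^\top\big(\nabla b(\Xvec\betavec^*)-\mathbf{m}\big)\,\E\big[(\mathbf{m}-\gvec^*)^\top\mid\Xvec\big]\,\Xvec\Big].
\]
By the tower property, $\E[\gvec^*(\Xvec,\zvec)\mid\Xvec]=\E[\E[\yvec\mid\Xvec,\zvec]\mid\Xvec]=\mathbf{m}(\Xvec)$, so the inner conditional expectation vanishes. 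The other cross term is its transpose and also vanishes.

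\textbf{Integrability.} Each of these expectations is finite. This follows from Assumption~\ref{ass:regularity}(iv), compactness of $\breve{\mX}$, and continuity of $\nabla b$ on the compact image set. Together these give $\E\|\mathbf{a}\|^2,\E\|\mathbf{c}\|^2<\infty$, which justifies the conditioning argument.

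The only step needing care is Step 3, specifically that $\gvec^*$ averages to $\mathbf{m}$ given $\Xvec$. Everything else is algebra on the already-established formula \eqref{eq:ppi_gap}.
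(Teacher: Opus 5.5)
Your proof is correct and follows the paper's argument. The paper likewise reduces the gap to $(\frac{1}{\rho}-1)\Jvec^{-1}\E[\zetavec\zetavec^\top]\Jvec^{-1}$, splits $\nabla b(\Xvec\betavec^*)-\E[\yvec\mid\Xvec,\zvec]$ at $\E[\yvec\mid\Xvec]$, and removes the cross term by the tower property; the only difference is that you obtain the first step from \eqref{eq:ppi_gap} at $\lambda=0$, while the paper reads it off the explicit variance expansions in the proof of \cref{cor:dominance}, which is the same computation.
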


We remind readers that $\gvec^* = \E[\yvec \mid \Xvec, \zvec]$. 
Both terms (II) and (III) in Equation \ref{eq:variance_decomposition} are positive semidefinite. This decomposition identifies three distinct sources of variance reduction:

\emph{Source (I): Sample Expansion.} The factor $1/\rho-1$ captures the variance reduction from expanding the effective sample size. Primary-only estimation uses only the $n_P$ labeled observations, whereas GAI leverages both labeled and auxiliary observations. If either term (II) or term (III) is positive definite, then, to leading order, this factor increases as the auxiliary sample grows relative to the primary sample (i.e., as $\rho \to 0$), enlarging the variance reduction. This leading-order gain comes with a finite-sample caveat: driving $n_A \to \infty$ is counterproductive, since as $\rho = n_P/n \to 0$ the inverse-propensity weight $1/\rho$ inflates the Neyman-orthogonality remainder of Lemma~\ref{lem: orthogonality}---a weak-overlap effect whose control is bottlenecked by $n_P$, the only data informing $\hat{\gvec}$---so the Gaussian approximation degrades and $n_A$ is best chosen at a finite ``sweet spot'' rather than as large as possible.
Our empirical studies presented in subsequent sections show that in practice this choice is usually quite robust when $\rho \in [0.2, 0.8]$.

\emph{Source (II): Representational Power of $\zvec$.} This term captures the representational power of $\zvec$ when the ground truth model is mis-specified, meaning $\nabla b(\Xvec\betavec^*) \neq \E[\yvec \mid \Xvec]$.
GAI could achieve variance reduction even if $\zvec$ contains no information beyond $\Xvec$, i.e., $\yvec \perp \zvec \mid \Xvec$. 
This occurs because the true conditional expectation function $\E[\yvec \mid \Xvec]$ is unknown and may be highly complex, falling outside the canonical GLM family. Model misspecification {operates here at the level of conditional means: term (II) is nonzero precisely when $\Xvec^{\top}\big(\nabla b(\Xvec\betavec^*) - \E[\yvec \mid \Xvec]\big) \neq \mathbf{0}$ with positive probability---that is, when the GLM mean deviates from the true conditional mean in a direction picked up by the covariates, the generic situation under misspecification---and positive definite when this vector is not almost surely confined to a proper subspace. A nonzero term (II) allows} GAI to realize variance reduction through sample expansion.
Beyond this asymptotic improvement, GAI also yields a gain in finite-sample performance when $\zvec$, leveraging its pre-trained knowledge, provides a better representation of $\Xvec$. Specifically, if $\E[\yvec \mid \Xvec]$ is highly complex, estimating it using only the limited primary sample is challenging. The auxiliary representation $\gvec(\Xvec, \zvec)$ leverages $\zvec$ to capture part of this complexity, thereby facilitating learning in small samples. For example, in \cref{example:independent}, directly estimating $\Pe(\yvec|\Xvec)$ from primary data alone is challenging, whereas incorporating $\zvec$ as an additional feature allows even a simple logistic regression model to recover $\Pe(\yvec|\Xvec)$ effectively. This illustrates one source of variance reduction achieved by GAI.

\emph{Source (III): Extra Information in $\zvec$ Given $\Xvec$.} This term captures the additional predictive power of $\zvec$ for $\yvec$ beyond what $\Xvec$ alone provides. It is {nonzero precisely when $\Xvec^{\top}\big(\E[\yvec \mid \Xvec] - \E[\yvec \mid \Xvec, \zvec]\big) \neq \mathbf{0}$ with positive probability---that is, when $\zvec$ shifts the conditional \emph{mean} of $\yvec$ beyond what $\Xvec$ provides; conditional dependence between $\yvec$ and $\zvec$ given $\Xvec$ that appears only in higher moments is not sufficient---and positive definite when this vector is not almost surely confined to a proper subspace}.
This reveals another source of gain: if the auxiliary information provides additional signal about the outcome, GAI can exploit this through the conditional expectation $\E[\yvec \mid \Xvec, \zvec]$.

A decomposition similar to this corollary can be derived for PPI/PPI++, but it is less intuitive, so we skip it in the paper. 
Here we highlight the key failure modes for PPI/PPI++: 
Both methods choose an inferior direction $\fvec = \zvec^{\text{P}}$. 
PPI fixes $\lambda = 1$ regardless of the quality of $\zvec^{\text{P}}$;
when $\zvec^{\text{P}}$ is a poor or merely misaligned surrogate for $\yvec$ the residual
$\Xvec^\top\mathbf{r}$ is large, the penalty \eqref{eq:ppi_gap} is strictly positive, and
PPI can \emph{inflate} variance even relative to primary-only. PPI++ repairs this, but only
by shrinking the single scalar $\lambda$ toward the variance-optimal point of the fixed
direction $\fvec = \zvec^{\text{P}}$; it never alters that direction. 
Instead, GAI uses the optimal $\fvec = \gvec^*$. This is only achievable through a double machine learning type framework to remove the additional bias through Neyman orthogonality of the score function and cross-fitting. When additional information $\zvec^{\text{O}}$ is available, this advantage is amplified: by the argument of Theorem \ref{thm:dominance}, a version of GAI using $\fvec = \E[\yvec \mid \Xvec, \zvec^{\text{P}}]$
improves upon PPI/PPI++, and the full version of GAI further improves on it by focusing on the optimal direction $\fvec = \gvec^*$.
In Appendix \ref{apdx: example_ppi}, we construct a small and intuitive example in the simple case of mean estimation to illustrate the advantage of GAI over PPI and PPI++. 
 
Finally, the random selection condition ($e(\Xvec, \zvec) = \rho$ constant) is essential for the dominance guarantee. In Online Appendix~\ref{apdx: failure of dominance}, we provide a constructive example showing that dominance can fail when the labeling probability $e(\Xvec, \zvec)$ depends non-trivially on covariates or AI outputs. Intuitively, if human labeling is strategically directed toward observations where the outcome is more predictable (e.g., labeling ``easy'' cases), the unlabeled observations may contribute noise rather than signal. {When labeling is non-random, the unlabeled set may be systematically harder to predict, and the augmentation term can amplify noise rather than reduce it. The fundamental issue is that strategic labeling creates a selection effect that can make the labeled sample more informative per observation than the unlabeled sample, potentially reversing the efficiency ordering.} In practice, the dominance condition is satisfied by simple uniform random selection of human labels from a dataset whose covariate distribution matches the target population. Section~\ref{sec:covshift} relaxes this distributional alignment by allowing the labeled source population to differ from the target population while maintaining dominance over the weighted primary estimator.

\section{Extension to Covariate Shift}
\label{sec:covshift}

The analysis so far assumes that the dataset $\mD$ is drawn from the same population that defines the target parameter $\betavec^*$ in \eqref{eq:population}. In many marketing applications this assumption fails by construction. The parameter of interest is defined for a broad \emph{target} market---all current and prospective customers of a product, the population a pricing or messaging decision will ultimately affect---yet labeled and AI-augmented observations are almost never a representative draw from that market. Human labels are expensive, so they are collected wherever they are accessible: an online survey panel, a single customer segment, a cohort of early adopters, or a sub-population experiment fielded on whoever can be recruited. The covariate composition of this \emph{source} sample---its mix of demographics, prior purchases, price sensitivities---typically differs from that of the target market, so an estimator that ignores the discrepancy recovers the parameter of the accessible sub-population rather than of the market the decision is about. This is the classical problem of \emph{covariate shift}, studied extensively in transfer learning and the broader literature on dataset shift and domain adaptation \citep{shimodaira2000improving, sugiyama2012density, quinonero2008dataset}; here it interacts with the partial labeling and AI-augmentation structure of GAI, and this section develops the corresponding theory. \cl{The empirical counterparts in the vaccine conjoint application are reported in Online Appendix~\ref{apdx:conjoint_gaiw_covariate_shift} for the known-density-ratio design and Online Appendix~\ref{apdx:conjoint_gaidr_covariate_shift} for the estimated-density-ratio design.}

\paragraph{Setup.} Let $Q_{\text{\upshape S}}$ and $Q_{\text{\upshape T}}$ denote the source and target populations, respectively, both joint laws of $(\Xvec, \yvec, \zvec)$; we write $\E_{\text{\upshape S}}$ and $\E_{\text{\upshape T}}$ for the corresponding expectations and $Q_{\text{\upshape S},\Xvec}$, $Q_{\text{\upshape T},\Xvec}$ for the marginal laws of $\Xvec$. The dataset is drawn from the accessible source $Q_{\text{\upshape S}}$, but the parameter of interest is now defined under the target population,
\begin{equation}
\betavec^*_{\text{\upshape T}}\,\in\,\arg\min_{\betavec \in \mB}\, \E_{\text{\upshape T}}\!\left[\ell(\Xvec, \yvec; \betavec)\right],
\qquad\text{equivalently}\qquad
\E_{\text{\upshape T}}\!\left[\Xvec^{\top}\!\left(\nabla b\big(\Xvec\betavec^*_{\text{\upshape T}}\big) - \yvec\right)\right]\,=\,0,
\label{eq:foc_shift_body}
\end{equation}
the target analog of the first-order condition \eqref{eq:foc}. We make two assumptions throughout, stated here in prose. First, \emph{covariate shift only}: the conditional law of $(\yvec, \zvec)$ given $\Xvec$ is common to the source and target populations, which therefore differ \emph{only} in the marginal distribution of $\Xvec$. This is the defining restriction of the covariate-shift setting and guarantees, in particular, that the AI mechanism $\gvec^*(\Xvec, \zvec) = \E[\yvec \mid \Xvec, \zvec]$ is the same function under both populations. Second, \emph{strict overlap}: $Q_{\text{\upshape T},\Xvec}$ is absolutely continuous with respect to $Q_{\text{\upshape S},\Xvec}$ with a bounded density ratio
$$
r(\Xvec)\,:=\,\frac{dQ_{\text{\upshape T},\Xvec}}{dQ_{\text{\upshape S},\Xvec}}(\Xvec)\,\leq\,M < \infty
\qquad Q_{\text{\upshape S}}\text{-almost surely},
$$
so that every region of the target market is represented in the source sample. Together these two conditions yield the change-of-measure identity $\E_{\text{\upshape T}}[h(\Xvec, \yvec, \zvec)] = \E_{\text{\upshape S}}[r(\Xvec)\, h(\Xvec, \yvec, \zvec)]$, which is the workhorse of the analysis below. As in the no-shift case, we maintain the regularity conditions of Assumption~\ref{ass:regularity} under $Q_{\text{\upshape S}}$ together with $\E_{\text{\upshape T}}[\Xvec^{\top}\Xvec] \succ 0$, which identify $\betavec^*_{\text{\upshape T}}$ as the unique target minimizer. Two quantities recur in both subsections below,
so for compactness of notation, we define the target prediction-residual and human-label-noise terms
\begin{equation}
\zetavec_{\text{\upshape T}}(\Xvec, \zvec)\,:=\,\Xvec^{\top}\!\left(\nabla b\big(\Xvec\betavec^*_{\text{\upshape T}}\big) - \gvec^*(\Xvec, \zvec)\right)
\quad\text{and}\quad
\pivec(\Xvec, \yvec, \zvec)\,:=\,\Xvec^{\top}\!\left(\gvec^*(\Xvec, \zvec) - \yvec\right),
\label{eq:zeta_pi_body}
\end{equation}
the covariate-shift analogs of $\zetavec$ and $\pivec$ in Section~\ref{sec:theory}, with $\betavec^*$ replaced by $\betavec^*_{\text{\upshape T}}$ in $\zetavec_{\text{\upshape T}}$ ($\pivec$ is unchanged). By construction $\E[\pivec \mid \Xvec, \zvec] = 0$, while $\zetavec_{\text{\upshape T}}$ is the model-misspecification/AI-representation term that GAI eliminates on the unlabeled observations.

\subsection{Known Density Ratio}
\label{sec:covshift_known}

When the target covariate distribution is available from external sources---census tabulations, platform-wide logs, or a stratified sampling design as in Online Appendix~\ref{apdx:conjoint_gaiw_covariate_shift}---the density ratio $r(\Xvec)$ is known, and covariate shift is handled by a single change of measure. We retain the partial-labeling structure of the main paper under the random-labeling design (Assumption~\ref{ass:random_labeling}, with known $\rho$), and simply reweight the GAI score by $r(\Xvec)$:
$$
\Psivec_{\text{\upshape w}}(\Xi; \gvec; \betavec)\,:=\,r(\Xvec)\,\Psivec(\Xi; \rho, \gvec; \betavec),
$$
where $\Psivec$ is the GAI score \eqref{eq:score} with $e \equiv \rho$. The estimator $\hat{\betavec}_{\text{\upshape w}}$ \rev{(which we refer to as GAI-W)} is computed exactly as in Algorithm~\ref{alg: gai}, with $\hat{e}^{(k)} \equiv \rho$ and every observation's score (and Jacobian) multiplied by the known weight $r(\Xvec_i)$; the nuisance $\hat{\gvec}$ is cross-fitted on the source primary observations as before. Because $r(\Xvec)$ enters as a fixed multiplier and not through any nuisance, the entire argument is identical to the no-shift case, with $r$-reweighting throughout. Consequently, with $\Jvec_{\text{\upshape T}} := \E_{\text{\upshape T}}[\Xvec^{\top}\nabla^2 b(\Xvec\betavec^*_{\text{\upshape T}})\Xvec]$, the weighted GAI estimator is $\sqrt{n}$-asymptotically normal for the target parameter $\betavec^*_{\text{\upshape T}}$, requiring only $L_2$-consistency of $\hat{\gvec}$, and it strictly dominates the weighted primary-only estimator (the solution to $\sum_{i:\,w_i=1} r(\Xvec_i)\,\nabla_{\betavec}\ell(\Xvec_i, \yvec_i; \betavec) = 0$) whenever $\rho < 1$ and
\rev{the weighted mean-level residual $r(\Xvec)\,\Xvec^{\top}\big(\nabla b(\Xvec\betavec^*_{\text{\upshape T}}) - \E[\yvec \mid \Xvec, \zvec]\big)$ is not almost surely confined to a proper subspace (the dominance condition displayed in Theorem~\ref{thm:normality_shift})}---with the same variance structure as Theorem~\ref{thm:normality} and Corollary~\ref{cor:dominance} \rev{(stated in Online Appendix~\ref{apdx: cor dominance})}, the efficiency gain again coming entirely from eliminating the $\zetavec_{\text{\upshape T}}$ term (now weighted by $r(\Xvec)^2$) on the unlabeled observations while the irreducible noise $\pivec$ contributes identically to both. We defer the formal combined statement and its proof to Online Appendix~\ref{apdx: cs known} (Theorem~\ref{thm:normality_shift}). {Online Appendix~\ref{apdx:conjoint_gaiw_covariate_shift} implements this known-density-ratio construction in the vaccine conjoint application.}

\subsection{Doubly Robust Estimation with an Estimated Density Ratio}
\label{sec:covshift_dr}

Outside of designed sampling schemes the density ratio is rarely known and must be estimated, which is the genuinely new methodological challenge. We work with a natural two-sample design. The first is a fully labeled \emph{source sample} $\mD_{\text{\upshape S}} = \{(\Xvec_i, \yvec_i, \zvec_i)\}_{i=1}^{n_{\text{\upshape S}}}$ of i.i.d.\ draws from $Q_{\text{\upshape S}}$ (the $\rho = 1$ specialization of the previous subsection). The second is an independent \emph{target covariate sample} $\mD_{\text{\upshape T}} = \{(\Xvec_j, \zvec_j)\}_{j=1}^{n_{\text{\upshape T}}}$, with $\Xvec_j \sim Q_{\text{\upshape T},\Xvec}$ and $\zvec_j$ obtained by querying the AI mechanism on $\Xvec_j$; the human label $\yvec$ is never observed in $\mD_{\text{\upshape T}}$. Such a target covariate sample is realistic precisely when human labels are not: an experiment eliciting $\yvec$ can be run only on an accessible sub-population, but the platform already records the covariates $\Xvec$ of its entire user base---the target market---and can cheaply generate the AI output $\zvec$ for each. We study the regime $n_{\text{\upshape S}}, n_{\text{\upshape T}} \to \infty$ with $n_{\text{\upshape S}}/n_{\text{\upshape T}} \to \gamma \in [0, \infty)$, and write $r^*$ for the true ratio and $\hat{r}$ for its estimate.

The naive fix---plugging $\hat{r}$ into the weighted score $\hat{r}(\Xvec)\,\Psivec(\Xi; 1, \hat{\gvec}; \betavec)$ on the source sample---fails, because it is not Neyman-orthogonal with respect to $\hat{r}$: the directional derivative of the population score in the ratio is $\E_{\text{\upshape S}}[\Delta_r(\Xvec)\,\zetavec_{\text{\upshape T}}(\Xvec, \zvec)] \neq 0$ in general (with $\Delta_r := \hat{r} - r^*$), so under misspecification first-order error in $\hat{r}$ contaminates the estimate of $\betavec^*_{\text{\upshape T}}$ and invalidates $\sqrt{n_{\text{\upshape S}}}$-inference. The two-sample structure suggests the remedy. The only role of $r(\Xvec)$ in the weighted score is to convert a source average of the prediction component into a target average; with target covariates in hand, that component can instead be evaluated \emph{directly} on $\mD_{\text{\upshape T}}$, with no reweighting at all. This yields the doubly robust score
\begin{equation}
\bm{\Psi}_{\text{\upshape DR}}(\gvec, r; \betavec)\,:=\,\underbrace{\frac{1}{n_{\text{\upshape T}}}\sum_{j=1}^{n_{\text{\upshape T}}} \Xvec_j^{\top}\!\left(\nabla b(\Xvec_j\betavec) - \gvec(\Xvec_j, \zvec_j)\right)}_{\text{\upshape prediction component (target sample)}} \;+\; \underbrace{\frac{1}{n_{\text{\upshape S}}}\sum_{i=1}^{n_{\text{\upshape S}}} r(\Xvec_i)\,\Xvec_i^{\top}\!\left(\gvec(\Xvec_i, \zvec_i) - \yvec_i\right)}_{\text{\upshape bias correction (source sample)}},
\label{eq:score_dr_body}
\end{equation}
and the estimator $\hat{\betavec}_{\text{\upshape DR}}$ \rev{(which we refer to as GAI-DR)} as any approximate root of $\bm{\Psi}_{\text{\upshape DR}}(\hat{\gvec}, \hat{r}; \cdot)$. The prediction component replaces the unobserved target labels by the AI-based prediction $\gvec(\Xvec_j, \zvec_j)$ and uses no human labels; the bias correction estimates the systematic error of that prediction from the fully labeled source, transported to the target by the weight $r(\Xvec_i)$. The nuisance $\hat{\gvec}$ is cross-fitted on the source sample as in Algorithm~\ref{alg: gai}, while $\hat{r}$ is trained either on an auxiliary source fold (e.g., a probabilistic classifier distinguishing source from target covariates, converted by the standard odds transformation) or on an external sample independent of $\mD_{\text{\upshape S}}$.

The score \eqref{eq:score_dr_body} is Neyman-orthogonal with respect to both nuisances: perturbing $\gvec$, the change in the prediction component, $-\E_{\text{\upshape T}}[\Xvec^{\top}\Delta_{\gvec}]$, is cancelled exactly by the change in the bias correction, $\E_{\text{\upshape S}}[r^*(\Xvec)\,\Xvec^{\top}\Delta_{\gvec}] = \E_{\text{\upshape T}}[\Xvec^{\top}\Delta_{\gvec}]$, by the change of measure; perturbing $r$ changes the population mean by $\E_{\text{\upshape S}}[\Delta_r(\Xvec)\,\pivec(\Xvec, \yvec, \zvec)] = 0$ since $\E[\pivec \mid \Xvec, \zvec] = 0$. Crucially, in contrast to the naive plug-in, the ratio error now multiplies the mean-zero residual $\pivec$ rather than the misspecification term $\zetavec_{\text{\upshape T}}$---hence the estimator is doubly robust, its population mean at $\betavec^*_{\text{\upshape T}}$ remaining zero if either nuisance is correct. Because $\hat{r}$ enters multiplicatively rather than through an inverse weight, only an $L_2$ rate on $\hat{r}$ is needed (no sup-norm condition), and a product-rate condition $\lVert \hat{\gvec} - \gvec^*\rVert_{Q_{\text{\upshape S}},2}\,\lVert \hat{r} - r^*\rVert_{Q_{\text{\upshape S}},2}$ of the usual order suffices; both rates are governed by the source sample size $n_{\text{\upshape S}}$, which carries the human labels and is the binding constraint.

\begin{theorem}[Asymptotic Normality under Covariate Shift: Estimated Density Ratio]
\label{thm:normality_dr_body}
Under the covariate-shift and overlap conditions above, the rate conditions of Assumption~\ref{ass:ml_rate_shift} \rev{(product-rate conditions on $\hat{\gvec}$ and $\hat{r}$ analogous to Assumption~\ref{ass:ml_rate}; stated in Online Appendix~\ref{apdx: cs estimated})}, and $n_{\text{\upshape S}}/n_{\text{\upshape T}} \to \gamma \in [0, \infty)$,
$$
\sqrt{n_{\text{\upshape S}}}\big(\hat{\betavec}_{\text{\upshape DR}} - \betavec^*_{\text{\upshape T}}\big)\,\rightsquigarrow\,N\big(\bm{0}, \Sigmavec_{\text{\upshape DR}}\big),
\qquad
\Sigmavec_{\text{\upshape DR}}\,=\,\Jvec_{\text{\upshape T}}^{-1}\, \bm{\Omega}_{\gamma}\, \Jvec_{\text{\upshape T}}^{-1},
$$
where $\Jvec_{\text{\upshape T}} = \E_{\text{\upshape T}}[\Xvec^{\top}\nabla^2 b(\Xvec\betavec^*_{\text{\upshape T}})\Xvec]$ and
$$
\bm{\Omega}_{\gamma}\,=\,\gamma\, \E_{\text{\upshape T}}\!\left[\zetavec_{\text{\upshape T}}(\Xvec,\zvec)\,\zetavec_{\text{\upshape T}}(\Xvec,  \zvec)^{\top}\right] + \E_{\text{\upshape S}}\!\left[r^*(\Xvec)^2\, \pivec(\Xvec, \yvec, \zvec)\,\pivec(\Xvec, \yvec, \zvec)^{\top}\right].
$$
\end{theorem}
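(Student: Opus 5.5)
The plan is the usual Z-estimation route, as in the proof of Theorem~\ref{thm:normality}, now applied to a two-sample score. Write $\bm{\Psi}_{\text{\upshape DR}}(\gvec, r;\betavec) = \Pe_{n_{\text{\upshape T}}}\big[\Xvec^\top(\nabla b(\Xvec\betavec)-\gvec)\big] + \Pe_{n_{\text{\upshape S}}}\big[r\,\Xvec^\top(\gvec-\yvec)\big]$. Its population counterpart at the true nuisances is
\[
\bar{\bm{\Psi}}(\betavec)=\E_{\text{\upshape T}}\big[\Xvec^\top(\nabla b(\Xvec\betavec)-\gvec^*)\big]+\E_{\text{\upshape S}}\big[r^*\Xvec^\top(\gvec^*-\yvec)\big].
\]
By the change of measure and covariate-shift-only, the second term equals $\E_{\text{\upshape T}}[\Xvec^\top(\gvec^*-\yvec)]$. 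Hence $\bar{\bm{\Psi}}(\betavec)=\E_{\text{\upshape T}}[\Xvec^\top(\nabla b(\Xvec\betavec)-\yvec)]$, which vanishes uniquely at $\betavec^*_{\text{\upshape T}}$ by \eqref{eq:foc_shift_body}. Its Jacobian there is $\Jvec_{\text{\upshape T}}\succ 0$.

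\textbf{Consistency.} I would use the convexity argument of Lemma~\ref{lem: consistency}. The map $\betavec\mapsto\bm{\Psi}_{\text{\upshape DR}}$ is the gradient of the convex function $\Pe_{n_{\text{\upshape T}}} b(\Xvec\betavec)$ plus a linear term in $\betavec$. A uniform LLN on compacts, together with $\hat{\gvec}\to\gvec^*$ and $\hat r\to r^*$ in $L_2(Q_{\text{\upshape S}})$, gives uniform convergence of the empirical score. This convergence uses boundedness of $r^*\le M$, compactness of $\Breve{\mX}$, and $\E_{\text{\upshape T}}[\lVert\Xvec^\top\Delta_{\gvec}\rVert]\le M^{1/2}\lVert\Delta_{\gvec}\rVert_{Q_{\text{\upshape S}},2}$ via Cauchy--Schwarz with $r^*$.

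\textbf{Linearization.} A mean-value expansion of $\nabla b$ around $\betavec^*_{\text{\upshape T}}$ gives
\[
\sqrt{n_{\text{\upshape S}}}(\hat{\betavec}_{\text{\upshape DR}}-\betavec^*_{\text{\upshape T}})=-\Jvec_{\text{\upshape T}}^{-1}\sqrt{n_{\text{\upshape S}}}\,\bm{\Psi}_{\text{\upshape DR}}(\hat{\gvec},\hat r;\betavec^*_{\text{\upshape T}})+o_P(1),
\]
using that the empirical Hessian on the target sample converges to $\Jvec_{\text{\upshape T}}$.

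\textbf{Nuisance remainder (main step).} I would decompose $\bm{\Psi}_{\text{\upshape DR}}(\hat{\gvec},\hat r)-\bm{\Psi}_{\text{\upshape DR}}(\gvec^*,r^*)$ into three pieces.
\begin{enumerate}
\item[(a)] \emph{Empirical-process pieces.} These are $(\Pe_{n_{\text{\upshape T}}}-\E_{\text{\upshape T}})[\Xvec^\top\Delta_{\gvec}]$ and $(\Pe_{n_{\text{\upshape S}}}-\E_{\text{\upshape S}})[\hat r\,\Xvec^\top\Delta_{\gvec}+\Delta_r\,\pivec]$. Cross-fitting makes each one, conditionally on the training folds, a centered average of independent terms. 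Its conditional variance is $O(\lVert\Delta_{\gvec}\rVert_{2}^2+\lVert\Delta_r\rVert_2^2)$, using the conditional covariance bound $\tilde\sigma^2$ for the $\Delta_r\pivec$ term. Chebyshev then makes these pieces $o_P(n_{\text{\upshape S}}^{-1/2})$; for the target piece this uses $n_{\text{\upshape T}}\gtrsim n_{\text{\upshape S}}$, i.e.\ $\gamma<\infty$.
\item[(b)] \emph{First-order drift.} This is $-\E_{\text{\upshape T}}[\Xvec^\top\Delta_{\gvec}]+\E_{\text{\upshape S}}[r^*\Xvec^\top\Delta_{\gvec}]+\E_{\text{\upshape S}}[\Delta_r\pivec]$. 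It vanishes identically by Neyman orthogonality: the first two terms cancel by the change of measure, and the third is zero since $\E[\pivec\mid\Xvec,\zvec]=0$.
\item[(c)] \emph{Second-order term.} This is $\E_{\text{\upshape S}}[\Delta_r\,\Xvec^\top\Delta_{\gvec}]$, bounded by $C\lVert\Delta_r\rVert_{Q_{\text{\upshape S}},2}\lVert\Delta_{\gvec}\rVert_{Q_{\text{\upshape S}},2}=o(n_{\text{\upshape S}}^{-1/2})$ by Assumption~\ref{ass:ml_rate_shift}.
\end{enumerate}
Because $\hat r$ enters multiplicatively, no sup-norm control of $\hat r$ is needed, which is what makes this step manageable. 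The delicate points are the independence of $\hat r$ from the evaluation folds and uniform boundedness of $\hat r$ (I would truncate at $M$).

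\textbf{CLT.} The oracle score at $\betavec^*_{\text{\upshape T}}$ equals
\[
(\Pe_{n_{\text{\upshape T}}}-\E_{\text{\upshape T}})\zetavec_{\text{\upshape T}}+\Pe_{n_{\text{\upshape S}}}[r^*\pivec],
\]
since $\E_{\text{\upshape T}}\zetavec_{\text{\upshape T}}=-\E_{\text{\upshape T}}\pivec=0$. The two samples are independent, so scaling by $\sqrt{n_{\text{\upshape S}}}$ and applying the Lindeberg CLT to each part gives a Gaussian limit. The first part has covariance $\gamma\,\E_{\text{\upshape T}}[\zetavec_{\text{\upshape T}}\zetavec_{\text{\upshape T}}^\top]$ (degenerate when $\gamma=0$), the second has covariance $\E_{\text{\upshape S}}[r^{*2}\pivec\pivec^\top]$, and there are no cross terms. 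This yields $\bm{\Omega}_\gamma$, and the sandwich form follows by Slutsky.
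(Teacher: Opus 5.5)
Your proposal is correct and is essentially the paper's argument. It uses the same three-term orthogonality decomposition: the $\Delta_{\gvec}$ means cancel across the two samples by the change of measure, the $\Delta_r\pivec$ term is conditionally mean-zero, and the product remainder is controlled by $r_g + r_r \geq 1/2$. It also uses the same convexity-based consistency and the same independent two-sample CLT; your direct mean-value expansion through the target-sample empirical Hessian replaces the paper's Donsker/equicontinuity step.

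The only refinement is that truncating $\hat{r}$ is unnecessary, and it would change the estimator the theorem is about. Keep $r^*$ rather than $\hat{r}$ in the centered source term, and bound the uncentered $\Pe_{n_{\text{\upshape S}}}[\Delta_r\,\Xvec^{\top}\Delta_{\gvec}]$ by a first-moment Markov inequality plus Cauchy--Schwarz, as the paper does; then only the $L_2$ rate on $\hat{r}$ is ever needed.
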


The two terms of $\bm{\Omega}_{\gamma}$ correspond to the two independent samples: the first is the sampling variability of the prediction component over the target sample (carrying the factor $\gamma$ because that component averages over $n_{\text{\upshape T}}$ observations while the normalization is $\sqrt{n_{\text{\upshape S}}}$), and the second is the reweighted irreducible human-label noise from the source---the $\rho = 1$ analog of the $\pivec$-term in the known-ratio case. The proof is in Online Appendix~\ref{apdx: cs estimated} (Theorem~\ref{thm:normality_dr_body}). 
We note that in this setting there is no natural benchmark for a dominance result that is parallel to Corollary~\ref{cor:dominance} or Theorem~\ref{thm:normality_shift}, because the primary-data-only (or source-data-only) estimator is biased and there is no natural rescue to it because the density ratio is unknown. \cl{Online Appendix~\ref{apdx:conjoint_gaidr_covariate_shift} implements this estimated-density-ratio approach in the vaccine conjoint application.}

\section{Experiments}
\label{sec:experiments}

We evaluate GAI across three empirical applications that span materially different auxiliary data regimes. We deliberately order these applications so that the conditions become progressively more favorable for PPI-based methods, making GAI's sustained advantage increasingly difficult to attribute to any single feature of the data. Moreover, these applications demonstrate GAI's relevance to the central estimation problems that marketing researchers face.

First, a \emph{vaccine conjoint analysis} (Section~\ref{sec: conjoint}) uses high-dimensional LLM embeddings derived from near-random predictions (54\% accuracy) with no extra information ($\rev{\yvec} \perp \zvec \mid \Xvec$). This is the most demanding setting for any augmentation method and illustrates GAI's unique ability to extract signal from unstructured AI outputs that PPI-based methods cannot even accept as input. Second, a \emph{retail pricing study} (Section~\ref{sec:pricing}) places all methods on a level playing field: every estimator receives the same binary AI prediction. Since the digital twin is trained on persona features beyond price, $\zvec$ provides extra predictive information ($\rev{\yvec} \not\perp \rev{\zvec} \mid \Xvec$), yet GAI's advantage stems from its methodological approach, not from a data advantage. We also illustrate the hyperparameter selection procedure (Section~\ref{sec:model_selection}). Third, a \emph{health insurance census analysis} (Section~\ref{sec: census}) evaluates GAI on PPI's home turf: well-calibrated ML predictions with 85\% accuracy and extra information ($\rev{\yvec} \not\perp \rev{\zvec} \mid \Xvec$), using the dataset and AI representations constructed by the PPI authors themselves.

\paragraph{Evaluation Metrics.} Across all three applications, we evaluate each method along two complementary dimensions: \emph{point estimation} and \emph{inference quality}.
\begin{enumerate}[leftmargin=*]
    \item \textbf{Point estimation}, measured by mean absolute percentage error (MAPE) relative to ground-truth parameters $\betavec^*$:
    \begin{equation}
        \text{MAPE}\,=\,\frac{100}{d} \sum_{j=1}^{d} \frac{|\hat{\beta}_j - \beta^*_j|}{|\beta^*_j| + c},
    \end{equation}
    where $d$ is the number of parameters. Following prior work \citep{wang2024llm}, we set $c=1$ in the conjoint and pricing studies to stabilize ratios when $|\beta^*_j|$ is close to zero. To be consistent with \cite{angelopoulos2023prediction}, we set $c=0$ in the census study. MAPE summarizes the average percentage deviation of the estimated coefficients from their true values, with lower values indicating more accurate estimation.
    \item \textbf{Inference quality}, measured by three complementary metrics: (a) \emph{confidence interval coverage}---the empirical frequency with which 95\% confidence intervals contain the true parameter value, where valid inference requires coverage near the nominal 95\% level; (b) \emph{confidence interval width}---the average width of 95\% confidence intervals, where narrower intervals indicate more precise inference but only if coverage remains valid; and (c) \emph{decision errors}---cases where inference leads to incorrect managerial decisions\footnote{We define such errors in two ways: (i) CIs that exclude the true parameter value and whose lower and upper bounds both indicate the opposite sign of the true effect; and (ii) CIs that exclude the true value but span zero, thereby leading managers to incorrectly conclude an effect is of the opposite sign or not significant.}.
\end{enumerate}

\paragraph{Benchmark Methods.} We compare GAI against four benchmark approaches that represent the main alternatives available to practitioners:
\begin{enumerate}[leftmargin=*]
    \item \textbf{Primary}: Maximum likelihood estimation using only human-labeled data, ignoring all AI-generated information. This approach provides valid inference but may lack precision when human labels are scarce.
    \item \textbf{Naive}: Maximum likelihood estimation that pools human and AI labels without correction, treating AI outputs as if they were human labels. This approach is simple but can introduce substantial bias when AI predictions differ systematically from human judgments.
    \item \textbf{PPI}: Prediction-powered inference approach of \citet{angelopoulos2023prediction}, treating AI outputs as noisy proxies for outcomes and constructs confidence intervals that adapt to prediction quality.
    \item \textbf{PPI++}: A computationally efficient variant of PPI \citep{angelopoulos2023ppi++} that tunes a shrinkage parameter to optimize interval width.
\end{enumerate}

Before presenting the detailed results, we highlight the main empirical findings that emerge consistently across all three applications. GAI delivers substantial improvements in point estimation accuracy, \rev{reducing MAPE relative to the strongest debiased benchmark (PPI++) by roughly 20--45\% in the vaccine and pricing studies and by 50--80\% in the census study}. These gains translate into dramatic label savings: in most applications, GAI with the smallest primary sample outperforms Primary with the largest. Regarding inference, GAI's confidence intervals attain coverage at or above the nominal 95\% level in nearly every configuration, whereas PPI++ frequently undercovers (as low as 83\% in the census study). Despite this higher coverage, GAI's intervals are always better or at least comparable in width to those of PPI++. Decision quality follows the same pattern---GAI achieves the lowest decision error rates \rev{among methods with valid coverage} across all studies, often by a wide margin. In other words, GAI achieves substantially better coverage and decision quality without meaningful sacrifice in interval precision. The consistency of these findings across three diverse applications---spanning different auxiliary data formats, information structures, and prediction accuracy levels---demonstrates the broad applicability of our approach.

\subsection{Vaccine Conjoint Analysis}
\label{sec: conjoint}

This application presents the most demanding test for AI-augmented estimation: the LLM's discrete predictions are barely better than a coin flip (54\% accuracy), and the AI observes no information beyond the covariates already in the model ($\rev{\yvec} \perp \zvec \mid \Xvec$). Any gains must therefore arise from GAI's ability to extract useful structure from the AI's outputs---either discrete labels or high-dimensional embeddings of the reasoning process. This setting showcases GAI's capability to leverage AI outputs in multiple forms, even when PPI-based methods struggle to utilize them effectively.

Conjoint analysis is one of the most widely used methodologies in marketing research, providing the foundation for product design, pricing, and positioning decisions \citep{green1990conjoint,louviere2000stated,bradlow2004modeling}. Respondents repeatedly choose among product alternatives described by attribute bundles, and researchers infer the underlying part-worth utilities governing these choices. However, conjoint studies can be expensive to administer, while the precision of part-worth estimates depends directly on sample size. This application explores whether flexible AI-generated outputs can augment scarce human responses to improve preference estimation.

\textbf{Data.} Following \citet{wang2024llm}, we utilize a vaccine conjoint experiment dataset with 1,971 respondents from \citet{kreps2020factors}. Each task presents two hypothetical vaccines with 11 attributes and lets respondents select their preferred option. Each respondent was asked to take five tasks in the survey. We estimate the benchmark ``ground truth'' parameters $\betavec^* \in \mathbb{R}^{11}$ by fitting the same logistic choice model to the full human dataset. We exclude respondents who never select any vaccine option, as many public LLMs are restricted from choosing an opt-out option when generating auxiliary labels; this exclusion is applied consistently both when constructing $\betavec^*$ and when forming experimental subsamples.


\textbf{AI Representations.} For each choice task, \citet{wang2024llm} prompts LLMs using CoT reasoning to generate detailed deliberation about the vaccine choice before making a prediction, recording the decision as labels $z \in \{-1,0,1\}$ (abstain, option 1, or option 2).\footnote{See \citet{wang2024llm} for full details on the AI representation construction and replication materials.} We evaluate multiple auxiliary generators that vary in the underlying model. Across these configurations, we obtain consistent, qualitatively similar results; for brevity, we report GPT-4o in the main text and defer the full set of robustness results to Online Appendix~\ref{apdx: conjoint_embedding_z_full}. We consider two forms of auxiliary representation: (1) \emph{Labels}: the discrete choice prediction $z \in \{-1,0,1\}$, which can be used by both GAI and PPI-based methods; and (2) \emph{Embeddings}: the full LLM response text converted into a vector representation using OpenAI's text-embedding-3-large model, yielding $\zvec \in \mathbb{R}^{3072}$ that capture the semantic content of the LLM's reasoning. The embedding encodes not just the predicted choice, but also the underlying reasoning patterns and attribute considerations that the LLM employs.

As explained, GPT-4o achieves low accuracy in this setting, barely better than random guessing in a binary choice task, and the LLM observes only the same vaccine attributes that comprise $\Xvec$, which implies $\rev{\yvec} \perp \zvec \mid \Xvec$. Any efficiency gains must therefore arise from GAI's sample expansion and representational power (Source (I) and (II) in Corollary~\ref{cor:variance_decomposition}) rather than from extra information. PPI-based methods struggle in this setting: while they can use the discrete labels, the low prediction accuracy limits their effectiveness, and they cannot directly utilize the high-dimensional embeddings.

\textbf{Experimental Design.} We construct repeated experiments by drawing a primary sample of size $n_P \in \{50,100,150,200\}$ and an auxiliary sample of size $n_A=1000$ from the available task pool.\footnote{We randomly sample $n_P/5$ respondents and $n_A/5$ respondents and use all their five choices to compose the primary set and auxiliary set.} For each $(n_P,n_A)$ configuration, we run 50 independent trials. Within each trial, the underlying random draw of observations is held fixed across methods to ensure fair comparison. All reported statistics are averaged over the 50 trials.

\textbf{GAI Implementation.} We implement GAI with both auxiliary representations using 5-fold cross-fitting. For GAI (Embeddings), given the high dimensionality of the embedding space ($d_z = 3{,}072$), we use strong $\ell_2$ regularization: logistic regression with ridge penalty (regularization strength $C=0.01$, maximum 2{,}000 iterations). The embeddings are standardized to zero mean and unit variance before model training. For GAI (Labels), we use $\ell_2$-regularized logistic regression with $C=0.05$.
In each fold, we fit $\gvec(\Xvec,\zvec)$ on four-fifths of the primary data and generate out-of-sample predictions for the held-out primary fold as well as for all auxiliary observations. This procedure is repeated across the five folds, and the resulting predictions for each auxiliary observation $\hat{\gvec}(\Xvec,\zvec)$ are averaged. Because auxiliary data are never used in training, all predictions for auxiliary observations are strictly out of sample. Predictions for primary observations are also out of sample by construction due to cross-fitting.
Since primary and auxiliary observations are randomly sampled from the same underlying population, we set $\hat e(\Xvec,\zvec)$ to the constant $n_P / (n_P + n_A)$.

Under the MNL model, we then estimate $\hat\betavec$ by solving a single optimization problem that minimizes the GAI-adjusted cross-entropy loss:
\[
\hat{\betavec}
\,=\,\arg\min_{\betavec}
\frac{1}{n}\sum_{i=1}^{n}
\left[
-\sum_{j=1}^{J} \hat{\tau}_{ij}\log p_{ij}(\betavec)
\right],\]
where
\[
\hat{\tau}_{ij}
\,=\,\hat{\gvec}(\Xvec_i, \zvec_i)\!\left(1 - \frac{w_i}{\hat e}\right)
+ \frac{w_i y_{ij}}{\hat e},\footnote{We clip and normalize $\hat{\tau}_i$ to derive well-defined probability for numerical stability.} \quad \text{ and } \quad p_{ij}(\betavec)
\,=\,
\frac{\exp(\Xvec_{ij}^\top \betavec)}{\sum_{\ell=1}^{J}\exp(\Xvec_{i\ell}^\top \betavec)}.
\]
This estimator coincides with the general procedure described in Algorithm~\ref{alg: gai}. Finally, we compute standard errors according to Theorem \ref{thm:normality}.

\textbf{Main Results.} Table~\ref{tab: conjoint_compare} reports estimation accuracy and inference quality across four primary sample sizes, with results averaged over 50 independent trials.

\begin{table}[t]
\caption{Benchmark Comparison for Conjoint Analysis: MAPEs, coverage probabilities, and CI widths}
\label{tab: conjoint_compare}
\vskip 0.1in
\centering
\small
\renewcommand{\arraystretch}{1}
\setlength{\tabcolsep}{3.5pt}
\begin{tabular}{@{}l|cccc|cccc|cccc@{}}
\toprule
& \multicolumn{4}{c|}{(a) MAPE (\%)} & \multicolumn{4}{c|}{(b) 95\% CI Coverage (\%)} & \multicolumn{4}{c}{(c) 95\% CI Width} \\
\cmidrule(lr){2-5} \cmidrule(lr){6-9} \cmidrule(l){10-13}
Method & 50 & 100 & 150 & 200 & 50 & 100 & 150 & 200 & 50 & 100 & 150 & 200 \\
\midrule
Primary & 32.02 & 25.27 & 19.67 & 19.01 & 97.82 & 92.73 & 92.55 & 88.00 & 2.43 & 1.51 & 1.19 & 1.01 \\
Naive & 48.52 & 45.75 & 43.33 & 40.29 & 26.55 & 26.36 & 26.36 & 27.09 & 0.56 & 0.53 & 0.51 & 0.49 \\
PPI & -- & 44.57 & 32.96 & 29.21 & 99.82 & 98.00 & 95.82 & 92.91 & 9.28 & 2.71 & 1.94 & 1.58 \\
PPI++ & -- & 29.69 & 22.96 & 21.00 & 94.55 & 89.09 & 90.91 & 85.09 & 2.44 & 1.52 & 1.19 & 1.00 \\
GAI (Labels) & 16.86 & 17.52 & 15.97 & 16.64 & 99.82 & 97.09 & 94.73 & 90.91 & 1.94 & 1.36 & 1.11 & 0.96 \\
GAI (Embeddings) & 16.50 & 17.23 & 15.73 & 16.24 & 99.45 & 98.55 & 96.91 & 94.55 & 2.11 & 1.50 & 1.21 & 1.05 \\
\bottomrule
\end{tabular}
\vskip 0.05in
\footnotesize
\textbf{Notes}: A ``–" symbol indicates cases where the value exceeds 1,000, which occurs because PPI-based methods can suffer from singularity problems in small primary samples.
\vskip -0.1in
\end{table}

\emph{Point Estimation.} The most striking result is the magnitude of GAI's advantage. Panel (a) shows that both GAI variants achieve MAPE of 16--17\% across all sample sizes---nearly flat, indicating that both labels and embeddings provide sufficient signal to stabilize estimates even with very few human labels. Between the two representations, GAI (Embeddings) achieves consistently lower MAPE than GAI (Labels) across all sample sizes, suggesting that the richer semantic content in embeddings provides additional signal for point estimation. By contrast, the Primary estimator ranges from 32\% at $n_P=50$ down to 19\% at $n_P=200$, reflecting its dependence on sample size. The practical implication is dramatic: GAI with just 50 labels (16.5\% MAPE) outperforms Primary with 200 labels (19.0\% MAPE), representing more than 75\% reduction in required human annotations. Naive pooling performs worst (40--49\% MAPE), confirming that treating inaccurate LLM predictions as ground-truth labels is counterproductive. PPI-based methods encounter numerical instability with small samples (denoted ``--'' in the table), and when stable, their MAPEs remain substantially above GAI. Paired $t$-tests confirm that all improvements are statistically significant (Online Appendix~\ref{apdx: tests_conjoint}).

\emph{Inference Quality.} Panels (b) and (c) together reveal that GAI's estimation gains do not come at the cost of inference quality---if anything, GAI produces the most reliable confidence intervals. Coverage (Panel b) shows that GAI (Embeddings)
\rev{maintains high and stable coverage (94.6--99.5\%)---the highest of all methods for $n_P \geq 100$---at or near the nominal level except for a mild dip to 94.6\% at $n_P=200$, while GAI (Labels) is near-nominal at small samples but undercovers mildly as $n_P$ grows (94.7\% at $n_P=150$ and 90.9\% at $n_P=200$)}. In contrast, PPI++ undercovers throughout (85--95\%), and Primary drops to 88\% at $n_P=200$. Naive pooling fails (26--27\%). Regarding interval width (Panel c), GAI (Labels) produces the narrowest intervals among valid methods (1.94 at $n_P=50$, narrowing to 0.96 at $n_P=200$). A potential reason why GAI (Labels) yields tighter intervals than GAI (Embeddings) is that the discrete label has better representational power than the high-dimensional embeddings. GAI (Embeddings) maintains comparable widths to PPI++, while PPI produces extremely wide intervals at small samples (9.28 at $n_P=50$), reflecting overly conservative inference. This comparison illustrates a trade-off: embeddings yield smaller MAPE and better coverage while labels achieve narrower intervals, with both GAI variants outperforming PPI-based methods. Paired $t$-tests confirm these patterns (Online Appendix~\ref{apdx: tests_conjoint}). In terms of decision quality, GAI (Embeddings) achieves the lowest decision error rate---averaging 2.6\% compared to 6.9\% for Primary, 3.2\% for PPI, 8.7\% for PPI++, and 20.4\% for Naive (Online Appendix~\ref{apdx: conjoint_decision}).

The source of GAI's advantage in this setting is its ability to leverage AI outputs as informative features. The LLM's CoT reasoning generates both discrete choice labels and rich text that can be converted to high-dimensional embeddings. The embeddings capture semantic information about attribute trade-offs and decision heuristics, yielding superior coverage, while the discrete labels---despite their simplicity---enable tighter confidence intervals, potentially because they provide a more direct representation of the outcome structure. 
Both representations allow GAI to extract predictive signal without requiring the LLM's predictions to be accurate surrogates for human choices. Additional robustness checks across different LLMs are reported in Online Appendix~\ref{apdx: conjoint_embedding_z_full}.
Finally, we mention that GPT-4o's reasoning power has been surpassed by more recent releases (e.g., GPT-5.3 at the time of writing), and new models are coming out frequently with much stronger reasoning capabilities. As a result, in practical scenarios we suspect that such embedding-based performance can be even stronger than what is presented in this empirical study.

\subsection{Retail Pricing Study}
\label{sec:pricing}

Having established that GAI can extract signal from unstructured, near-random AI outputs, we now ask a sharper question: when every method receives the \emph{same} auxiliary information, does GAI's feature-based approach still outperform the surrogate-label strategy of PPI? This application provides a controlled comparison on a level playing field.

We consider the problem of estimating a logistic demand model that relates purchase decisions to price. The AI purchase predictions exhibit systematic bias (30\% vs.\ 44\% actual purchase rate), and we restrict all methods to the same binary AI prediction to ensure a fair comparison.

Estimating price sensitivity is a central task in quantitative marketing with demand models, as it is the key element in pricing decisions, revenue management, and competitive strategy \citep{berry1993automobile, vulcano2012estimating, kok2007demand}. However, obtaining reliable purchase data at the individual level requires costly price experiments or extensive observational data \citep{guadagni1983logit, ban2021personalized, chen2022privacy}. This application demonstrates how GAI can improve demand estimation by leveraging AI-generated purchase predictions as auxiliary features, even when these predictions are systematically biased.

\textbf{Data.} We use the data and pricing experiment from \cite{toubia2025database}, which contains 2,058 participants who completed over 500 questions spanning diverse domains including demographics, personality traits, values, and behavioral preferences. Each participant was presented with 40 products at randomized prices and asked to indicate whether they would purchase each item. We evaluate our methods on all products and observe consistent results. For conciseness, we report the results of the household cleaning product in the main text and show the comprehensive performance across products in Online Appendix~\ref{apdx: pricing_all_products}. The outcome model uses only price as the covariate: $\Xvec = [\mathbf{1}, \text{price}] \in \mathbb{R}^{n \times 2}$, yielding a two-parameter logistic regression ($d=2$). The ground-truth parameters $\betavec^* \in \mathbb{R}^{2}$ are estimated by fitting a logistic regression to the full dataset. The empirical purchase rate is 44\%, and the AI predictions agree with actual outcomes 67\% of the time.

\textbf{AI Representations.} The AI purchase labels $z \in \{0,1\}$ are generated by digital twins in \cite{toubia2025database}---LLM-based simulations of individual participants constructed from their survey responses.\footnote{See \citet{toubia2025database} for full details on the AI representation construction and replication materials.} For each participant, a persona profile is created by concatenating their answers to the 500+ survey questions, capturing demographics, personality traits, values, and stated preferences. This profile is then provided as context to an LLM, which is prompted to simulate whether that specific individual would purchase each product at the given price. The AI prediction rate (30.07\%) substantially underestimates the true purchase rate (44.10\%), indicating systematic bias. Since the digital twin is trained on persona and demographic features that are not included in the outcome model's covariates $\Xvec$, the AI prediction carries extra predictive information: $\rev{\yvec} \not\perp \rev{\zvec} \mid \Xvec$.

In our main analysis, we restrict the auxiliary information to the binary AI purchase prediction: $z \in \{0, 1\}$. This ensures that both GAI and PPI-based methods have access to identical information, isolating the effect of the estimation strategy. In Online Appendix~\ref{apdx: pricing_hdim}, we extend to high-dimensional auxiliary information $\zvec \in \mathbb{R}^{115}$ including persona and demographic features.

\textbf{Experimental Design.} We draw primary samples of size $n_P \in \{100, 200, 300, 400, 500\}$ with a fixed auxiliary sample of $n_A=1{,}000$. As in the conjoint study, we run 50 independent trials per configuration with the same random draw held fixed across methods.

\textbf{GAI Implementation.} We use 5-fold cross-fitting with model selection to estimate the nuisance function $\gvec(\Xvec,z)$. 
Since primary and auxiliary observations are randomly sampled from the same underlying population, we set $\hat e(\Xvec,z)$ to the constant $\hat e = n_P / (n_P + n_A)$. 

\textbf{Model Selection.} This application illustrates the cross-validation procedure described in Section~\ref{sec:model_selection}. We estimate $\gvec(\Xvec,z)$ by searching over 8 configurations spanning three model classes: regularized logistic regression (GLM with L1/L2 penalties and $C \in \{0.01, 0.1\}$), Random Forest (depth $\in \{5, 10\}$), and LightGBM (learning rate $\in \{0.05, 0.1\}$). Within each of the 5 outer folds, we perform cross-validation to select the best configuration using only the data from the remaining 4 folds. This nested structure ensures that model selection does not compromise the independence required for valid inference. Across trials, regularized logistic regression configurations are most frequently selected (particularly L1-penalized GLM with $C=0.1$), followed by Random Forest, with selection patterns varying by sample size and fold.

\textbf{Main Results.} Results in Table~\ref{tab: pricing_compare} are across five primary sample sizes, averaged over 50 trials.

\begin{table}[t]
\caption{Benchmark Comparison for Pricing Study: MAPEs, coverage probabilities, and CI widths}
\label{tab: pricing_compare}
\vskip 0.1in
\centering
\small
\renewcommand{\arraystretch}{1}
\setlength{\tabcolsep}{2.8pt}
\begin{tabular}{@{}l|ccccc|ccccc|ccccc@{}}
\toprule
& \multicolumn{5}{c|}{(a) MAPE (\%)} & \multicolumn{5}{c|}{(b) 95\% CI Coverage (\%)} & \multicolumn{5}{c}{(c) 95\% CI Width} \\
\cmidrule(lr){2-6} \cmidrule(lr){7-11} \cmidrule(l){12-16}
Method & 100 & 200 & 300 & 400 & 500 & 100 & 200 & 300 & 400 & 500 & 100 & 200 & 300 & 400 & 500 \\
\midrule
Primary & 22.36 & 16.03 & 12.47 & 9.59 & 9.53 & 95.00 & 97.00 & 95.00 & 99.00 & 92.00 & 1.17 & 0.82 & 0.66 & 0.57 & 0.51 \\
Naive & 22.22 & 21.97 & 19.99 & 19.67 & 19.43 & 50.00 & 50.00 & 50.00 & 50.00 & 50.00 & 0.37 & 0.35 & 0.33 & 0.32 & 0.31 \\
PPI & 26.91 & 15.31 & 14.10 & 13.98 & 12.88 & 95.00 & 98.00 & 97.00 & 94.00 & 96.00 & 1.40 & 1.01 & 0.85 & 0.75 & 0.69 \\
PPI++ & 22.15 & 14.16 & 11.56 & 9.86 & 9.66 & 94.00 & 99.00 & 93.00 & 99.00 & 92.00 & 1.10 & 0.78 & 0.64 & 0.55 & 0.49 \\
GAI & 12.43 & 9.14 & 8.74 & 6.57 & 7.60 & 100.00 & 100.00 & 99.00 & 100.00 & 96.00 & 1.16 & 0.81 & 0.65 & 0.56 & 0.50 \\
\bottomrule
\end{tabular}
\vspace{-0.5em}
\end{table}

\emph{Point Estimation.} Panel (a) confirms that GAI outperforms all benchmarks even when given the same information. GAI achieves MAPE of 6.6--12.4\%, compared to 9.5--22.4\% for Primary, 9.7--22.2\% for PPI++, and 12.9--26.9\% for PPI. Naive pooling plateaus around 20\% regardless of sample size, reflecting persistent bias. The label savings remain substantial: GAI with 100 labels (12.4\% MAPE) matches Primary with 300 labels (12.5\% MAPE), a 67\% reduction in annotation cost. All improvements are statistically significant by paired $t$-tests (Online Appendix~\ref{apdx: tests_pricing}). Consistent with these accuracy gains, Figure~\ref{fig: pricing_param_dist} shows that GAI yields parameter estimates that are more tightly concentrated around the ground truth for both the intercept and price coefficient, whereas PPI and PPI++ exhibit substantially greater dispersion.

\begin{figure*}[t]
  \centering
  \includegraphics[width=0.5\textwidth]{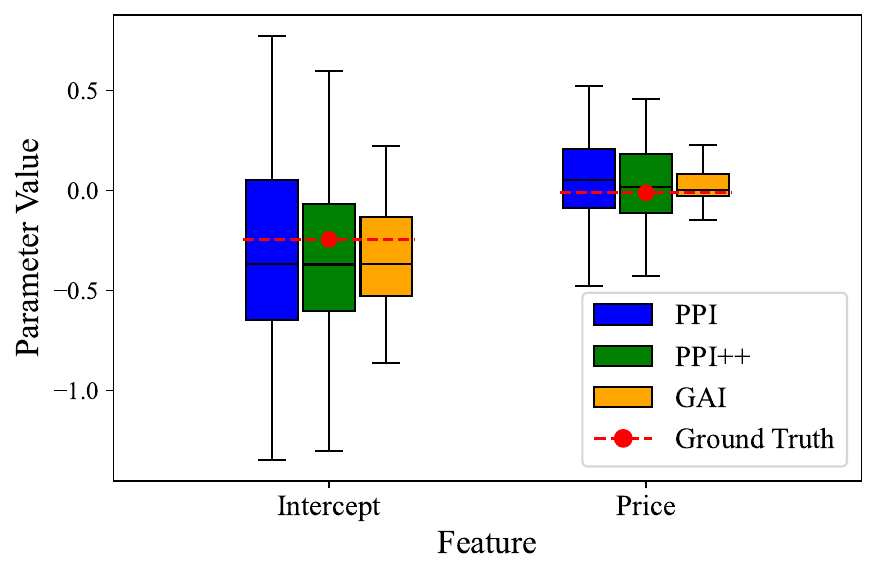}
  \caption{\centering Distribution of Estimates (PPI, PPI++, GAI) for Pricing Study ($n_P = 100$)}
  \label{fig: pricing_param_dist}
\end{figure*}

\emph{Inference Quality.} GAI's confidence intervals maintain coverage at or above 96\% across all sample sizes (Panel b), reaching 100\% at $n_P \in \{100, 200, 400\}$. PPI and PPI++ also maintain near-nominal coverage (92--99\%), while Naive severely undercovers at 50\%. Interval widths (Panel c) are comparable across GAI, Primary, and PPI++, while PPI intervals are noticeably wider. Paired $t$-tests indicate that PPI++ achieves modestly narrower intervals in this study, while GAI (Labels) produces significantly narrower intervals than PPI++ in the conjoint analysis (Table~\ref{tab: conjoint_compare}); overall, the two methods generate comparable CI widths with neither consistently dominating. GAI achieves a 0.2\% decision error rate, \rev{the lowest among methods with valid coverage; Naive's nominally lower 0.0\% rate is an artifact of its severe undercoverage (50\%), whose biased but narrow intervals rarely trigger a sign or significance error} (Online Appendix~\ref{apdx: pricing_slip}).

The key insight from this study is that GAI's advantage is methodological, not informational. Both GAI and PPI receive the same covariate vector $\Xvec = (1, \text{price})^\top$ and the same binary AI prediction $z$. Yet GAI substantially outperforms PPI-based methods because it models the conditional expectation $\gvec(\Xvec, z) = \E[\yvec \mid \Xvec, z]$, extracting information from the AI prediction without assuming it is a noisy proxy for the outcome. In Online Appendix~\ref{apdx: pricing_hdim}, we show that GAI's advantages extend further when $z \in \mathbb{R}^{115}$ includes persona and demographic features. Online Appendix~\ref{apdx: pricing_scalar_z} provides a robustness check confirming that GAI still outperforms benchmarks even when the AI representation does not provide extra information beyond the covariates.

\subsection{Health Insurance Census Analysis}
\label{sec: census}

Our final application tests GAI on PPI's home turf: well-calibrated, continuous AI predictions with high accuracy, using the dataset and AI representations constructed by the PPI authors \citep{angelopoulos2023prediction}. This is the regime most favorable to PPI-based methods; if GAI holds up here as well, practitioners need not switch estimators when moving to well-calibrated AI outputs.

Understanding how interventions and individual characteristics predict consumer decisions, such as insurance coverage and product adoption, is fundamental to marketing research and policy evaluation \citep{sahni2016advertising, huang2020social, lu2025optimizing, rafieian2025multiobjective}. Although this application uses observational census data rather than a field experiment, the statistical problem is representative of the analysis stage in market research where treatment effects are estimated from costly experimental or observational data \citep{ascarza2016perils, feit2019test, gao2023field, bojinov2023design}.

\textbf{Data.} We use the California census healthcare dataset with $318,\!215$ individuals to investigate the relationship between private health insurance coverage ($y \in \{0,1\}$) and income using a logistic model. Ground truth of logistic regression coefficients $\betavec^* \in \mathbb{R}^{2}$ is estimated using all data.

\textbf{AI Representations.} Auxiliary predictions are produced by \citet{angelopoulos2023prediction} using a gradient boosting classifier trained on a richer feature set including income, race, gender, and other covariates.\footnote{See \citet{angelopoulos2023prediction} for full details on the AI representation construction and replication materials.} The model outputs predicted probabilities $z = P(y=1 \,|\, \text{all features})$ with 85\% accuracy. Since the classifier uses features beyond the income covariate in $\Xvec$, the AI prediction provides extra information: $\rev{\yvec} \not\perp \rev{\zvec} \mid \Xvec$. Unlike the conjoint analysis, this representation is well-calibrated.

\textbf{GAI Implementation.} Because $z$ already represents an accurate estimate of the conditional mean based on richer features, learning an additional $\gvec(\Xvec,z)$ offers limited benefit. We therefore set $\gvec(\Xvec,z)=z$ directly, and $e(\Xvec,z)$ to the constant $n_P / (n_P + n_A)$. This yields a simplified version of our estimator requiring no first-stage nuisance estimation or cross-fitting.

\textbf{Experimental Design.} We vary $n_P \in \{100, 250, 500, 750, 1000\}$ with $n_A=2000$, again running 50 trials per configuration.

\begin{table}[t]
\caption{Benchmark Comparison for Census Analysis: MAPEs, coverage probabilities, and CI widths}
\label{tab: census_compare}
\vskip 0.1in
\centering
\small
\renewcommand{\arraystretch}{1}
\setlength{\tabcolsep}{2.8pt}
\begin{tabular}{@{}l|ccccc|ccccc|ccccc@{}}
\toprule
& \multicolumn{5}{c|}{(a) MAPE (\%)} & \multicolumn{5}{c|}{(b) 95\% CI Coverage (\%)} & \multicolumn{5}{c}{(c) 95\% CI Width ($10^{-5}$)} \\
\cmidrule(lr){2-6} \cmidrule(lr){7-11} \cmidrule(l){12-16}
Method & 100 & 250 & 500 & 750 & 1000 & 100 & 250 & 500 & 750 & 1000 & 100 & 250 & 500 & 750 & 1000 \\
\midrule
Primary & 979.83 & 563.85 & 480.99 & 320.68 & 291.80 & 80.00 & 78.00 & 78.00 & 82.00 & 78.00 & 2.16 & 1.31 & 0.92 & 0.74 & 0.65 \\
Naive & 407.95 & 384.70 & 324.56 & 263.13 & 240.38 & 30.00 & 31.00 & 34.00 & 38.00 & 38.00 & 0.61 & 0.58 & 0.53 & 0.49 & 0.46 \\
PPI & 821.83 & 518.82 & 383.92 & 295.40 & 270.66 & 88.00 & 94.00 & 95.00 & 96.00 & 94.00 & 2.48 & 1.85 & 1.29 & 1.10 & 0.96 \\
PPI++ & 866.27 & 508.75 & 404.75 & 299.52 & 282.43 & 83.00 & 88.00 & 91.00 & 93.00 & 89.00 & 2.24 & 1.70 & 1.24 & 1.02 & 0.89 \\
GAI & 161.37 & 147.93 & 147.38 & 139.65 & 141.49 & 100.00 & 100.00 & 100.00 & 99.00 & 99.00 & 2.64 & 1.79 & 1.28 & 1.08 & 0.93 \\
\bottomrule
\end{tabular}
\end{table}

\textbf{Main Results.} Table~\ref{tab: census_compare} summarizes performance across five primary sample sizes. The most striking finding is inferential: every other method fails to maintain nominal coverage, yet GAI attains 99--100\% throughout (Panel b). Primary and Naive exhibit severe undercoverage (30--82\%), and PPI++ ranges from 83\% to 93\%---consistently below the nominal 95\% level. Importantly, GAI achieves this coverage without inflating its intervals: widths are statistically indistinguishable from PPI++ at every sample size (Panel c; Online Appendix~\ref{apdx: tests_census}). Point estimation tells a similar story (Panel a): GAI achieves MAPE of 140--161\%, while Primary exceeds 290\% even at $n_P=1{,}000$ and PPI-based methods remain above 270\%. All pairwise MAPE differences are statistically significant (Online Appendix~\ref{apdx: tests_census}). The label savings are the most dramatic of our three studies---GAI with 100 labels (161\%) surpasses Primary with 1,000 labels (292\%), a more than 90\% reduction. GAI also achieves zero decision errors across all configurations, compared to 4.8\% for Primary, 3.0\% for PPI, 4.2\% for PPI++, and 15.8\% for Naive (Online Appendix~\ref{apdx: census_decision}).

Taken together, the three applications demonstrate that GAI's advantages are robust across auxiliary data regimes: high-dimensional embeddings with $\rev{\yvec} \perp \zvec \mid \Xvec$ (conjoint), systematically biased discrete predictions with $\rev{\yvec} \not\perp \rev{\zvec} \mid \Xvec$ (pricing), and well-calibrated continuous probabilities with $\rev{\yvec} \not\perp \rev{\zvec} \mid \Xvec$ (census). Whether the AI output is an embedding, a binary label, or a probability; whether it carries extra information or not; and whether the prediction is accurate or near-random---GAI consistently delivers substantial improvements in estimation accuracy and decision quality. It does so while maintaining valid coverage and comparable interval width to the best-performing benchmark. Additional implementation details and robustness checks are provided in Online Appendix~\ref{apdx: details}.

\section{Conclusion}
\label{sec:conclusion}

This paper introduces GAI, a framework that enables marketing researchers and practitioners to use AI-generated data for statistical estimation while preserving inferential validity. The central insight is conceptual: AI outputs should be treated as \emph{informative features} for predicting human-generated outcomes, rather than as \emph{surrogate labels} that replace them. Formalized through Neyman orthogonality, this perspective supports valid inference across a wide range of AI output quality, from systematically biased LLM predictions or reasoning texts to well-calibrated forecasts.

Our theory shows that GAI is asymptotically normal with a closed-form variance expression, allowing standard confidence intervals and hypothesis tests. \mx{More importantly, under random selection into labeling, GAI \emph{dominates} a unified class of debiased estimators—including primary-only estimation and state-of-the-art debiasing methods—and delivers strict improvements
\rev{under a mild informativeness condition (\cref{thm:dominance}(iii))}. }\cl{In settings where the labeled sample is unrepresentative of the population of interest, GAI can be extended to improve efficiency relative to weighted primary-only estimation. }These gains arise through three channels: sample expansion, representational power of AI, and extra predictive information from AI. Our empirical results across three applications provide strong evidence to support these predictions. 

GAI helps resolve a core tension in data-driven marketing research: precise estimation requires large labeled samples, but human-generated ground truth is costly. By combining a limited number of human labels with abundant AI-generated data, GAI reduces this burden while maintaining statistical rigor. Across all applications, GAI consistently matches the precision of substantially larger human-labeled samples, often reducing labeling needs by more than half. Such gains can translate into substantial cost savings in settings such as conjoint analysis, demand estimation, and field experimentation. The same tension extends beyond these core marketing applications: platform operations rely on expert human judgment to identify fraudulent reviews and content \citep{luca2016fake}, while clinical trials for healthcare interventions can cost tens of thousands of dollars per patient \citep{anderer2022adaptive}. More broadly, the framework applies wherever human labels are scarce or expensive and AI-generated signals are readily available. In each case, GAI provides a principled way to combine expensive human judgments with inexpensive AI-generated information.

One limitation is that integrating GAI into real-time decision systems may create computational challenges. Developing online versions of the framework is therefore a promising direction for future research.

More broadly, AI outputs create a statistical setting that is neither standard missing data nor noisy labeling, but one with auxiliary representations available for all observations. This shifts the inferential challenge from bias correction toward efficient information extraction. Human labels remain the gold standard, but AI-generated signals can substantially reduce the amount of human input needed to achieve a desired level of precision. As AI capabilities continue to advance, frameworks with rigorous guarantees for human-AI data integration will become increasingly important for credible empirical research in marketing and beyond.



\bibliographystyle{informs2014} 
\bibliography{references} 


\newpage

%
%
%
\section*{Online Appendices}
\begin{APPENDICES}
\section{Distinction between GAI and PPI}
\label[app]{apdx: example_ppi}

It is instructive to compare our approach with PPI \citep{angelopoulos2023prediction,angelopoulos2023ppi++}, which further highlights the generality of GAI for AI-data augmentation. The comparison reveals two key differences.
First, PPI treats AI outputs as surrogate labels for $\yvec$ and therefore cannot be directly applied when $\zvec$ is categorically different from $\yvec$---precisely the setting that motivates our framework, as discussed in the introduction.

Second, PPI’s confidence-interval construction and statistical guarantees require $\zvec$ to be a deterministic function based solely on $\Xvec$. However, AI-generated data differs in two key ways: (i) $\zvec$ may be inherently stochastic due to the internal randomization of AI models; and \textit{more fundamentally}, (ii) $\zvec$ may depend on predictive information beyond $\Xvec$, as in our retail pricing experiment with digital twins (see Example \ref{example:correlated} for details). In such cases, the PPI-based methods do not apply and might only inflate variance relative to using human labels alone, whereas GAI, by Corollary~\ref{cor:dominance}, is guaranteed to weakly dominate the primary-only estimator. 

We construct the following example illustrating the key distinction between GAI and PPI. Let $X$ be a binary random variable such that $\Pe(X=1) = \Pe(X=-1) = \tfrac{1}{2}$. Let $U$ be a hidden random variable with $\E[U]=0$ and $\mathrm{Var}(U)=\sigma^2$. $X$ and $U$ are independent. We assume that $y$ is governed by both $X$ and the hidden random variable as $y = XU$. We have $\E[y] = \E[X]\E[U] = 0$ and $\mathrm{Var}(y) = \mathrm{Var}(X)\mathrm{Var}(U) = \sigma^2$.
Suppose that the AI model fails to correctly specify the rule on $X$, but captures the hidden variable $U$, outputting $z = U$. We observe two independent datasets: $\mD^{\text{\upshape P}} = \{(X_i, y_i, z_i)\}_{i=1}^{n_P}$ i.i.d. and $\mD^{\text{\upshape A}} = \{(X_j, z_j)\}_{j=1}^{n_A}$ i.i.d..
Suppose our goal is to estimate the mean of $y$. Let $n_P/n = \rho$, and write $n := n_P$ and $N := n_A$ for the primary and auxiliary sample sizes. Following \cite{angelopoulos2023ppi++}, given a fixed constant $\lambda \in \mathbb{R}$, the PPI mean estimator can be written as
\[
\hat{\theta}^{\sf PPI}(\lambda)
\,=\,\frac{1}{n_P}\sum_{i=1}^{n_P} y_i 
+ \lambda \left( \frac{1}{N}\sum_{j=1}^{n_A} z_j - \frac{1}{n_A}\sum_{i=1}^{n_A} z_i \right).
\]
As $n \to \infty$, it is easy to verify that
\[
\sqrt{n}\, (\hat{\theta}^{\sf PPI}(\lambda) - 0)
\;\xrightarrow{d}\; 
\mathcal{N}\!\left(0,\; \frac{\sigma^2}{\rho} \left(1 + \frac{\lambda^2}{1 -\rho}\right)\right).
\]
Meanwhile, we know that the human data-only estimator is $\hat{\theta}^{\mathrm{P}} = \frac{1}{n_P}\sum_{i=1}^n y_i$, with
\[
\sqrt{n}({\hat{\theta}^{\mathrm{P}}} - 0)
\;\xrightarrow{d}\;
\mathcal{N}\!\left(0,\;\frac{\sigma^2}{\rho}\right)\,.
\]
The basic PPI corresponds to the case of $\lambda = 1$, which inflates the variance from the human data-only estimator. PPI++ optimizes $\lambda$ by minimizing the asymptotic variance, yielding $\lambda^* = 0$ (see Example 6.1 of \cite{angelopoulos2023ppi++}.)
As a result, PPI++ has no improvement over the human data-only estimator, as it is essentially discarding all auxiliary samples.

Under this setup, the AI-signal correlates weakly with the actual human label because it fails to capture the entire mechanism behind $y$ (missing $X$). However, it provides significant information on inferring actual human label by recovering the hidden variable $U$. PPI fails to utilize this information. In contrast, GAI operates in a fundamentally different way. Using GAI, we can easily recover the relationship between $X$ and $y$, obtaining $g^*(X,z) = Xz$ with the primary set. Then, the GAI estimator is $\hat{\theta}^{\mathrm{GAI}} = \frac{1}{n}\left(\sum_{i=1}^{n_P} y_i+\sum_{j=1}^{n_A} g^*(X_j,z_j)\right) =  \frac{1}{n}\left(\sum_{i=1}^{n_P} y_i+\sum_{j=1}^{n_A} y_j\right)$, where
\[
\sqrt{n}(\hat{\theta}^{\mathrm{GAI}} - \theta^*)
\;\xrightarrow{d}\,\mathcal{N}\!\left(0,\;\sigma^2\right)\,,\]
which always reduces the asymptotic variance from the human data-only estimator and efficiently uses all data samples.

\vspace{5pt}
\section{Auxiliary Example for Section \ref{sec:data-generation}}
\label{apdx: aux_example}

\begin{example}[$\yvec \perp \zvec \mid \Xvec$: Off-the-Shelf LLM Generation]
\label{example:independent}

This example follows the setting of \citet{wang2024llm}. 
They consider a conjoint analysis framework in which $\Xvec$ denotes the feature vectors of a choice set with $K$ alternatives, $\yvec$ represents the consumer's choice, and $\zvec$ is the AI-generated choice. 
The model corresponds to $b(\thetavec)=\log\!\big(1+\sum_{j=1}^{K}\exp(\theta_j)\big)$, yielding a standard MNL formulation.
While similar to Example~\ref{example:correlated}, $\zvec$ may in principle be correlated with $\yvec$ conditional on $\Xvec$, for example when the AI system is queried using additional customer persona or behavioral data. 
However, in many applications it is practically convenient to use LLMs in an off-the-shelf fashion. 
In such cases both $\yvec$ and $\zvec$ are generated solely from $\Xvec$, implying $\yvec \perp \zvec \mid \Xvec$. 
Consequently, $\zvec$ does not contain additional information about $\yvec$ beyond what is already present in $\Xvec$.

Despite this conditional independence, $\zvec$ can still be useful for estimation. 
If the AI output has strong representational power and provides a highly informative summary of $\Xvec$, then a simple parameterization such as
\begin{align*}
g_{(j)}(\Xvec, \zvec)
\,=\,
\frac{\exp\!\left({\betavec^{*\top}}\bx_{(j)} + \eta [\zvec = j]\right)}
{1 + \sum_{\ell \in \mathcal{K}}
\exp\!\left({\betavec^{*\top}}\bx_{(\ell)} + \eta [\zvec = \ell]\right)}
\,=\,g_j(\Xvec, \zvec; {\betavec^*}),
\quad \forall j\,\in\,\mathcal{K},
\end{align*}
as proposed in Online Appendix~A of \citet{wang2024llm}, can approximate $\Pe(y=j\mid \Xvec,\zvec)$ well.

To see the intuition, suppose $\zvec = \phi(\Xvec)$ where $\phi(\cdot)$ is a complex nonlinear mapping implicitly learned by the LLM. 
Recovering an equivalent representation using $\Xvec$ alone would require estimating
\begin{align*}
\frac{\exp\!\left({\betavec^{*\top}}\bx_{(j)} + \eta\, [\phi(\Xvec)=j]\right)}
{1 + \sum_{\ell \in \mathcal{K}}
\exp\!\left({\betavec^{*\top}}\bx_{(\ell)} + \eta\, [\phi(\Xvec)=\ell]\right)},
\quad \forall j\,\in\,\mathcal{K},
\end{align*}
which may be difficult in practice. 
One might attempt to approximate $\phi(\cdot)$ using flexible methods such as polynomial or spline expansions. 
However, with limited human-labeled data this approach may lead to substantial model misspecification or overfitting on the small labeled sample $\mD^{\text{\upshape P}}$, making it difficult to extrapolate $\yvec$ to the full dataset $\mD$. 
In such cases the resulting performance may not be substantially better than using $\mD^{\text{\upshape P}}$ alone and ignoring the auxiliary sample.

Although this example is deliberately stylized, the same intuition extends to more general settings—particularly when the AI signal $\zvec$ takes more flexible or high-dimensional forms. 
In such cases AI-generated features can act as rich summaries of latent structure in $\E[\yvec\mid\Xvec]$, improving estimation even when $\yvec \perp \zvec \mid \Xvec$.
\hfill $\blacksquare$

\end{example}


\section{Extension to Covariate Shift}
\label{apdx: covariate shift theory}

This appendix complements Section~\ref{sec:covshift}, which develops the extension of GAI to covariate shift in the main body and states the doubly robust theorem (Theorem~\ref{thm:normality_dr_body}) and the doubly robust score \eqref{eq:score_dr_body} there. Here we record the material deferred from the main body: the formal covariate-shift setting and assumptions (Section~\ref{apdx: cs setting}), and the combined known-density-ratio result that establishes both $\sqrt{n}$-asymptotic normality and variance dominance for the weighted GAI estimator (Section~\ref{apdx: cs known}). All proofs---of the combined known-ratio theorem below and of the main-body doubly robust theorem---are collected in \rev{Section}~\ref{apdx: supporting covshift}.

\subsection{Setting and Assumptions}
\label{apdx: cs setting}

Let $Q_{\text{\upshape S}}$ and $Q_{\text{\upshape T}}$ denote the source and target populations, respectively; both are joint laws of $(\Xvec, \yvec, \zvec)$. The dataset $\mathcal{D} = \{\Xi_i = (\Xvec_i, \yvec_i, w_i, \zvec_i)\}_{i=1}^n$ consists of i.i.d.\ draws from $Q_{\text{\upshape S}}$, with human labels partially observed according to the labeling indicator $w$, exactly as in Section~\ref{sec:data-generation}. We write $\E_{\text{\upshape S}}$ and $\E_{\text{\upshape T}}$ for expectations under $Q_{\text{\upshape S}}$ and $Q_{\text{\upshape T}}$, and $Q_{\text{\upshape S},\Xvec}$ and $Q_{\text{\upshape T},\Xvec}$ for the corresponding marginal laws of $\Xvec$. With slight abuse of notation, $\E_{\text{\upshape S}}$ also denotes expectation over the source data-generating process including the labeling indicator, i.e., over the joint law of $(\Xvec, \yvec, \zvec, w)$ with $(\Xvec, \yvec, \zvec) \sim Q_{\text{\upshape S}}$ and $w$ as in Section~\ref{sec:data-generation}. We let
$r(\Xvec) := \frac{dQ_{\text{\upshape T},\Xvec}}{dQ_{\text{\upshape S},\Xvec}}(\Xvec)$
denote the density ratio of the covariate distributions; we always display the argument and write $r(\Xvec)$, so that no confusion with the rates $r_1, r_2$ in Assumption~\ref{ass:ml_rate} arises. Analogously to $\lVert\cdot\rVert_{Q,2}$ in the main text, given a multi-dimensional function $\mathbf{f}$ we write $\lVert \mathbf{f} \rVert_{Q_{\text{\upshape S}},2} := \left(\E_{\text{\upshape S}}[\lVert \mathbf{f} \rVert^2]\right)^{1/2}$ and $\lVert \mathbf{f} \rVert_{Q_{\text{\upshape T}},2} := \left(\E_{\text{\upshape T}}[\lVert \mathbf{f} \rVert^2]\right)^{1/2}$.

The parameter of interest is the target parameter $\betavec^*_{\text{\upshape T}}$ defined in \eqref{eq:foc_shift_body}, which, under Assumption~\ref{ass:covariate_shift} below and by the same argument as for \eqref{eq:foc}, satisfies the first-order condition
\begin{equation}
\E_{\text{\upshape T}}\left[\Xvec^{\top}\left(\nabla b\big(\Xvec\betavec^*_{\text{\upshape T}}\big) - \yvec\right)\right]\,=\,0.
\label{eq:foc_shift}
\end{equation}
Following the proof of Corollary~\ref{cor:dominance} (\rev{Section}~\ref{apdx: proof of dominance}), we define
$$
\zetavec_{\text{\upshape T}}(\Xvec, \zvec)\,:=\,\Xvec^{\top}\left(\nabla b\big(\Xvec\betavec^*_{\text{\upshape T}}\big) - \gvec^*(\Xvec, \zvec)\right)
\quad\text{and}\quad
\pivec(\Xvec, \yvec, \zvec)\,:=\,\Xvec^{\top}\left(\gvec^*(\Xvec, \zvec) - \yvec\right);
$$
these are the covariate-shift analogs of $\zetavec$ and $\pivec$ defined there, with $\betavec^*$ replaced by $\betavec^*_{\text{\upshape T}}$ in $\zetavec_{\text{\upshape T}}$ ($\pivec$ is unchanged); they coincide with the quantities $\zetavec_{\text{\upshape T}}$ and $\pivec$ introduced in \eqref{eq:zeta_pi_body}. Our analysis relies on the following assumption, which formalizes the two conditions stated in prose in Section~\ref{sec:covshift}.

\begin{assumption}[Covariate Shift Setting]
\label{ass:covariate_shift}
(i) \emph{(Covariate shift only)} The conditional law of $(\yvec, \zvec)$ given $\Xvec$ is identical under $Q_{\text{\upshape S}}$ and $Q_{\text{\upshape T}}$; the two populations differ only in the marginal distribution of $\Xvec$.
(ii) \emph{(Strict overlap)} $Q_{\text{\upshape T},\Xvec}$ is absolutely continuous with respect to $Q_{\text{\upshape S},\Xvec}$, and $r(\Xvec) \leq M < \infty$, $Q_{\text{\upshape S}}$-almost surely.
(iii) \emph{(Homogeneous labeling)} $w$ is independent of $(\Xvec, \yvec, \zvec)$ with $\Pe(w = 1) = \rho \in (0, 1]$, and $\rho$ is known.
(iv) \emph{(Target identification)} Assumption~\ref{ass:regularity} holds under $Q_{\text{\upshape S}}$; in addition, $\E_{\text{\upshape T}}[\Xvec^{\top}\Xvec] \succ 0$ and $\betavec^*_{\text{\upshape T}} \in \mB$ solves the target population problem $\min_{\betavec \in \mB} \E_{\text{\upshape T}}\left[\ell(\Xvec, \yvec; \betavec)\right]$.
\end{assumption}

We record several direct consequences of Assumption~\ref{ass:covariate_shift} that we use repeatedly. First, under item (i), $\gvec^*(\Xvec, \zvec) = \E[\yvec \mid \Xvec, \zvec]$ is the same function under both populations, and items (i)--(ii) yield the change-of-measure identity
\begin{equation}
\E_{\text{\upshape T}}\left[h(\Xvec, \yvec, \zvec)\right]\,=\,\E_{\text{\upshape S}}\left[r(\Xvec)\, h(\Xvec, \yvec, \zvec)\right]
\label{eq:change_of_measure}
\end{equation}
for any $Q_{\text{\upshape T}}$-integrable function $h$. Second, item (ii) implies $\supp(Q_{\text{\upshape T},\Xvec}) \subseteq \supp(Q_{\text{\upshape S},\Xvec}) \subseteq \Breve{\mX}$, where $\Breve{\mX}$ is the compact set containing $\mX$ introduced in Section~\ref{sec:framework}; hence the boundedness of $\Xvec$ over $\Breve{\mX}$ extends to the target population without further conditions; likewise, the bound $\lVert \Cov(\yvec \mid \Xvec, \zvec) \rVert \leq \tilde{\sigma}^2$ in Assumption~\ref{ass:regularity}(iii) transfers to the target population automatically by items (i)--(ii), and we do not re-assume it. Similarly, the square-integrability of $\yvec$ in Assumption~\ref{ass:regularity}(iv), which holds under $Q_{\text{\upshape S}}$ by item (iv), extends to the target population, since $\E_{\text{\upshape T}}\big[\lVert \yvec \rVert^2\big] = \E_{\text{\upshape S}}\big[r(\Xvec)\lVert \yvec \rVert^2\big] \leq M\, \E_{\text{\upshape S}}\big[\lVert \yvec \rVert^2\big]$ by \eqref{eq:change_of_measure} and item (ii). Third, under item (iv), $\betavec^*_{\text{\upshape T}}$ is the unique minimizer of the target population problem: strict convexity of $\E_{\text{\upshape T}}\left[\ell(\Xvec, \yvec; \betavec)\right]$ follows from $\E_{\text{\upshape T}}[\Xvec^{\top}\Xvec] \succ 0$ by the same argument as in the proof of Lemma~\ref{lem: empirical-scores}.

We also record the rate conditions used by the doubly robust theorem of Section~\ref{sec:covshift} (Theorem~\ref{thm:normality_dr_body}), the counterpart of Assumption~\ref{ass:ml_rate}. To avoid any confusion with the density ratio $r^*(\Xvec)$ and with the rates $r_1, r_2$ of Assumption~\ref{ass:ml_rate}, we denote the rate exponents by $r_g$ and $r_r$, and write $r^*(\Xvec)$ for the true density ratio (the object denoted $r(\Xvec)$ above) and $\hat{r}$ for its estimator.

\begin{assumption}[ML Convergence Rates for Doubly Robust Estimation]
\label{ass:ml_rate_shift}
There exist $\alpha(n) \downarrow 0$ and $r_g, r_r \geq 0$ with $r_g + r_r \geq 1/2$ such that
$$
\left\lVert \hat{\gvec} - \gvec^* \right\rVert_{Q_{\text{\upshape S}},2}\,\leq\,\alpha(n_{\text{\upshape S}})/n_{\text{\upshape S}}^{r_g}
\qquad\text{and}\qquad
\left\lVert \hat{r} - r^* \right\rVert_{Q_{\text{\upshape S}},2}\,\leq\,\alpha(n_{\text{\upshape S}})/n_{\text{\upshape S}}^{r_r}.
$$
\end{assumption}

Two features of Assumption~\ref{ass:ml_rate_shift} are worth highlighting. First, unlike the condition on $\hat{e}$ in Assumption~\ref{ass:ml_rate}, no sup-norm consistency condition on $\hat{r}$ is required: $\hat{r}$ enters the doubly robust score \eqref{eq:score_dr_body} multiplicatively rather than through an inverse weight, so its error never appears in a denominator and an $L_2$ rate suffices. Second, both rates are indexed by the source sample size $n_{\text{\upshape S}}$, even though $\hat{r}$ may also use the (typically much larger) target sample: estimating $\gvec^*$ requires the source labels, and the precision of $\hat{r}$ is limited by how well the source covariate distribution $Q_{\text{\upshape S},\Xvec}$ can be learned from $n_{\text{\upshape S}}$ draws, no matter how precisely $Q_{\text{\upshape T},\Xvec}$ is known. Finally, no separate rate condition under the target population is needed: by the change-of-measure identity \eqref{eq:change_of_measure} and Assumption~\ref{ass:covariate_shift}(ii),
$$
\left\lVert \Delta_{\gvec} \right\rVert_{Q_{\text{\upshape T}},2}^2\,=\,\E_{\text{\upshape S}}\left[r^*(\Xvec)\left\lVert \Delta_{\gvec} \right\rVert^2\right]\,\leq\,M \left\lVert \Delta_{\gvec} \right\rVert_{Q_{\text{\upshape S}},2}^2,
$$
so the source-population rate for $\hat{\gvec}$ transfers automatically to the target population; this is a derived fact, not an additional assumption.

\subsection{Known Density Ratio}
\label{apdx: cs known}

We first give the formal combined statement for the known-density-ratio case discussed in Section~\ref{sec:covshift_known}. When the density ratio $r(\Xvec)$ is known---for example when the target covariate distribution is available from external sources such as census tabulations or platform-wide logs, as in the stratified design of \rev{Section}~\ref{apdx:conjoint_gaiw_covariate_shift}---we retain the partial-labeling structure under the random labeling design (Assumption~\ref{ass:covariate_shift}(iii), with known $\rho$) and define the weighted GAI score
\begin{equation}
\Psivec_{\text{\upshape w}}(\Xi; \gvec; \betavec)\,:=\,r(\Xvec)\, \Xvec^{\top}\left[\nabla b(\Xvec\betavec) - \gvec(\Xvec, \zvec) + \frac{w}{\rho}\left(\gvec(\Xvec, \zvec) - \yvec\right)\right]\,=\,r(\Xvec)\, \Psivec(\Xi; \rho, \gvec; \betavec),
\label{eq:score_shift}
\end{equation}
where $\Psivec$ is the GAI score in \eqref{eq:score} with $e \equiv \rho$. The estimation procedure is Algorithm~\ref{alg: gai} with two modifications: (a) because the labeling probability is known, we set $\hat{e}^{(k)} \equiv \rho$ rather than estimating it; and (b) the score of each observation is multiplied by the known weight $r(\Xvec_i)$, as in \eqref{eq:score_shift}; correspondingly, in the variance-estimation step the Jacobian estimate is also weighted, $\hat{\Jvec}_{\text{\upshape w}} = \frac{1}{n}\sum_{k=1}^{K}\sum_{i \in I_k} r(\Xvec_i)\, \Xvec_i^{\top} \nabla^2 b\big(\Xvec_i\hat{\betavec}_{\text{\upshape w}}\big) \Xvec_i$. The nuisance $\hat{\gvec}^{(k)}$ is cross-fitted on the source primary observations exactly as in Algorithm~\ref{alg: gai}. The weighted GAI estimator $\hat{\betavec}_{\text{\upshape w}}$ then satisfies
$$
\left\lVert \frac{1}{n}\sum_{k=1}^{K}\sum_{i \in I_k} \Psivec_{\text{\upshape w}}\big(\Xi_i; \hat{\gvec}^{(k)}; \hat{\betavec}_{\text{\upshape w}}\big) \right\rVert
\,\leq\,
\inf_{\betavec \in \mB} \left\lVert \frac{1}{n}\sum_{k=1}^{K}\sum_{i \in I_k} \Psivec_{\text{\upshape w}}\big(\Xi_i; \hat{\gvec}^{(k)}; \betavec\big) \right\rVert + o_P(n^{-1/2}).
$$
Define $\Jvec_{\text{\upshape T}} := \E_{\text{\upshape S}}\left[r(\Xvec)\, \Xvec^{\top} \nabla^2 b\big(\Xvec\betavec^*_{\text{\upshape T}}\big) \Xvec\right] = \E_{\text{\upshape T}}\left[\Xvec^{\top} \nabla^2 b\big(\Xvec\betavec^*_{\text{\upshape T}}\big) \Xvec\right]$, where the equality follows from \eqref{eq:change_of_measure}. The natural human-data benchmark reweighted to the target population is the \emph{weighted primary-only} estimator $\hat{\betavec}^{\text{\upshape P},\text{\upshape w}}$, a solution to
\begin{equation}
\sum_{i:\, w_i = 1} r(\Xvec_i)\, \nabla_{\betavec}\ell(\Xvec_i, \yvec_i; \betavec)\,=\,0,
\label{eq:primary_score_shift}
\end{equation}
or, equivalently, $\sum_{i=1}^{n} w_i\, r(\Xvec_i)\, \nabla_{\betavec}\ell(\Xvec_i, \yvec_i; \betavec) = 0$. This estimating equation is valid for $\betavec^*_{\text{\upshape T}}$: by Assumption~\ref{ass:covariate_shift}(iii) and \eqref{eq:change_of_measure},
$$
\E_{\text{\upshape S}}\left[w\, r(\Xvec)\, \nabla_{\betavec}\ell\big(\Xvec, \yvec; \betavec^*_{\text{\upshape T}}\big)\right]
\,=\,\rho\, \E_{\text{\upshape S}}\left[r(\Xvec)\, \nabla_{\betavec}\ell\big(\Xvec, \yvec; \betavec^*_{\text{\upshape T}}\big)\right]
\,=\,\rho\, \E_{\text{\upshape T}}\left[\nabla_{\betavec}\ell\big(\Xvec, \yvec; \betavec^*_{\text{\upshape T}}\big)\right]\,=\,0,
$$
where the last equality is the first-order condition \eqref{eq:foc_shift}. The following theorem combines the asymptotic normality of the weighted GAI estimator with its dominance over this benchmark.

\begin{theorem}[Asymptotic Normality and Dominance under Covariate Shift: Known Density Ratio]
\label{thm:normality_shift}
Suppose Assumption~\ref{ass:covariate_shift} holds and $\lVert \hat{\gvec} - \gvec^* \rVert_{Q_{\text{\upshape S}},2} = o(1)$. Then the weighted GAI estimator is $\sqrt{n}$-asymptotically normal for the target parameter $\betavec^*_{\text{\upshape T}}$,
$$
\sqrt{n}\big(\hat{\betavec}_{\text{\upshape w}} - \betavec^*_{\text{\upshape T}}\big)\,\rightsquigarrow\,N\big(\bm{0}, \Sigmavec_{\text{\upshape w}}\big), \qquad \Sigmavec_{\text{\upshape w}}\,=\,\Jvec_{\text{\upshape T}}^{-1}\, \bm{\Omega}_{\text{\upshape w}}\, \Jvec_{\text{\upshape T}}^{-1},
$$
where $\bm{\Omega}_{\text{\upshape w}} := \E_{\text{\upshape S}}\big[\Psivec_{\text{\upshape w}}(\Xi; \gvec^*; \betavec^*_{\text{\upshape T}})\, \Psivec_{\text{\upshape w}}(\Xi; \gvec^*; \betavec^*_{\text{\upshape T}})^{\top}\big]$ admits the decomposition
\begin{equation}
\bm{\Omega}_{\text{\upshape w}}\,=\,\E_{\text{\upshape S}}\left[r(\Xvec)^2\, \zetavec_{\text{\upshape T}}\zetavec_{\text{\upshape T}}^{\top}\right] + \frac{1}{\rho}\, \E_{\text{\upshape S}}\left[r(\Xvec)^2\, \pivec\pivec^{\top}\right].
\label{eqn:omega-w-decomposition}
\end{equation}
Moreover, the weighted primary-only estimator $\hat{\betavec}^{\text{\upshape P},\text{\upshape w}}$ of \eqref{eq:primary_score_shift} is $\sqrt{n}$-asymptotically normal, $\sqrt{n}\big(\hat{\betavec}^{\text{\upshape P},\text{\upshape w}} - \betavec^*_{\text{\upshape T}}\big) \rightsquigarrow N\big(\bm{0}, \Sigmavec^{\text{\upshape P},\text{\upshape w}}\big)$, with
$$
\Sigmavec^{\text{\upshape P},\text{\upshape w}}\,=\,\frac{1}{\rho}\, \Jvec_{\text{\upshape T}}^{-1}\left(\E_{\text{\upshape S}}\left[r(\Xvec)^2\, \zetavec_{\text{\upshape T}}\zetavec_{\text{\upshape T}}^{\top}\right] + \E_{\text{\upshape S}}\left[r(\Xvec)^2\, \pivec\pivec^{\top}\right]\right)\Jvec_{\text{\upshape T}}^{-1},
$$
and $\hat{\betavec}_{\text{\upshape w}}$ dominates it:
\begin{equation}
\Sigmavec^{\text{\upshape P},\text{\upshape w}} - \Sigmavec_{\text{\upshape w}}\,=\,\left(\frac{1}{\rho} - 1\right)\Jvec_{\text{\upshape T}}^{-1}\, \E_{\text{\upshape S}}\left[r(\Xvec)^2\, \zetavec_{\text{\upshape T}}\zetavec_{\text{\upshape T}}^{\top}\right] \Jvec_{\text{\upshape T}}^{-1}\,\succeq\,0,
\label{eqn:variance-gap-shift}
\end{equation}
with $\Sigmavec^{\text{\upshape P},\text{\upshape w}} \succ \Sigmavec_{\text{\upshape w}}$ if and only if $\rho < 1$ and
$$
\E_{\text{\upshape S}}\bigg[r(\Xvec)^2\, \Xvec^{\top}\left(\nabla b\big(\Xvec\betavec^*_{\text{\upshape T}}\big) - \E[\yvec \mid \Xvec, \zvec]\right)\left(\nabla b\big(\Xvec\betavec^*_{\text{\upshape T}}\big) - \E[\yvec \mid \Xvec, \zvec]\right)^{\top}\Xvec\bigg]\,\succ\,0.
$$
\end{theorem}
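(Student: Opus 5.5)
The plan is to treat the weighted score $\Psivec_{\text{\upshape w}} = r(\Xvec)\,\Psivec(\Xi;\rho,\gvec;\betavec)$ as an ordinary GAI score and re-run the argument behind Theorem~\ref{thm:normality} and Theorem~\ref{thm:dominance}(iv). The weight $r(\Xvec)$ is known, bounded by $M$, and a function of $\Xvec$ only, so it never enters a nuisance. The propensity is the known constant $\rho$, so the only nuisance is $\gvec$.

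\textbf{Step 1: score validity and orthogonality at $\betavec^*_{\text{\upshape T}}$.} First I check that the weighted score has mean zero at the target parameter. Since $w$ is independent of $(\Xvec,\yvec,\zvec)$, $\E_{\text{\upshape S}}[w/\rho \mid \Xvec,\yvec,\zvec]=1$. Hence $\E_{\text{\upshape S}}[\Psivec_{\text{\upshape w}}(\Xi;\gvec;\betavec^*_{\text{\upshape T}})] = \E_{\text{\upshape S}}[r(\Xvec)\Xvec^\top(\nabla b(\Xvec\betavec^*_{\text{\upshape T}})-\yvec)]$ for \emph{every} $\gvec$. This equals zero by the change of measure \eqref{eq:change_of_measure} and the first-order condition \eqref{eq:foc_shift}.

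Next, the dependence on $\gvec$ is exactly linear: $\Psivec_{\text{\upshape w}}(\Xi;\hat\gvec;\betavec)-\Psivec_{\text{\upshape w}}(\Xi;\gvec^*;\betavec) = r(\Xvec)\Xvec^\top(w/\rho-1)\Delta_{\gvec}$, where $\Delta_{\gvec}:=\hat\gvec-\gvec^*$. Conditional on the out-of-fold data, this difference has mean zero. Its conditional second moment is at most $M^2 \sup_{\Breve{\mX}}\lVert\Xvec\rVert^2 (1/\rho-1)\lVert\Delta_{\gvec}\rVert_{Q_{\text{\upshape S}},2}^2$.

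With cross-fitting, the fold-wise average of this difference is therefore $o_P(n^{-1/2})$ by Chebyshev's inequality, using only $L_2$-consistency of $\hat\gvec$. This explains why no rate condition is needed here: the bias term is identically zero, rather than merely second order as in Lemma~\ref{lem: orthogonality}. So up to $o_P(n^{-1/2})$ the feasible score can be replaced by the oracle score $\Psivec_{\text{\upshape w}}(\cdot;\gvec^*;\cdot)$.

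\textbf{Step 2: consistency and linearization.} The $\betavec$-dependence of the oracle empirical score is $\Pe_n[r(\Xvec)\Xvec^\top\nabla b(\Xvec\betavec)]$ plus a $\betavec$-free term. This is the gradient of the convex function $\Pe_n[r(\Xvec)\,b(\Xvec\betavec)]$ plus a linear term. Its population Jacobian at $\betavec^*_{\text{\upshape T}}$ is $\Jvec_{\text{\upshape T}}$, which is positive definite by $\E_{\text{\upshape T}}[\Xvec^\top\Xvec]\succ 0$ and $\nabla^2 b\succ 0$.

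The convexity argument of Lemma~\ref{lem: consistency} then gives consistency of $\hat\betavec_{\text{\upshape w}}$. The main obstacle is the uniform (Donsker / stochastic-equicontinuity) control of $\betavec\mapsto \Pe_n[r\Xvec^\top\nabla b(\Xvec\betavec)]$ near $\betavec^*_{\text{\upshape T}}$. I would handle it as in Theorem~\ref{thm:normality}, since the class is the unweighted one multiplied by a fixed bounded function $r\le M$ on the compact $\Breve{\mX}$. A mean-value expansion then yields
\[
\sqrt n(\hat\betavec_{\text{\upshape w}}-\betavec^*_{\text{\upshape T}}) = -\Jvec_{\text{\upshape T}}^{-1}\tfrac{1}{\sqrt n}\textstyle\sum_i \Psivec_{\text{\upshape w}}(\Xi_i;\gvec^*;\betavec^*_{\text{\upshape T}})+o_P(1),
\]
and the central limit theorem (finite second moments from Assumption~\ref{ass:regularity}(iii)--(iv) and $r\le M$) gives the normal limit with sandwich variance $\Jvec_{\text{\upshape T}}^{-1}\bm{\Omega}_{\text{\upshape w}}\Jvec_{\text{\upshape T}}^{-1}$.

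\textbf{Step 3: variance algebra and dominance.} At $(\gvec^*,\betavec^*_{\text{\upshape T}})$ the weighted score is $\Psivec_{\text{\upshape w}} = r(\Xvec)\big[\zetavec_{\text{\upshape T}} + (w/\rho)\pivec\big]$. For the GAI variance:
\begin{itemize}
\item The cross term vanishes, because $\E[\pivec\mid\Xvec,\zvec]=0$ and $w$ is independent of everything else.
\item $\E[(w/\rho)^2]=1/\rho$.
\end{itemize}
Together these give \eqref{eqn:omega-w-decomposition}.

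For the weighted primary-only estimator I use the score $w\,r(\Xvec)\nabla_{\betavec}\ell = w\,r(\Xvec)(\zetavec_{\text{\upshape T}}+\pivec)$. It has the same convexity structure with Jacobian $\rho\Jvec_{\text{\upshape T}}$, so Step 2 (with no nuisance at all) gives asymptotic normality. Its meat is $\rho\,\E_{\text{\upshape S}}[r^2(\zetavec_{\text{\upshape T}}\zetavec_{\text{\upshape T}}^\top+\pivec\pivec^\top)]$, again with a vanishing cross term. The resulting variance is $\rho^{-2}\Jvec_{\text{\upshape T}}^{-1}\cdot\rho\,\E_{\text{\upshape S}}[\cdots]\,\Jvec_{\text{\upshape T}}^{-1}$, which is the stated $\Sigmavec^{\text{\upshape P},\text{\upshape w}}$.

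Subtracting, the $\pivec$-terms cancel exactly and leave \eqref{eqn:variance-gap-shift}. For strictness, $\Jvec_{\text{\upshape T}}^{-1}$ is invertible, so the gap is positive definite if and only if $(1/\rho-1)>0$ and $\E_{\text{\upshape S}}[r^2\zetavec_{\text{\upshape T}}\zetavec_{\text{\upshape T}}^\top]\succ 0$. The latter matrix is precisely the one displayed in the theorem, since $\gvec^*=\E[\yvec\mid\Xvec,\zvec]$.
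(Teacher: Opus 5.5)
Your proposal is correct and follows essentially the same route as the paper. The paper proves weighted counterparts of the orthogonality, empirical-score, and consistency lemmas, using the change of measure to turn $\E_{\text{\upshape S}}[r(\Xvec)\,\cdot\,]$ into target moments, then applies the same Donsker/Taylor linearization and computes both sandwich variances (using $w^2=w$ and the vanishing $\zetavec_{\text{\upshape T}}$--$\pivec$ cross term) before subtracting them. Your observations that the weighted score is mean-zero at $\betavec^*_{\text{\upshape T}}$ for every $\gvec$, and that the feasible-minus-oracle difference is exactly linear in $\Delta_{\gvec}$ with variance factor $1/\rho-1$ (rather than the paper's cruder $4/\rho^2$), are only sharper versions of the paper's bound on its term (I$_{\text{\upshape w}}$), not a different argument.
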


The proof is given in \rev{Section}~\ref{apdx: proof normality shift}. The decomposition \eqref{eqn:omega-w-decomposition} holds because the cross terms between $\zetavec_{\text{\upshape T}}$ and $\pivec$ vanish: $\E[\pivec \mid \Xvec, \zvec] = 0$, and $w$ is independent of $(\Xvec, \yvec, \zvec)$ with $\E[w/\rho] = 1$ and $\E[w/\rho^2] = 1/\rho$. As in the known-$\rho$ case of Corollary~\ref{cor:dominance} and the discussion following it, the normality conclusion requires only $L_2$-consistency of $\hat{\gvec}$ in place of Assumption~\ref{ass:ml_rate}: since both $\rho$ and $r(\Xvec)$ are known, the remainder terms driven by a propensity estimate vanish identically, so no rate or sup-norm condition is needed. The gap \eqref{eqn:variance-gap-shift} is the exact structural analog of Corollary~\ref{cor:variance_decomposition}: the entire efficiency gain comes from eliminating the misspecification/representation term $\zetavec_{\text{\upshape T}}$ on the unlabeled observations---now weighted by $r(\Xvec)^2$---while the irreducible human-label noise term $\pivec$ contributes identically to both asymptotic variances.

\begin{remark}[No Comparison with Unweighted Primary-Only Estimation]
\label{rem:unweighted_primary}
The \emph{unweighted} primary-only estimator, i.e., a solution to $\sum_{i:\, w_i = 1} \nabla_{\betavec}\ell(\Xvec_i, \yvec_i; \betavec) = 0$ as in Corollary~\ref{cor:dominance}, converges to the source pseudo-true parameter $\betavec^*_{\text{\upshape S}} \in \arg\min_{\betavec \in \mB} \E_{\text{\upshape S}}\left[\ell(\Xvec, \yvec; \betavec)\right]$. Under model misspecification, $\betavec^*_{\text{\upshape S}} \neq \betavec^*_{\text{\upshape T}}$ in general, so the unweighted estimator carries an $O(1)$ asymptotic bias and is inconsistent for the target parameter $\betavec^*_{\text{\upshape T}}$; no meaningful variance comparison with $\hat{\betavec}_{\text{\upshape w}}$ exists. If the GLM were correctly specified---that is, if $\E[\yvec \mid \Xvec] = \nabla b(\Xvec\betavec^{\circ})$ for some $\betavec^{\circ} \in \mB$---then $\betavec^*_{\text{\upshape S}} = \betavec^*_{\text{\upshape T}} = \betavec^{\circ}$ and the issue would disappear; our framework, however, explicitly allows for misspecification.
\end{remark}

\subsection{Estimated Density Ratio}

For the estimated-density-ratio case, the doubly robust estimator, its doubly robust score \eqref{eq:score_dr_body}, the two-sample design, and the asymptotic normality result (Theorem~\ref{thm:normality_dr_body}) are developed in the main body in Section~\ref{sec:covshift_dr}; its proof is given in \rev{Section}~\ref{apdx: cs estimated}.


\section{Detailed Results of Experiments}
\label{apdx: details}

\subsection{Vaccine Conjoint Analysis}

\subsubsection{Tests for Performance Improvement}
\label{apdx: tests_conjoint}

We conduct paired $t$-tests comparing GAI to each benchmark method across repeated trials for each metric (MAPE, coverage, CI width). The paired design controls for sampling variation and isolates estimator differences.

\begin{table}[t]
\caption{Paired t-test p-values (GAI (Embeddings) vs benchmarks) for Conjoint Analysis}
\label{tab:conjoint_ttest_emb}
\vskip 0.1in
\centering
\small
\renewcommand{\arraystretch}{1}
\setlength{\tabcolsep}{2.8pt}
\begin{tabular}{@{}l|cccc|cccc|cccc@{}}
\toprule
& \multicolumn{4}{c|}{(a) MAPE} & \multicolumn{4}{c|}{(b) 95\% CI Coverage} & \multicolumn{4}{c}{(c) 95\% CI Width} \\
\cmidrule(lr){2-5} \cmidrule(lr){6-9} \cmidrule(l){10-13}
Method & 50 & 100 & 150 & 200 & 50 & 100 & 150 & 200 & 50 & 100 & 150 & 200 \\
\midrule
Primary & 5e-33 & 3e-37 & 2e-38 & 2e-40 & 0.02 & 9e-5 & 5e-4 & 2e-4 & 4e-11 & 0.56 & 3e-3 & 5e-8 \\
Naive & 9e-33 & 9e-37 & 6e-38 & 8e-40 & 2e-117 & 1e-94 & 6e-85 & 2e-69 & 1e-41 & 6e-51 & 3e-57 & 3e-57 \\
PPI & 0.31 & 5e-37 & 5e-38 & 3e-40 & 0.31 & 0.48 & 0.31 & 0.26 & 0.02 & 1e-20 & 3e-28 & 2e-30 \\
PPI++ & 0.01 & 4e-37 & 3e-38 & 2e-40 & 7e-5 & 4e-8 & 8e-5 & 1e-7 & 2e-8 & 0.42 & 0.02 & 1e-8 \\
\bottomrule
\end{tabular}
\vskip -0.1in
\end{table}

\begin{table}[t]
\caption{Paired t-test p-values (GAI (Labels) vs benchmarks) for Conjoint Analysis}
\label{tab:conjoint_ttest_lab}
\vskip 0.1in
\centering
\small
\renewcommand{\arraystretch}{1}
\setlength{\tabcolsep}{2.8pt}
\begin{tabular}{@{}l|cccc|cccc|cccc@{}}
\toprule
& \multicolumn{4}{c|}{(a) MAPE} & \multicolumn{4}{c|}{(b) 95\% CI Coverage} & \multicolumn{4}{c}{(c) 95\% CI Width} \\
\cmidrule(lr){2-5} \cmidrule(lr){6-9} \cmidrule(l){10-13}
Method & 50 & 100 & 150 & 200 & 50 & 100 & 150 & 200 & 50 & 100 & 150 & 200 \\
\midrule
Primary & 3e-15 & 2e-12 & 5e-7 & 3e-6 & 3e-3 & 6e-4 & 0.03 & 0.02 & 7e-19 & 5e-17 & 2e-16 & 2e-15 \\
Naive & 6e-46 & 2e-44 & 5e-49 & 3e-40 & 5e-71 & 8e-55 & 3e-49 & 6e-41 & 5e-53 & 3e-64 & 7e-67 & 1e-68 \\
PPI & 0.32 & 2e-18 & 8e-15 & 1e-15 & 1.00 & 0.30 & 0.29 & 0.15 & 0.02 & 2e-22 & 9e-30 & 9e-32 \\
PPI++ & 0.01 & 3e-15 & 3e-11 & 1e-10 & 3e-5 & 3e-7 & 3e-3 & 3e-5 & 2e-14 & 9e-17 & 9e-15 & 2e-15 \\
\bottomrule
\end{tabular}
\vskip -0.1in
\end{table}

Tables~\ref{tab:conjoint_ttest_emb} and \ref{tab:conjoint_ttest_lab} report p-values for GAI (Embeddings) and GAI (Labels), respectively. Both GAI variants achieve significant MAPE improvements over all benchmarks. Coverage differences relative to Primary, Naive, and PPI++ are significant for both variants. For CI width, GAI (Labels) produces significantly narrower intervals than PPI++ across all sample sizes, while GAI (Embeddings) produces narrower intervals than PPI++ at $n_P=50$ but modestly wider intervals at $n_P \in \{150, 200\}$.

\subsubsection{Decision Errors}
\label{apdx: conjoint_decision}

\begin{table}[t]
\caption{Decision Errors for Conjoint Analysis (\% of all CIs)}
\label{tab:conjoint_decision}
\vskip 0.1in
\centering
\small
\renewcommand{\arraystretch}{1}
\setlength{\tabcolsep}{6pt}
\begin{tabular}{@{}lcccc@{}}
\toprule
Method & $n_P$=50 & $n_P$=100 & $n_P$=150 & $n_P$=200 \\
\midrule
Primary & 1.82 & 6.73 & 7.09 & 12.00 \\
Naive & 19.09 & 20.00 & 20.73 & 21.64 \\
PPI & 0.00 & 1.82 & 3.82 & 7.09 \\
PPI++ & 2.91 & 9.09 & 8.18 & 14.55 \\
GAI (Labels) & 0.18 & 2.91 & 5.27 & 9.09 \\
GAI (Embeddings) & 0.55 & 1.45 & 3.09 & 5.45 \\
\bottomrule
\end{tabular}
\vskip -0.1in
\end{table}

Table~\ref{tab:conjoint_decision} reports decision errors. GAI (Embeddings) averages 2.6\%, lower than Primary (6.9\%), PPI (3.2\%), PPI++ (8.7\%), and Naive (20.4\%). GAI (Labels) averages 4.4\%, also substantially lower than most benchmarks, except for PPI. \rev{Although PPI achieves the lowest error rate only at $n_P=50$ (and the lowest average among the non-GAI benchmarks), GAI (Embeddings) attains the lowest average decision-error rate overall; PPI's low rates come at the cost of wider intervals.}

\subsubsection{Robustness to Auxiliary Generator}
\label{apdx: conjoint_embedding_z_full}

Tables~\ref{tab: conjoint_robustness_mape}--\ref{tab: conjoint_robustness_width} report results using different LLMs (GPT-3.5-Turbo, GPT-4, GPT-4o) with CoT prompting, comparing both label-based ($z$=Labels) and embedding-based ($z$=Embeddings) auxiliary representations. GAI consistently achieves the lowest MAPE across all configurations while maintaining near-nominal coverage, demonstrating robustness to both the choice of auxiliary generator and representation type.

\begin{table}[t]
\caption{Benchmark Comparison for Conjoint Analysis: MAPE (\%)}
\label{tab: conjoint_robustness_mape}
\vskip 0.1in
\centering
\footnotesize
\begin{tabular}{@{}lcccccccccccccc@{}}
\toprule
\midrule
 & \multicolumn{6}{c}{$n_P$ = 50} && \multicolumn{6}{c}{$n_P$ = 100} \\
\cmidrule(lr){2-7} \cmidrule(lr){9-14}
 & & & & & \multicolumn{2}{c}{GAI} && & & & & \multicolumn{2}{c}{GAI} \\
\cmidrule(lr){6-7} \cmidrule(lr){13-14}
Model & Primary & Naive & PPI & PPI++ & Lab & Emb && Primary & Naive & PPI & PPI++ & Lab & Emb \\
\midrule
GPT-3.5-Turbo-0613 & 32.02 & 48.83 & -- & -- & 17.23 & 16.51 && 25.27 & 45.89 & -- & -- & 17.61 & 17.06 \\
GPT-3.5-Turbo-0125 & 32.02 & 42.98 & -- & -- & 17.48 & 16.45 && 25.27 & 39.97 & 50.55 & 30.41 & 17.92 & 17.35 \\
GPT-4 & 32.02 & 50.14 & -- & -- & 16.79 & 16.81 && 25.27 & 47.75 & 48.56 & 29.40 & 17.37 & 17.36 \\
GPT-4o & 32.02 & 48.52 & -- & -- & 16.86 & 16.50 && 25.27 & 45.75 & 44.57 & 29.69 & 17.52 & 17.23 \\
\addlinespace[0.4em]
\midrule
 & \multicolumn{6}{c}{$n_P$ = 150} && \multicolumn{6}{c}{$n_P$ = 200} \\
\cmidrule(lr){2-7} \cmidrule(lr){9-14}
 & & & & & \multicolumn{2}{c}{GAI} && & & & & \multicolumn{2}{c}{GAI} \\
\cmidrule(lr){6-7} \cmidrule(lr){13-14}
Model & Primary & Naive & PPI & PPI++ & Lab & Emb && Primary & Naive & PPI & PPI++ & Lab & Emb \\
\midrule
GPT-3.5-Turbo-0613 & 19.67 & 42.73 & 39.02 & 22.60 & 16.35 & 16.62 && 19.01 & 40.10 & 35.08 & 21.20 & 17.22 & 16.87 \\
GPT-3.5-Turbo-0125 & 19.67 & 37.40 & 36.11 & 23.07 & 16.64 & 16.13 && 19.01 & 35.17 & 29.95 & 20.96 & 17.22 & 16.63 \\
GPT-4 & 19.67 & 46.00 & 33.54 & 22.62 & 15.96 & 16.04 && 19.01 & 43.11 & 29.21 & 20.96 & 16.64 & 16.53 \\
GPT-4o & 19.67 & 43.33 & 32.96 & 22.96 & 15.97 & 15.73 && 19.01 & 40.29 & 29.21 & 21.00 & 16.64 & 16.24 \\
\bottomrule
\end{tabular}
\vskip 0.05in
\footnotesize
\textbf{Notes}: All models use CoT prompting. A ``--'' symbol indicates cases where the value exceeds 1,000, which occurs because PPI-based methods can suffer from singularity problems in small primary samples.
\vskip -0.1in
\end{table}

\begin{table}[t]
\caption{Benchmark Comparison for Conjoint Analysis: 95\% CI Coverage Probability (\%)}
\label{tab: conjoint_robustness_coverage}
\vskip 0.1in
\centering
\footnotesize
\begin{tabular}{@{}lcccccccccccccc@{}}
\toprule
\midrule
 & \multicolumn{6}{c}{$n_P$ = 50} && \multicolumn{6}{c}{$n_P$ = 100} \\
\cmidrule(lr){2-7} \cmidrule(lr){9-14}
 & & & & & \multicolumn{2}{c}{GAI} && & & & & \multicolumn{2}{c}{GAI} \\
\cmidrule(lr){6-7} \cmidrule(lr){13-14}
Model & Primary & Naive & PPI & PPI++ & Lab & Emb && Primary & Naive & PPI & PPI++ & Lab & Emb \\
\midrule
GPT-3.5-Turbo-0613 & 97.82 & 27.27 & 100.00 & 95.82 & 99.64 & 100.00 && 92.73 & 27.64 & 99.82 & 89.09 & 96.91 & 98.18 \\
GPT-3.5-Turbo-0125 & 97.82 & 20.00 & 100.00 & 93.82 & 100.00 & 100.00 && 92.73 & 22.00 & 98.18 & 88.36 & 97.09 & 97.64 \\
GPT-4 & 97.82 & 28.55 & 100.00 & 94.18 & 99.45 & 100.00 && 92.73 & 28.00 & 99.09 & 89.64 & 96.91 & 98.00 \\
GPT-4o & 97.82 & 26.55 & 99.82 & 94.55 & 99.82 & 99.45 && 92.73 & 26.36 & 98.00 & 89.09 & 97.09 & 98.55 \\
\addlinespace[0.4em]
\midrule
 & \multicolumn{6}{c}{$n_P$ = 150} && \multicolumn{6}{c}{$n_P$ = 200} \\
\cmidrule(lr){2-7} \cmidrule(lr){9-14}
 & & & & & \multicolumn{2}{c}{GAI} && & & & & \multicolumn{2}{c}{GAI} \\
\cmidrule(lr){6-7} \cmidrule(lr){13-14}
Model & Primary & Naive & PPI & PPI++ & Lab & Emb && Primary & Naive & PPI & PPI++ & Lab & Emb \\
\midrule
GPT-3.5-Turbo-0613 & 92.55 & 27.45 & 98.00 & 90.36 & 94.73 & 96.73 && 88.00 & 29.09 & 95.64 & 84.73 & 90.18 & 94.36 \\
GPT-3.5-Turbo-0125 & 92.55 & 24.36 & 96.73 & 90.91 & 94.18 & 96.91 && 88.00 & 27.45 & 94.55 & 84.73 & 90.00 & 93.45 \\
GPT-4 & 92.55 & 27.27 & 97.64 & 90.18 & 94.73 & 96.91 && 88.00 & 26.18 & 92.55 & 84.36 & 91.09 & 95.64 \\
GPT-4o & 92.55 & 26.36 & 95.82 & 90.91 & 94.73 & 96.91 && 88.00 & 27.09 & 92.91 & 85.09 & 90.91 & 94.55 \\
\bottomrule
\end{tabular}
\vskip -0.1in
\end{table}

\begin{table}[t]
\caption{Benchmark Comparison for Conjoint Analysis: 95\% CI Width}
\label{tab: conjoint_robustness_width}
\vskip 0.1in
\centering
\footnotesize
\begin{tabular}{@{}lcccccccccccccc@{}}
\toprule
\midrule
 & \multicolumn{6}{c}{$n_P$ = 50} && \multicolumn{6}{c}{$n_P$ = 100} \\
\cmidrule(lr){2-7} \cmidrule(lr){9-14}
 & & & & & \multicolumn{2}{c}{GAI} && & & & & \multicolumn{2}{c}{GAI} \\
\cmidrule(lr){6-7} \cmidrule(lr){13-14}
Model & Primary & Naive & PPI & PPI++ & Lab & Emb && Primary & Naive & PPI & PPI++ & Lab & Emb \\
\midrule
GPT-3.5-Turbo-0613 & 2.43 & 0.57 & 14.21 & 2.62 & 1.93 & 2.15 && 1.51 & 0.54 & 5.18 & 1.50 & 1.36 & 1.50 \\
GPT-3.5-Turbo-0125 & 2.43 & 0.51 & 37.35 & 2.48 & 1.92 & 2.15 && 1.51 & 0.49 & 3.30 & 1.51 & 1.35 & 1.50 \\
GPT-4 & 2.43 & 0.56 & 21.23 & 2.44 & 1.93 & 2.07 && 1.51 & 0.54 & 3.41 & 1.51 & 1.35 & 1.48 \\
GPT-4o & 2.43 & 0.55 & 9.28 & 2.44 & 1.94 & 2.11 && 1.51 & 0.53 & 2.71 & 1.52 & 1.36 & 1.50 \\
\addlinespace[0.4em]
\midrule
 & \multicolumn{6}{c}{$n_P$ = 150} && \multicolumn{6}{c}{$n_P$ = 200} \\
\cmidrule(lr){2-7} \cmidrule(lr){9-14}
 & & & & & \multicolumn{2}{c}{GAI} && & & & & \multicolumn{2}{c}{GAI} \\
\cmidrule(lr){6-7} \cmidrule(lr){13-14}
Model & Primary & Naive & PPI & PPI++ & Lab & Emb && Primary & Naive & PPI & PPI++ & Lab & Emb \\
\midrule
GPT-3.5-Turbo-0613 & 1.19 & 0.52 & 2.39 & 1.19 & 1.11 & 1.23 && 1.01 & 0.50 & 1.99 & 1.01 & 0.96 & 1.05 \\
GPT-3.5-Turbo-0125 & 1.19 & 0.47 & 2.14 & 1.19 & 1.11 & 1.23 && 1.01 & 0.46 & 1.67 & 1.01 & 0.96 & 1.04 \\
GPT-4 & 1.19 & 0.52 & 1.97 & 1.19 & 1.11 & 1.21 && 1.01 & 0.50 & 1.61 & 1.00 & 0.96 & 1.04 \\
GPT-4o & 1.19 & 0.51 & 1.94 & 1.19 & 1.11 & 1.21 && 1.01 & 0.49 & 1.58 & 1.00 & 0.96 & 1.05 \\
\bottomrule
\end{tabular}
\vskip -0.1in
\end{table}

\subsection{Retail Pricing Study}

\subsubsection{Tests for Performance Improvement}
\label{apdx: tests_pricing}

We conduct paired $t$-tests comparing GAI to each benchmark method across repeated trials for each metric (MAPE, coverage, CI width). The paired design controls for sampling variation and isolates estimator differences.

Table~\ref{tab:pricing_ttest} reports p-values for the pricing study. GAI's MAPE improvements over all benchmarks are significant at most sample sizes. For CI width, PPI++ produces narrower intervals than GAI.

\begin{table}[t]
\caption{Paired t-test p-values (GAI vs benchmarks) for Pricing Study}
\label{tab:pricing_ttest}
\vskip 0.1in
\centering
\small
\renewcommand{\arraystretch}{1}
\setlength{\tabcolsep}{2.8pt}
\begin{tabular}{@{}l|ccccc|ccccc|ccccc@{}}
\toprule
& \multicolumn{5}{c|}{(a) MAPE} & \multicolumn{5}{c|}{(b) 95\% CI Coverage} & \multicolumn{5}{c}{(c) 95\% CI Width} \\
\cmidrule(lr){2-6} \cmidrule(lr){7-11} \cmidrule(l){12-16}
Method & 100 & 200 & 300 & 400 & 500 & 100 & 200 & 300 & 400 & 500 & 100 & 200 & 300 & 400 & 500 \\
\midrule
Primary & 7e-5 & 6e-5 & 0.01 & 2e-3 & 0.11 & 0.06 & 0.18 & 0.15 & 0.32 & 0.30 & 0.44 & 0.08 & 5e-3 & 5e-5 & 4e-8 \\
Naive & 4e-11 & 2e-20 & 8e-17 & 1e-31 & 1e-21 & $\approx$0 & $\approx$0 & 3e-43 & $\approx$0 & 2e-28 & 5e-55 & 3e-67 & 8e-61 & 4e-68 & 3e-63 \\
PPI & 1e-5 & 5e-4 & 3e-3 & 2e-5 & 9e-5 & 0.06 & 0.16 & 0.41 & 0.06 & 1.00 & 8e-14 & 5e-30 & 1e-41 & 2e-46 & 7e-58 \\
PPI++ & 1e-4 & 2e-3 & 0.06 & 1e-3 & 0.09 & 0.03 & 0.32 & 0.11 & 0.32 & 0.26 & 9e-5 & 1e-7 & 4e-4 & 7e-4 & 3e-3 \\
\bottomrule
\end{tabular}
\vskip 0.05in
\footnotesize
\textbf{Notes}: $\approx$0 represents p-value $<10^{-300}$.
\vskip -0.1in
\end{table}

\subsubsection{Decision Errors}
\label{apdx: pricing_slip}

Table~\ref{tab:pricing_decision} reports decision errors. GAI achieves 0.2\% averaged across sample sizes, lower than PPI (1.2\%), PPI++ (1.8\%), and Primary (2.2\%). \rev{Naive attains 0.0\%, but this reflects its severe undercoverage (50\%): its biased yet narrow intervals seldom trigger a sign or significance error, so its zero rate is not a sign of reliable inference.}

\begin{table}[t]
\caption{Decision Errors for Pricing Study (\% of all CIs)}
\label{tab:pricing_decision}
\vskip 0.1in
\centering
\small
\renewcommand{\arraystretch}{1}
\setlength{\tabcolsep}{6pt}
\begin{tabular}{@{}lccccc@{}}
\toprule
Method & $n_P$=100 & $n_P$=200 & $n_P$=300 & $n_P$=400 & $n_P$=500 \\
\midrule
Primary & 4.0 & 2.0 & 3.0 & 0.0 & 2.0 \\
Naive & 0.0 & 0.0 & 0.0 & 0.0 & 0.0 \\
PPI & 2.0 & 0.0 & 1.0 & 2.0 & 1.0 \\
PPI++ & 4.0 & 0.0 & 4.0 & 0.0 & 1.0 \\
GAI & 0.0 & 0.0 & 0.0 & 0.0 & 1.0 \\
\bottomrule
\end{tabular}
\vskip 0.05in
\vskip -0.1in
\end{table}

\subsubsection{Results Across All Products}
\label{apdx: pricing_all_products}

Table~\ref{tab:pricing_all_summary} reports results averaged across all 40 products in the \cite{toubia2025database} dataset, where we use the same experimental design as Section~\ref{sec:pricing} and estimate $\gvec(\Xvec,z)$ using $\ell_1$-regularized logistic regression ($C=0.1$) with 5-fold cross-fitting.

\begin{table}[t]
\caption{Pricing Study Averaged Across All 40 Products: MAPEs, coverage probabilities, and CI widths}
\label{tab:pricing_all_summary}
\vskip 0.1in
\centering
\small
\renewcommand{\arraystretch}{1}
\setlength{\tabcolsep}{2.8pt}
\begin{tabular}{@{}l|ccccc|ccccc|ccccc@{}}
\toprule
& \multicolumn{5}{c|}{(a) MAPE (\%)}& \multicolumn{5}{c|}{(b) 95\% CI Coverage (\%)} & \multicolumn{5}{c}{(c) 95\% CI Width} \\
\cmidrule(lr){2-6} \cmidrule(lr){7-11} \cmidrule(l){12-16}
Method & 100 & 200 & 300 & 400 & 500 & 100 & 200 & 300 & 400 & 500 & 100 & 200 & 300 & 400 & 500 \\
\midrule
Primary & 14.34 & 9.66 & 7.80 & 6.23 & 5.62 & 96.10 & 96.45 & 97.12 & 97.90 & 97.98 & 0.88 & 0.62 & 0.50 & 0.43 & 0.39 \\
Naive & 22.48 & 21.06 & 19.64 & 18.32 & 17.00 & 46.32 & 46.95 & 46.90 & 47.28 & 48.10 & 0.28 & 0.27 & 0.25 & 0.24 & 0.23 \\
PPI & 17.73 & 12.17 & 10.11 & 8.81 & 8.03 & 96.68 & 96.42 & 97.10 & 96.72 & 96.80 & 1.07 & 0.77 & 0.64 & 0.57 & 0.52 \\
PPI++ & 14.16 & 9.40 & 7.60 & 6.14 & 5.51 & 94.88 & 95.92 & 96.78 & 97.52 & 97.52 & 0.84 & 0.59 & 0.48 & 0.42 & 0.38 \\
GAI & 8.51 & 6.94 & 6.14 & 5.07 & 4.68 & 99.25 & 98.60 & 98.65 & 99.02 & 98.92 & 0.88 & 0.61 & 0.49 & 0.42 & 0.38 \\
\bottomrule
\end{tabular}
\vskip 0.05in
\footnotesize
\vskip -0.1in
\end{table}

The results across all 40 products reinforce the main findings: (1) GAI achieves the lowest MAPE at every sample size; (2) GAI maintains coverage above 98\% while Naive severely undercovers (46--48\%); and (3) CI widths are comparable across valid methods. These patterns confirm that GAI's advantages generalize across diverse product categories.

\subsection{Health Insurance Census Analysis}

\subsubsection{Tests for Performance Improvement}
\label{apdx: tests_census}

We conduct paired $t$-tests comparing GAI to each benchmark method across repeated trials for each metric (MAPE, coverage, CI width). The paired design controls for sampling variation and isolates estimator differences.

Table~\ref{tab:census_ttest} shows GAI significantly outperforms all benchmarks in MAPE and coverage. For CI width, GAI and PPI++ show no significant difference.

\begin{table}[t]
\caption{Paired t-test p-values (GAI vs benchmarks) for Census Analysis}
\label{tab:census_ttest}
\vskip 0.1in
\centering
\small
\renewcommand{\arraystretch}{1}
\setlength{\tabcolsep}{2.8pt}
\begin{tabular}{@{}l|ccccc|ccccc|ccccc@{}}
\toprule
& \multicolumn{5}{c|}{(a) MAPE} & \multicolumn{5}{c|}{(b) 95\% CI Coverage} & \multicolumn{5}{c}{(c) 95\% CI Width} \\
\cmidrule(lr){2-6} \cmidrule(lr){7-11} \cmidrule(l){12-16}
Method & 100 & 250 & 500 & 750 & 1k & 100 & 250 & 500 & 750 & 1k & 100 & 250 & 500 & 750 & 1k \\
\midrule
Primary & 2e-10 & 2e-8 & 5e-9 & 5e-6 & 7e-6 & 3e-4 & 1e-4 & 1e-4 & 8e-4 & 1e-4 & 0.09 & 5e-5 & 4e-8 & 1e-13 & 1e-15 \\
Naive & 1e-7 & 6e-8 & 3e-5 & 6e-4 & 4e-3 & 3e-25 & 4e-25 & 5e-25 & 6e-27 & 6e-27 & 4e-10 & 1e-15 & 2e-18 & 4e-23 & 1e-24 \\
PPI & 3e-7 & 7e-10 & 1e-7 & 6e-6 & 1e-4 & 2e-3 & 0.03 & 0.02 & 0.17 & 0.05 & 0.78 & 0.74 & 0.98 & 0.65 & 0.36 \\
PPI++ & 1e-9 & 2e-9 & 1e-7 & 1e-5 & 2e-5 & 6e-4 & 2e-3 & 0.01 & 0.05 & 0.01 & 0.16 & 0.54 & 0.53 & 0.26 & 0.30 \\
\bottomrule
\end{tabular}
\vskip -0.1in
\end{table}

\subsubsection{Decision Errors}
\label{apdx: census_decision}

\begin{table}[t]
\caption{Decision Errors for Census Analysis (\% of all CIs)}
\label{tab:census_decision}
\vskip 0.1in
\centering
\small
\renewcommand{\arraystretch}{1}
\setlength{\tabcolsep}{6pt}
\begin{tabular}{@{}lccccc@{}}
\toprule
Method & $n_P$=100 & $n_P$=250 & $n_P$=500 & $n_P$=750 & $n_P$=1000 \\
\midrule
Primary & 11.0 & 4.0 & 4.0 & 2.0 & 3.0 \\
Naive & 20.0 & 19.0 & 16.0 & 12.0 & 12.0 \\
PPI & 8.0 & 0.0 & 3.0 & 2.0 & 2.0 \\
PPI++ & 10.0 & 3.0 & 3.0 & 2.0 & 3.0 \\
GAI & 0.0 & 0.0 & 0.0 & 0.0 & 0.0 \\
\bottomrule
\end{tabular}
\vskip 0.05in
\footnotesize
\vskip -0.1in
\end{table}

As shown in Table \ref{tab:census_decision}, GAI achieves zero decision errors across all sample sizes, while Naive exhibits 12--20\% error rates and Primary, PPI, PPI++ show moderate rates (2--11\%).

\section{Vaccine Conjoint Analysis under Covariate Shift: Known Density Ratio}
\label{apdx:conjoint_gaiw_covariate_shift}

This appendix considers a practical problem: the labeled and AI-augmented data may come from one source population, while the parameter of interest is defined for a different target population. This can occur when a conjoint survey is fielded to an accessible online panel, a specific customer segment, or an early-adopter sample, but the researcher wants to report preferences for a broader market whose covariate distribution differs from the source sample. In this setting, the estimating equations need to be weighted to the target covariate distribution. \cl{This appendix corresponds to the known-density-ratio case in Section~\ref{sec:covshift_known}, where the source-to-target density ratio is determined by the sampling design.}

We illustrate this source-to-target setting in the vaccine conjoint application. The covariate shift is defined by the severe-side-effect probability of Vaccine A with two strata: $1$ in $1{,}000{,}000$ and $1$ in $10{,}000$. The target covariate distribution is balanced, placing 50\% mass on each stratum. For each $n_P \in \{50,100,150\}$\footnote{Because this stratified sampling design requires a larger underlying pool, we reduce both the primary and auxiliary sample sizes in this application.}, we draw the primary sample without replacement so that 70\% of observations have severe side-effect probability $1$ in $1{,}000{,}000$ and 30\% have severe side-effect probability $1$ in $10{,}000$. We then draw $n_A=700$ auxiliary observations without replacement from the complement of the primary sample, using the same 70/30 source distribution. Thus, the primary and auxiliary samples have matching covariate distributions, which differs from the balanced 50/50 target distribution. All estimators are evaluated on the same sample path. We compute the ground-truth parameter $\beta^*$ by fitting the same choice model to the full human-labeled sample with the target covariate distribution.

We correct the source-to-target covariate shift by weighting each observation's GAI score. For observation $i$ in severe-side-effect stratum $s$, the weight is
\[
q_i\,=\,\frac{\pi_{\text{target}}(s)}{\pi_{\text{sample}}(s)}.
\]
Under the 70/30 source distribution and the 50/50 target distribution, these weights are approximately 0.71 for the $1$ in $1{,}000{,}000$ stratum and 1.67 for the $1$ in $10{,}000$ stratum. GAI-W estimates $g(\mathbf{X}, z)=\E[Y \mid \mathbf{X}, z]$ with cross-fitted logistic regression. Since the primary and auxiliary samples share the same source covariate distribution, GAI-W uses the constant labeling probability $e=n_P/(n_P+n_A)$.

We also weight the benchmark estimators to adjust for this distributional difference. Primary-W fits a weighted logistic regression using only the primary human labels. Naive-W fits a weighted logistic regression on the primary-plus-auxiliary union, using human labels for primary observations and LLM labels for auxiliary observations. PPI-W and PPI++-W pass weights for both labeled and auxiliary rows to the PPI package.

\begin{table}[t]
\caption{Benchmark Comparison for GAI-W under Side-Effect Covariate Shift}
\label{tab:conjoint_gaiw_weighted}
\vskip 0.1in
\centering
\small
\renewcommand{\arraystretch}{1}
\setlength{\tabcolsep}{3pt}
\begin{tabular}{@{}l|ccc|ccc|ccc@{}}
\toprule
& \multicolumn{3}{c|}{(a) MAPE (\%)} & \multicolumn{3}{c|}{(b) 95\% CI Coverage (\%)} & \multicolumn{3}{c}{(c) 95\% CI Width} \\
\cmidrule(lr){2-4} \cmidrule(lr){5-7} \cmidrule(l){8-10}
Method & 50 & 100 & 150 & 50 & 100 & 150 & 50 & 100 & 150 \\
\midrule
Primary-W & 55.60 & 30.30 & 25.02 & 90.18 & 90.73 & 90.18 & 2.70 & 1.64 & 1.31 \\
Naive-W & 113.30 & 86.25 & 70.29 & 13.64 & 14.18 & 15.64 & 1.15 & 0.93 & 0.81 \\
PPI-W & 84.31 & 60.53 & 36.70 & 100.00 & 99.63 & 98.00 & 15.03 & 4.49 & 2.33 \\
PPI++-W & 50.87 & 31.59 & 25.11 & 89.84 & 90.91 & 90.18 & 2.67 & 1.63 & 1.30 \\
GAI-W & 18.04 & 16.78 & 16.01 & 99.64 & 99.09 & 97.45 & 2.15 & 1.49 & 1.21 \\
\bottomrule
\end{tabular}
\vskip 0.05in
\footnotesize
\textbf{Notes}: The auxiliary sample is drawn to match the primary sample's 70/30 severe-side-effect distribution. All methods use post-stratification weights to target the balanced severe-side-effect distribution. PPI-W and PPI++-W summaries drop numerical failures; they are based on 34 successful trials at $n_P=50$, 49 at $n_P=100$, and 50 at $n_P=150$.
\vskip -0.1in
\end{table}

\paragraph{Results.} Table~\ref{tab:conjoint_gaiw_weighted} shows that GAI-W has the lowest MAPE across all primary sample sizes, while maintaining coverage above the nominal level and producing narrower intervals than the other valid methods. Naive-W remains biased despite weighting, indicating that correcting the covariate distribution does not remove the bias from treating auxiliary labels as human labels. PPI-W attains high coverage, but with substantially wider intervals, especially at small primary sample sizes.

\begin{table}[t]
\caption{Paired t-test p-values (GAI-W vs weighted benchmarks) under Side-Effect Covariate Shift}
\label{tab:conjoint_gaiw_ttest}
\vskip 0.1in
\centering
\small
\renewcommand{\arraystretch}{1}
\setlength{\tabcolsep}{2.8pt}
\begin{tabular}{@{}l|ccc|ccc|ccc@{}}
\toprule
& \multicolumn{3}{c|}{(a) MAPE} & \multicolumn{3}{c|}{(b) 95\% CI Coverage} & \multicolumn{3}{c}{(c) 95\% CI Width} \\
\cmidrule(lr){2-4} \cmidrule(lr){5-7} \cmidrule(l){8-10}
Method & 50 & 100 & 150 & 50 & 100 & 150 & 50 & 100 & 150 \\
\midrule
Primary-W & 1.01e-19 & 2.42e-17 & 1.18e-14 & 1.40e-9 & 5.48e-9 & 5.19e-8 & 4.67e-15 & 2.01e-12 & 3.34e-14 \\
Naive-W & 9.37e-44 & 8.06e-40 & 8.40e-47 & 2.44e-56 & 2.64e-54 & 6.61e-51 & 2.47e-46 & 5.66e-48 & 2.34e-54 \\
PPI-W & 1.40e-7 & 2.91e-12 & 1.09e-17 & -- & 0.26 & 0.50 & 0.08 & 6.03e-7 & 2.07e-25 \\
PPI++-W & 1.21e-14 & 2.52e-16 & 3.21e-16 & 2.47e-7 & 1.02e-6 & 5.19e-8 & 1.79e-9 & 4.21e-14 & 1.62e-13 \\
\bottomrule
\end{tabular}
\vskip 0.05in
\footnotesize
\textbf{Notes}: A ``--" indicates that paired coverage for GAI-W and PPI-W was identical across trials when PPI-W does not fail due to singularity problems, so the paired $t$-statistic is undefined.
\vskip -0.1in
\end{table}

Table~\ref{tab:conjoint_gaiw_ttest} confirms that GAI-W's MAPE improvements are statistically significant relative to all weighted benchmarks at every primary sample size. Coverage improvements are significant relative to Primary-W, Naive-W, and PPI++-W. For interval width, GAI-W is significantly narrower than Primary-W and PPI++-W at all sample sizes and narrower than PPI-W at $n_P=100$ and $150$, while Naive-W is narrower but invalid because it severely undercovers.

\begin{table}[t]
\caption{Decision Errors for GAI-W under Side-Effect Covariate Shift (\% of all CIs)}
\label{tab:conjoint_gaiw_decision}
\vskip 0.1in
\centering
\small
\renewcommand{\arraystretch}{1}
\setlength{\tabcolsep}{6pt}
\begin{tabular}{@{}lccc@{}}
\toprule
Method & $n_P$=50 & $n_P$=100 & $n_P$=150 \\
\midrule
Primary-W & 7.27 & 8.36 & 7.82 \\
Naive-W & 23.27 & 22.73 & 22.55 \\
PPI-W & 0.00 & 0.37 & 1.82 \\
PPI++-W & 6.68 & 7.61 & 7.64 \\
GAI-W & 0.36 & 0.91 & 2.55 \\
\bottomrule
\end{tabular}
\vskip -0.1in
\end{table}

Table~\ref{tab:conjoint_gaiw_decision} reports decision errors for the same matched-auxiliary design. GAI-W has substantially lower decision-error rates than Primary-W, Naive-W, and PPI++-W. PPI-W has slightly lower decision-error rates, but this again reflects its wider confidence intervals and lower test power rather than superior point estimation.

\section{Vaccine Conjoint Analysis under Covariate Shift: Estimated Density Ratio}
\label{apdx:conjoint_gaidr_covariate_shift}

This appendix reports an estimated-density-ratio version of the vaccine conjoint covariate-shift experiment. \cl{This appendix provides the empirical implementation of the GAI-DR estimator developed in Section~\ref{sec:covshift_dr}.} In contrast to the known-ratio design in \rev{Section}~\ref{apdx:conjoint_gaiw_covariate_shift}, the source-to-target density ratio is not supplied to the researcher and must be estimated from source and target covariates.

We construct source and target populations by tilting the finite vaccine conjoint pool along three attributes: 90\% efficacy, five-year coverage, and rare severe side effects ($1$ in $1{,}000{,}000$). The source population places lower mass on these favorable attributes, while the target population places higher mass on them.

For each trial and each $n_P \in \{50,100,150\}$, we draw $n_P$ fully labeled source observations and $n_T=700$ target observations with covariates and LLM labels. All estimators are evaluated on the same sample paths and target the full-target-population ground-truth parameter. GAI-DR estimates the source-to-target density ratio using a cross-fitted logistic classifier with L2 regularization ($C=1$). The nuisance prediction $g(\mathbf{X},z)$ is estimated using cross-fitted L2 logistic regression ($C=0.05$). Because the density ratio is unknown in this setting, the empirical benchmarks do not receive density-ratio weights.

\begin{table}[t]
\caption{Benchmark Comparison for GAI-DR under Estimated Density-Ratio Covariate Shift}
\label{tab:conjoint_gaidr_compare}
\vskip 0.1in
\centering
\small
\renewcommand{\arraystretch}{1}
\setlength{\tabcolsep}{3pt}
\begin{tabular}{@{}l|ccc|ccc|ccc@{}}
\toprule
& \multicolumn{3}{c|}{(a) MAPE (\%)} & \multicolumn{3}{c|}{(b) 95\% CI Coverage (\%)} & \multicolumn{3}{c}{(c) 95\% CI Width} \\
\cmidrule(lr){2-4} \cmidrule(lr){5-7} \cmidrule(l){8-10}
Method & 50 & 100 & 150 & 50 & 100 & 150 & 50 & 100 & 150 \\
\midrule
Primary & 60.59 & 32.42 & 24.30 & 87.45 & 89.45 & 89.82 & 2.65 & 1.52 & 1.19 \\
Naive & 121.00 & 87.65 & 70.70 & 10.00 & 13.82 & 19.45 & 1.07 & 0.82 & 0.71 \\
PPI & 131.43 & 71.92 & 75.70 & 100.00 & 100.00 & 98.73 & 97.30 & 6.58 & 47.15 \\
PPI++ & 54.35 & 33.04 & 25.51 & 91.43 & 89.09 & 90.55 & 3.87 & 1.54 & 1.21 \\
GAI-DR & 27.29 & 19.21 & 17.60 & 97.45 & 98.91 & 98.18 & 1.89 & 1.53 & 1.43 \\
\bottomrule
\end{tabular}
\vskip 0.05in
\footnotesize
\vskip -0.1in
\end{table}

\paragraph{Results.} Table~\ref{tab:conjoint_gaidr_compare} shows that GAI-DR has the lowest MAPE at all three source-label sample sizes after estimating the source-to-target density ratio. The improvements are large: at $n_P=50$, GAI-DR reduces MAPE from 60.59\% for Primary and 54.35\% for PPI++ to 27.29\%. GAI-DR also restores nominal coverage, attaining 97--99\% coverage across sample sizes, whereas Primary, Naive, and PPI++ remain below 95\%. Importantly, the coverage improvement is not driven by excessively wide intervals: GAI-DR intervals are comparable to Primary and PPI++ at $n_P=100$ and only moderately wider at $n_P=150$, while being substantially narrower than PPI.

\begin{table}[t]
\caption{Decision Errors for GAI-DR under Estimated Density-Ratio Covariate Shift (\% of all CIs)}
\label{tab:conjoint_gaidr_decision}
\vskip 0.1in
\centering
\small
\renewcommand{\arraystretch}{1}
\setlength{\tabcolsep}{6pt}
\begin{tabular}{@{}lccc@{}}
\toprule
Method & $n_P$=50 & $n_P$=100 & $n_P$=150 \\
\midrule
Primary & 9.09 & 8.91 & 8.91 \\
Naive & 38.18 & 36.91 & 33.64 \\
PPI & 0.00 & 0.00 & 1.27 \\
PPI++ & 5.45 & 9.45 & 8.36 \\
GAI-DR & 1.82 & 0.73 & 1.27 \\
\bottomrule
\end{tabular}
\vskip -0.1in
\end{table}

Table~\ref{tab:conjoint_gaidr_decision} shows that GAI-DR also yields low decision-error rates. Its average decision-error rate is 1.27\%, compared with 8.97\% for Primary, 36.24\% for Naive, and 7.75\% for PPI++. PPI has similarly low decision errors, but this comes from extremely wide intervals rather than from accurate point estimation.

\begin{table}[t]
\caption{Paired t-test p-values (GAI-DR vs unweighted benchmarks) under Estimated Density-Ratio Covariate Shift}
\label{tab:conjoint_gaidr_ttest}
\vskip 0.1in
\centering
\small
\renewcommand{\arraystretch}{1}
\setlength{\tabcolsep}{2.8pt}
\begin{tabular}{@{}l|ccc|ccc|ccc@{}}
\toprule
& \multicolumn{3}{c|}{(a) MAPE} & \multicolumn{3}{c|}{(b) 95\% CI Coverage} & \multicolumn{3}{c}{(c) 95\% CI Width} \\
\cmidrule(lr){2-4} \cmidrule(lr){5-7} \cmidrule(l){8-10}
Method & 50 & 100 & 150 & 50 & 100 & 150 & 50 & 100 & 150 \\
\midrule
Primary & 2.70e-13 & 1.76e-18 & 3.76e-13 & 1.43e-06 & 3.97e-09 & 7.13e-08 & 4.13e-14 & 0.69 & 1.06e-09 \\
Naive & 9.57e-43 & 5.70e-49 & 1.62e-44 & 8.28e-49 & 1.75e-49 & 2.47e-46 & 9.76e-25 & 3.09e-26 & 4.88e-28 \\
PPI & 3.99e-04 & 1.16e-07 & 3.40e-03 & 6.25e-03 & 0.03 & 0.37 & 0.26 & 2.95e-03 & 0.30 \\
PPI++ & 1.46e-04 & 1.13e-18 & 2.03e-12 & 1.00e-03 & 5.78e-09 & 2.73e-07 & 0.15 & 0.73 & 1.04e-08 \\
\bottomrule
\end{tabular}
\vskip 0.05in
\footnotesize
\vskip -0.1in
\end{table}

Table~\ref{tab:conjoint_gaidr_ttest} confirms that GAI-DR's MAPE improvements are statistically significant relative to every benchmark at every source-label sample size. Coverage improvements relative to Primary, Naive, and PPI++ are also statistically significant throughout. For interval width, GAI-DR is substantially narrower than Primary at $n_P=50$ and PPI at $n_P=100$, but moderately wider than Primary and PPI++ at $n_P=150$. Naive yields the narrowest intervals around the biased point estimates.

\section{Pricing Study with High-dimensional Auxiliary Information}
\label{apdx: pricing_hdim}

We extend the auxiliary information to include persona and demographic features: $\zvec = [z, \text{persona}, \text{demographics}] \in \mathbb{R}^{115}$, where $z$ is the scalar AI prediction, persona includes 54 psychometric features, and demographics includes 60 indicator variables. The outcome model remains $\Xvec = [\mathbf{1}, \text{price}]$.

We estimate $\gvec(\Xvec,\zvec)$ using $\ell_1$-regularized logistic regression ($C=0.01$) with 5-fold cross-fitting.

\paragraph{Results.} Table~\ref{tab: pricing_hdim} reports results across 50 Monte Carlo trials.

\begin{table}[t]
\caption{Pricing Study with High-dimensional Auxiliary Information: MAPEs, coverage probabilities, and CI widths}
\label{tab: pricing_hdim}
\vskip 0.1in
\centering
\small
\renewcommand{\arraystretch}{1}
\setlength{\tabcolsep}{2.8pt}
\begin{tabular}{@{}l|ccccc|ccccc|ccccc@{}}
\toprule
& \multicolumn{5}{c|}{(a) MAPE (\%)} & \multicolumn{5}{c|}{(b) 95\% CI Coverage (\%)} & \multicolumn{5}{c}{(c) 95\% CI Width} \\
\cmidrule(lr){2-6} \cmidrule(lr){7-11} \cmidrule(l){12-16}
Method & 100 & 200 & 300 & 400 & 500 & 100 & 200 & 300 & 400 & 500 & 100 & 200 & 300 & 400 & 500 \\
\midrule
Primary & 22.36 & 16.03 & 12.47 & 9.59 & 9.53 & 95.00 & 97.00 & 95.00 & 99.00 & 92.00 & 1.17 & 0.82 & 0.66 & 0.57 & 0.51 \\
Naive & 22.22 & 21.97 & 19.99 & 19.67 & 19.43 & 50.00 & 50.00 & 50.00 & 50.00 & 50.00 & 0.37 & 0.35 & 0.33 & 0.32 & 0.31 \\
PPI & 26.91 & 15.31 & 14.10 & 13.98 & 12.88 & 95.00 & 98.00 & 97.00 & 94.00 & 96.00 & 1.40 & 1.01 & 0.85 & 0.75 & 0.69 \\
PPI++ & 22.15 & 14.16 & 11.56 & 9.86 & 9.66 & 94.00 & 99.00 & 93.00 & 99.00 & 92.00 & 1.10 & 0.78 & 0.64 & 0.55 & 0.49 \\
GAI & 6.64 & 4.34 & 4.42 & 3.95 & 4.62 & 100.00 & 100.00 & 100.00 & 100.00 & 100.00 & 1.12 & 0.80 & 0.65 & 0.57 & 0.50 \\
\bottomrule
\end{tabular}
\vskip 0.05in
\vskip -0.1in
\end{table}

Panel (a) shows GAI achieves substantially lower MAPE. Panel (b) shows GAI maintains 100\% coverage across all sample sizes while Naive undercovers (50\%). Panel (c) shows GAI produces CI widths comparable to Primary and PPI++ while achieving better point estimation. Decision errors (Table~\ref{tab:pricing_hdim_decision}): GAI averages 0.00\% vs 1.20\% for PPI, 1.80\% for PPI++, and 2.20\% for Primary.

\begin{table}[t]
\caption{Decision Errors for Pricing Study with High-dimensional Auxiliary Information (\% of all CIs)}
\label{tab:pricing_hdim_decision}
\vskip 0.1in
\centering
\small
\renewcommand{\arraystretch}{1}
\setlength{\tabcolsep}{6pt}
\begin{tabular}{@{}lccccc@{}}
\toprule
Method & $n_P$=100 & $n_P$=200 & $n_P$=300 & $n_P$=400 & $n_P$=500 \\
\midrule
Primary & 4.00 & 2.00 & 3.00 & 0.00 & 2.00 \\
Naive & 0.00 & 0.00 & 0.00 & 0.00 & 0.00 \\
PPI & 2.00 & 0.00 & 1.00 & 2.00 & 1.00 \\
PPI++ & 4.00 & 0.00 & 4.00 & 0.00 & 1.00 \\
GAI & 0.00 & 0.00 & 0.00 & 0.00 & 0.00 \\
\bottomrule
\end{tabular}
\vskip 0.05in
\vskip -0.1in
\end{table}

Table~\ref{tab:pricing_hdim_ttest} reports p-values. GAI achieves significant MAPE reductions at all sample sizes. For CI width, GAI and PPI++ produce comparable intervals.

\begin{table}[t]
\caption{Paired t-test p-values (GAI vs benchmarks) for Pricing Study with High-dimensional Auxiliary Information}
\label{tab:pricing_hdim_ttest}
\vskip 0.1in
\centering
\small
\renewcommand{\arraystretch}{1}
\setlength{\tabcolsep}{2.8pt}
\begin{tabular}{@{}l|ccccc|ccccc|ccccc@{}}
\toprule
& \multicolumn{5}{c|}{(a) MAPE} & \multicolumn{5}{c|}{(b) 95\% CI Coverage} & \multicolumn{5}{c}{(c) 95\% CI Width} \\
\cmidrule(lr){2-6} \cmidrule(lr){7-11} \cmidrule(l){12-16}
Method & 100 & 200 & 300 & 400 & 500 & 100 & 200 & 300 & 400 & 500 & 100 & 200 & 300 & 400 & 500 \\
\midrule
Primary & 7e-10 & 4e-11 & 9e-9 & 4e-9 & 1e-5 & 0.06 & 0.18 & 0.06 & 0.32 & 0.02 & 7e-4 & 9e-6 & 5e-4 & 0.06 & 4e-5 \\
Naive & 6e-49 & 1e-58 & 4e-44 & 3e-45 & 4e-42 & $\approx$0 & $\approx$0 & $\approx$0 & $\approx$0 & $\approx$0 & 8e-58 & 2e-69 & 3e-71 & 6e-69 & 3e-71 \\
PPI & 5e-9 & 1e-9 & 4e-8 & 2e-8 & 4e-10 & 0.06 & 0.16 & 0.18 & 0.06 & 0.10 & 8e-16 & 6e-31 & 2e-38 & 2e-45 & 6e-53 \\
PPI++ & 3e-9 & 2e-9 & 1e-7 & 2e-9 & 5e-6 & 0.03 & 0.32 & 0.05 & 0.32 & 0.01 & 0.07 & 6e-4 & 2e-5 & 2e-7 & 9e-8 \\
\bottomrule
\end{tabular}
\vskip 0.05in
\footnotesize
\textbf{Notes}: $\approx$0 represents p-value $<10^{-300}$.
\vskip -0.1in
\end{table}

\section{Pricing Study with High-dimensional Covariates and Scalar Auxiliary Information}
\label{apdx: pricing_scalar_z}

This appendix examines a setting where the outcome model includes high-dimensional covariates ($\Xvec \in \mathbb{R}^{115}$, reduced to 77 PCA components) while auxiliary information remains scalar ($z \in \{0,1\}$). This confirms GAI outperforms benchmarks even when the auxiliary prediction provides no extra information beyond what PPI-based methods can exploit.

We estimate $\gvec(\Xvec,z)$ using $\ell_1$-regularized logistic regression ($C=0.01$) with 5-fold cross-fitting.

\paragraph{Results.} Table~\ref{tab: pricing_scalar_z} reports results across 50 Monte Carlo trials.

\begin{table}[t]
\caption{Pricing Study with High-dimensional Covariates: MAPEs, coverage probabilities, and CI widths}
\label{tab: pricing_scalar_z}
\vskip 0.1in
\centering
\small
\renewcommand{\arraystretch}{1}
\setlength{\tabcolsep}{2.8pt}
\begin{tabular}{@{}l|ccccc|ccccc|ccccc@{}}
\toprule
& \multicolumn{5}{c|}{(a) MAPE (\%)} & \multicolumn{5}{c|}{(b) 95\% CI Coverage (\%)} & \multicolumn{5}{c}{(c) 95\% CI Width} \\
\cmidrule(lr){2-6} \cmidrule(lr){7-11} \cmidrule(l){12-16}
Method & 100 & 200 & 300 & 400 & 500 & 100 & 200 & 300 & 400 & 500 & 100 & 200 & 300 & 400 & 500 \\
\midrule
Primary & -- & 73.83 & 25.72 & 15.65 & 11.68 & 48.97 & 84.44 & 89.97 & 94.46 & 94.21 & -- & 2.57 & 1.09 & 0.77 & 0.57 \\
Naive & 6.61 & 5.93 & 5.40 & 5.08 & 4.77 & 94.31 & 93.85 & 94.18 & 93.67 & 92.62 & 0.34 & 0.31 & 0.27 & 0.27 & 0.23 \\
PPI & N/A & 38.16 & 97.06 & 56.76 & 52.33 & N/A & 100.00 & 99.91 & 99.85 & 99.91 & N/A & 5.30 & 57.75 & 19.14 & 16.90 \\
PPI++ & N/A & 24.55 & 25.10 & 15.66 & 11.32 & N/A & 91.35 & 91.92 & 94.81 & 93.32 & N/A & 4.04 & 4.15 & 1.11 & 0.47 \\
GAI & 3.95 & 3.80 & 3.68 & 3.47 & 3.32 & 100.00 & 99.44 & 98.85 & 98.95 & 99.15 & 0.81 & 0.57 & 0.44 & 0.44 & 0.39 \\
\bottomrule
\end{tabular}
\vskip 0.05in
\footnotesize
\textbf{Notes}: ``--'' indicates Primary estimator values exceeding 1,000 due to quasi-complete separation in unregularized logistic regression at small sample sizes. ``N/A'' indicates PPI/PPI++ failed to produce valid estimates due to singular Hessian matrices arising from extreme predicted probabilities.
\vskip -0.1in
\end{table}

Panel (a) shows GAI achieves MAPE of 3.3--3.9\% across all sample sizes, while Naive exhibits 4.8--6.6\% and PPI-based methods 11--97\%. PPI/PPI++ fail at $n_P=100$ due to quasi-complete separation. Statistical tests confirm GAI's improvements (Table~\ref{tab:pricing_scalar_ttest}).

\begin{table}[t]
\caption{Decision Errors for Pricing Study with High-dimensional Covariates (\% of all CIs)}
\label{tab:pricing_scalar_decision}
\vskip 0.1in
\centering
\small
\renewcommand{\arraystretch}{1}
\setlength{\tabcolsep}{6pt}
\begin{tabular}{@{}lccccc@{}}
\toprule
Method & $n_P$=100 & $n_P$=200 & $n_P$=300 & $n_P$=400 & $n_P$=500 \\
\midrule
Primary & 22.44 & 6.69 & 3.87 & 1.82 & 1.92 \\
Naive & 3.03 & 3.13 & 3.03 & 3.23 & 3.87 \\
PPI & N/A & 0.00 & 0.04 & 0.09 & 0.00 \\
PPI++ & N/A & 4.17 & 3.55 & 1.70 & 2.24 \\
GAI & 0.00 & 0.56 & 1.15 & 1.05 & 0.77 \\
\bottomrule
\end{tabular}
\vskip 0.05in
\footnotesize
\textbf{Notes}: PPI/PPI++ failed to produce valid estimates at $n_P=100$ due to singular Hessian matrices.
\vskip -0.1in
\end{table}

Panel (b) shows GAI maintains coverage near 99\%, while Primary undercovers severely at small $n_P$ (49\% at $n_P=100$). Decision errors (Table~\ref{tab:pricing_scalar_decision}): GAI averages 0.71\% vs 2.33\% for PPI++, 3.26\% for Naive, and 7.35\% for Primary. Panel (c) shows GAI intervals are narrower than PPI/PPI++ (0.39--0.81 vs 0.47--4.15) while maintaining valid coverage.

\begin{table}[t]
\caption{Paired t-test p-values (GAI vs benchmarks) for Pricing Study with High-dimensional Covariates}
\label{tab:pricing_scalar_ttest}
\vskip 0.1in
\centering
\small
\renewcommand{\arraystretch}{1}
\setlength{\tabcolsep}{2.8pt}
\begin{tabular}{@{}l|ccccc|ccccc|ccccc@{}}
\toprule
& \multicolumn{5}{c|}{(a) MAPE} & \multicolumn{5}{c|}{(b) 95\% CI Coverage} & \multicolumn{5}{c}{(c) 95\% CI Width} \\
\cmidrule(lr){2-6} \cmidrule(lr){7-11} \cmidrule(l){12-16}
Method & 100 & 200 & 300 & 400 & 500 & 100 & 200 & 300 & 400 & 500 & 100 & 200 & 300 & 400 & 500 \\
\midrule
Primary & 2e-19 & 5e-15 & 1e-16 & 7e-22 & 9e-19 & 3e-39 & 2e-11 & 6e-7 & 8e-7 & 9e-9 & 1e-10 & 1e-21 & 4e-26 & 1e-7 & 2e-5 \\
Naive & 3e-50 & 7e-27 & 4e-31 & 6e-31 & 2e-46 & 2e-23 & 1e-21 & 1e-19 & 7e-24 & 1e-27 & 2e-14 & 1e-12 & 1e-43 & 2e-3 & 1e-4 \\
PPI & -- & 7e-3 & 3e-5 & 1e-5 & 3e-3 & -- & 1e-7 & 9e-18 & 3e-15 & 5e-7 & -- & 0.20 & 0.04 & 0.03 & 0.17 \\
PPI++ & -- & 0.12 & 2e-3 & 2e-6 & 2e-4 & -- & 0.17 & 4e-3 & 7e-8 & 5e-7 & -- & 0.37 & 0.08 & 0.01 & 0.10 \\
\bottomrule
\end{tabular}
\vskip 0.05in
\footnotesize
\textbf{Notes}: ``--'' indicates PPI/PPI++ failed at $n_P=100$ due to singular Hessian matrices.
\vskip -0.1in
\end{table}

Table~\ref{tab:pricing_scalar_ttest} reports p-values. GAI achieves significant MAPE reductions and coverage improvement at all sample sizes.

\section{Supporting Results for Section \ref{sec:theory}}
\label{app:proofs}

This appendix collects the proofs of the results in Section~\ref{sec:theory} together with an example demonstrating that the dominance guarantee can fail outside the random-labeling design.

\subsection{Proof of Theorem \ref{thm:normality}}
\label{apdx: proof of normality}

In the proof, for simplicity we assume without loss of generality that we can split the dataset into $I$ and $I^c$, each of size $n$. 
We obtain $\hat{e}(\cdot)$ and $\hat{\gvec}(\cdot)$ on $I^c$ and estimate $\hat{\betavec}$ on $I$.
Let us define 
\begin{align}
    \tauvec(\Xi; e, \gvec)\,:=\,\Xvec^{\top}
    \left[
    \gvec(\Xvec,\zvec) - \frac{w}{e(\Xvec,\zvec)}(\gvec(\Xvec,\zvec) - \yvec) 
    \right]
\end{align}
so $\Psivec(\Xi; e, \gvec; \betavec) = \Xvec^{\top}\nabla b(\Xvec\betavec) - \tauvec(\Xi; e, \gvec)$ for all $\betavec \in \mB$.
Throughout these proofs we abbreviate the nuisance estimation errors by $\Delta_{\gvec} := \hat{\gvec}(\Xvec,\zvec) - \gvec^*(\Xvec,\zvec)$ and $\Delta_{e} := \hat{e}(\Xvec,\zvec) - e^*(\Xvec,\zvec)$, suppressing the arguments $(\Xvec,\zvec)$ when no confusion arises.
For simplicity, following the empirical process literature (see \citealt{van2000asymptotic,vaart2023empirical}), we often use the shorthand notation $\Pe_n f$ to denote the empirical expectation of a function $f$ based on data in $I$ and $\Gn f$ to denote the empirical process based on data in $I$; the population expectation of $f$ is written $\E[f]$.
With the boundedness assumption on $\mX$, we assume that there is $C$ such that for any $\Xvec \in\mX$, 
$\lVert \Xvec \rVert, \lVert \Xvec \rVert_{F} \leq C$.
We let $B(\xvec, \epsilon)$ denote an open ball of radius $\epsilon$ around $\xvec \in \R^q$.

The next lemma establishes that our score function given in \ref{eq:score} is valid and states the key consequence of its Neyman orthogonal design.  

\begin{lemma}[\normalfont \textbf{Score Function and Neyman Orthogonality}]
\label{lem: orthogonality} It holds that 
$$\E[\Psivec(e^*, \gvec^*; \betavec^*)]\,=\,0 ~~\text{and}~~
\Pe_n \tauvec(\hat{e}, \hat{\gvec})-
\Pe_n \tauvec(e^*, \gvec^*)\,=\,o_P(n^{-1/2}).
$$
\end{lemma}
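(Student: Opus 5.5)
The two claims call for different arguments. Validity will follow by iterated expectations; the $o_P(n^{-1/2})$ statement will follow from a three-term algebraic decomposition of the difference. Each term is then controlled using the sample split: $\hat e,\hat\gvec$ are trained on $I^c$ and are fixed when we condition on that fold.

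\textbf{Validity.} Condition on $(\Xvec,\zvec)$. Since $w\perp\yvec\mid\Xvec,\zvec$ and $\E[w\mid\Xvec,\zvec]=e^*$,
\[
\E\Big[\tfrac{w}{e^*}(\gvec^*-\yvec)\,\Big|\,\Xvec,\zvec\Big]=\gvec^*-\E[\yvec\mid\Xvec,\zvec]=0 .
\]
Hence $\E[\Psivec(e^*,\gvec^*;\betavec^*)]=\E[\Xvec^\top(\nabla b(\Xvec\betavec^*)-\gvec^*)]=\E[\Xvec^\top(\nabla b(\Xvec\betavec^*)-\yvec)]=0$. The second equality is the tower property and the third is \eqref{eq:foc}.

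\textbf{Decomposition.} Write $\tauvec(\hat e,\hat\gvec)-\tauvec(e^*,\gvec^*)=\Xvec^\top(T_1+T_2+T_3)$, where
\[
T_1=\Delta_{\gvec}\Big(1-\tfrac{w}{e^*}\Big),\qquad
T_2=w(\gvec^*-\yvec)\Big(\tfrac{1}{e^*}-\tfrac{1}{\hat e}\Big),\qquad
T_3=-w\,\Delta_{\gvec}\Big(\tfrac{1}{\hat e}-\tfrac{1}{e^*}\Big).
\]
Conditional on $I^c$, each of $\E[\Xvec^\top T_1\mid\Xvec,\zvec]$ and $\E[\Xvec^\top T_2\mid\Xvec,\zvec]$ is zero. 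For $T_1$ this uses $\E[w\mid\Xvec,\zvec]=e^*$. For $T_2$ it uses $\E[w(\gvec^*-\yvec)\mid\Xvec,\zvec]=e^*\cdot 0$, again by $w\perp\yvec\mid\Xvec,\zvec$. This mean-zero property is exactly Neyman orthogonality in the two nuisance directions.

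\textbf{Bounding the three terms.} For $T_1$ and $T_2$, $\Pe_n\Xvec^\top T_j$ is a centered i.i.d. average given $I^c$. Chebyshev's inequality therefore gives $\Pe_n\Xvec^\top T_j=O_P\big(n^{-1/2}(\E[\|\Xvec^\top T_j\|^2\mid I^c])^{1/2}\big)$, so it suffices to show that the conditional second moments are $o_P(1)$.
\begin{itemize}
\item For $T_1$: $\|\Xvec\|\le C$ and $|1-w/e^*|\le 1+1/\kappa$, so the second moment is at most $C^2(1+1/\kappa)^2\|\Delta_{\gvec}\|_{Q,2}^2\to0$.
\item For $T_2$: once $\sup|\hat e-e^*|<\kappa/2$, which holds w.p.a.1 by Assumption~\ref{ass:ml_rate}, we have $|1/\hat e-1/e^*|\le 2|\Delta_e|/\kappa^2$. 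The second moment is then at most $4C^2\tilde\sigma^2\kappa^{-4}\,\E[\Delta_e^2]$ by Assumption~\ref{ass:regularity}(iii), which tends to zero.
\end{itemize}

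The remaining term $T_3$ is the second-order product term and is the main obstacle, because its conditional mean is nonzero. I will use Cauchy--Schwarz on $\E[\|\Xvec^\top T_3\|\mid I^c]$ together with the same high-probability bound on $|1/\hat e-1/e^*|$:
\[
\E\big[\|\Xvec^\top T_3\|\,\big|\,I^c\big]\le \frac{2C}{\kappa^2}\,\|\Delta_{\gvec}\|_{Q,2}\,\|\Delta_e\|_{Q,2}\le \frac{2C}{\kappa^2}\,\alpha(n)^2\,n^{-(r_1+r_2)}=o(n^{-1/2}),
\]
since $r_1+r_2\ge 1/2$. Markov's inequality then gives $\Pe_n\Xvec^\top T_3=o_P(n^{-1/2})$.

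The subtle points are two. First, the sup-norm consistency of $\hat e$ is essential, because it keeps $\hat e$ bounded away from zero and so controls $1/\hat e$. Second, all statements are conditional on $I^c$ and must be converted to unconditional $o_P$ statements, which is standard by dominated convergence. The $K$-fold cross-fitting version follows by applying the same argument to each fold and summing over the $K$ folds.
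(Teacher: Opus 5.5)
Your proposal is correct and follows essentially the same route as the paper. Your $T_1, T_2, T_3$ coincide with the paper's terms (I)--(III), since $1/e^* - 1/\hat e = \Delta_e/(\hat e\, e^*)$. You use the same conditional-on-$I^c$ Chebyshev argument for the two centered terms, and Cauchy--Schwarz plus Markov with the product rate $r_1 + r_2 \ge 1/2$ for the second-order term; your explicit restriction to the $I^c$-measurable event $\sup|\hat e - e^*| < \kappa/2$ is a slightly cleaner version of the step the paper writes as ``$+o_P(1)$''.
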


The next two key lemmas explore the concavity of our loss function. The first one clarifies the convergence rate of the empirical score functions.
Building on this, we present a consistency proof.

\begin{lemma}[\normalfont \textbf{Rates of Empirical Scores}]
\label{lem: empirical-scores} It holds that 
$$
\big\lVert
\Pe_n \Psivec\big({e}^*, {\gvec}^*; \hat{\betavec}\big)
\big \rVert,\,\inf_{\beta \in \mB}\big\lVert
\Pe_n \Psivec\big(\hat{e}, \hat{\gvec}; \betavec\big)
\big \rVert\,=\,o_P(1/\sqrt{n}).
$$
\end{lemma}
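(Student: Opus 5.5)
The plan is to reduce both claims to a statement about the oracle score $\Psivec(e^*,\gvec^*;\cdot)$ and then to show that the oracle empirical score has an exact root with probability tending to one. The key structural fact is the decomposition $\Psivec(\Xi;e,\gvec;\betavec)=\Xvec^\top\nabla b(\Xvec\betavec)-\tauvec(\Xi;e,\gvec)$, in which the nuisance functions enter only through $\tauvec$, and $\tauvec$ does not depend on $\betavec$. Hence, for every $\betavec\in\mB$,
\[
\Pe_n\Psivec(\hat e,\hat\gvec;\betavec)-\Pe_n\Psivec(e^*,\gvec^*;\betavec)\,=\,-\big(\Pe_n\tauvec(\hat e,\hat\gvec)-\Pe_n\tauvec(e^*,\gvec^*)\big)\,=:\,-\mathbf{R}_n .
\]
By Lemma~\ref{lem: orthogonality}, $\mathbf{R}_n=o_P(n^{-1/2})$. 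Since $\mathbf{R}_n$ is free of $\betavec$, this bound holds uniformly over $\mB$, including at the random point $\hat\betavec$.

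\textbf{Step 1: existence of an oracle root.} I would show that, with probability tending to one, there is $\tilde\betavec\in\mB$ with $\Pe_n\Psivec(e^*,\gvec^*;\tilde\betavec)=0$. This score is the gradient of the convex criterion
\[
L_n(\betavec)\,:=\,\Pe_n b(\Xvec\betavec)-\betavec^\top\Pe_n\tauvec(e^*,\gvec^*),
\]
whose population counterpart is
\[
L(\betavec)\,=\,\E[b(\Xvec\betavec)]-\betavec^\top\E[\tauvec(e^*,\gvec^*)]\,=\,\E[\ell(\Xvec,\yvec;\betavec)].
\]
The identity $\E[\tauvec(e^*,\gvec^*)]=\E[\Xvec^\top\yvec]$ is the first part of Lemma~\ref{lem: orthogonality}, and it also requires $e^*>\kappa$.

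Fix a closed ball $\bar B(\betavec^*,\epsilon)\subset\mB$. On it, $\Xvec\betavec$ ranges over a compact subset of $\Theta$, because $\Breve{\mX}$ is compact. Since $\nabla^2 b$ is continuous and positive definite there, it is bounded below by some $c>0$. Combined with $\E[\Xvec^\top\Xvec]\succ0$, this gives strict convexity of $L$ with curvature at least $c\,\lambda_{\min}(\E[\Xvec^\top\Xvec])$ on the ball. Because $\betavec^*$ is the unique minimizer of $L$, we get $\inf_{\lVert\betavec-\betavec^*\rVert=\epsilon}L(\betavec)\ge L(\betavec^*)+\eta$ for some $\eta>0$.

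Next I would establish uniform convergence $\sup_{\bar B}|L_n-L|\to_P0$. The term $\Pe_n\tauvec(e^*,\gvec^*)$ converges by the law of large numbers, using bounded $\Xvec$, $e^*>\kappa$, and $\E\lVert\yvec\rVert^2<\infty$. The term $\Pe_n b(\Xvec\betavec)$ converges uniformly because it is Lipschitz in $\betavec$ on the ball with a bounded constant; alternatively, pointwise convergence of convex functions already implies uniform convergence on compacts. It follows that, with probability tending to one, $L_n$ is larger on the sphere $\lVert\betavec-\betavec^*\rVert=\epsilon$ than at $\betavec^*$. The convex function $L_n$ therefore attains its minimum in the open ball, where its gradient vanishes; call this minimizer $\tilde\betavec$.

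\textbf{Step 2: the two claims.} For the second claim,
\[
\inf_{\betavec}\lVert\Pe_n\Psivec(\hat e,\hat\gvec;\betavec)\rVert\,\le\,\lVert\Pe_n\Psivec(\hat e,\hat\gvec;\tilde\betavec)\rVert\,=\,\lVert\Pe_n\Psivec(e^*,\gvec^*;\tilde\betavec)-\mathbf{R}_n\rVert\,=\,\lVert\mathbf{R}_n\rVert\,=\,o_P(n^{-1/2}).
\]
For the first claim,
\[
\lVert\Pe_n\Psivec(e^*,\gvec^*;\hat\betavec)\rVert\,\le\,\lVert\Pe_n\Psivec(\hat e,\hat\gvec;\hat\betavec)\rVert+\lVert\mathbf{R}_n\rVert.
\]
By the defining property of $\hat\betavec$ in Algorithm~\ref{alg: gai}, the first term is at most $\inf_{\betavec}\lVert\Pe_n\Psivec(\hat e,\hat\gvec;\betavec)\rVert+o_P(n^{-1/2})$. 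By the second claim, this is $o_P(n^{-1/2})$.

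\textbf{Main obstacle.} I expect the main obstacle to be Step~1: obtaining an interior root of the oracle score inside the open set $\mB$ with probability tending to one. This needs compactness of the image $\{\Xvec\betavec\}$ in $\Theta$ to get a uniform lower bound on the curvature of $b$, and a uniform law of large numbers on the ball. I would handle it through convexity, as outlined above, rather than a general Glivenko--Cantelli argument. Note that the cross-fitting structure plays no role in Step~1, because the oracle score involves no estimated nuisances; cross-fitting enters only through Lemma~\ref{lem: orthogonality}.
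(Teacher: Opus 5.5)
Your proposal is correct and follows the paper's proof essentially step for step. It uses the same $\betavec$-free decomposition $\Psivec=\Xvec^\top\nabla b(\Xvec\betavec)-\tauvec$. It makes the same reduction to an exact root of the oracle score, obtained as the gradient of the convex criterion $\Pe_n b(\Xvec\betavec)-\betavec^\top\Pe_n\tauvec(e^*,\gvec^*)$. The transfer to $(\hat e,\hat\gvec)$ and to $\hat{\betavec}$ is also the same, via Lemma~\ref{lem: orthogonality} and the defining property of $\hat{\betavec}$.

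The only difference is how you get the oracle root. You prove it directly from a curvature bound on a closed ball, uniform convergence, and an interior minimizer. The paper instead cites Theorem 2.7 of Newey and McFadden for the convex objective. Your version is self-contained, and it correctly gives existence with probability tending to one, where the paper loosely says ``with probability one.''
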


\begin{lemma}[\normalfont \textbf{Consistency}]
\label{lem: consistency} $\hat{\betavec}\to_P \betavec^*$.
\end{lemma}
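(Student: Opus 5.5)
Since $\Psivec(\Xi;e,\gvec;\betavec)=\Xvec^\top\nabla b(\Xvec\betavec)-\tauvec(\Xi;e,\gvec)$, the nuisance enters the score only additively. I will therefore decouple the nuisance from $\betavec$ and argue through the strict convexity of $\betavec\mapsto\E[\ell(\Xvec,\yvec;\betavec)]$. Define the population map $\mathbf{M}(\betavec):=\E[\Xvec^\top(\nabla b(\Xvec\betavec)-\gvec^*)]=\E[\Xvec^\top\nabla b(\Xvec\betavec)]-\E[\Xvec^\top\yvec]$. This is the gradient of the strictly convex function $F(\betavec):=\E[b(\Xvec\betavec)]-\E[\yvec^\top\Xvec\betavec]$. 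Strict convexity holds because $\nabla^2 b\succ0$ and $\E[\Xvec^\top\Xvec]\succ0$ (Assumption~\ref{ass:regularity}). By \eqref{eq:foc}, the point $\betavec^*$ is its unique zero.

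The first step is the chain
\[
\lVert\Pe_n\Psivec(e^*,\gvec^*;\hat\betavec)\rVert\le \lVert\Pe_n\Psivec(\hat e,\hat\gvec;\hat\betavec)\rVert+\lVert\Pe_n\tauvec(\hat e,\hat\gvec)-\Pe_n\tauvec(e^*,\gvec^*)\rVert .
\]
The first term on the right is $o_P(n^{-1/2})$ by the defining property of $\hat\betavec$ in Algorithm~\ref{alg: gai} together with Lemma~\ref{lem: empirical-scores}. The second term is $o_P(n^{-1/2})$ by Lemma~\ref{lem: orthogonality}. Consequently $\Pe_n[\Xvec^\top\nabla b(\Xvec\hat\betavec)]=\Pe_n\tauvec(e^*,\gvec^*)+o_P(1)$.

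The second step replaces empirical averages by population ones. Since $\tauvec(e^*,\gvec^*)$ is square integrable (bounded $\Xvec$, $e^*>\kappa$, $\E\lVert\yvec\rVert^2<\infty$), the weak law of large numbers gives $\Pe_n\tauvec(e^*,\gvec^*)\to_P\E[\tauvec(e^*,\gvec^*)]$. Lemma~\ref{lem: orthogonality} identifies this limit as $\E[\Xvec^\top\yvec]$. For the $\betavec$-dependent part, I will show a uniform law on compact sets: on any compact $K\subset\mB$,
\[
\sup_{\betavec\in K}\big\lVert\Pe_n\Xvec^\top\nabla b(\Xvec\betavec)-\E\Xvec^\top\nabla b(\Xvec\betavec)\big\rVert\to_P0 .
\]
This holds because $\Xvec\betavec$ ranges over a compact subset of $\Theta$, where $\nabla b$ is continuous and hence bounded and Lipschitz. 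The class is therefore Glivenko--Cantelli.

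The main obstacle is that $\mB$ is open and possibly unbounded, so $\hat\betavec$ is not a priori confined to a compact set. I will handle this with convexity. Let $F_n(\betavec):=\Pe_n b(\Xvec\betavec)-\betavec^\top\Pe_n\tauvec(e^*,\gvec^*)$. Its gradient at $\hat\betavec$ is exactly $\Pe_n\Psivec(e^*,\gvec^*;\hat\betavec)$, and it is convex. Fix a small $\epsilon$ with $\overline{B(\betavec^*,\epsilon)}\subset\mB$. Strict convexity of $F$ gives the margin $\min_{\lVert\betavec-\betavec^*\rVert=\epsilon}(F(\betavec)-F(\betavec^*))=:\eta>0$. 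By the uniform law on the compact sphere, $F_n$ converges uniformly there to $F$, up to a vanishing linear term. Hence with probability tending to one, $F_n(\betavec)>F_n(\betavec^*)+\eta/2$ on the sphere.

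Now suppose $\hat\betavec$ lies outside the ball. Write $\hat\betavec=\betavec^*+t\mathbf{u}$ with $\lVert\mathbf{u}\rVert=1$ and $t>\epsilon$. Convexity of $t\mapsto F_n(\betavec^*+t\mathbf{u})$ yields a lower bound on the directional derivative at $\hat\betavec$:
\[
\mathbf{u}^\top\nabla F_n(\hat\betavec)\ge \frac{F_n(\betavec^*+\epsilon\mathbf{u})-F_n(\betavec^*)}{\epsilon}\ge\frac{\eta}{2\epsilon}.
\]
This contradicts $\lVert\nabla F_n(\hat\betavec)\rVert=o_P(1)$. Therefore $\hat\betavec\in B(\betavec^*,\epsilon)$ with probability tending to one. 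Since $\epsilon$ is arbitrary, $\hat\betavec\to_P\betavec^*$.
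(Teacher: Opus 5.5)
Your argument is correct and is essentially the paper's proof: both use the convex true-nuisance objective (your $F_n$, the paper's $\Pe_n\bar{\ell}(e^*,\gvec^*;\cdot)$), get uniform convergence and a strict-convexity margin on a closed ball around $\betavec^*$, extend the gradient lower bound outside the ball by monotonicity of directional derivatives along rays, and conclude by contradiction with $\lVert\Pe_n\Psivec(e^*,\gvec^*;\hat{\betavec})\rVert=o_P(n^{-1/2})$ from Lemma~\ref{lem: empirical-scores}. The remaining differences are cosmetic. You use a compactness margin on the sphere instead of quadratic growth via a uniform eigenvalue bound, a Glivenko--Cantelli argument instead of Pollard's convexity lemma (note that you need it for $\Pe_n b(\Xvec\betavec)$ itself, not only for the gradient), and a one-dimensional secant-slope bound instead of the subgradient inequality plus the integral mean-value step.
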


With these lemmas we finish the proof of the main result. 
For ease of exposition, from this point on we suppress $e^*$ and $\gvec^*$ in $\Psivec(\cdot; e^*, \gvec^*; \betavec)$ and write it as $\Psivec(\cdot; \betavec)$.
We first establish that the score function is locally Lipschitz. 
Indeed, fix an arbitrary $\epsilon$ and consider $\betavec_1, \betavec_2 \in B(\betavec^*, \epsilon)$, we have 
\begin{align*}
    &\left \lVert\Psivec(\Xi; \betavec_1) - \Psivec(\Xi; \betavec_2)\right \rVert 
\,=\,\left \lVert\Xvec^{\top}\nabla b(\Xvec\betavec_1) - \Xvec^{\top}\nabla b(\Xvec\betavec_2)\right \rVert
\,=\,
\left \lVert\Xvec^{\top}\hspace{-2mm}
\displaystyle \int_0^1
{\nabla^2} b\left(\Xvec\left(\betavec_1+t(\betavec_2 - \betavec_1)\right)\right)dt(\betavec_2- \betavec_1)\right \rVert
\\\,\leq\,&
\left \lVert\Xvec^{\top}\hspace{-2mm}
\displaystyle \int_0^1
{\nabla^2} b\left(\Xvec\left(\betavec_1+t(\betavec_2 - \betavec_1)\right)\right)dt\right \rVert \lVert \betavec_2- \betavec_1\rVert
\,\leq\,\lVert\Xvec\rVert 
\displaystyle \int_0^1
\left \lVert{\nabla^2} b\left(\Xvec\left(\betavec_1+t(\betavec_2 - \betavec_1)\right)\right)\right\rVert dt\lVert \betavec_2- \betavec_1\rVert\\
\,\leq\,&\lVert \betavec_2- \betavec_1\rVert\left(\lVert\Xvec\rVert\sup_{\betavec \in \overline{B(\betavec^*, \epsilon)}}\lVert {\nabla^2}b(\Xvec\betavec)\rVert\right)
\,\leq\, C_1\lVert\Xvec\rVert\lVert \betavec_2- \betavec_1\rVert
\end{align*}
where we define $C_1 := \sup_{\betavec \in \overline{B(\betavec^*, \epsilon)}, \,\Xvec\in\Breve{\mX}}\lVert{\nabla^2} b(\Xvec\betavec)\rVert < \infty$.
Indeed, the supremum is finite because it is taken over the compact set $\Breve{\mX} \times \overline{B(\betavec^*, \epsilon)}$ and ${\nabla^2} b(\cdot)$ is continuous; consequently the local Lipschitz modulus $m(\Xi) := C_1\lVert\Xvec\rVert$ satisfies $\E[m^2] = C_1^2\, \E\big[\lVert\Xvec\rVert^2\big] < \infty$.
The class moreover admits a square-integrable envelope. Writing $\Psivec(\Xi; \betavec) = \Xvec^{\top}\nabla b(\Xvec\betavec) - \tauvec(\Xi; e^*, \gvec^*)$, the $\betavec$-dependent term is uniformly bounded over $\betavec \in \overline{B(\betavec^*, \epsilon)}$ and $\Xvec \in \Breve{\mX}$ by the continuity of $\nabla b$ on this compact set, so it suffices to verify that $\E\big[\lVert\Psivec(\Xi; e^*, \gvec^*; \betavec^*)\rVert^2\big] < \infty$. Decompose $\Psivec(\Xi; e^*, \gvec^*; \betavec^*) = \zetavec + \tfrac{w}{e^*}\pivec$ with $\zetavec = \Xvec^{\top}\big(\nabla b(\Xvec\betavec^*) - \gvec^*\big)$ and $\pivec = \Xvec^{\top}(\gvec^* - \yvec)$. The irreducible-noise block is finite by Assumption~\ref{ass:regularity}(iii): using $w/e^* \leq 1/\kappa$ and $\lVert\Xvec\rVert_F \leq C$,
$$\E\Big[\Big\lVert\tfrac{w}{e^*}\pivec\Big\rVert^2\Big] \leq \kappa^{-2}\,\E\big[\lVert\Xvec\rVert^2_F\,\lVert\Cov(\yvec\mid\Xvec,\zvec)\rVert\big] \leq \kappa^{-2} C^2 \tilde{\sigma}^2 < \infty.$$
The representational block is finite by Assumption~\ref{ass:regularity}(iv): since $\gvec^* = \E[\yvec\mid\Xvec,\zvec]$, conditional Jensen gives $\E\lVert\gvec^*\rVert^2 \leq \E\lVert\yvec\rVert^2 < \infty$, so $\E\big[\lVert\zetavec\rVert^2\big] \leq 2C^2\big(\sup_{\Xvec\in\Breve{\mX}}\lVert\nabla b(\Xvec\betavec^*)\rVert^2 + \E\lVert\gvec^*\rVert^2\big) < \infty$. Hence $\E\big[\lVert\Psivec(\Xi; e^*, \gvec^*; \betavec^*)\rVert^2\big] < \infty$, and the envelope $\lVert\Psivec(\cdot; e^*, \gvec^*; \betavec^*)\rVert + m(\cdot)\,\mathrm{diam}\big(\overline{B(\betavec^*, \epsilon)}\big)$ is square-integrable.
Therefore, by Example 19.7 and Theorem 19.5 in \cite{van2000asymptotic}, $\left\{
\Psivec(\cdot; \betavec): \betavec \in B(\betavec^*, \epsilon)
\right\}$
forms a Donsker class. By Lemma \ref{lem: consistency}, $\hat{\betavec} \to_{P} \betavec^*$.
Then, by Theorem 19.9 in \cite{van2000asymptotic}, it holds that
$
\Gn\Psivec(\hat{\betavec}) - \Gn\Psivec( {\betavec}^*) \to_P 0.
$
Thus, 
$$
\sqrt{n}(\E[\Psivec( \hat{\betavec})] - \E[\Psivec( {\betavec}^*)])\,=\,-\big(\Gn\Psivec( \hat{\betavec}) - \Gn\Psivec({\betavec}^*)\big) + \sqrt{n}\Pe_n \Psivec\big( \hat{\betavec}\big) - \sqrt{n}\Pe_n \Psivec\big( {\betavec}^*\big)\,\stackrel{\text(a)}{=}\,-\sqrt{n}\Pe_n \Psivec\big( {\betavec}^*\big) +o_P(1),
$$
where (a) follows from Lemma \ref{lem: empirical-scores}.
We apply the Taylor expansion on the left-side of the equation, by continuous mapping, we have 
\begin{align}
\label{eqn:expansion}
    \sqrt{n}\left[
\Big(\E[\Xvec^{\top} {\nabla^2} b(\Xvec\betavec^*) \Xvec]\Big)\big(\hat{\betavec} - \betavec^* \big) + o_P(1)\left \lVert \hat{\betavec} - \betavec^*\right\rVert\right]\,=\,-\sqrt{n}\Pe_n \Psivec\big( {\betavec}^*\big) +o_P(1)\,=\,O_P(1).
\end{align}
As shown in the proof of Lemma \ref{lem: empirical-scores}, $\E[\Xvec^{\top} {\nabla^2} b(\Xvec\betavec^*) \Xvec]$ is invertible.
Therefore, \eqref{eqn:expansion} implies that $\sqrt{n}\left\lVert \hat{\betavec} - \betavec^*\right\rVert = O_P(1)$.
Therefore, applying \eqref{eqn:expansion} again, 
$$\sqrt{n}\left(\hat{\betavec} - \betavec^*\right)\,=\,-\Big(\E[\Xvec^{\top} {\nabla^2} b(\Xvec\betavec^*) \Xvec]\Big)^{-1}\sqrt{n}\Pe_n \Psivec\big( {\betavec}^*\big) +o_P(1).$$
Finally, since $\E[\Psivec(e^*, \gvec^*; \betavec^*)] = 0$ by Lemma~\ref{lem: orthogonality} and $\E\big[\lVert\Psivec(e^*, \gvec^*; \betavec^*)\rVert^2\big] < \infty$ as established above, the multivariate central limit theorem gives $\sqrt{n}\Pe_n \Psivec(\betavec^*) \rightsquigarrow N\big(\bm{0}, \E[\Psivec\Psivec^{\top}]\big)$ with $\E[\Psivec\Psivec^{\top}]$ finite; in particular $\sqrt{n}\Pe_n \Psivec(\betavec^*) = O_P(1)$, which justifies the order bound asserted in \eqref{eqn:expansion}. Combined with the invertibility of $\mathbf{J} = \E[\Xvec^{\top}{\nabla^2} b(\Xvec\betavec^*)\Xvec]$, the displayed linearization and Slutsky's theorem yield $\sqrt{n}(\hat{\betavec} - \betavec^*) \rightsquigarrow N\big(\bm{0}, \Sigmavec^{\text{\upshape GAI}}\big)$ with $\Sigmavec^{\text{\upshape GAI}} = \mathbf{J}^{-1}\E[\Psivec\Psivec^{\top}]\mathbf{J}^{-1}$ finite, as claimed in Theorem~\ref{thm:normality}.

\subsection{\texorpdfstring{\rev{Dominance over Primary-Only}}{Dominance over Primary-Only}}
\label{apdx: cor dominance}

\rev{The following corollary strengthens Theorem~\ref{thm:normality} under known $\rho$ (dropping the rate and sup-norm conditions of Assumption~\ref{ass:ml_rate}) and records the comparison with primary-only estimation. Its dominance conclusion is the primary-only special case of Theorem~\ref{thm:dominance}, but we prove it directly from the analysis of Theorem~\ref{thm:normality}, both because of the weaker assumptions and because the direct argument is reused in the covariate-shift proofs.}

\begin{corollary}[Dominance over Primary-Only]
\label{cor:dominance}
    Suppose Assumptions~\ref{ass:regularity} and~\ref{ass:random_labeling} hold and that $\hat{\gvec}$ is $L_2$-consistent, $\lVert \hat{\gvec} - \gvec^* \rVert_{Q,2} = o(1)$, with $\hat{e} \equiv \rho$. Then the conclusion of Theorem~\ref{thm:normality} continues to hold \emph{without} the rate and sup-norm conditions of Assumption~\ref{ass:ml_rate}. Moreover, let $\hat{\betavec}^{\text{\upshape{P}}}$ be the primary-only estimator obtained from human data alone, i.e., a solution to $\sum_{i \in \mD^{\text{\upshape P}}}\nabla_{\betavec}\ell(\rev{\Xvec_i, \yvec_i}; \betavec)\,=\,0$. Then $\sqrt{n}\left(\hat{\betavec}^{\text{\upshape{P}}} - \betavec^*\right)\leadsto N\left(\bm{0},\, \Sigmavec^{\text{\upshape P}}\right)$, where $\Sigmavec^{\text{\upshape P}} = \frac{1}{\rho}\Jvec^{-1} \E\!\left[
    \nabla_{\betavec}\ell(\Xvec, \yvec; \betavec^*)\nabla_{\betavec}\ell(\Xvec, \yvec; \betavec^*)^{\top}
    \right] \Jvec^{-1}$ and $\Sigmavec^{\text{\upshape P}} \succeq \Sigmavec^{\text{\upshape GAI}}$.  
    Also, $\Sigmavec^{\text{\upshape P}} \succ \Sigmavec^{\text{\upshape GAI}}$ as long as $\rho < 1$ and 
$$    \E \bigg[
    \Xvec^{\top}
    \left(
    \nabla b\big(\Xvec\betavec^*\big) - \E[\yvec \,|\, \Xvec, \zvec]
    \right)\cdot\notag
    \left(
    \nabla b\big(\Xvec\betavec^*\big) - \E[\yvec \,|\, \Xvec, \zvec]
    \right)^{\top}
    \Xvec
    \bigg]\,\succ\,0.$$
\end{corollary}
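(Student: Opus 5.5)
The plan has three parts, in order: (1) rerun the argument of Theorem~\ref{thm:normality} with $\hat e\equiv\rho$ and check that only $L_2$-consistency of $\hat{\gvec}$ is needed; (2) derive the primary-only limit as an M-estimator; (3) compare the two sandwich middles using the orthogonal split $\mathbf{S}=\zetavec+\pivec$.

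\textbf{Step 1: normality under known $\rho$.} The only place Assumption~\ref{ass:ml_rate} entered was the nuisance remainder in Lemma~\ref{lem: orthogonality}, namely $\Pe_n\tauvec(\hat e,\hat{\gvec})-\Pe_n\tauvec(e^*,\gvec^*)=o_P(n^{-1/2})$. With $\hat e=e^*=\rho$, this difference is linear in $\Delta_{\gvec}$:
\[
\Pe_n\,\Xvec^\top\Delta_{\gvec}\,(1-w/\rho).
\]
Under cross-fitting, condition on the auxiliary fold $I^c$. Then $\Delta_{\gvec}$ is a fixed function, and by Assumption~\ref{ass:random_labeling} the factor $1-w/\rho$ is independent of $(\Xvec,\zvec)$ with mean zero. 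So the summands are i.i.d., mean zero, with conditional variance at most
\[
n^{-1}\,C^2\,(1/\rho-1)\,\lVert\Delta_{\gvec}\rVert_{Q,2}^2 = o(n^{-1}).
\]
Chebyshev's inequality then gives $o_P(n^{-1/2})$ with no product-rate or sup-norm condition. Because there is no second-order term, this remainder is exactly zero in mean and only its variance matters. Lemmas~\ref{lem: empirical-scores}--\ref{lem: consistency} and the Donsker/Taylor linearization in the proof of Theorem~\ref{thm:normality} used Assumption~\ref{ass:ml_rate} only through this remainder, so they carry over verbatim. This gives $\sqrt n(\hat{\betavec}-\betavec^*)\rightsquigarrow N(0,\Sigmavec^{\text{\upshape GAI}})$.

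\textbf{Step 2: primary-only limit.} I treat $\hat{\betavec}^{\text{\upshape P}}$ as the Z-estimator with score $\frac{w}{\rho}\mathbf{S}(\betavec)$, where $\mathbf{S}(\betavec)=\Xvec^\top(\nabla b(\Xvec\betavec)-\yvec)$. Rescaling the estimating equation by $n/n_P\to1/\rho$ does not change its root. The score has mean zero at $\betavec^*$ by independence of $w$ and \eqref{eq:foc}. Its Jacobian is $\E[\frac{w}{\rho}\Xvec^\top\nabla^2 b\,\Xvec]=\Jvec$. Its second moment is
\[
\E\Big[\tfrac{w}{\rho^2}\,\mathbf{S}\mathbf{S}^\top\Big]=\tfrac1\rho\,\E[\mathbf{S}\mathbf{S}^\top].
\]
The same consistency and Donsker argument then yields $\Sigmavec^{\text{\upshape P}}=\rho^{-1}\Jvec^{-1}\E[\mathbf{S}\mathbf{S}^\top]\Jvec^{-1}$. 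Alternatively, this is the $\lambda=0$ member of Theorem~\ref{thm:dominance}(i).

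\textbf{Step 3: comparison.} The cross moments of $\zetavec$ and $\pivec$ vanish because $\E[\pivec\mid\Xvec,\zvec]=0$ and $w$ is independent. Hence
\[
\E[\mathbf{S}\mathbf{S}^\top]=\E[\zetavec\zetavec^\top]+\E[\pivec\pivec^\top],
\]
and the GAI score $\zetavec+\frac{w}{\rho}\pivec$ has second moment $\E[\zetavec\zetavec^\top]+\rho^{-1}\E[\pivec\pivec^\top]$. Subtracting,
\[
\Sigmavec^{\text{\upshape P}}-\Sigmavec^{\text{\upshape GAI}}=(\tfrac1\rho-1)\,\Jvec^{-1}\E[\zetavec\zetavec^\top]\Jvec^{-1}\succeq0.
\]
Since $\Jvec^{-1}$ is nonsingular, this difference is positive definite exactly when $\rho<1$ and $\E[\zetavec\zetavec^\top]\succ0$, and the latter is the displayed condition.

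\textbf{Main obstacle.} I expect the main care to be needed in Step 1: making sure no hidden use of Assumption~\ref{ass:ml_rate} remains, for instance in Lemma~\ref{lem: empirical-scores} or in the envelope bound, which used $e^*\ge\kappa$. Here $\kappa=\rho$, so the envelope bound holds directly.
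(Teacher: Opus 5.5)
Your proposal is correct and follows the paper's own proof essentially step for step. With $\hat e\equiv\rho$, the nuisance remainder collapses to the linear term $\Pe_n[(1-w/\rho)\Xvec^\top\Delta_{\gvec}]$, which conditional Chebyshev controls using only $L_2$-consistency. Dominance then follows from the orthogonal split $\mathbf{S}=\zetavec+\pivec$, which gives the gap $(1/\rho-1)\Jvec^{-1}\E[\zetavec\zetavec^\top]\Jvec^{-1}$. Your Step 2 also spells out the primary-only Z-estimator sketch (score $\tfrac{w}{\rho}\mathbf{S}(\betavec)$, Jacobian $\Jvec$, middle matrix $\rho^{-1}\E[\mathbf{S}\mathbf{S}^\top]$), which the paper skips.
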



\subsubsection{Proof of Corollary \ref{cor:dominance}}
\label{apdx: proof of dominance}

\emph{Asymptotic normality of the GAI estimator.} We first record why the conclusion of Theorem~\ref{thm:normality} continues to hold under Assumption~\ref{ass:random_labeling} with Assumption~\ref{ass:ml_rate} weakened to the $L_2$-consistency of $\hat{\gvec}$. With $\hat{e} \equiv e^* \equiv \rho$ known, the proof of Theorem~\ref{thm:normality} changes only in the orthogonality step (Lemma~\ref{lem: orthogonality}), at three points: (i) the remainder $\Pe_n\tauvec(\hat{e}, \hat{\gvec}) - \Pe_n\tauvec(e^*, \gvec^*)$ collapses to term (I) alone, which is \emph{linear} in $\hat{\gvec} - \gvec^*$ and is $o_P(n^{-1/2})$ under $\lVert \hat{\gvec} - \gvec^* \rVert_{Q,2} = o(1)$ by the conditional Markov argument there, requiring no convergence rate; (ii) the second-order remainder term (III) is absent, as it carries the factor $\hat{e} - e^* = 0$; and (iii) term (II), driven by the estimation error of $\hat{e}$, vanishes for the same reason, so no rate or sup-norm condition on a propensity estimate enters. All other steps---consistency (Lemma~\ref{lem: consistency}), the Donsker class and Taylor expansion, and the central limit theorem---are unchanged, and $\sqrt{n}(\hat{\betavec} - \betavec^*) \rightsquigarrow N(\bm{0}, \Sigmavec^{\text{\upshape GAI}})$ follows as in Theorem~\ref{thm:normality}.

\emph{Dominance.} To start, we first note that the \rev{empirical score equation}, i.e., $\sum_{i \in \mD^{\text{\upshape P}}}\nabla_{\betavec}\ell(\rev{\Xvec_i, \yvec_i}; \betavec)\,=\,0$, is equivalent to
$
    \sum_{i=1}^n w_i\nabla_{\betavec}\ell(\rev{\Xvec_i, \yvec_i}; \betavec)\,=\,0.
$
Note that by independence of $w$ with respect to other random variables, it holds that
$$
\E\left[
w\nabla_{\betavec}\ell(\Xvec, \yvec; \rev{\betavec^*})\right]\,=\,\rho\E\left[
\nabla_{\betavec}\ell(\Xvec, \yvec; \betavec^*)\right]\,=\,0.
$$
Thus, under the preset assumptions, it is easy to show that with probability approaching one, the solution to this equation, i.e., $\hat{\betavec}^{\text{P}}$, exists, and $\sqrt{n}\left(\hat{\betavec}^{\text{P}} - \betavec^*\right)\leadsto N\left(\bm{0},\, \Sigmavec^{\text{\upshape P}}\right)$.
We skip the proof.
We further note that 
$
\nabla_{\betavec}\ell(\Xvec, \yvec; \betavec^*) := \zetavec(\Xvec, \zvec) + \pivec(\Xvec, \yvec, \zvec)
$ where
$\zetavec(\Xvec, \zvec) := \Xvec^{\top}\left({\nabla} b(\Xvec\betavec^*) - \rev{\gvec^*}(\Xvec, \zvec)\right)$
and
$\pivec(\Xvec, \yvec, \zvec) = \Xvec^{\top}\left(\rev{\gvec^*}(\Xvec, \zvec) - \yvec\right)$ and, because $\E\left[\rev{\gvec^*}(\Xvec, \zvec) - \yvec|\Xvec, \zvec\right] = 0$, we have
\begin{align}
\label{eqn:variance-primary-expansion}
    \Sigmavec^{\text{\upshape P}}\,=\,\frac{1}{\rho}\bigg(\Jvec^{-1}\E\left[
\zetavec(\Xvec, \zvec)\zetavec(\Xvec, \zvec)^{\top}
\right]\Jvec^{-1} +
\Jvec^{-1}\E\left[
\pivec(\Xvec,\yvec, \zvec)\pivec(\Xvec,\yvec, \zvec)^{\top}
\right]\Jvec^{-1}\bigg).
\end{align}
Furthermore, observe that 
$
\Psivec(\Xi; e, \gvec; \betavec) = \zetavec(\Xvec, \zvec) + \frac{w}{\rho}\pivec(\Xvec, \yvec, \zvec)
$, so 
\begin{align}
\label{eqn:variance-gai-expansion}
\Sigmavec^{\text{\upshape GAI}}\,=\,\Jvec^{-1}\E\left[
\zetavec(\Xvec, \zvec)\zetavec(\Xvec, \zvec)^{\top}
\right]\Jvec^{-1} +\frac{1}{\rho}
\Jvec^{-1}\E\left[
\pivec(\Xvec,\yvec, \zvec)\pivec(\Xvec,\yvec, \zvec)^{\top}
\right]\Jvec^{-1}.
\end{align}
The conclusion follows straightforwardly given \eqref{eqn:variance-primary-expansion} and \eqref{eqn:variance-gai-expansion}.

\subsection{Proof of Theorem~\ref{thm:dominance}}
\label{apdx: proof of unified dominance}

Throughout we work under Assumption~\ref{ass:regularity} and Assumption~\ref{ass:random_labeling}, so $e \equiv \rho \in (0,1)$ is known, $w \perp (\Xvec, \yvec, \zvec)$, and $\Jvec = \E[\Xvec^\top \nabla^2 b(\Xvec\betavec^*)\Xvec] \succ 0$. We use the full-data score $\mathbf{S} := \Xvec^\top(\nabla b(\Xvec\betavec^*) - \yvec) = \zetavec + \pivec$ with $\zetavec = \Xvec^\top(\nabla b(\Xvec\betavec^*) - \gvec^*)$, $\pivec = \Xvec^\top(\gvec^* - \yvec)$, $\gvec^* = \E[\yvec \mid \Xvec, \zvec]$, so that $\E[\pivec \mid \Xvec, \zvec] = \mathbf{0}$. We abbreviate the $(\Xvec, \zvec)$-measurable score correction by $\mathbf{h} := \lambda\Xvec^\top(\nabla b(\Xvec\betavec^*) - \fvec)$, where $\fvec = \fvec(\Xvec, \zvec)$ does not depend on $\yvec$.

\emph{Step 1 (mean-zero at the common $\betavec^*$).} Evaluating the family score \eqref{eq:ppi_family_score} at $\betavec^*$ and taking expectations,
\begin{align*}
\E\big[\Psivec_{\lambda, \fvec}(\Xi; \betavec^*)\big]
\,&\stackrel{\text{(a)}}{=}\,
\E\!\left[\frac{w}{\rho}\right]\E\big[\Xvec^\top(\nabla b(\Xvec\betavec^*) - \yvec)\big]
- \E\!\left[\frac{w}{\rho}\right]\E[\mathbf{h}]
+ \E\!\left[\frac{1 - w}{1 - \rho}\right]\E[\mathbf{h}]\\
\,&\stackrel{\text{(b)}}{=}\,
\E\big[\Xvec^\top(\nabla b(\Xvec\betavec^*) - \yvec)\big] - \E[\mathbf{h}] + \E[\mathbf{h}]
\,=\,\E[\mathbf{S}]\,\stackrel{\text{(c)}}{=}\,\mathbf{0},
\end{align*}
where $(a)$ factors the propensity weights out of the $(\Xvec, \yvec, \zvec)$-measurable terms using $w \perp (\Xvec, \yvec, \zvec)$ and writes $\mathbf{h} = \lambda\Xvec^\top(\nabla b(\Xvec\betavec^*) - \fvec)$, $(b)$ uses $\E[w/\rho] = \E[(1 - w)/(1 - \rho)] = 1$ so that the two $\lambda$-terms cancel, and $(c)$ is the population first-order condition $\E[\Xvec^\top(\nabla b(\Xvec\betavec^*) - \yvec)] = \mathbf{0}$ defining $\betavec^*$. Hence every member of the family has a mean-zero estimating equation at the \emph{same} $\betavec^*$. 

Furthermore, for any $\betavec$, again by $w \perp (\Xvec, \yvec, \zvec)$ and $\E[w/\rho] = \E[(1 - w)/(1 - \rho)] = 1$, the two $\lambda$-terms of $\E[\Psivec_{\lambda, \fvec}(\Xi; \betavec)]$ cancel exactly as in Step~1, so the population estimating function of \emph{every} member coincides with that of the primary GLM: $\bar{\Psivec}_{\lambda, \fvec}(\betavec) := \E[\Psivec_{\lambda, \fvec}(\Xi; \betavec)] = \E[\Xvec^\top(\nabla b(\Xvec\betavec) - \yvec)] = \nabla L(\betavec)$, where $L(\betavec) := \E[\ell(\Xvec, \yvec; \betavec)]$. By Assumption~\ref{ass:regularity}(i)--(ii), $\nabla^2 L(\betavec) = \E[\Xvec^\top\nabla^2 b(\Xvec\betavec)\Xvec] \succ 0$ for every $\betavec \in \mB$ (the fact established in the proof of Lemma~\ref{lem: empirical-scores}), so $L$ is strictly convex on the convex set $\mB$ with unique minimizer $\betavec^*$, which is thus the unique zero of $\bar{\Psivec}_{\lambda, \fvec}$ on $\mB$, with common nonsingular derivative $\nabla\bar{\Psivec}_{\lambda, \fvec}(\betavec^*) = \Jvec$, which is straightforward to verify by algebra again using $w \perp (\Xvec, \yvec, \zvec)$ and $\E[w/\rho] = \E[(1 - w)/(1 - \rho)] = 1$. 

\emph{Step 2 (consistency and asymptotic normality).}
Because $\lambda \in [0, 1]$, one can easily verify that $\bar{\Psivec}_{\lambda, \fvec}(\betavec) = \tilde{\ell}_{\lambda, \fvec}(\Xi; \betavec)$, where $\tilde{\ell}_{\lambda, \fvec}(\Xi; \betavec)$ is convex in $\betavec$ and defined as 
$$
\tilde{\ell}_{\lambda, \fvec}(\Xi; \betavec) \,:=\,
\left(
\frac{w}{\rho}(1 - \lambda) + \frac{1-w}{1-\rho}\lambda
\right)b(\Xvec\betavec) - \frac{w}{\rho}\yvec^{\top}\Xvec\betavec + \left(
\frac{w}{\rho} + \frac{1-w}{1-\rho}
\right)\lambda \fvec^{\top}\Xvec \betavec.
$$
Therefore, mimicking the steps in the proof of Lemma \ref{lem: empirical-scores}, we can show that with probability approaching one, there is a solution, $\hat\betavec_{\lambda, \fvec}$, to the minimization problem and gives $\Pe_n \Psivec_{\lambda, \fvec}(\cdot\,; \betavec)= \bm{0}$.
This implies that with probability that goes to one, it also solves 
$\min_{\beta \in \mB}\Pe_n\tilde{\ell}_{\lambda, \fvec}(\cdot; \betavec)$, whose solution is consistent by Theorem 2.7 of \cite{newey1994large}. 

Since $\rho \in (0,1)$ is known, the weights $w/\rho$ and $(1 - w)/(1 - \rho)$ are bounded by $1/\rho$ and $1/(1 - \rho)$. Grouping \eqref{eq:ppi_family_score} at $\betavec^*$ around $\mathbf{S} = \Xvec^\top(\nabla b(\Xvec\betavec^*) - \yvec)$ and $\mathbf{h} = \lambda\Xvec^\top(\nabla b(\Xvec\betavec^*) - \fvec)$ gives $\Psivec_{\lambda, \fvec}(\Xi; \betavec^*) = \tfrac{w}{\rho}(\mathbf{S} - \mathbf{h}) + \tfrac{1 - w}{1 - \rho}\mathbf{h}$. On the compact set $\Breve{\mX}$ we have $\lVert\Xvec\rVert_F \leq C$ and, by continuity of $\nabla b$, $\Xvec^\top\nabla b(\Xvec\betavec^*)$ is bounded; moreover $\E\lVert\Xvec^\top\yvec\rVert^2 \leq C^2\,\E\lVert\yvec\rVert^2 < \infty$ by Assumption~\ref{ass:regularity}(iv), and $\E\lVert\Xvec^\top\fvec\rVert^2 \leq C^2\,\E\lVert\fvec\rVert^2 < \infty$ because $\fvec \in L_2(\Xvec, \zvec)$. Hence $\E\big[\lVert\Psivec_{\lambda, \fvec}(\Xi; \betavec^*)\rVert^2\big] < \infty$, so $\boldsymbol{\Omega}(\lambda, \fvec)$ is finite. The only $\betavec$-dependent part of $\Psivec_{\lambda, \fvec}$ is $\big[\tfrac{w}{\rho}(1 - \lambda) + \tfrac{1 - w}{1 - \rho}\lambda\big]\Xvec^\top\nabla b(\Xvec\betavec)$, a bounded random multiple of $\Xvec^\top\nabla b(\Xvec\betavec)$; the local Lipschitz and envelope bounds of the proof of Theorem~\ref{thm:normality} therefore apply verbatim, and $\{\Psivec_{\lambda, \fvec}(\cdot; \betavec) : \betavec \in \overline{B(\betavec^*, \epsilon)}\}$ is a Donsker class with square-integrable envelope by Example 19.7 and Theorem 19.5 in \cite{van2000asymptotic}.
Then, the rest of the proof is the same as that of Theorem \ref{thm:normality}.
The form of the asymptotic matrix is easy to compute directly. 

\emph{Step 3 (closed form for $\boldsymbol{\Omega}$).} Regrouping \eqref{eq:ppi_family_score} at $\betavec^*$ around $\mathbf{S}$ and $\mathbf{h}$,
\begin{equation}
\label{eq:ppi_psi_regroup}
\Psivec_{\lambda, \fvec}(\Xi; \betavec^*)
\,=\,
\frac{w}{\rho}\big(\mathbf{S} - \mathbf{h}\big) \,+\, \frac{1 - w}{1 - \rho}\,\mathbf{h}.
\end{equation}
Taking the outer product and the expectation,
\begin{align}
\boldsymbol{\Omega}(\lambda, \fvec)
\,&\stackrel{\text{(a)}}{=}\,
\E\!\left[\frac{w^2}{\rho^2}(\mathbf{S} - \mathbf{h})(\mathbf{S} - \mathbf{h})^\top\right]
+ \E\!\left[\frac{w(1 - w)}{\rho(1 - \rho)}\big((\mathbf{S} - \mathbf{h})\mathbf{h}^\top + \mathbf{h}(\mathbf{S} - \mathbf{h})^\top\big)\right]
+ \E\!\left[\frac{(1 - w)^2}{(1 - \rho)^2}\mathbf{h}\mathbf{h}^\top\right]\notag\\
\,&\stackrel{\text{(b)}}{=}\,
\frac{1}{\rho}\,\E\big[(\mathbf{S} - \mathbf{h})(\mathbf{S} - \mathbf{h})^\top\big]
\,+\, \frac{1}{1 - \rho}\,\E[\mathbf{h}\mathbf{h}^\top],
\label{eq:ppi_omega_step}
\end{align}
where $(a)$ expands the outer product of \eqref{eq:ppi_psi_regroup}, and $(b)$ uses $w \in \{0, 1\}$, so that $w^2 = w$, $(1 - w)^2 = 1 - w$, and $w(1 - w) = 0$, whence the cross term vanishes identically; then $w \perp (\Xvec, \yvec, \zvec)$ gives $\E[w^2/\rho^2] = \E[w/\rho^2] = 1/\rho$ and $\E[(1 - w)^2/(1 - \rho)^2] = 1/(1 - \rho)$, factoring out of the $(\Xvec, \yvec, \zvec)$-measurable blocks. This is the second equality of \eqref{eq:ppi_omega_closed}. To obtain the first, complete the square around $(1 - \rho)\mathbf{S}$ by writing $\boldsymbol{\delta} := \mathbf{h} - (1 - \rho)\mathbf{S}$, so $\mathbf{S} - \mathbf{h} = \rho\mathbf{S} - \boldsymbol{\delta}$ and $\mathbf{h} = (1 - \rho)\mathbf{S} + \boldsymbol{\delta}$. Substituting into \eqref{eq:ppi_omega_step},
\begin{align*}
\boldsymbol{\Omega}(\lambda, \fvec)
\,&\stackrel{\text{(c)}}{=}\,
\frac{1}{\rho}\,\E\big[(\rho\mathbf{S} - \boldsymbol{\delta})(\rho\mathbf{S} - \boldsymbol{\delta})^\top\big]
+ \frac{1}{1 - \rho}\,\E\big[((1 - \rho)\mathbf{S} + \boldsymbol{\delta})((1 - \rho)\mathbf{S} + \boldsymbol{\delta})^\top\big]\\
\,&\stackrel{\text{(d)}}{=}\,
\big(\rho + (1 - \rho)\big)\E[\mathbf{S}\mathbf{S}^\top]
+ \big(-1 + 1\big)\big(\E[\mathbf{S}\boldsymbol{\delta}^\top] + \E[\boldsymbol{\delta}\mathbf{S}^\top]\big)
+ \left(\frac{1}{\rho} + \frac{1}{1 - \rho}\right)\E[\boldsymbol{\delta}\boldsymbol{\delta}^\top]\\
\,&\stackrel{\text{(e)}}{=}\,
\E[\mathbf{S}\mathbf{S}^\top] \,+\, \frac{1}{\rho(1 - \rho)}\,\E[\boldsymbol{\delta}\boldsymbol{\delta}^\top],
\end{align*}
where $(c)$ substitutes the two reparametrized increments, $(d)$ collects the $\mathbf{S}\mathbf{S}^\top$, cross, and $\boldsymbol{\delta}\boldsymbol{\delta}^\top$ blocks (the cross-block coefficient is $\tfrac{1}{\rho}(-\rho) + \tfrac{1}{1 - \rho}(1 - \rho) = -1 + 1 = 0$), and $(e)$ uses $\rho + (1 - \rho) = 1$ together with $\tfrac{1}{\rho} + \tfrac{1}{1 - \rho} = \tfrac{1}{\rho(1 - \rho)}$. This is exactly \eqref{eq:ppi_omega_closed} and completes part~(i).

\emph{Step 4 (Pythagoras and Loewner minimization).} Resolve $\boldsymbol{\delta}$ into its $(\Xvec, \zvec)$-measurable and conditionally mean-zero parts. With the residual $\mathbf{r} := \lambda(\nabla b(\Xvec\betavec^*) - \fvec) - (1 - \rho)(\nabla b(\Xvec\betavec^*) - \gvec^*) \in \R^{\rev{k}}$,
\begin{equation}
\label{eq:ppi_delta_decomp}
\boldsymbol{\delta}
\,=\,\mathbf{h} - (1 - \rho)\mathbf{S}
\,\stackrel{\text{(a)}}{=}\,
\big(\Xvec^\top\mathbf{r}\big) - (1 - \rho)\pivec,
\end{equation}
where $(a)$ writes $\mathbf{h} = \lambda\Xvec^\top(\nabla b(\Xvec\betavec^*) - \fvec)$ and $(1 - \rho)\mathbf{S} = (1 - \rho)\Xvec^\top(\nabla b(\Xvec\betavec^*) - \yvec) = (1 - \rho)\Xvec^\top(\nabla b(\Xvec\betavec^*) - \gvec^*) + (1 - \rho)\Xvec^\top(\gvec^* - \yvec)$, then collects the $(\Xvec, \zvec)$-measurable terms into $\Xvec^\top\mathbf{r}$ and identifies the remainder $-(1 - \rho)\Xvec^\top(\gvec^* - \yvec) = -(1 - \rho)\pivec$. Since $\Xvec^\top\mathbf{r}$ is $(\Xvec, \zvec)$-measurable and $\E[\pivec \mid \Xvec, \zvec] = \mathbf{0}$,
\begin{equation}
\label{eq:ppi_pythagoras}
\E[\boldsymbol{\delta}\boldsymbol{\delta}^\top]
\,\stackrel{\text{(b)}}{=}\,
\E\big[(\Xvec^\top\mathbf{r})(\Xvec^\top\mathbf{r})^\top\big]
\,+\, (1 - \rho)^2\,\E[\pivec\pivec^\top],
\end{equation}
where $(b)$ is the Pythagorean identity: the cross terms $\E[(\Xvec^\top\mathbf{r})\pivec^\top]$ and $\E[\pivec(\Xvec^\top\mathbf{r})^\top]$ vanish by the law of iterated expectations, since $\Xvec^\top\mathbf{r}$ passes through the conditional expectation given $(\Xvec, \zvec)$ and $\E[\pivec \mid \Xvec, \zvec] = \mathbf{0}$. The first summand of \eqref{eq:ppi_pythagoras} is the only $(\lambda, \fvec)$-dependent term, is positive semidefinite, and is Loewner-minimized---in \emph{every} direction simultaneously---at its zero value $\Xvec^\top\mathbf{r} = \mathbf{0}$ a.s. Combining $\boldsymbol{\delta} = \Xvec^\top\mathbf{r} - (1 - \rho)\pivec$ from \eqref{eq:ppi_delta_decomp} with $\mathbf{h} = (1 - \rho)\mathbf{S} + \boldsymbol{\delta}$ and $\mathbf{S} - \pivec = \zetavec$ gives the identity $\mathbf{h} = (1 - \rho)\zetavec + \Xvec^\top\mathbf{r}$; hence $\Xvec^\top\mathbf{r} = \mathbf{0}$ a.s.\ is equivalently $\mathbf{h} = \mathbf{h}^\star := (1 - \rho)\zetavec$, which corresponds to $\boldsymbol{\delta} = -(1 - \rho)\pivec$. This optimum lies inside the family: taking $\lambda = 1 - \rho$ and $\fvec = \gvec^*$ gives $\mathbf{h} = (1 - \rho)\Xvec^\top(\nabla b(\Xvec\betavec^*) - \gvec^*) = (1 - \rho)\zetavec = \mathbf{h}^\star$ and $\mathbf{r} = \mathbf{0}$. Substituting $\Xvec^\top\mathbf{r} = \mathbf{0}$ into \eqref{eq:ppi_omega_closed} through \eqref{eq:ppi_pythagoras},
\begin{equation*}
\boldsymbol{\Omega}^{\text{\upshape GAI}}
\,\stackrel{\text{(c)}}{=}\,
\E[\mathbf{S}\mathbf{S}^\top] + \frac{1}{\rho(1 - \rho)}\,(1 - \rho)^2\,\E[\pivec\pivec^\top]
\,\stackrel{\text{(d)}}{=}\,
\E[\zetavec\zetavec^\top] + \E[\pivec\pivec^\top] + \frac{1 - \rho}{\rho}\,\E[\pivec\pivec^\top]
\,\stackrel{\text{(e)}}{=}\,
\E[\zetavec\zetavec^\top] + \frac{1}{\rho}\,\E[\pivec\pivec^\top],
\end{equation*}
where $(c)$ sets the $(\lambda, \fvec)$-dependent block to zero, $(d)$ uses the orthogonal decomposition $\E[\mathbf{S}\mathbf{S}^\top] = \E[\zetavec\zetavec^\top] + \E[\pivec\pivec^\top]$ (valid because $\mathbf{S} = \zetavec + \pivec$ with $\E[\pivec \mid \Xvec, \zvec] = \mathbf{0}$, so the $\zetavec$--$\pivec$ cross terms vanish) and simplifies $\tfrac{(1 - \rho)^2}{\rho(1 - \rho)} = \tfrac{1 - \rho}{\rho}$, and $(e)$ combines $1 + \tfrac{1 - \rho}{\rho} = \tfrac{1}{\rho}$ on the $\E[\pivec\pivec^\top]$ block. This is \eqref{eq:ppi_omega_gai}. For the gap, subtract the GAI values from \eqref{eq:ppi_omega_closed} and \eqref{eq:ppi_pythagoras} and conjugate by $\Jvec^{-1}$:
\begin{equation*}
\Sigmavec(\lambda, \fvec) - \Sigmavec^{\text{\upshape GAI}}
\,\stackrel{\text{(f)}}{=}\,
\Jvec^{-1}\big(\boldsymbol{\Omega}(\lambda, \fvec) - \boldsymbol{\Omega}^{\text{\upshape GAI}}\big)\Jvec^{-1}
\,\stackrel{\text{(g)}}{=}\,
\frac{1}{\rho(1 - \rho)}\,\Jvec^{-1}\,\E\big[(\Xvec^\top\mathbf{r})(\Xvec^\top\mathbf{r})^\top\big]\,\Jvec^{-1}
\,\stackrel{\text{(h)}}{\succeq}\,\mathbf{0},
\end{equation*}
where $(f)$ uses $\Sigmavec = \Jvec^{-1}\boldsymbol{\Omega}\Jvec^{-1}$ with the common $\Jvec$ from Step~2, $(g)$ cancels the $(\lambda, \fvec)$-free blocks $\E[\mathbf{S}\mathbf{S}^\top]$ and $\tfrac{(1 - \rho)^2}{\rho(1 - \rho)}\E[\pivec\pivec^\top]$ between $\boldsymbol{\Omega}(\lambda, \fvec)$ and $\boldsymbol{\Omega}^{\text{\upshape GAI}}$ via \eqref{eq:ppi_pythagoras}, leaving only $\tfrac{1}{\rho(1 - \rho)}\E[(\Xvec^\top\mathbf{r})(\Xvec^\top\mathbf{r})^\top]$, and $(h)$ holds because $\E[(\Xvec^\top\mathbf{r})(\Xvec^\top\mathbf{r})^\top]$ is positive semidefinite, $\tfrac{1}{\rho(1 - \rho)} > 0$, and congruence by $\Jvec^{-1}$ preserves the Loewner order. This is \eqref{eq:ppi_gap}. Equality holds if and only if the only $(\lambda, \fvec)$-dependent block vanishes, i.e. $\E[(\Xvec^\top\mathbf{r})(\Xvec^\top\mathbf{r})^\top] = \mathbf{0}$, equivalently $\Xvec^\top\mathbf{r} = \mathbf{0}$ almost surely; for GAI, $\mathbf{r} = \mathbf{0}$ identically. This proves part~(iii), and hence the weak Loewner dominance over every member, with strictness whenever the corresponding $\Xvec^\top\mathbf{r} \neq \mathbf{0}$ on a set of positive probability.

The optimal correction $\mathbf{h}^\star = (1 - \rho)\zetavec$, and thus $\boldsymbol{\Omega}^{\text{\upshape GAI}}$ and $\Sigmavec^{\text{\upshape GAI}}$, is unique because $\Xvec^\top\mathbf{r} = \mathbf{0}$ a.s.\ pins down $\mathbf{h}$. The pair $(\lambda, \fvec)$ attaining it is not unique: $\Xvec^\top\mathbf{r} = \mathbf{0}$ holds whenever $\lambda(\nabla b(\Xvec\betavec^*) - \fvec) = (1 - \rho)(\nabla b(\Xvec\betavec^*) - \gvec^*)$, i.e.\ for any $\lambda \neq 0$ with $\fvec = \nabla b(\Xvec\betavec^*) - \tfrac{1 - \rho}{\lambda}(\nabla b(\Xvec\betavec^*) - \gvec^*)$, of which GAI ($\lambda = 1 - \rho \in (0,1)$, $\fvec = \gvec^*$) is the canonical representative.

\emph{Step 5 (special-case collapses).} We verify the two members directly from \eqref{eq:ppi_family_score} before. 
Note that although PPI and PPI++ are formulated as convex loss minimization problems (M-estimators), their first-order conditions are exactly special cases of \eqref{eq:ppi_family_score} with the specified $(\lambda, \fvec)$.
The asymptotic distribution of the M-estimator formulation is asymptotically equivalent to our Z-estimator formulation.

\emph{(i) GAI, $\lambda = 1 - \rho$, $\fvec = \gvec^*$.} Substituting,
\begin{align*}
\Psivec_{1 - \rho, \gvec^*}(\Xi; \betavec)
\,&\stackrel{\text{(a)}}{=}\,
\frac{w}{\rho}\Xvec^\top\!\Big[(\nabla b(\Xvec\betavec) - \yvec) - (1 - \rho)(\nabla b(\Xvec\betavec) - \gvec^*)\Big]
+ \frac{1 - w}{1 - \rho}(1 - \rho)\Xvec^\top(\nabla b(\Xvec\betavec) - \gvec^*)\\
\,&\stackrel{\text{(b)}}{=}\,
\frac{w}{\rho}\Xvec^\top\big[\rho\,\nabla b(\Xvec\betavec) - \yvec + (1 - \rho)\gvec^*\big]
+ (1 - w)\Xvec^\top(\nabla b(\Xvec\betavec) - \gvec^*)\\
\,&\stackrel{\text{(c)}}{=}\,
\Xvec^\top\Big[\nabla b(\Xvec\betavec) - \gvec^* + \frac{w}{\rho}(\gvec^* - \yvec)\Big],
\end{align*}
where $(a)$ inserts $\lambda = 1 - \rho$, $\fvec = \gvec^*$, $(b)$ expands the bracket inside $w/\rho$ as $(1 - (1 - \rho))\nabla b(\Xvec\betavec) - \yvec + (1 - \rho)\gvec^* = \rho\nabla b(\Xvec\betavec) - \yvec + (1 - \rho)\gvec^*$ and cancels $(1 - \rho)$ in the second term, and $(c)$ groups the $\nabla b$ and $\gvec^*$ coefficients: the $\nabla b(\Xvec\betavec)$ terms give $\tfrac{w}{\rho}\rho + (1 - w) = w + 1 - w = 1$, and the $\gvec^*$ terms give $\tfrac{w}{\rho}(1 - \rho) - (1 - w) = \tfrac{w}{\rho} - w - 1 + w = \tfrac{w}{\rho} - 1$, so the constant-in-$\betavec$ part is $-\gvec^* + \tfrac{w}{\rho}(\gvec^* - \yvec)$ after combining with the $-\tfrac{w}{\rho}\yvec$ term. This is exactly the GAI score \eqref{eq:score} with $e \equiv \rho$.

\emph{(ii) Primary-only, $\lambda = 0$.} Setting $\lambda = 0$ in \eqref{eq:ppi_family_score} annihilates $\mathbf{h}$ and the unlabeled block, leaving $\Psivec_{0, \fvec}(\Xi; \betavec) = \tfrac{w}{\rho}\Xvec^\top(\nabla b(\Xvec\betavec) - \yvec)$, the IPW human-labels-only score, whose asymptotic variance is $\Sigmavec^{\text{\upshape P}}$ from Corollary~\ref{cor:dominance}; this member uses no prediction $\fvec$, consistent with the entry ``$\lambda = 0$'' of part~(ii).

\emph{Step 6 (feasible GAI).} Part~(iv) concerns the feasible GAI estimator $\hat{\betavec}$ computed by Algorithm~\ref{alg: gai} with the known propensity $\hat{e} \equiv \rho$ and a cross-fitted nuisance $\hat{\gvec}$, in place of the oracle member $(1 - \rho, \gvec^*)$ analyzed in Steps~1--5. By Corollary~\ref{cor:dominance} (proved in Section~\ref{apdx: proof of dominance}, using exactly the $L_{Q,2}$-consistency of $\hat{\gvec}$ and $\hat{e} \equiv \rho$ under Assumption~\ref{ass:random_labeling}, and no rate or sup-norm condition), this feasible estimator satisfies $\sqrt{n}(\hat{\betavec} - \betavec^*) \rightsquigarrow N(\mathbf{0}, \Sigmavec^{\text{\upshape GAI}})$, the same limit law as the oracle GAI member of parts~(ii)--(iii). Hence the Loewner dominance established in part~(iii) applies verbatim to $\hat{\betavec}$, which proves part~(iv).
This completes the proof of Theorem~\ref{thm:dominance}.

\subsection{Proof of Lemma \ref{lem: orthogonality}}

To begin, first we note that by the definition of $\tauvec(e^*, \gvec^*)$
\begin{align*}
    \E[\tauvec(e^*, \gvec^*)]&\,=\,\E\left[\Xvec^{\top}
    \left[
    \gvec^*(\Xvec,\zvec) - \frac{w}{e^*(\Xvec,\zvec)}(\gvec^*(\Xvec,\zvec) - \yvec) 
    \right]\right]\\
    &
\,\stackrel{\text{(a)}}{=}\,\E\left[\Xvec^{\top}
    \left[
    \gvec^*(\Xvec,\zvec) - \frac{\E[w|\Xvec,\zvec]}{e^*(\Xvec,\zvec)}\big(\gvec^*(\Xvec,\zvec) - \E[\yvec|\Xvec,\zvec]\big) 
    \right]\right]\,\stackrel{\text{(b)}}{=}\,\E\left[\Xvec^{\top}
    \E[\yvec|\Xvec,\zvec]\right]\,=\,\E\left[\Xvec^{\top}\yvec\right],
\end{align*}
where (a) follows from the law of iterated expectations and the conditional independence of $w$ and $\yvec$, and (b) follows from the definition that $\E[w | \Xvec, \zvec] = e^*(\Xvec, \zvec)$.
Therefore, 
$$
\E[\Psivec(e^*, \gvec^*; \betavec^*)]\,=\,\E\left[\Xvec^{\top}\nabla b(\Xvec\betavec^*)-\tauvec(\Xi; e^*, \gvec^*)\right]\,=\,\E\left[\Xvec^{\top}\nabla b(\Xvec\betavec^*)-\Xvec^{\top}\yvec\right]\,=\,0\,.
$$
{Here the last equality is the population first-order condition \eqref{eq:foc} defining $\betavec^*$.} This proves the first part.
For the second, we notice that 
\begin{align*}
    \sqrt{n}\left(\Pe_n \tauvec(\hat{e}, \hat{\gvec})-
\Pe_n \tauvec(e^*, \gvec^*)\right)\,=\,\underbrace{\sqrt{n}\Pe_n\left[\left(1 - \frac{w}{e^*}\right)\Xvec^{\top}\Delta_{\gvec}\right] }_{\text{(I)}}+
\underbrace{\sqrt{n}\Pe_n \left[\frac{w\,\Delta_e}{\hat{e}e^*}\Xvec^{\top}(\gvec^* - \yvec)\right]}_{\text{(II)}}+\underbrace{
\sqrt{n}\Pe_n \left[\frac{w\,\Delta_e}{\hat{e}e^*}\Xvec^{\top}\Delta_{\gvec}\right]}_{\text{(III)}}.
\end{align*}
We treat each term separately. 
For (I), we note that 
\begin{align}
\label{eqn:I-expectation}
\E\left[
\left(1 - \frac{w}{e^*}\right)\Xvec^{\top}\Delta_{\gvec}\,\bigg|\, I^c
\right]\,=\,
\E\left[
\left(1 - \frac{\E[w\mid\Xvec, \zvec, I^c]}{e^*}\right)\Xvec^{\top}\Delta_{\gvec}\,\bigg|\, I^c
\right]\,=\,0
\end{align}
so (I) has zero mean. 
Therefore, by Markov inequality 
\begin{align}
\label{eqn:I}
    \Pe\left(
\lVert \text{(I)} \rVert \geq \epsilon \big| I^c
    \right)\,\leq\,&~ \frac{\E[\lVert \text{(I)} \rVert^2 | I^c]}{\epsilon^2}=\frac{1}{\epsilon^2}\sum_{j=1}^d\text{Var}\left[
    \sqrt{n}\Pe_n\left[\left(1 - \frac{w}{e^*}\right)\xvec_{(j)}^{\top}\Delta_{\gvec}\right]
    \bigg| I^c\right]
=\frac{1}{\epsilon^2}\sum_{j=1}^d\text{Var}\left[
    \left(1 - \frac{w}{e^*}\right)\xvec_{(j)}^{\top}\Delta_{\gvec}
    \bigg| I^c\right]\notag\\
\,=\,&~\frac{1}{\epsilon^2}\,\E\left[
    \left\lVert\left(1 - \frac{w}{e^*}\right)\Xvec^{\top}\Delta_{\gvec}\right\rVert^2
    \,\bigg|\, I^c\right].
\end{align}
From here, we note that 
\begin{align*}
    \left \lVert \left(1 - \frac{w}{e^*}\right)\Xvec^{\top}\Delta_{\gvec}\right \rVert_{Q,2}^2\,\leq\, \frac{4}{\kappa^2} \left \lVert \Xvec^{\top}\Delta_{\gvec}\right \rVert_{Q,2}^2
\,\leq\,
    \frac{4C^2}{\kappa^2} \left \lVert \Delta_{\gvec}\right \rVert_{Q,2}^2\,\stackrel{\text{(c)}}{\rightarrow}\,0,
\end{align*}
which implies the RHS of \eqref{eqn:I} converges to zero in probability by the Markov's inequality. 
Here (c) uses only the $L_2$-consistency $\lVert \hat{\gvec} - \gvec^* \rVert_{Q,2} \to 0$ implied by Assumption \ref{ass:ml_rate}; no convergence rate is needed for term (I).
Since $\Pe\left(
\lVert \text{(I)} \rVert \geq \epsilon \,|\, I^c
    \right)$ is bounded therefore uniformly integrable, we can take expectation and conclude that (I) is $o_P(1)$.
    An argument similar to \eqref{eqn:I-expectation} shows that (II) has zero mean. An argument similar to \eqref{eqn:I} shows that 
    \begin{align}
       & \Pe\left(
\lVert \text{(II)} \rVert\,\geq\,\epsilon \,\big|\, I^c
    \right)\,\leq\, \frac{1}{\epsilon^2}\E\left[\left\lVert\frac{w\,\Delta_e}{\hat{e}e^*}\Xvec^{\top}(\gvec^* - \yvec)\right\rVert^2 \,\bigg|\,I^c\right]\notag
\,\stackrel{\text{(d)}}{=}\,\frac{2}{\kappa^2\epsilon^2}\E\left[\left\lVert w\,\Delta_e\,\Xvec^{\top}(\gvec^* - \yvec)\right\rVert^2 \,\bigg|\,I^c\right] +o_P(1)\notag\\
\,=\,&\frac{2}{\kappa^2\epsilon^2}\E\left[|\Delta_e|^2
    \sum_{j=1}^d
    \xvec_{(j)}^{\top}\text{Cov}(\yvec\mid\Xvec, \zvec, I^c)\xvec_{(j)}
    \,\bigg|\,I^c\right] +o_P(1)
\,\stackrel{\text{(e)}}{\leq}\,\frac{2\tilde{\sigma}^2}{\kappa^2\epsilon^2}\E\left[|\Delta_e|^2
    \sum_{j=1}^d
    \xvec_{(j)}^{\top}\xvec_{(j)}
    \,\bigg|\,I^c\right] +o_P(1)
    \notag\\
\,=\,&\frac{2\tilde{\sigma}^2}{\kappa^2\epsilon^2}\E\left[|\Delta_e|^2
    \lVert \Xvec \rVert_{F}^2
    \,\bigg|\,I^c\right] +o_P(1)
\,=\,\frac{2C^2\tilde{\sigma}^2}{\kappa^2\epsilon^2}\E\left[|\Delta_e|^2
    \,\bigg|\,I^c\right] +o_P(1),
    \end{align}
    where (d) follows because for all $\Xvec$ and $\zvec$, (1) by assumption $e^*(\Xvec, \zvec) \geq \kappa$ and (2) Assumption \ref{ass:ml_rate} states that $\sup_{\Xvec, \zvec}| \Delta_e | \to_P 0$, so with probability approaching one it holds that $\hat{e}(\Xvec, \zvec) \geq \kappa/2$. 
    (e) holds because the distribution of $\yvec$ is independent of $I^c$ so 
    $\text{Cov}(\yvec|\Xvec, \zvec, I^c) = \text{Cov}(\yvec|\Xvec, \zvec)$ and $\lVert \text{Cov}(\yvec|\Xvec, \zvec) \rVert \leq \tilde{\sigma}^2$.
    By an argument similar to the above discussion regarding (I), we have $\E\left[|\Delta_e|^2
    \,\big|\,I^c\right] = o_P(1)$.
    Therefore, (II) is also $o_P(1)$.
    For the third term, we have 
    \begin{align*}
        & \Pe\left(
\lVert \text{(III)} \rVert_{\infty}\,\geq\,\epsilon \,\big|\, I^c
    \right)\,\leq\,\sum_{j=1}^d \Pe\left(\sqrt{n}\left |\Pe_n \frac{w\,\Delta_e}{\hat{e}e^*}\xvec_{(j)}^{\top}\Delta_{\gvec}\right|\,\geq\,\epsilon\,\bigg|\,I^c\right)
\,\leq\,\frac{\sqrt{n}}{\epsilon}\sum_{j=1}^d \E \left[\left|\frac{w\,\Delta_e}{\hat{e}e^*}\xvec_{(j)}^{\top}\Delta_{\gvec}\right| \,\bigg|\,I^c\right]\\
\,\stackrel{\text{(f)}}{\leq}\,&\frac{2\sqrt{n}}{\kappa^2\epsilon}\sum_{j=1}^d \E \left[\left|\Delta_e\,\xvec_{(j)}^{\top}\Delta_{\gvec}\right| \,\big|\,I^c\right] +o_P(1)
\,\stackrel{\text{(g)}}{\leq}\,\frac{2\sqrt{n}}{\kappa^2\epsilon}\sum_{j=1}^d \E \left[\left|\Delta_e\right|
    \left\lVert\xvec_{(j)}\right \rVert
    \left\lVert\Delta_{\gvec}\right \rVert
    \,\big|\,I^c\right] +o_P(1)\\
\,\leq\,&\frac{2\sqrt{n}C}{\kappa^2\epsilon}\sum_{j=1}^d \E \left[\left|\Delta_e\right|
    \left\lVert\Delta_{\gvec}\right \rVert
    \,\big|\,I^c\right] +o_P(1),
    \end{align*}
    where (f) follows from an argument similar to above and (g) follows from the Cauchy–Schwarz inequality. 
    Finally, we have 
    \begin{align*}
        \E \left[\left|\Delta_e\right|
    \left\lVert\Delta_{\gvec}\right \rVert
    \right]\,\leq\,\left\lVert \Delta_e\right\rVert_{Q,2}
    \left\lVert \Delta_{\gvec}\right\rVert_{Q,2}\,{\color{black}\leq\,\alpha(n)^2/n^{r_1 + r_2}}\,=\,o(n^{-1/2}).
    \end{align*}
    Therefore, $\Pe\left(
\lVert \text{(III)} \rVert_{\infty} \geq \epsilon \,|\, I^c
    \right)$ is $o_P(1)$ and \text{(III)} is $o_P(1)$ too. We conclude the proof.

\subsection{Proof of Lemma \ref{lem: empirical-scores}}

For any functions $e(\cdot)$ and $\gvec(\cdot)$, data point $\Xi$, and $\betavec$, 
let us define $$\bar{\ell}(\Xi; e, \gvec; \betavec)\,:=\,b(\Xvec \betavec) - \left[\gvec(\Xvec,\zvec) - \frac{w}{e(\Xvec,\zvec)}(\gvec(\Xvec,\zvec) - \yvec) \right]^{\top} \Xvec \betavec,$$
so $\nabla_{\betavec} \bar{\ell}(\Xi; e, \gvec; \betavec) = \Psivec(\Xi; e, \gvec; \betavec)$ and $\nabla^2_{\betavec} \bar{\ell}(\Xi; e, \gvec; \betavec) = \Xvec^{\top} {\color{black}\nabla^2} b(\Xvec\betavec) \Xvec \succeq 0$.
Therefore $\bar{\ell}(\cdot)$ is convex.
In particular, $\E[\bar{\ell}(\Xi, e^*, \gvec^*, \betavec)]$ is convex in $\betavec$.
We first notice that 
$$ \nabla_{\betavec} \E[\bar{\ell}(e^*, \gvec^*, \betavec^*)]\,=\,
\nabla_{\betavec} \E[{\ell}(\rev{\Xvec, \yvec; \betavec^*})]
+ \E\left[\Xvec^{\top}(\yvec -\rev{\gvec^*}(\Xvec,\zvec))
\left(
1- \frac{w}{\rev{e^*}(\Xvec,\zvec)}
\right)
\right]
\,=\,0.$$
Also, for any fixed ${\betavec} \in \mB$ it holds that 
$\nabla^2_{\betavec}\E[\bar{\ell}(e^*, \gvec^*, \betavec)] = \E[\Xvec^{\top} {\color{black}\nabla^2} b(\Xvec\betavec) \Xvec] \succ 0$. 
The last inequality follows because by assumption for any $\Xvec \in \Breve{\mX}$, $\lambda_{\min}\big({\color{black}\nabla^2} b(\Xvec\betavec)\big) >0$. 
For the purpose of contradiction, if  $\inf_{\Xvec \in \Breve{\mX}} \lambda_{\min}\big({\color{black}\nabla^2} b(\Xvec\betavec)\big) = 0$, there is a converging sequence $\{\Xvec_m\}_{m=1}^\infty$ with limit $\widetilde{\Xvec} \in \Breve{\mX}$ such that $\lambda_{\min}\big({\color{black}\nabla^2} b\big(\widetilde{\Xvec}\betavec\big)\big) = 0$ by continuity of ${\color{black}\nabla^2} b(\cdot)$. 
This is a contradiction. 
Therefore, $\nabla^2_{\betavec}\E[\bar{\ell}(e^*, \gvec^*, \betavec)]   \succ  \inf_{\Xvec \in \Breve{\mX}} \lambda_{\min}\big({\color{black}\nabla^2} b(\Xvec\betavec)\big)\cdot \E[\Xvec^{\top}  \Xvec] \succ 0$.
In other words, $\E[\bar{\ell}(e^*, \gvec^*, \betavec)]$ is strictly convex. 
Therefore, $\betavec^*$ is the unique minimum.
By compactness of $\Breve{\mX}$ and continuity of $b(\cdot)$, we have $\Pe_n\bar{\ell}(e^*, \gvec^*; \betavec) \to_P \E[\bar{\ell}(e^*, \gvec^*; \betavec)]$ as an application of the weak law of large numbers. 
By Theorem 2.7 of \cite{newey1994large}, 
it holds that there is a random sequence $\{\check{\betavec}_n\}^{\infty}_{n=1}$ that solves $\min_{\beta \in \mB} \Pe_n\bar{\ell}(e^*, \gvec^*; \betavec)$ and converges to $\betavec^*$ with probability one. 
This implies that with probability one there is a sequence $\{\check{\betavec}_n\}^{\infty}_{n=1}$ such that 
$\Pe_n\Psivec(e^*, \gvec^*; \check{\betavec}_n) =\bm{0}$.
By Lemma \ref{lem: orthogonality}, it holds that 
\begin{align}
    \label{eqn:transform}\sqrt{n}\Pe_n\Psivec(\hat{e}, \hat{\gvec}; \check{\betavec}_n)\,=\,\sqrt{n}\Pe_n\Psivec(e^*, \gvec^*; \check{\betavec}_n) + \sqrt{n}\left(\Pe_n\tauvec(e^*, \gvec^*) - \Pe_n\tauvec(\hat{e}, \hat{\gvec})\right)\,=\,o_P(1),
\end{align}
or $\inf_{\beta \in \mB}\big\lVert
\Pe_n \Psivec\big(\hat{e}, \hat{\gvec}; \betavec\big)
\big \rVert = o_P(1/\sqrt{n})$.
This implies $\Pe_n \Psivec\big(\hat{e}, \hat{\gvec}; \hat{\betavec}\big) = o_P(1/\sqrt{n})$.
Using the argument in \eqref{eqn:transform}
 again, we have $\Pe_n \Psivec\big({e}^*, {\gvec}^*; \hat{\betavec}\big)= o_P(1/\sqrt{n})$. 
This concludes the proof.

\subsection{Proof of Lemma \ref{lem: consistency}}

The notations and definitions are the same as in the proof of the Lemma \ref{lem: empirical-scores}. 
Fix arbitrary $\epsilon_1 < \epsilon_2$ such that $\overline{B(\betavec^*, \epsilon_2)} \subset \mB$.
On $\overline{B(\betavec^*, \epsilon_2)}\setminus B(\betavec^*, \epsilon_1)$, by a Taylor expansion, it must be that 
\begin{align*}
    \E[\bar{\ell}(e^*, \gvec^*; \betavec)]\,=\,& 
\E[\bar{\ell}(e^*, \gvec^*; \betavec^*)] + \frac{1}{2}(\betavec - \betavec^*)^{\top}\E[\Xvec^{\top} {\color{black}\nabla^2} b(\Xvec\tilde{\betavec}) \Xvec](\betavec - \betavec^*)
\,\stackrel{\text{(a)}}{\geq}\,\E[\bar{\ell}(e^*, \gvec^*; \betavec^*)] + \frac{c}{2}\lVert\betavec - \betavec^*\rVert^2
\end{align*}
which can be further lower bounded by $\E[\bar{\ell}(e^*, \gvec^*; \betavec^*)] + \frac{c\epsilon_1^2}{2}$,
for some $\tilde{\betavec} \in [\betavec^*, \betavec] \subset \overline{B(\betavec^*, \epsilon_2)}$ that depends on $\betavec$ and a universal $c > 0$.
We prove the inequality (a) as follows. It suffices to show that $\lambda_{\min}\left({\color{black}\nabla^2} b(\Xvec{\betavec})\right)$ is uniformly lower bounded by some positive number, when $\betavec$ ranges over $\overline{B(\betavec^*, \epsilon_2)}$ and $\Xvec$ ranges over ${\mX}$ . 
Suppose otherwise, there is then $\left\{(\Xvec_m, \betavec_m)\right\}_{m=1}^{\infty}$ such that 
$\lambda_{\min}\left({\color{black}\nabla^2} b(\Xvec_m{\betavec}_m)\right) \to 0$. 
By taking subsequence is necessary, we assume that $\Xvec_m \to \bar{\Xvec} \in \Breve{\mX}$ and ${\betavec}_m \to \bar{\betavec} \in \overline{B(\betavec^*, \epsilon_2)} \subset \mB$. 
By assumption $\bar{\Xvec}\bar{\betavec} \in \Theta$ and continuity of $\nabla^2 b(\cdot)$ implies $\lambda_{\min}\left({\color{black}\nabla^2} b\big(\bar{\Xvec}\bar{\betavec})\right) = 0$, which contradicts our assumption. 

By the convexity lemma \cite{pollard1991asymptotics}, convexity of $\bar{
\ell}(\cdot)$ and the compactness of $\overline{B(\betavec^*, \epsilon_2)}$, it holds that $\Pe_n \bar{\ell}\big({e}^*, {\gvec}^*; {\betavec}\big)$ converges to
$\E\big[\bar{\ell}\big({e}^*, {\gvec}^*; {\betavec}\big)\big]$ in probability uniformly on $\overline{B(\betavec^*, \epsilon_2)}$.
Consequently, with probability approaching one, it must be that $\forall \betavec \in \overline{B(\betavec^*, \epsilon_2)}\setminus B(\betavec^*, \epsilon_1)$, 
\begin{align*}
    \Pe_n\bar{\ell}(e^*, \gvec^*; \betavec) - \frac{c\epsilon^2_1}{4}
\,\geq\,&~ \Pe_n\bar{\ell}(e^*, \gvec^*; \betavec^*)\,\stackrel{\text{(b)}}{\geq}\,
\Pe_n\bar{\ell}(e^*, \gvec^*; \betavec) + 
(\betavec - \betavec^*)^{\top}\Pe_n{\Psivec}(e^*, \gvec^*; \betavec)\\
\,&=\,
\Pe_n\bar{\ell}(e^*, \gvec^*; \betavec) + 
\lVert \betavec - \betavec^*\rVert\cdot 
\frac{(\betavec^* - \betavec)^{\top}}{\lVert \betavec - \betavec^*\rVert}\Pe_n{\Psivec}(e^*, \gvec^*; \betavec),
\end{align*}
where (b) follows from subgradient inequality. 
This implies that 
\begin{align}
\label{eqn:gradient-bound}
    \frac{(\betavec - \betavec^*)^{\top}}{\lVert \betavec - \betavec^*\rVert}\Pe_n{\Psivec}(e^*, \gvec^*; \betavec)\,\geq\,\frac{c\epsilon^2_1}{4\epsilon_2}. 
\end{align}
Next, consider any $\betavec \in \mB \setminus \overline{B(\betavec^*, \epsilon_2)}$, there is $\tilde{\betavec} = \lambda\betavec+(1- \lambda)\betavec^*\in \overline{B(\betavec^*, \epsilon_2)}\setminus B(\betavec^*, \epsilon_1)$. 
By the integral form of the intermediate value theorem,
\begin{align*}
    &\frac{(\betavec - \betavec^*)^{\top}}{\lVert \betavec - \betavec^*\rVert}\big(\Pe_n{\Psivec}(e^*, \gvec^*; \betavec) -\Pe_n{\Psivec}(e^*, \gvec^*; \betavec^*
    ) \big)\\ 
\,&=\,\frac{(\betavec - \betavec^*)^{\top}}{\lVert \betavec - \betavec^*\rVert} \int^1_0 \Pe_n \Xvec^{\top} {\color{black}\nabla^2} b\big(\Xvec(\betavec^* + t(\betavec-\betavec^*))\big) \Xvec dt (\betavec - \betavec^*)\\
\,&=\,\frac{(\tilde{\betavec} - \betavec^*)^{\top}}{\lambda\lVert \tilde{\betavec} - \betavec^*\rVert} \int^1_0 \Pe_n \Xvec^{\top} {\color{black}\nabla^2} b\left(\Xvec\left(\betavec^* + \frac{t}{\lambda}(\tilde{\betavec}-\betavec^*)\right)\right)  \Xvec dt (\tilde{\betavec} - \betavec^*)\\
\,&\stackrel{\text{(c)}}{=}\,\frac{(\tilde{\betavec} - \betavec^*)^{\top}}{\lVert \tilde{\betavec} - \betavec^*\rVert} \int^{\frac{1}{\lambda}}_0 \Pe_n \Xvec^{\top} {\color{black}\nabla^2} b\big(\Xvec\left(\betavec^* + {t}(\tilde{\betavec}-\betavec^*)\right)\big)  \Xvec dt (\tilde{\betavec} - \betavec^*)\\
\,&\stackrel{\text{(d)}}{\geq}\,\frac{(\tilde{\betavec} - \betavec^*)^{\top}}{\lVert \tilde{\betavec} - \betavec^*\rVert} \int^{1}_0 \Pe_n \Xvec^{\top} {\color{black}\nabla^2} b\big(\Xvec\left(\betavec^* + {t}(\tilde{\betavec}-\betavec^*)\right)\big)  \Xvec dt (\tilde{\betavec} - \betavec^*)\cdot\\
\,&=\,\frac{(\tilde{\betavec} - \betavec^*)^{\top}}{\lVert \tilde{\betavec} - \betavec^*\rVert}\big(\Pe_n{\Psivec}(e^*, \gvec^*; \tilde{\betavec}) -\Pe_n{\Psivec}(e^*, \gvec^*; \betavec^*
    ) \big)\,\stackrel{\text{(e)}}{\geq}\,\frac{c\epsilon^2_1}{4\epsilon_2} - \frac{(\tilde{\betavec} - \betavec^*)^{\top}}{\lVert \tilde{\betavec} - \betavec^*\rVert}\Pe_n{\Psivec}(e^*, \gvec^*; \betavec^*
    ) ,
\end{align*}
where (c) follows from a change of variable in the integral, (d) follows because the ${\color{black}\nabla^2} b(\cdot)$ is positive definite, and (e) uses \eqref{eqn:gradient-bound}.
Combining this again with \eqref{eqn:gradient-bound}, we conclude that with probability converging to one, 
$$
\left \lVert \Pe_n{\Psivec}(e^*, \gvec^*; \betavec)\right \rVert\,\geq\,\frac{(\betavec - \betavec^*)^{\top}}{\lVert \betavec - \betavec^*\rVert}\Pe_n{\Psivec}(e^*, \gvec^*; \betavec)\,\geq\,\frac{c\epsilon^2_1}{4\epsilon_2}~~~~~\forall~ \betavec\,\in\,
\mB \setminus {B(\betavec^*, \epsilon_1)}.
$$
Finally, we note that by Lemma \ref{lem: empirical-scores}, $\big\lVert
\Pe_n \Psivec\big({e}^*, {\gvec}^*; \hat{\betavec}\big)
\big \rVert = o_P(1/\sqrt{n}).$
Therefore, it must be that $\hat{\betavec} \in B(\betavec^*, \epsilon_1)$ with probability converging to one.
We finish the proof.

\subsection{Proof of Corollary \ref{cor:variance_decomposition}}
\label{apdx: variance decomposition}

This appendix provides the proof of Corollary~\ref{cor:variance_decomposition}, which decomposes the variance gain of GAI into two interpretable components.

Recall that the primary-only estimator uses only labeled observations, while GAI incorporates auxiliary data through the conditional expectation $\E[\yvec \mid \Xvec, \zvec]$. To understand the variance gain, we decompose the GLM score $\nabla_{\betavec} \ell(\Xvec, \yvec; \betavec^*) = \Xvec^{\top}(\nabla b(\Xvec\betavec^*) - \yvec)$ by inserting intermediate quantities.

From the proof of Corollary~\ref{cor:dominance}, the asymptotic variance of the primary-only estimator minus that of the GAI estimator is proportional to:
\begin{equation}
\Sigmavec^{\text{\upshape P}} - \Sigmavec^{\text{\upshape GAI}}\,\propto\,(\frac{1}{\rho} - 1) \cdot \E\bigg[\Xvec^{\top}\Big(\nabla b(\Xvec\betavec^*) - \E[\yvec \mid \Xvec, \zvec]\Big)\Big(\nabla b(\Xvec\betavec^*) - \E[\yvec \mid \Xvec, \zvec]\Big)^{\top}\Xvec\bigg],
\label{eq:var_diff}
\end{equation}
where \rev{$\rho \in (0,1)$ is the labeling probability of Assumption~\ref{ass:random_labeling}, so that the labeled fraction $n_P/n$ converges to $\rho$}. This expression is always positive semidefinite (PSD), confirming that GAI never increases variance relative to primary-only estimation.

The variance reduction in \eqref{eq:var_diff} is strictly positive definite (i.e., GAI strictly dominates) whenever the inner expectation is positive definite. We decompose the gap $\nabla b(\Xvec\betavec^*) - \E[\yvec \mid \Xvec, \zvec]$ into interpretable components:
\begin{align}
\nabla b(\Xvec\betavec^*) - \E[\yvec \mid \Xvec, \zvec]
\,&=\,\underbrace{\Big(\nabla b(\Xvec\betavec^*) - \E[\yvec \mid \Xvec]\Big)}_{\text{(II) Model misspecification}}
+ \underbrace{\Big(\E[\yvec \mid \Xvec] - \E[\yvec \mid \Xvec, \zvec]\Big)}_{\text{(III) Extra information in } \zvec}.
\label{eq:decomp}
\end{align}

Since $\E[\yvec \mid \Xvec] = \E\big[\E[\yvec \mid \Xvec, \zvec] \,\big|\, \Xvec\big]$ by the law of iterated expectations, term (III) has conditional mean zero given $\Xvec$. This implies that the cross-product between terms (II) and (III) vanishes:
\[
\E\Big[\text{(II)} \cdot \text{(III)}^{\top}\Big]\,=\,\E\Big[\text{(II)} \cdot \E\big[\text{(III)}^{\top} \mid \Xvec\big]\Big]\,=\,0.
\]
Therefore, the variance gain decomposes additively into two non-negative components as stated in Corollary~\ref{cor:variance_decomposition}.

\subsection{An Example of Failure of Dominance}
\label{apdx: failure of dominance}

Generally, if $e(\Xvec, \zvec)$ is carefully constructed so that it is ``advantageous'' to ignore the data points with missing label, one can show the following negative result.

\begin{example}
{ \normalfont (\textbf{Failure of  Dominance})}
    Consider a setting with canonical GLMs such that the GLM density is correctly specified.
    Assume that $k =1$ and 
    $b(\theta) = \frac{1}{2}\theta^2$.
    In this case, we write $\Xvec = \xvec^{\top} \in \R^{1 \times d}$.
    We further assume that $\xvec$ is generated through a mixture distribution and there is $\tilde{w}$ such that $\tilde{w} = 1$ with probability $p$ and zero otherwise. 
    Also, $\text{\upshape E}[\xvec\xvec^{\top} \,|\, \tilde{w}] = \Ivec$ for all $\tilde{w} \in \{0,1\}$.
    Conditional on $\tilde{w}$, the supports of $\xvec$ are disjoint and we denote them by $\mX_{w}$.
    We assume that $\text{\upshape E}[y\,|\, \xvec] = \xvec^{\top}\betavec^*$, and $\text{\upshape Var}(y\,|\, \xvec) = \sigma^2_{w}$ whenever $\xvec \in \mX_w$. 
    Set $z = y - \xvec^{\top}\betavec$ so $\gvec^*(\xvec, z) = y$.
    $e(\xvec,z) = 1$ if $\xvec \in \mX_{1}$ and $e(\xvec,z) = \kappa$ if $\xvec \in \mX_{0}$. Conditional on $\tilde{w}$, $w$ is independent of $(y, z)$. In the setup, the GAI estimator leads to the same asymptotic variance as the case where $y$ is fully observed, i.e., $(p\sigma^2_1 + (1-p)\sigma^2_0)\Ivec$.
    It is easy to see that $\hat{\betavec}^{\text{\upshape{P}}}$ obtained under the score function $w_i\nabla_{\betavec}\ell(\xvec_i, \yvec_i; \betavec)$ is asymptotically normal with covariance given by
\begin{align*}
    &\text{\upshape AVar}\big(\hat{\betavec}^{\text{\upshape{P}}}\big)\,=\,
    \frac{p\sigma^2_1 + (1-p)\sigma^2_0\kappa}{[p+(1-p)\kappa]^2}\Ivec 
\,\prec\,
    \text{\upshape AVar}\big(\hat{\betavec}^{\text{\upshape{GAI}}}\big)\,=\,(p\sigma^2_1 + (1-p)\sigma^2_0)\Ivec, 
\end{align*}
    which holds when $\sigma_1, \kappa \downarrow 0$.
\end{example}

\section{Supporting Results for Section \ref{sec:covshift}}
\label{apdx: supporting covshift}

This appendix collects the proofs deferred from Section~\ref{sec:covshift} and from \rev{Section}~\ref{apdx: covariate shift theory}: the proof of the combined known-density-ratio result (Theorem~\ref{thm:normality_shift}) and the proof of the doubly robust result stated in the main body (Theorem~\ref{thm:normality_dr_body}).

\subsection{Proof of Theorem \ref{thm:normality_shift}}
\label{apdx: proof normality shift}
\label{apdx: proof dominance shift}

Theorem~\ref{thm:normality_shift} combines two claims: the asymptotic normality of the weighted GAI estimator $\hat{\betavec}_{\text{\upshape w}}$ and its dominance over the weighted primary-only estimator $\hat{\betavec}^{\text{\upshape P},\text{\upshape w}}$. We prove the normality claim first and then the dominance claim.

\medskip
\noindent\emph{Asymptotic normality of $\hat{\betavec}_{\text{\upshape w}}$.} The proof follows exactly the structure of the proof of Theorem~\ref{thm:normality} in Section~\ref{apdx: proof of normality}: we first establish the counterparts of Lemmas~\ref{lem: orthogonality}--\ref{lem: consistency} for the weighted score \eqref{eq:score_shift}, and then complete the argument with the same Donsker/Taylor step. For each step we indicate explicitly whether the existing argument applies---in which case we cite it and state the substitution---and we write out in full only the steps that are genuinely new.

As in the proof of Theorem~\ref{thm:normality}, we assume without loss of generality that the dataset is split into $I$ and $I^c$, each of size $n$; the nuisance $\hat{\gvec}$ is obtained on $I^c$ and $\hat{\betavec}_{\text{\upshape w}}$ is estimated on $I$. Here neither $\rho$ nor $r(\Xvec)$ is estimated, so $\hat{\gvec}$ is the only nuisance trained on $I^c$. We use the same shorthand $\Pe_n$ and $\Gn$ as in Section~\ref{apdx: proof of normality}, where now the population expectation is taken under the source data-generating process (including the labeling indicator $w$) and is written $\E_{\text{\upshape S}}[\,\cdot\,]$, and the same constant $C$ such that $\lVert \Xvec \rVert, \lVert \Xvec \rVert_F \leq C$ for all $\Xvec \in \Breve{\mX}$; by Assumption~\ref{ass:covariate_shift}(ii), the same bound applies under the target population. Mirroring the definition of $\tauvec(\Xi; e, \gvec)$ at the start of Section~\ref{apdx: proof of normality}, we define the $\betavec$-free part of the weighted score,
\begin{align*}
    \tauvec_{\text{\upshape w}}(\Xi; \gvec)\,:=\,r(\Xvec)\, \Xvec^{\top}\left[\gvec(\Xvec,\zvec) - \frac{w}{\rho}\big(\gvec(\Xvec,\zvec) - \yvec\big)\right]\,=\,r(\Xvec)\, \tauvec(\Xi; \rho, \gvec),
\end{align*}
so that $\Psivec_{\text{\upshape w}}(\Xi; \gvec; \betavec) = r(\Xvec)\,\Xvec^{\top}\nabla b(\Xvec\betavec) - \tauvec_{\text{\upshape w}}(\Xi; \gvec)$ for all $\betavec \in \mB$.

The first lemma is the counterpart of Lemma~\ref{lem: orthogonality}.

\begin{lemma}[\normalfont \textbf{Score Validity and Neyman Orthogonality under Covariate Shift}]
\label{lem: orthogonality shift} It holds that
$$\E_{\text{\upshape S}}[\Psivec_{\text{\upshape w}}\big(\gvec^*; \betavec^*_{\text{\upshape T}}\big)]\,=\,0 ~~\text{and}~~
\Pe_n \tauvec_{\text{\upshape w}}(\hat{\gvec}) - \Pe_n \tauvec_{\text{\upshape w}}(\gvec^*)\,=\,o_P(n^{-1/2}).
$$
\end{lemma}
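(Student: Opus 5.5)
The plan is to mirror the proof of Lemma~\ref{lem: orthogonality} with the known propensity $\rho$ and the known multiplier $r(\Xvec)$, using the change-of-measure identity \eqref{eq:change_of_measure} for score validity. Because only $\hat{\gvec}$ is estimated, the orthogonality remainder should collapse to a single term linear in $\Delta_{\gvec}$.

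\emph{Score validity.} I would first condition on $(\Xvec,\zvec)$ and use Assumption~\ref{ass:covariate_shift}(iii): $w$ is independent of $(\Xvec,\yvec,\zvec)$ with $\E[w/\rho]=1$. This gives
\[
\E_{\text{\upshape S}}\big[\tauvec_{\text{\upshape w}}(\Xi;\gvec^*)\big]=\E_{\text{\upshape S}}\big[r(\Xvec)\Xvec^{\top}\big(\gvec^* - (\gvec^*-\E[\yvec\mid\Xvec,\zvec])\big)\big]=\E_{\text{\upshape S}}\big[r(\Xvec)\Xvec^{\top}\yvec\big].
\]
Integrability is not an issue: $r\le M$, $\Xvec$ is bounded, and $\E_{\text{\upshape S}}\lVert\yvec\rVert^2<\infty$. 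Hence
\[
\E_{\text{\upshape S}}\big[\Psivec_{\text{\upshape w}}(\gvec^*;\betavec^*_{\text{\upshape T}})\big]=\E_{\text{\upshape S}}\big[r(\Xvec)\Xvec^{\top}(\nabla b(\Xvec\betavec^*_{\text{\upshape T}})-\yvec)\big].
\]
By \eqref{eq:change_of_measure} this equals the left side of \eqref{eq:foc_shift}, which is zero.

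\emph{Orthogonality.} Next I would write
\[
\sqrt{n}\big(\Pe_n\tauvec_{\text{\upshape w}}(\hat{\gvec})-\Pe_n\tauvec_{\text{\upshape w}}(\gvec^*)\big)=\sqrt{n}\,\Pe_n\big[r(\Xvec)(1-w/\rho)\Xvec^{\top}\Delta_{\gvec}\big].
\]
This is exactly term (I) of Lemma~\ref{lem: orthogonality}, multiplied by the bounded weight $r(\Xvec)$; terms (II) and (III) vanish identically because $\hat e\equiv e^*\equiv\rho$. Conditional on $I^c$, $\hat{\gvec}$ is fixed. Since $w$ is independent of everything else, each summand has conditional mean zero, as in \eqref{eqn:I-expectation}, and the summands are i.i.d.\ over $I$.

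By conditional Chebyshev, as in \eqref{eqn:I}, the probability that the norm exceeds $\epsilon$ given $I^c$ is at most
\[
\epsilon^{-2}\E_{\text{\upshape S}}\big[r^2(1-w/\rho)^2\lVert\Xvec^{\top}\Delta_{\gvec}\rVert^2\mid I^c\big]\le \epsilon^{-2}M^2\max\{1,(1/\rho-1)^2\}\,C^2\lVert\Delta_{\gvec}\rVert_{Q_{\text{\upshape S}},2}^2.
\]
This bound tends to zero under the assumed $L_2$-consistency of $\hat{\gvec}$. By bounded convergence (uniform integrability of conditional probabilities), the unconditional probability also tends to zero, giving the $o_P(n^{-1/2})$ claim.

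The main subtlety I expect is bookkeeping rather than difficulty. The first point is to check that the nuisance $L_2$ norm is taken under the source law and with respect to the fold $I^c$, so that conditional independence of the $I$-sample from $\hat{\gvec}$ holds. The second point is to confirm that the $\rho=1$ case is harmless: there $1-w/\rho\equiv0$, so the term is identically zero. No rate condition enters, consistent with Theorem~\ref{thm:normality_shift} requiring only $L_2$-consistency.
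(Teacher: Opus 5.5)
Your proposal is correct and matches the paper's proof essentially step for step. Score validity goes through iterated expectations with $\E[w/\rho]=1$, followed by the change-of-measure identity and the target first-order condition. For orthogonality, only the $r(\Xvec)$-weighted analog of term (I) survives, since terms (II) and (III) vanish when $\hat e \equiv e^* \equiv \rho$. That term is then controlled by its conditional zero mean, conditional Chebyshev, $L_2$-consistency of $\hat{\gvec}$, and bounded convergence.

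The differences are cosmetic. You apply the change of measure once to the combined score rather than separately to its two pieces. Your constant $M^2\max\{1,(1/\rho-1)^2\}C^2$ is a slightly sharper version of the paper's $4M^2C^2/\rho^2$.
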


\textit{Proof of Lemma~\ref{lem: orthogonality shift}.} For the first claim, by the definition of $\tauvec_{\text{\upshape w}}$,
\begin{align*}
    \E_{\text{\upshape S}}[\tauvec_{\text{\upshape w}}(\gvec^*)]\,&=\,\E_{\text{\upshape S}}\left[r(\Xvec)\, \Xvec^{\top}\left[\gvec^*(\Xvec,\zvec) - \frac{w}{\rho}\big(\gvec^*(\Xvec,\zvec) - \yvec\big)\right]\right]\\
\,&\stackrel{\text{(a)}}{=}\,\E_{\text{\upshape S}}\left[r(\Xvec)\, \Xvec^{\top}\left[\gvec^*(\Xvec,\zvec) - \frac{\E[w \mid \Xvec,\zvec]}{\rho}\big(\gvec^*(\Xvec,\zvec) - \E[\yvec \mid \Xvec,\zvec]\big)\right]\right]\\
\,&\stackrel{\text{(b)}}{=}\,\E_{\text{\upshape S}}\left[r(\Xvec)\, \Xvec^{\top}\E[\yvec \mid \Xvec,\zvec]\right]\,=\,\E_{\text{\upshape S}}\left[r(\Xvec)\, \Xvec^{\top}\yvec\right]\,\stackrel{\text{(c)}}{=}\,\E_{\text{\upshape T}}\left[\Xvec^{\top}\yvec\right],
\end{align*}
where (a) follows from the law of iterated expectations and the independence of $w$ and $(\Xvec, \yvec, \zvec)$ under Assumption~\ref{ass:covariate_shift}(iii) (note that $r(\Xvec)$ is a known function of $\Xvec$ and passes through the conditional expectation), (b) follows from $\E[w \mid \Xvec, \zvec] = \rho$ and $\gvec^*(\Xvec,\zvec) = \E[\yvec \mid \Xvec, \zvec]$, and (c) is the change-of-measure identity \eqref{eq:change_of_measure}. Steps (a)--(b) are identical to steps (a)--(b) in the proof of Lemma~\ref{lem: orthogonality}, with $e^*(\Xvec, \zvec) \equiv \rho$ and the bounded known factor $r(\Xvec)$ carried through; step (c) is new. Similarly, since $\Xvec^{\top}\nabla b\big(\Xvec\betavec^*_{\text{\upshape T}}\big)$ is a function of $\Xvec$ alone, \eqref{eq:change_of_measure} yields
$$
\E_{\text{\upshape S}}\left[r(\Xvec)\, \Xvec^{\top}\nabla b\big(\Xvec\betavec^*_{\text{\upshape T}}\big)\right]\,=\,\E_{\text{\upshape T}}\left[\Xvec^{\top}\nabla b\big(\Xvec\betavec^*_{\text{\upshape T}}\big)\right].
$$
Combining the two displays,
$$
\E_{\text{\upshape S}}[\Psivec_{\text{\upshape w}}\big(\gvec^*; \betavec^*_{\text{\upshape T}}\big)]\,=\,\E_{\text{\upshape T}}\left[\Xvec^{\top}\left(\nabla b\big(\Xvec\betavec^*_{\text{\upshape T}}\big) - \yvec\right)\right]\,=\,0
$$
by the target first-order condition \eqref{eq:foc_shift}. The two change-of-measure steps---which convert the weighted source moments into target moments, so that the target first-order condition rather than the source one closes the argument---are the only new ingredients relative to the proof of Lemma~\ref{lem: orthogonality}.

For the second claim, since the labeling probability is known and not estimated, we have $\hat{e} \equiv e^* \equiv \rho$, and therefore
\begin{align*}
    \sqrt{n}\left(\Pe_n \tauvec_{\text{\upshape w}}(\hat{\gvec}) - \Pe_n \tauvec_{\text{\upshape w}}(\gvec^*)\right)\,=\,\underbrace{\sqrt{n}\,\Pe_n\left[r(\Xvec)\left(1 - \frac{w}{\rho}\right)\Xvec^{\top}\Delta_{\gvec}\right]}_{\text{(I}_{\text{\upshape w}}\text{)}}.
\end{align*}
This is the exact analog of term (I) in the proof of Lemma~\ref{lem: orthogonality}, weighted by $r(\Xvec)$; in the notation of that proof, the analogs of terms (II) and (III)---both of which carry the factor $\Delta_e$---are \emph{identically zero} here because $\hat{e} \equiv e^* \equiv \rho$. This observation, although trivial, is what removes the need for any rate or sup-norm condition on the nuisances: only (I$_{\text{\upshape w}}$) must be controlled, which requires only the $L_2$-consistency of $\hat{\gvec}$.

We control (I$_{\text{\upshape w}}$) by the same argument used for term (I) there. First, conditional on $I^c$, it has zero mean: exactly as in \eqref{eqn:I-expectation}, $\E[w \mid \Xvec, \zvec, I^c] = \rho$ by Assumption~\ref{ass:covariate_shift}(iii) and the independence of $I$ and $I^c$, so
$$
\E\left[r(\Xvec)\left(1 - \frac{w}{\rho}\right)\Xvec^{\top}\Delta_{\gvec}\,\bigg|\, I^c\right]\,=\,\E\left[r(\Xvec)\left(1 - \frac{\E[w \mid \Xvec, \zvec, I^c]}{\rho}\right)\Xvec^{\top}\Delta_{\gvec}\,\bigg|\, I^c\right]\,=\,0.
$$
Second, by the same conditional Markov/variance computation as in \eqref{eqn:I},
$$
\Pe\left(\lVert \text{(I}_{\text{\upshape w}}\text{)} \rVert\,\geq\,\epsilon \,\big|\, I^c\right)\,\leq\,\E\left[\left\lVert r(\Xvec)\left(1 - \frac{w}{\rho}\right)\Xvec^{\top}\Delta_{\gvec}\right\rVert^2 \,\bigg|\, I^c\right]\bigg/\epsilon^2.
$$
The only change relative to the bound following \eqref{eqn:I} is the constant: since $0 \leq r(\Xvec) \leq M$ by Assumption~\ref{ass:covariate_shift}(ii) and $|1 - w/\rho| \leq 1 + 1/\rho \leq 2/\rho$ for $\rho \in (0,1]$,
$$
\left\lVert r(\Xvec)\left(1 - \frac{w}{\rho}\right)\Xvec^{\top}\Delta_{\gvec}\right\rVert_{Q_{\text{\upshape S}},2}^2\,\leq\,\frac{4M^2}{\rho^2}\, C^2\, \left\lVert \Delta_{\gvec}\right\rVert_{Q_{\text{\upshape S}},2}^2 ~\rightarrow~ 0,
$$
where the constant $4M^2/\rho^2$ replaces $4/\kappa^2$ in the proof of Lemma~\ref{lem: orthogonality}, and the convergence uses only the hypothesis $\lVert \Delta_{\gvec} \rVert_{Q_{\text{\upshape S}},2} = o(1)$ of Theorem~\ref{thm:normality_shift} in place of Assumption~\ref{ass:ml_rate}. The conclusion that (I$_{\text{\upshape w}}$) is $o_P(1)$ then follows by the same uniform-integrability argument as in the proof of Lemma~\ref{lem: orthogonality}. This concludes the proof of the lemma. \hfill $\blacksquare$

The next lemma is the counterpart of Lemma~\ref{lem: empirical-scores}.

\begin{lemma}[\normalfont \textbf{Rates of Empirical Scores under Covariate Shift}]
\label{lem: empirical-scores shift} It holds that
$$
\big\lVert \Pe_n \Psivec_{\text{\upshape w}}\big(\gvec^*; \hat{\betavec}_{\text{\upshape w}}\big) \big\rVert,\, \inf_{\betavec \in \mB}\big\lVert \Pe_n \Psivec_{\text{\upshape w}}\big(\hat{\gvec}; \betavec\big) \big\rVert\,=\,o_P(1/\sqrt{n}).
$$
\end{lemma}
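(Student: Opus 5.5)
The plan mirrors the proof of Lemma~\ref{lem: empirical-scores}, with the weighted surrogate loss
$$\bar{\ell}_{\text{\upshape w}}(\Xi; \gvec; \betavec) := r(\Xvec)\Big\{ b(\Xvec\betavec) - \Big[\gvec(\Xvec,\zvec) - \tfrac{w}{\rho}\big(\gvec(\Xvec,\zvec) - \yvec\big)\Big]^{\top}\Xvec\betavec\Big\}.$$
By construction $\nabla_{\betavec}\bar{\ell}_{\text{\upshape w}} = \Psivec_{\text{\upshape w}}$ and $\nabla^2_{\betavec}\bar{\ell}_{\text{\upshape w}} = r(\Xvec)\,\Xvec^{\top}\nabla^2 b(\Xvec\betavec)\Xvec \succeq 0$. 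The factor $r(\Xvec)\geq 0$ therefore preserves convexity.

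\textbf{Population problem.} Taking expectations under $Q_{\text{\upshape S}}$ and using the change of measure \eqref{eq:change_of_measure}, the population Hessian is $\E_{\text{\upshape T}}[\Xvec^{\top}\nabla^2 b(\Xvec\betavec)\Xvec]$. I would show this is strictly positive definite by repeating the compactness argument from Lemma~\ref{lem: empirical-scores}. That argument gives $\inf_{\Xvec\in\Breve{\mX}}\lambda_{\min}(\nabla^2 b(\Xvec\betavec))>0$. Combined with $\E_{\text{\upshape T}}[\Xvec^{\top}\Xvec]\succ 0$ from Assumption~\ref{ass:covariate_shift}(iv), and with $\supp(Q_{\text{\upshape T},\Xvec})\subseteq\Breve{\mX}$, this yields strict positive definiteness. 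By the first part of Lemma~\ref{lem: orthogonality shift}, the gradient vanishes at $\betavec^*_{\text{\upshape T}}$. Hence $\E_{\text{\upshape S}}[\bar{\ell}_{\text{\upshape w}}(\gvec^*;\betavec)]$ is strictly convex with unique minimizer $\betavec^*_{\text{\upshape T}}$.

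\textbf{Empirical minimizer.} The weak law of large numbers gives pointwise convergence $\Pe_n\bar{\ell}_{\text{\upshape w}}(\gvec^*;\betavec)\to_P\E_{\text{\upshape S}}[\bar{\ell}_{\text{\upshape w}}(\gvec^*;\betavec)]$. The required integrability holds because $r\leq M$, $\Xvec$ is bounded on $\Breve{\mX}$, and $\yvec$ and $\gvec^*$ are square-integrable. Theorem 2.7 of \cite{newey1994large} then produces minimizers $\check{\betavec}_n$ of $\Pe_n\bar{\ell}_{\text{\upshape w}}(\gvec^*;\cdot)$ converging to $\betavec^*_{\text{\upshape T}}$. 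Since $\betavec^*_{\text{\upshape T}}$ lies in the open set $\mB$, each $\check{\betavec}_n$ is eventually interior, so $\Pe_n\Psivec_{\text{\upshape w}}(\gvec^*;\check{\betavec}_n)=\bm{0}$ with probability approaching one.

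\textbf{Transfer to the estimated nuisance.} The $\betavec$-dependent part $r(\Xvec)\Xvec^{\top}\nabla b(\Xvec\betavec)$ does not involve $\gvec$. Hence, for every $\betavec$,
$$\Pe_n\Psivec_{\text{\upshape w}}(\hat{\gvec};\betavec)-\Pe_n\Psivec_{\text{\upshape w}}(\gvec^*;\betavec) = -\big(\Pe_n\tauvec_{\text{\upshape w}}(\hat{\gvec})-\Pe_n\tauvec_{\text{\upshape w}}(\gvec^*)\big),$$
and this difference is $o_P(n^{-1/2})$ uniformly in $\betavec$ by Lemma~\ref{lem: orthogonality shift}. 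Evaluating at $\check{\betavec}_n$ gives $\inf_{\betavec}\lVert\Pe_n\Psivec_{\text{\upshape w}}(\hat{\gvec};\betavec)\rVert=o_P(n^{-1/2})$. The defining near-minimization property of $\hat{\betavec}_{\text{\upshape w}}$ then gives $\Pe_n\Psivec_{\text{\upshape w}}(\hat{\gvec};\hat{\betavec}_{\text{\upshape w}})=o_P(n^{-1/2})$. Applying the same identity once more, now at $\hat{\betavec}_{\text{\upshape w}}$, yields the first claim.

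\textbf{Main obstacle.} The only step that is not a pure substitution is verifying strict convexity and identification under the weighting. The weight $r(\Xvec)$ may vanish on parts of the source support, so positivity cannot come from $\E_{\text{\upshape S}}[\Xvec^{\top}\Xvec]$. It must come from $\E_{\text{\upshape T}}[\Xvec^{\top}\Xvec]\succ0$ via the change of measure. I would state this step explicitly. Everything else is carried over from Lemma~\ref{lem: empirical-scores}, with constants enlarged by $M$.
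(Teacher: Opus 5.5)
Your proposal is correct and follows the paper's proof essentially step for step. It uses the same weighted surrogate $\bar{\ell}_{\text{\upshape w}} = r(\Xvec)\,b(\Xvec\betavec) - \tauvec_{\text{\upshape w}}^{\top}\betavec$, strict convexity via the change of measure and $\E_{\text{\upshape T}}[\Xvec^{\top}\Xvec]\succ 0$, Theorem 2.7 of Newey--McFadden for the true-nuisance minimizer, and the $\betavec$-free transfer through Lemma~\ref{lem: orthogonality shift}. Your statement that the first-order condition at $\check{\betavec}_n$ holds with probability approaching one, rather than with probability one, is if anything slightly more careful than the paper's wording.
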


\textit{Proof of Lemma~\ref{lem: empirical-scores shift}.} Mirroring $\bar{\ell}$ in the proof of Lemma~\ref{lem: empirical-scores}, define
$$
\bar{\ell}_{\text{\upshape w}}(\Xi; \gvec; \betavec)\,:=\,r(\Xvec)\, b(\Xvec\betavec) - \tauvec_{\text{\upshape w}}(\Xi; \gvec)^{\top}\betavec,
$$
written here through $\tauvec_{\text{\upshape w}}$ so that the gradient identity is immediate: $\nabla_{\betavec}\bar{\ell}_{\text{\upshape w}}(\Xi; \gvec; \betavec) = \Psivec_{\text{\upshape w}}(\Xi; \gvec; \betavec)$ and $\nabla^2_{\betavec}\bar{\ell}_{\text{\upshape w}}(\Xi; \gvec; \betavec) = r(\Xvec)\, \Xvec^{\top}\nabla^2 b(\Xvec\betavec)\Xvec \succeq 0$. The only new observation needed for convexity is that the known weight satisfies $r(\Xvec) \geq 0$, so the weighting preserves the positive semidefiniteness of the Hessian; hence $\bar{\ell}_{\text{\upshape w}}$ is convex in $\betavec$, and so is $\E_{\text{\upshape S}}[\bar{\ell}_{\text{\upshape w}}(\gvec^*; \betavec)]$. By Lemma~\ref{lem: orthogonality shift},
$$
\nabla_{\betavec} \E_{\text{\upshape S}}[\bar{\ell}_{\text{\upshape w}}\big(\gvec^*; \betavec^*_{\text{\upshape T}}\big)]\,=\,\E_{\text{\upshape S}}[\Psivec_{\text{\upshape w}}\big(\gvec^*; \betavec^*_{\text{\upshape T}}\big)]\,=\,0.
$$
The strict convexity of the population objective is the second genuinely new step: for any fixed $\betavec \in \mB$, since $\Xvec^{\top}\nabla^2 b(\Xvec\betavec)\Xvec$ is a function of $\Xvec$ alone, the change-of-measure identity \eqref{eq:change_of_measure} gives
$$
\nabla^2_{\betavec} \E_{\text{\upshape S}}[\bar{\ell}_{\text{\upshape w}}(\gvec^*; \betavec)]\,=\,\E_{\text{\upshape S}}\left[r(\Xvec)\, \Xvec^{\top}\nabla^2 b(\Xvec\betavec)\Xvec\right]\,=\,\E_{\text{\upshape T}}\left[\Xvec^{\top}\nabla^2 b(\Xvec\betavec)\Xvec\right]\,\succeq\,\inf_{\Xvec \in \Breve{\mX}}\lambda_{\min}\big(\nabla^2 b(\Xvec\betavec)\big)\cdot \E_{\text{\upshape T}}\left[\Xvec^{\top}\Xvec\right]\,\succ\,0,
$$
where $\inf_{\Xvec \in \Breve{\mX}}\lambda_{\min}\big(\nabla^2 b(\Xvec\betavec)\big) > 0$ by exactly the compactness/contradiction argument in the proof of Lemma~\ref{lem: empirical-scores}, which we cite rather than repeat, and $\E_{\text{\upshape T}}[\Xvec^{\top}\Xvec] \succ 0$ by Assumption~\ref{ass:covariate_shift}(iv). The positive-definiteness argument thus runs through the \emph{target} second-moment matrix $\E_{\text{\upshape T}}[\Xvec^{\top}\Xvec]$ rather than $\E_{\text{\upshape S}}[\Xvec^{\top}\Xvec]$. Consequently, $\E_{\text{\upshape S}}[\bar{\ell}_{\text{\upshape w}}(\gvec^*; \betavec)]$ is strictly convex and $\betavec^*_{\text{\upshape T}}$ is its unique minimizer.

The remainder of the proof is unchanged from the proof of Lemma~\ref{lem: empirical-scores}. Since $0 \leq r(\Xvec) \leq M$, the compactness of $\Breve{\mX}$ and the continuity of $b(\cdot)$ imply, by the weak law of large numbers, $\Pe_n \bar{\ell}_{\text{\upshape w}}(\gvec^*; \betavec) \to_P \E_{\text{\upshape S}}[\bar{\ell}_{\text{\upshape w}}(\gvec^*; \betavec)]$; the bounded weight does not affect the moment requirements. By Theorem 2.7 of \cite{newey1994large}, applied exactly as in the proof of Lemma~\ref{lem: empirical-scores}, there is a random sequence $\{\check{\betavec}_n\}_{n=1}^{\infty}$ that solves $\min_{\betavec \in \mB}\Pe_n \bar{\ell}_{\text{\upshape w}}(\gvec^*; \betavec)$ and converges to $\betavec^*_{\text{\upshape T}}$ with probability one; in particular, $\Pe_n \Psivec_{\text{\upshape w}}(\gvec^*; \check{\betavec}_n) = \bm{0}$. Noting that $\Psivec_{\text{\upshape w}}(\hat{\gvec}; \betavec) - \Psivec_{\text{\upshape w}}(\gvec^*; \betavec) = \tauvec_{\text{\upshape w}}(\gvec^*) - \tauvec_{\text{\upshape w}}(\hat{\gvec})$ does not depend on $\betavec$, Lemma~\ref{lem: orthogonality shift} yields the exact counterpart of \eqref{eqn:transform}:
\begin{align}
\label{eqn:transform-shift}
    \sqrt{n}\Pe_n\Psivec_{\text{\upshape w}}(\hat{\gvec}; \check{\betavec}_n)\,=\,\sqrt{n}\Pe_n\Psivec_{\text{\upshape w}}(\gvec^*; \check{\betavec}_n) + \sqrt{n}\left(\Pe_n\tauvec_{\text{\upshape w}}(\gvec^*) - \Pe_n\tauvec_{\text{\upshape w}}(\hat{\gvec})\right)\,=\,o_P(1).
\end{align}
Hence $\inf_{\betavec \in \mB}\big\lVert \Pe_n\Psivec_{\text{\upshape w}}(\hat{\gvec}; \betavec)\big\rVert = o_P(1/\sqrt{n})$, which implies $\Pe_n\Psivec_{\text{\upshape w}}\big(\hat{\gvec}; \hat{\betavec}_{\text{\upshape w}}\big) = o_P(1/\sqrt{n})$ by the definition of $\hat{\betavec}_{\text{\upshape w}}$. Using the argument in \eqref{eqn:transform-shift} again, $\Pe_n\Psivec_{\text{\upshape w}}\big(\gvec^*; \hat{\betavec}_{\text{\upshape w}}\big) = o_P(1/\sqrt{n})$. This concludes the proof of the lemma. \hfill $\blacksquare$

The final lemma is the counterpart of Lemma~\ref{lem: consistency}.

\begin{lemma}[\normalfont \textbf{Consistency under Covariate Shift}]
\label{lem: consistency shift} $\hat{\betavec}_{\text{\upshape w}} \to_P \betavec^*_{\text{\upshape T}}$.
\end{lemma}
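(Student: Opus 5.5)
The proof will mirror that of Lemma~\ref{lem: consistency}, with the weighted objective $\bar{\ell}_{\text{\upshape w}}(\Xi; \gvec^*; \betavec) = r(\Xvec)\, b(\Xvec\betavec) - \tauvec_{\text{\upshape w}}(\Xi; \gvec^*)^{\top}\betavec$ from the proof of Lemma~\ref{lem: empirical-scores shift} in place of $\bar{\ell}$, and with $\betavec^*_{\text{\upshape T}}$ in place of $\betavec^*$. The plan has three steps. First, obtain a quadratic separation of the population objective on an annulus. Second, transfer this separation to the empirical objective uniformly, using convexity. Third, turn it into a lower bound on the empirical score norm outside a small ball. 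Combined with Lemma~\ref{lem: empirical-scores shift}, this forces $\hat{\betavec}_{\text{\upshape w}}$ into that ball.

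\textbf{Step 1: population curvature.} Fix $\epsilon_1 < \epsilon_2$ with $\overline{B(\betavec^*_{\text{\upshape T}}, \epsilon_2)} \subset \mB$. By Lemma~\ref{lem: orthogonality shift}, the gradient of $\E_{\text{\upshape S}}[\bar{\ell}_{\text{\upshape w}}(\gvec^*; \betavec)]$ vanishes at $\betavec^*_{\text{\upshape T}}$. A second-order Taylor expansion with Hessian $\E_{\text{\upshape S}}[r(\Xvec)\Xvec^{\top}\nabla^2 b(\Xvec\tilde{\betavec})\Xvec]$ is then available. Using the change of measure \eqref{eq:change_of_measure}, this Hessian equals $\E_{\text{\upshape T}}[\Xvec^{\top}\nabla^2 b(\Xvec\tilde{\betavec})\Xvec]$, which is bounded below by $c\,\E_{\text{\upshape T}}[\Xvec^{\top}\Xvec] \succ 0$. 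Here $c := \inf\lambda_{\min}(\nabla^2 b(\Xvec\betavec))$ over $\Xvec\in\Breve{\mX}$ and $\betavec\in\overline{B(\betavec^*_{\text{\upshape T}},\epsilon_2)}$. That $c>0$ follows from the same subsequence/compactness contradiction used for inequality (a) in the proof of Lemma~\ref{lem: consistency}; I will cite that argument rather than repeat it. Together with Assumption~\ref{ass:covariate_shift}(iv), this gives $\E_{\text{\upshape S}}[\bar{\ell}_{\text{\upshape w}}(\betavec)] \geq \E_{\text{\upshape S}}[\bar{\ell}_{\text{\upshape w}}(\betavec^*_{\text{\upshape T}})] + c'\epsilon_1^2/2$ on the annulus $\overline{B(\betavec^*_{\text{\upshape T}},\epsilon_2)}\setminus B(\betavec^*_{\text{\upshape T}},\epsilon_1)$, for some $c'>0$.

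\textbf{Step 2: uniform empirical separation.} The empirical objective $\Pe_n\bar{\ell}_{\text{\upshape w}}(\gvec^*;\cdot)$ is convex because $r\ge 0$. It converges pointwise by the weak law of large numbers, since $r\le M$ and $\Breve{\mX}$ is compact; the moment requirements are the same ones already used in Lemma~\ref{lem: empirical-scores shift}. Pollard's convexity lemma then upgrades this to uniform convergence on the compact ball. Hence, with probability approaching one, the empirical objective exceeds its value at $\betavec^*_{\text{\upshape T}}$ by at least $c'\epsilon_1^2/4$ on the annulus. The subgradient inequality then yields the directional bound \eqref{eqn:gradient-bound} for $\Pe_n\Psivec_{\text{\upshape w}}(\gvec^*;\betavec)$ there.

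\textbf{Step 3: extension outside the ball.} For points outside $\overline{B(\betavec^*_{\text{\upshape T}},\epsilon_2)}$, I will rerun the ray argument (c)--(e) from the proof of Lemma~\ref{lem: consistency}. This uses only positive semidefiniteness of the empirical Hessian $\Pe_n r(\Xvec)\Xvec^{\top}\nabla^2 b\,\Xvec$, which again holds because $r\ge0$. The conclusion is that $\lVert\Pe_n\Psivec_{\text{\upshape w}}(\gvec^*;\betavec)\rVert$ is bounded below by a positive constant on all of $\mB\setminus B(\betavec^*_{\text{\upshape T}},\epsilon_1)$, up to the term $\Pe_n\Psivec_{\text{\upshape w}}(\gvec^*;\betavec^*_{\text{\upshape T}})$. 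That term is $o_P(1)$ by the law of large numbers, since its mean is zero by Lemma~\ref{lem: orthogonality shift}. Lemma~\ref{lem: empirical-scores shift} gives $\lVert\Pe_n\Psivec_{\text{\upshape w}}(\gvec^*;\hat{\betavec}_{\text{\upshape w}})\rVert = o_P(n^{-1/2})$, so $\hat{\betavec}_{\text{\upshape w}}\in B(\betavec^*_{\text{\upshape T}},\epsilon_1)$ with probability tending to one.

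The main obstacle is Step 1. Curvature must come from the \emph{target} second-moment matrix: $r$ may vanish on parts of the source support, so $\E_{\text{\upshape S}}[\Xvec^{\top}\Xvec]\succ0$ alone is insufficient. The change of measure combined with the uniform eigenvalue bound is exactly what resolves this. The remaining steps are verbatim transfers, where the only substitution is that the weights are nonnegative and bounded.
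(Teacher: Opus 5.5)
Your proposal is correct and follows the paper's proof essentially step for step. Both use the quadratic-growth bound through the change of measure to $\E_{\text{\upshape T}}[\Xvec^{\top}\Xvec]$, uniform convergence via Pollard's convexity lemma, the subgradient bound \eqref{eqn:gradient-bound}, the ray extension relying only on $r(\Xvec)\geq 0$, and the conclusion from Lemma~\ref{lem: empirical-scores shift}. One minor point: along the ray the term $\Pe_n\Psivec_{\text{\upshape w}}(\gvec^*;\betavec^*_{\text{\upshape T}})$ cancels exactly, because the unit directions from $\betavec^*_{\text{\upshape T}}$ to $\betavec$ and to $\tilde{\betavec}$ coincide, so your law-of-large-numbers step is harmless but unnecessary.
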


\textit{Proof of Lemma~\ref{lem: consistency shift}.} The proof of Lemma~\ref{lem: consistency} carries over with $\bar{\ell}$, $\Psivec$, and $\betavec^*$ replaced by $\bar{\ell}_{\text{\upshape w}}$, $\Psivec_{\text{\upshape w}}$, and $\betavec^*_{\text{\upshape T}}$ throughout; we verify the two steps where the weighting enters and cite the rest. Fix arbitrary $\epsilon_1 < \epsilon_2$ such that $\overline{B(\betavec^*_{\text{\upshape T}}, \epsilon_2)} \subset \mB$. First, the uniform convergence step: by \cite{pollard1991asymptotics}, the convexity of $\bar{\ell}_{\text{\upshape w}}(\gvec^*; \cdot)$ established in the proof of Lemma~\ref{lem: empirical-scores shift} and the compactness of $\overline{B(\betavec^*_{\text{\upshape T}}, \epsilon_2)}$ imply that $\Pe_n\bar{\ell}_{\text{\upshape w}}(\gvec^*; \betavec)$ converges to $\E_{\text{\upshape S}}[\bar{\ell}_{\text{\upshape w}}(\gvec^*; \betavec)]$ in probability uniformly on $\overline{B(\betavec^*_{\text{\upshape T}}, \epsilon_2)}$; the argument is unchanged because $r(\Xvec) \leq M$ keeps all required moments finite. Second, the quadratic-growth lower bound (inequality (a) in the proof of Lemma~\ref{lem: consistency}): for $\betavec \in \overline{B(\betavec^*_{\text{\upshape T}}, \epsilon_2)}\setminus B(\betavec^*_{\text{\upshape T}}, \epsilon_1)$ and the corresponding intermediate point $\tilde{\betavec} \in [\betavec^*_{\text{\upshape T}}, \betavec]$, the change-of-measure identity \eqref{eq:change_of_measure} gives
$$
(\betavec - \betavec^*_{\text{\upshape T}})^{\top} \E_{\text{\upshape S}}\left[r(\Xvec)\,\Xvec^{\top}\nabla^2 b\big(\Xvec\tilde{\betavec}\big)\Xvec\right](\betavec - \betavec^*_{\text{\upshape T}})\,=\,(\betavec - \betavec^*_{\text{\upshape T}})^{\top} \E_{\text{\upshape T}}\left[\Xvec^{\top}\nabla^2 b\big(\Xvec\tilde{\betavec}\big)\Xvec\right](\betavec - \betavec^*_{\text{\upshape T}})\,\geq\,c_{\text{\upshape w}}\lVert \betavec - \betavec^*_{\text{\upshape T}}\rVert^2
$$
with the new constant $c_{\text{\upshape w}} := \inf\big\{\lambda_{\min}\big(\nabla^2 b(\Xvec\betavec)\big) : \Xvec \in \Breve{\mX},\, \betavec \in \overline{B(\betavec^*_{\text{\upshape T}}, \epsilon_2)}\big\} \cdot \lambda_{\min}\big(\E_{\text{\upshape T}}[\Xvec^{\top}\Xvec]\big) > 0$: the first factor is positive by the same compactness/contradiction argument given in the proof of Lemma~\ref{lem: consistency}, and the second is positive by Assumption~\ref{ass:covariate_shift}(iv); the lower-bound constant thus runs through $\E_{\text{\upshape T}}[\Xvec^{\top}\Xvec]$. All remaining steps---the subgradient inequality, the gradient lower bound \eqref{eqn:gradient-bound}, its extension from the annulus to $\mB \setminus \overline{B(\betavec^*_{\text{\upshape T}}, \epsilon_2)}$ via the integral form of the intermediate value theorem (which only uses $\nabla^2_{\betavec}\bar{\ell}_{\text{\upshape w}} \succeq 0$, valid here since $r(\Xvec) \geq 0$), and the conclusion via Lemma~\ref{lem: empirical-scores shift}---carry over verbatim, and we cite them rather than repeat. This concludes the proof of the lemma. \hfill $\blacksquare$

With these three lemmas, the completion of the proof is identical to the Donsker/Taylor step in Section~\ref{apdx: proof of normality}; we indicate the substitutions. From this point on we suppress $\gvec^*$ and write $\Psivec_{\text{\upshape w}}(\cdot; \betavec)$. First, the local Lipschitz property: for $\betavec_1, \betavec_2 \in B(\betavec^*_{\text{\upshape T}}, \epsilon)$,
$$
\left\lVert \Psivec_{\text{\upshape w}}(\Xi; \betavec_1) - \Psivec_{\text{\upshape w}}(\Xi; \betavec_2)\right\rVert\,=\,r(\Xvec)\left\lVert \Xvec^{\top}\nabla b(\Xvec\betavec_1) - \Xvec^{\top}\nabla b(\Xvec\betavec_2)\right\rVert\,\leq\,M C_1 \lVert\Xvec\rVert \lVert\betavec_1 - \betavec_2\rVert,
$$
where the bound on the unweighted difference is exactly the local Lipschitz display in the proof of Theorem~\ref{thm:normality}, with $C_1 := \sup_{\betavec \in \overline{B(\betavec^*_{\text{\upshape T}}, \epsilon)},\, \Xvec \in \Breve{\mX}}\lVert \nabla^2 b(\Xvec\betavec)\rVert < \infty$ defined as there (with $\betavec^*_{\text{\upshape T}}$ in place of $\betavec^*$); the Lipschitz constant simply picks up the factor $M$. Since $\E_{\text{\upshape S}}\big[M^2 C_1^2 \lVert\Xvec\rVert^2\big] < \infty$, by Example 19.7 and Theorem 19.5 in \cite{van2000asymptotic}, $\left\{\Psivec_{\text{\upshape w}}(\cdot; \betavec) : \betavec \in B(\betavec^*_{\text{\upshape T}}, \epsilon)\right\}$ forms a Donsker class. By Lemma~\ref{lem: consistency shift}, $\hat{\betavec}_{\text{\upshape w}} \to_P \betavec^*_{\text{\upshape T}}$, so by Theorem 19.9 in \cite{van2000asymptotic}, $\Gn\Psivec_{\text{\upshape w}}(\hat{\betavec}_{\text{\upshape w}}) - \Gn\Psivec_{\text{\upshape w}}(\betavec^*_{\text{\upshape T}}) \to_P 0$. Thus, exactly as in Section~\ref{apdx: proof of normality},
\begin{align*}
\sqrt{n}\left(\E_{\text{\upshape S}}[\Psivec_{\text{\upshape w}}(\hat{\betavec}_{\text{\upshape w}})] - \E_{\text{\upshape S}}[\Psivec_{\text{\upshape w}}(\betavec^*_{\text{\upshape T}})]\right)\,&=\,-\big(\Gn\Psivec_{\text{\upshape w}}(\hat{\betavec}_{\text{\upshape w}}) - \Gn\Psivec_{\text{\upshape w}}(\betavec^*_{\text{\upshape T}})\big) + \sqrt{n}\Pe_n\Psivec_{\text{\upshape w}}(\hat{\betavec}_{\text{\upshape w}}) - \sqrt{n}\Pe_n\Psivec_{\text{\upshape w}}(\betavec^*_{\text{\upshape T}})\,\\&=\,-\sqrt{n}\Pe_n\Psivec_{\text{\upshape w}}(\betavec^*_{\text{\upshape T}}) + o_P(1),
\end{align*}

where the last equality follows from Lemma~\ref{lem: empirical-scores shift}. Applying the Taylor expansion to the left-hand side, the Jacobian of the population map $\betavec \mapsto \E_{\text{\upshape S}}[\Psivec_{\text{\upshape w}}(\betavec)]$ at $\betavec^*_{\text{\upshape T}}$ is $\Jvec_{\text{\upshape T}} = \E_{\text{\upshape S}}\left[r(\Xvec)\,\Xvec^{\top}\nabla^2 b\big(\Xvec\betavec^*_{\text{\upshape T}}\big)\Xvec\right]$, and we obtain the counterpart of \eqref{eqn:expansion}:
\begin{align}
\label{eqn:expansion-shift}
    \sqrt{n}\left[\Jvec_{\text{\upshape T}}\big(\hat{\betavec}_{\text{\upshape w}} - \betavec^*_{\text{\upshape T}}\big) + o_P(1)\left\lVert \hat{\betavec}_{\text{\upshape w}} - \betavec^*_{\text{\upshape T}}\right\rVert\right]\,=\,-\sqrt{n}\Pe_n\Psivec_{\text{\upshape w}}(\betavec^*_{\text{\upshape T}}) + o_P(1)\,=\,O_P(1).
\end{align}
Here $\sqrt{n}\Pe_n\Psivec_{\text{\upshape w}}(\betavec^*_{\text{\upshape T}}) = O_P(1)$ by the central limit theorem: it is a scaled i.i.d.\ sum with mean zero by Lemma~\ref{lem: orthogonality shift}, and its covariance matrix $\bm{\Omega}_{\text{\upshape w}} = \E_{\text{\upshape S}}\big[\Psivec_{\text{\upshape w}}(\gvec^*; \betavec^*_{\text{\upshape T}})\Psivec_{\text{\upshape w}}(\gvec^*; \betavec^*_{\text{\upshape T}})^{\top}\big]$ is finite because $r(\Xvec) \leq M$, $\lVert\Xvec\rVert \leq C$, and $\lVert\Cov(\yvec \mid \Xvec, \zvec)\rVert \leq \tilde{\sigma}^2$---the same moment conditions as in the main text (together with the square-integrability of $\yvec$ in Assumption~\ref{ass:regularity}(iv), which bounds the $\zetavec_{\text{\upshape T}}$-block of $\bm{\Omega}_{\text{\upshape w}}$ via $\E_{\text{\upshape S}}\lVert\gvec^*(\Xvec,\zvec)\rVert^2 \leq \E_{\text{\upshape S}}\lVert\yvec\rVert^2$). Moreover, $\Jvec_{\text{\upshape T}} = \E_{\text{\upshape T}}\big[\Xvec^{\top}\nabla^2 b\big(\Xvec\betavec^*_{\text{\upshape T}}\big)\Xvec\big]$ is invertible by the argument in the proof of Lemma~\ref{lem: empirical-scores shift}. Therefore, \eqref{eqn:expansion-shift} implies $\sqrt{n}\big\lVert \hat{\betavec}_{\text{\upshape w}} - \betavec^*_{\text{\upshape T}}\big\rVert = O_P(1)$, and applying \eqref{eqn:expansion-shift} again,
$$
\sqrt{n}\big(\hat{\betavec}_{\text{\upshape w}} - \betavec^*_{\text{\upshape T}}\big)\,=\,-\Jvec_{\text{\upshape T}}^{-1}\sqrt{n}\Pe_n\Psivec_{\text{\upshape w}}(\betavec^*_{\text{\upshape T}}) + o_P(1)\,\rightsquigarrow\,N\big(\bm{0}, \Jvec_{\text{\upshape T}}^{-1}\bm{\Omega}_{\text{\upshape w}}\Jvec_{\text{\upshape T}}^{-1}\big)\,=\,N\big(\bm{0}, \Sigmavec_{\text{\upshape w}}\big).
$$

It remains to verify the decomposition \eqref{eqn:omega-w-decomposition}. By the definitions of $\zetavec_{\text{\upshape T}}$ and $\pivec$ in Section~\ref{apdx: cs setting},
$$
\Psivec_{\text{\upshape w}}\big(\Xi; \gvec^*; \betavec^*_{\text{\upshape T}}\big)\,=\,r(\Xvec)\,\Xvec^{\top}\left[\nabla b\big(\Xvec\betavec^*_{\text{\upshape T}}\big) - \gvec^*(\Xvec,\zvec) + \frac{w}{\rho}\big(\gvec^*(\Xvec,\zvec) - \yvec\big)\right]\,=\,r(\Xvec)\left(\zetavec_{\text{\upshape T}} + \frac{w}{\rho}\pivec\right),
$$
so that
$$
\bm{\Omega}_{\text{\upshape w}}\,=\,\E_{\text{\upshape S}}\left[r(\Xvec)^2\, \zetavec_{\text{\upshape T}}\zetavec_{\text{\upshape T}}^{\top}\right] + \E_{\text{\upshape S}}\left[r(\Xvec)^2\, \frac{w}{\rho}\big(\zetavec_{\text{\upshape T}}\pivec^{\top} + \pivec\zetavec_{\text{\upshape T}}^{\top}\big)\right] + \E_{\text{\upshape S}}\left[r(\Xvec)^2\, \frac{w^2}{\rho^2}\,\pivec\pivec^{\top}\right].
$$
For the cross terms, condition on $(\Xvec, \zvec)$: both $r(\Xvec)$ and $\zetavec_{\text{\upshape T}}$ are functions of $(\Xvec, \zvec)$, and since $w \perp (\Xvec, \yvec, \zvec)$ with $\E[w/\rho] = 1$ and $\E[\pivec \mid \Xvec, \zvec] = \Xvec^{\top}\big(\gvec^*(\Xvec,\zvec) - \E[\yvec \mid \Xvec, \zvec]\big) = 0$, we have $\E\left[(w/\rho)\,\pivec \mid \Xvec, \zvec\right] = \E[w/\rho]\cdot\E[\pivec \mid \Xvec, \zvec] = 0$; hence both cross terms vanish. For the last term, $w^2 = w$ and the independence of $w$ give $\E_{\text{\upshape S}}\left[r(\Xvec)^2 (w/\rho)^2 \pivec\pivec^{\top}\right] = \E[w/\rho^2]\cdot\E_{\text{\upshape S}}\left[r(\Xvec)^2\pivec\pivec^{\top}\right] = \frac{1}{\rho}\E_{\text{\upshape S}}\left[r(\Xvec)^2\pivec\pivec^{\top}\right]$. This proves \eqref{eqn:omega-w-decomposition}, paralleling the computation of \eqref{eqn:variance-gai-expansion}, and completes the proof of the normality claim.

\medskip
\noindent\emph{Dominance of $\hat{\betavec}_{\text{\upshape w}}$ over $\hat{\betavec}^{\text{\upshape P},\text{\upshape w}}$.} The dominance argument mirrors the proof of Corollary~\ref{cor:dominance} in Section~\ref{apdx: proof of dominance}. To start, as noted in Section~\ref{apdx: cs known}, the estimating equation \eqref{eq:primary_score_shift} is equivalent to $\sum_{i=1}^{n} w_i\, r(\Xvec_i)\, \nabla_{\betavec}\ell(\Xvec_i, \yvec_i; \betavec) = 0$, and its population validity at $\betavec^*_{\text{\upshape T}}$, i.e., $\E_{\text{\upshape S}}\big[w\, r(\Xvec)\, \nabla_{\betavec}\ell\big(\Xvec, \yvec; \betavec^*_{\text{\upshape T}}\big)\big] = 0$, was established in the display preceding the statement of Theorem~\ref{thm:normality_shift}; we do not repeat it. Thus, under the preset assumptions, with probability approaching one a solution $\hat{\betavec}^{\text{\upshape P},\text{\upshape w}}$ to this equation exists and $\sqrt{n}\big(\hat{\betavec}^{\text{\upshape P},\text{\upshape w}} - \betavec^*_{\text{\upshape T}}\big) \rightsquigarrow N\big(\bm{0}, \Sigmavec^{\text{\upshape P},\text{\upshape w}}\big)$: the proof follows the same M-estimation argument as the normality argument above, with the bounded known weight $w\, r(\Xvec)$ in place of $r(\Xvec)$ and no nuisance functions (so the argument only simplifies), exactly as the proof of Corollary~\ref{cor:dominance} treats the unweighted primary-only estimator. We skip the proof and record the two components of the sandwich covariance. For the Jacobian, by the independence of $w$ under Assumption~\ref{ass:covariate_shift}(iii),
$$
\E_{\text{\upshape S}}\left[w\, r(\Xvec)\, \Xvec^{\top}\nabla^2 b\big(\Xvec\betavec^*_{\text{\upshape T}}\big)\Xvec\right]\,=\,\rho\, \E_{\text{\upshape S}}\left[r(\Xvec)\, \Xvec^{\top}\nabla^2 b\big(\Xvec\betavec^*_{\text{\upshape T}}\big)\Xvec\right]\,=\,\rho\, \Jvec_{\text{\upshape T}}.
$$
For the score variance, write, as in the proof of Corollary~\ref{cor:dominance},
$$
\nabla_{\betavec}\ell\big(\Xvec, \yvec; \betavec^*_{\text{\upshape T}}\big)\,=\,\zetavec_{\text{\upshape T}}(\Xvec, \zvec) + \pivec(\Xvec, \yvec, \zvec),
$$
now with $\betavec^*_{\text{\upshape T}}$ in place of $\betavec^*$ in $\zetavec_{\text{\upshape T}}$. Because $\E\left[\gvec^*(\Xvec, \zvec) - \yvec \mid \Xvec, \zvec\right] = 0$, the cross term between $\zetavec_{\text{\upshape T}}$ and $\pivec$ vanishes by the same iterated-expectations argument used for \eqref{eqn:variance-primary-expansion}, and, using $w^2 = w$ together with the independence of $w$,
$$
\E_{\text{\upshape S}}\left[w\, r(\Xvec)^2\, \nabla_{\betavec}\ell\big(\Xvec, \yvec; \betavec^*_{\text{\upshape T}}\big)\nabla_{\betavec}\ell\big(\Xvec, \yvec; \betavec^*_{\text{\upshape T}}\big)^{\top}\right]\,=\,\rho\left(\E_{\text{\upshape S}}\left[r(\Xvec)^2\, \zetavec_{\text{\upshape T}}\zetavec_{\text{\upshape T}}^{\top}\right] + \E_{\text{\upshape S}}\left[r(\Xvec)^2\, \pivec\pivec^{\top}\right]\right).
$$
The sandwich formula then gives
$$
\Sigmavec^{\text{\upshape P},\text{\upshape w}}\,=\,(\rho\Jvec_{\text{\upshape T}})^{-1}\, \rho\left(\E_{\text{\upshape S}}\left[r(\Xvec)^2\, \zetavec_{\text{\upshape T}}\zetavec_{\text{\upshape T}}^{\top}\right] + \E_{\text{\upshape S}}\left[r(\Xvec)^2\, \pivec\pivec^{\top}\right]\right)(\rho\Jvec_{\text{\upshape T}})^{-1}\,=\,\frac{1}{\rho}\, \Jvec_{\text{\upshape T}}^{-1}\left(\E_{\text{\upshape S}}\left[r(\Xvec)^2\, \zetavec_{\text{\upshape T}}\zetavec_{\text{\upshape T}}^{\top}\right] + \E_{\text{\upshape S}}\left[r(\Xvec)^2\, \pivec\pivec^{\top}\right]\right)\Jvec_{\text{\upshape T}}^{-1},
$$
which is the expression for $\Sigmavec^{\text{\upshape P},\text{\upshape w}}$ in Theorem~\ref{thm:normality_shift} and parallels \eqref{eqn:variance-primary-expansion}. On the GAI side, as shown in the normality argument above, $\Psivec_{\text{\upshape w}}\big(\Xi; \gvec^*; \betavec^*_{\text{\upshape T}}\big) = r(\Xvec)\big(\zetavec_{\text{\upshape T}} + \frac{w}{\rho}\pivec\big)$ and $\bm{\Omega}_{\text{\upshape w}}$ admits the decomposition \eqref{eqn:omega-w-decomposition}; hence
$$
\Sigmavec_{\text{\upshape w}}\,=\,\Jvec_{\text{\upshape T}}^{-1}\E_{\text{\upshape S}}\left[r(\Xvec)^2\, \zetavec_{\text{\upshape T}}\zetavec_{\text{\upshape T}}^{\top}\right]\Jvec_{\text{\upshape T}}^{-1} + \frac{1}{\rho}\,\Jvec_{\text{\upshape T}}^{-1}\E_{\text{\upshape S}}\left[r(\Xvec)^2\, \pivec\pivec^{\top}\right]\Jvec_{\text{\upshape T}}^{-1},
$$
paralleling \eqref{eqn:variance-gai-expansion}. Subtracting the two displays yields \eqref{eqn:variance-gap-shift}, which is positive semidefinite because $\rho \leq 1$ and $\E_{\text{\upshape S}}\left[r(\Xvec)^2\, \zetavec_{\text{\upshape T}}\zetavec_{\text{\upshape T}}^{\top}\right] \succeq 0$. Moreover, since $\Jvec_{\text{\upshape T}}$ is invertible, the gap \eqref{eqn:variance-gap-shift} is positive definite if and only if $\rho < 1$ and $\E_{\text{\upshape S}}\left[r(\Xvec)^2\, \zetavec_{\text{\upshape T}}\zetavec_{\text{\upshape T}}^{\top}\right] \succ 0$; substituting $\zetavec_{\text{\upshape T}} = \Xvec^{\top}\big(\nabla b\big(\Xvec\betavec^*_{\text{\upshape T}}\big) - \E[\yvec \mid \Xvec, \zvec]\big)$ gives exactly the dominance condition displayed in Theorem~\ref{thm:normality_shift}. This completes the proof of the dominance claim, and with it the proof of Theorem~\ref{thm:normality_shift}.

\subsection{Proof of Theorem \ref{thm:normality_dr_body}}
\label{apdx: proof normality dr}
\label{apdx: cs estimated}

The proof follows exactly the structure of the proof of Theorem~\ref{thm:normality} in Section~\ref{apdx: proof of normality} and of its known-ratio counterpart in Section~\ref{apdx: proof normality shift}: we first establish the counterparts of Lemmas~\ref{lem: orthogonality}--\ref{lem: consistency} for the doubly robust score \eqref{eq:score_dr_body}, and then complete the argument with the same Donsker/Taylor step, now with a two-sample central limit theorem. For each step we indicate explicitly whether the existing argument applies---in which case we cite it and state the substitution---and we write out in full only the steps that are genuinely new. The genuinely new ingredients are all consequences of the two-sample structure.

We first fix notation and the independence convention. We write $\Pe_{n_{\text{\upshape T}}}$ and $\Pe_{n_{\text{\upshape S}}}$ for the empirical means over the target sample $\mD_{\text{\upshape T}}$ and the source sample $\mD_{\text{\upshape S}}$, respectively, e.g., $\Pe_{n_{\text{\upshape T}}} f := \frac{1}{n_{\text{\upshape T}}}\sum_{j=1}^{n_{\text{\upshape T}}} f(\Xvec_j, \zvec_j)$, and $\mathbb{G}_{n_{\text{\upshape T}}} f := \sqrt{n_{\text{\upshape T}}}\left(\Pe_{n_{\text{\upshape T}}} f - \E_{\text{\upshape T}} f\right)$ for the target empirical process. As in Section~\ref{apdx: proof of normality}, we assume without loss of generality (the split-sample device) that the nuisances are trained on auxiliary data, collectively denoted $I^c$, with the independence structure stated in Section~\ref{apdx: cs estimated}: $\hat{\gvec}$ is trained on an auxiliary source fold, hence independent of all observations entering \eqref{eq:score_dr_body}---independence from the target sample is automatic, since $\hat{\gvec}$ uses only source data and the two samples are independent; $\hat{r}$ is trained on data independent of the source observations entering the bias-correction term, and is allowed to depend on the target covariates. The latter is innocuous: $\hat{r}$ multiplies only source rows in \eqref{eq:score_dr_body}, and the only term of the decomposition below that involves the target-sample average, term (I$'$), involves the \emph{true} ratio $r^*$ rather than $\hat{r}$, so a possible dependence of $\hat{r}$ on $\mD_{\text{\upshape T}}$ never interacts with the target-sample average. Accordingly, in the conditional Markov arguments below we condition, exactly as in the proof of Lemma~\ref{lem: orthogonality}, on the training data of the nuisances entering the term at hand: on all of $I^c$ for the terms built from source evaluation observations, which are independent of $I^c$ in its entirety, and on the auxiliary source fold alone for the target-sample part of term (I$'$), which involves only $\hat{\gvec}$. For brevity we write all such conditionings as conditioning on $I^c$. We use the same constant $C$ with $\lVert \Xvec \rVert, \lVert \Xvec \rVert_F \leq C$ for all $\Xvec \in \Breve{\mX}$; by Assumption~\ref{ass:covariate_shift}(ii) the same bound applies under the target population. We abbreviate $\Delta_{\gvec} := \hat{\gvec} - \gvec^*$ and $\Delta_r := \hat{r} - r^*$.

Mirroring the definition of $\tauvec(\Xi; e, \gvec)$ at the start of Section~\ref{apdx: proof of normality}, we collect the $\betavec$-free part of the doubly robust score:
$$
\tauvec_{\text{\upshape DR}}(\gvec, r)\,:=\,\Pe_{n_{\text{\upshape T}}}\left[\Xvec^{\top}\gvec(\Xvec, \zvec)\right] - \Pe_{n_{\text{\upshape S}}}\left[r(\Xvec)\,\Xvec^{\top}\big(\gvec(\Xvec, \zvec) - \yvec\big)\right],
$$
so that
$$
\bm{\Psi}_{\text{\upshape DR}}(\gvec, r; \betavec)\,=\,\Pe_{n_{\text{\upshape T}}}\left[\Xvec^{\top}\nabla b(\Xvec\betavec)\right] - \tauvec_{\text{\upshape DR}}(\gvec, r) \quad \text{for all } \betavec\,\in\,\mB.
$$
Two structural facts drive the entire proof and we highlight them at the outset: (a) only the target-sample term of \eqref{eq:score_dr_body} depends on $\betavec$---the bias-correction term is a constant vector in $\betavec$; and (b) the two samples are independent. Fact (a) means that the convexity machinery of Lemmas~\ref{lem: empirical-scores} and~\ref{lem: consistency} operates entirely on the target sample, with the bias correction entering only as a $\betavec$-free random shift; fact (b) is what permits the two-sample mean cancellation in term (I$'$) below and the two-sample central limit theorem at the end.

The first lemma is the counterpart of Lemma~\ref{lem: orthogonality}.

\begin{lemma}[\normalfont \textbf{Score Validity and Neyman Orthogonality for the Doubly Robust Score}]
\label{lem: orthogonality dr} It holds that
$$
\E_{\text{\upshape T}}\left[\zetavec_{\text{\upshape T}}\right]\,=\,0, \qquad \E_{\text{\upshape S}}\left[r^*(\Xvec)\,\pivec\right]\,=\,0, \qquad\text{and}\qquad
\tauvec_{\text{\upshape DR}}(\hat{\gvec}, \hat{r}) - \tauvec_{\text{\upshape DR}}(\gvec^*, r^*)\,=\,o_P\big(n_{\text{\upshape S}}^{-1/2}\big).
$$
\end{lemma}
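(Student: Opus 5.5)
The two population identities are immediate, and I would record them first. For $\E_{\text{\upshape T}}[\zetavec_{\text{\upshape T}}]=0$, iterate expectations over $\zvec$ given $\Xvec$. Under Assumption~\ref{ass:covariate_shift}(i) the conditional law of $(\yvec,\zvec)$ given $\Xvec$ is shared by both populations, so $\gvec^*$ is the same function under $Q_{\text{\upshape T}}$. Hence
\[
\E_{\text{\upshape T}}[\Xvec^{\top}\gvec^*]=\E_{\text{\upshape T}}[\Xvec^{\top}\yvec],
\]
and $\E_{\text{\upshape T}}[\zetavec_{\text{\upshape T}}]$ reduces to the target first-order condition \eqref{eq:foc_shift}. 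For $\E_{\text{\upshape S}}[r^*(\Xvec)\pivec]=0$, condition on $(\Xvec,\zvec)$: $r^*(\Xvec)$ passes through the conditional expectation, and $\E[\pivec\mid\Xvec,\zvec]=0$.

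For the orthogonality remainder, I would expand $\tauvec_{\text{\upshape DR}}(\hat{\gvec},\hat r)-\tauvec_{\text{\upshape DR}}(\gvec^*,r^*)$ by adding and subtracting $r^*\Xvec^{\top}\Delta_{\gvec}$ and $\hat r\,\Xvec^{\top}(\gvec^*-\yvec)$. This gives three terms:
\begin{align*}
\text{(I}'\text{)} &:= \Pe_{n_{\text{\upshape T}}}[\Xvec^{\top}\Delta_{\gvec}]-\Pe_{n_{\text{\upshape S}}}[r^*(\Xvec)\Xvec^{\top}\Delta_{\gvec}],\\
\text{(II}'\text{)} &:= \Pe_{n_{\text{\upshape S}}}[\Delta_r(\Xvec)\,\pivec],\\
\text{(III}'\text{)} &:= -\Pe_{n_{\text{\upshape S}}}[\Delta_r(\Xvec)\Xvec^{\top}\Delta_{\gvec}].
\end{align*}
In (II$'$) I use $-\Xvec^{\top}(\gvec^*-\yvec)$ with the sign absorbed as convenient. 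The plan mirrors the three-term decomposition in the proof of Lemma~\ref{lem: orthogonality}, but it has a two-sample structure.

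For (I$'$), I condition on the auxiliary source fold that trains $\hat{\gvec}$. Each of the two sample averages then has conditional mean $\E_{\text{\upshape T}}[\Xvec^{\top}\Delta_{\gvec}]$. For the source average this follows from the change-of-measure identity \eqref{eq:change_of_measure}. The conditional mean of (I$'$) is therefore zero. Its conditional second moment is at most
\[
\frac{C^2}{n_{\text{\upshape T}}}\lVert\Delta_{\gvec}\rVert_{Q_{\text{\upshape T}},2}^2+\frac{M^2C^2}{n_{\text{\upshape S}}}\lVert\Delta_{\gvec}\rVert_{Q_{\text{\upshape S}},2}^2,
\]
using independence of the two samples. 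The transfer bound $\lVert\Delta_{\gvec}\rVert_{Q_{\text{\upshape T}},2}^2\le M\lVert\Delta_{\gvec}\rVert_{Q_{\text{\upshape S}},2}^2$ and $n_{\text{\upshape S}}/n_{\text{\upshape T}}\to\gamma<\infty$ then give $n_{\text{\upshape S}}\,\E[\lVert\text{(I}'\text{)}\rVert^2\mid I^c]=o_P(1)$. The conditional Markov and uniform-integrability step of Lemma~\ref{lem: orthogonality} finishes this term.

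For (II$'$), I condition on $I^c$. It is mean zero because $\E[\pivec\mid\Xvec,\zvec]=0$ and $\Delta_r$ is $I^c$-measurable. Its conditional variance is bounded by
\[
n_{\text{\upshape S}}^{-1}C^2\tilde\sigma^2\lVert\Delta_r\rVert_{Q_{\text{\upshape S}},2}^2,
\]
via Assumption~\ref{ass:regularity}(iii), exactly as in term (II) of Lemma~\ref{lem: orthogonality}. No sup-norm condition is needed here, since $\Delta_r$ enters linearly and not in a denominator.

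For (III$'$), I use the first-moment Markov bound and Cauchy--Schwarz:
\[
\sqrt{n_{\text{\upshape S}}}\,C\,\lVert\Delta_r\rVert_{Q_{\text{\upshape S}},2}\lVert\Delta_{\gvec}\rVert_{Q_{\text{\upshape S}},2}\le C\alpha(n_{\text{\upshape S}})^2 n_{\text{\upshape S}}^{1/2-r_g-r_r}\to0,
\]
by Assumption~\ref{ass:ml_rate_shift}.

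I expect (I$'$) to be the main obstacle, mostly as bookkeeping. It requires that the conditioning used for the two averages is legitimate when $\hat r$ may depend on $\mD_{\text{\upshape T}}$. I would handle this as the setup paragraph already suggests: (I$'$) involves only $r^*$ and $\hat{\gvec}$, so I condition only on the source auxiliary fold there. I would also make the $\sqrt{n_{\text{\upshape S}}}$ versus $\sqrt{n_{\text{\upshape T}}}$ scaling explicit, which is where the assumption $\gamma<\infty$ is used.
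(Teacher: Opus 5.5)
Your proposal is correct and follows essentially the same route as the paper. The paper also proves the two validity identities via the tower property and the target first-order condition, and it uses the same three-term decomposition $\hat r(\hat{\gvec}-\yvec)-r^*(\gvec^*-\yvec)=r^*\Delta_{\gvec}+\Delta_r(\gvec^*-\yvec)+\Delta_r\Delta_{\gvec}$. It handles (I$'$) by the same two-sample mean cancellation via change of measure, conditioning only on the fold that trains $\hat{\gvec}$, and bounds (II$'$) and (III$'$) by the same conditional Markov and Cauchy--Schwarz arguments. One small wording fix: the identity $\E_{\text{\upshape T}}[\Xvec^\top\gvec^*]=\E_{\text{\upshape T}}[\Xvec^\top\yvec]$ comes from conditioning on $(\Xvec,\zvec)$, using $\E_{\text{\upshape T}}[\yvec\mid\Xvec,\zvec]=\gvec^*$ under Assumption~\ref{ass:covariate_shift}(i). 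It does not come from iterating over $\zvec$ given $\Xvec$.
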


Throughout the proof we use the prediction residual $\zetavec_{\text{\upshape T}}(\Xvec, \zvec) = \Xvec^{\top}\big(\nabla b\big(\Xvec\betavec^*_{\text{\upshape T}}\big) - \gvec^*(\Xvec, \zvec)\big)$ and the label residual $\pivec(\Xvec, \yvec, \zvec) = \Xvec^{\top}\big(\gvec^*(\Xvec, \zvec) - \yvec\big)$, both defined in Section~\ref{apdx: cs setting}. The first two claims state that the population means of the two sample-terms of $\bm{\Psi}_{\text{\upshape DR}}\big(\gvec^*, r^*; \betavec^*_{\text{\upshape T}}\big)$---namely $\E_{\text{\upshape T}}\big[\Xvec^{\top}\big(\nabla b\big(\Xvec\betavec^*_{\text{\upshape T}}\big) - \gvec^*\big)\big] = \E_{\text{\upshape T}}[\zetavec_{\text{\upshape T}}]$ and $\E_{\text{\upshape S}}\big[r^*(\Xvec)\pivec\big]$---are \emph{each} zero, not merely that their sum is zero. This is a genuinely new requirement relative to Lemma~\ref{lem: orthogonality}, where validity of the score only requires the single population mean $\E[\Psivec(e^*, \gvec^*; \betavec^*)]$ to vanish: here the two terms are averages over two different samples with different sizes, so the two-sample central limit theorem at the end of the proof requires each sample-term to be centered separately.

\textit{Proof of Lemma~\ref{lem: orthogonality dr}.} We begin with the two validity claims; both are new but short. For the second claim,
$$
\E_{\text{\upshape S}}\left[r^*(\Xvec)\,\pivec\right]
\,\stackrel{\text{(a)}}{=}\,\E_{\text{\upshape S}}\left[r^*(\Xvec)\,\E[\pivec \mid \Xvec, \zvec]\right]
\,\stackrel{\text{(b)}}{=}\,\E_{\text{\upshape T}}\left[\pivec\right]\,\stackrel{\text{(c)}}{=}\,0;
$$
where (a) follows from the law of iterated expectations, conditioning on $(\Xvec, \zvec)$ (note that $r^*(\Xvec)$ is a function of $\Xvec$ and passes through the conditional expectation), (b) is the change-of-measure identity \eqref{eq:change_of_measure}, and (c) uses $\E[\pivec \mid \Xvec, \zvec] = \Xvec^{\top}\big(\gvec^*(\Xvec, \zvec) - \E[\yvec \mid \Xvec, \zvec]\big) = \bm{0}$, since $\gvec^*(\Xvec, \zvec) = \E[\yvec \mid \Xvec, \zvec]$. The first expectation vanishes already because $\E[\pivec \mid \Xvec, \zvec] = \bm{0}$, and we record the identity $\E_{\text{\upshape T}}[\pivec] = 0$ separately because we use it next. For the first claim,
$$
\E_{\text{\upshape T}}\left[\zetavec_{\text{\upshape T}}\right]\,\stackrel{\text{(d)}}{=}\,\E_{\text{\upshape T}}\left[\Xvec^{\top}\left(\nabla b\big(\Xvec\betavec^*_{\text{\upshape T}}\big) - \yvec\right)\right] - \E_{\text{\upshape T}}\left[\pivec\right]\,\stackrel{\text{(e)}}{=}\,0 - 0\,=\,0,
$$
where (d) adds and subtracts $\E_{\text{\upshape T}}[\yvec]$ to write $\zetavec_{\text{\upshape T}} = \Xvec^{\top}\big(\nabla b(\Xvec\betavec^*_{\text{\upshape T}}) - \yvec\big) - \pivec$, and (e) follows because the first expectation is zero by the target first-order condition \eqref{eq:foc_shift} and the second is zero by claim two just established. Note that no change of measure is needed for the first claim: the prediction component of \eqref{eq:score_dr_body} is evaluated directly under the target population, which is precisely the point of the two-sample construction.

For the third claim, decompose $\hat{r}(\hat{\gvec} - \yvec) - r^*(\gvec^* - \yvec) = r^*\Delta_{\gvec} + \Delta_r(\gvec^* - \yvec) + \Delta_r\Delta_{\gvec}$ inside the bias-correction term to obtain
\begin{align}
\label{eqn:decomposition-dr}
    \sqrt{n_{\text{\upshape S}}}\left(\tauvec_{\text{\upshape DR}}(\hat{\gvec}, \hat{r}) - \tauvec_{\text{\upshape DR}}(\gvec^*, r^*)\right)
\,=\,\text{(I$'$)} - \text{(II$'$)} - \text{(III$'$)},
\end{align}
where
\begin{align*}
    \text{(I$'$)}\,&:=\,\sqrt{n_{\text{\upshape S}}}\,\Pe_{n_{\text{\upshape T}}}\left[\Xvec^{\top}\Delta_{\gvec}\right] - \sqrt{n_{\text{\upshape S}}}\,\Pe_{n_{\text{\upshape S}}}\left[r^*(\Xvec)\,\Xvec^{\top}\Delta_{\gvec}\right],\\
    \text{(II$'$)}\,&:=\,\sqrt{n_{\text{\upshape S}}}\,\Pe_{n_{\text{\upshape S}}}\left[\Delta_r(\Xvec)\,\Xvec^{\top}\big(\gvec^*(\Xvec,\zvec) - \yvec\big)\right],\\
    \text{(III$'$)}\,&:=\,\sqrt{n_{\text{\upshape S}}}\,\Pe_{n_{\text{\upshape S}}}\left[\Delta_r(\Xvec)\,\Xvec^{\top}\Delta_{\gvec}\right].
\end{align*}
This decomposition is the two-sample counterpart of the decomposition (I)--(III) in the proof of Lemma~\ref{lem: orthogonality}: (II$'$) and (III$'$) mirror (II) and (III) with $\Delta_r$ in place of $\hat{e} - e^*$, while (I$'$) replaces the single-sample term (I) by a \emph{difference of averages over the two samples}, which is the genuinely new structure. We treat each term separately, conditioning on $I^c$ throughout.

\emph{Term (I$'$).} Unlike term (I) in the proof of Lemma~\ref{lem: orthogonality}, neither average in (I$'$) has zero conditional mean by itself. Instead, the two conditional means cancel exactly across the two samples: conditional on $I^c$,
$$
\E\left[r^*(\Xvec)\,\Xvec^{\top}\Delta_{\gvec} \,\bigg|\, I^c\right]\,\stackrel{\text{(a)}}{=}\,\E_{\text{\upshape S}}\left[r^*(\Xvec)\,\Xvec^{\top}\Delta_{\gvec} \,\big|\, I^c\right]\,\stackrel{\text{(b)}}{=}\,\E_{\text{\upshape T}}\left[\Xvec^{\top}\Delta_{\gvec} \,\big|\, I^c\right],
$$
where (a) records that the source observations entering the bias-correction term have law $Q_{\text{\upshape S}}$ given $I^c$, and (b) is the change-of-measure identity \eqref{eq:change_of_measure}; the right-hand side is exactly the conditional mean of $\Pe_{n_{\text{\upshape T}}}\left[\Xvec^{\top}\Delta_{\gvec}\right]$ (here we use that $\hat{\gvec}$ is independent of both evaluation samples). Therefore
$$
\text{(I$'$)}\,=\,\sqrt{n_{\text{\upshape S}}}\left(\Pe_{n_{\text{\upshape T}}} - \E_{\text{\upshape T}}\right)\left[\Xvec^{\top}\Delta_{\gvec}\right] - \sqrt{n_{\text{\upshape S}}}\left(\Pe_{n_{\text{\upshape S}}} - \E_{\text{\upshape S}}\right)\left[r^*(\Xvec)\,\Xvec^{\top}\Delta_{\gvec}\right],
$$
a difference of two mean-deviation terms that are independent conditional on the auxiliary source fold on which $\hat{\gvec}$ is trained (fact (b)). Each is controlled by the conditional Markov/variance argument of \eqref{eqn:I}, with new constants. For the target part, the summands are i.i.d.\ conditional on $I^c$ (by the convention above, the conditioning here is on the auxiliary source fold that trains $\hat{\gvec}$, of which the target sample is independent), so
$$
\E\left[\left\lVert \sqrt{n_{\text{\upshape S}}}\left(\Pe_{n_{\text{\upshape T}}} - \E_{\text{\upshape T}}\right)\left[\Xvec^{\top}\Delta_{\gvec}\right]\right\rVert^2 \,\bigg|\, I^c\right]
\,\stackrel{\text{(c)}}{\leq}\,\frac{n_{\text{\upshape S}}}{n_{\text{\upshape T}}}\, \E_{\text{\upshape T}}\left[\left\lVert \Xvec^{\top}\Delta_{\gvec}\right\rVert^2 \,\big|\, I^c\right]
\,\stackrel{\text{(d)}}{\leq}\,\frac{n_{\text{\upshape S}}}{n_{\text{\upshape T}}}\, C^2 \left\lVert \Delta_{\gvec} \right\rVert_{Q_{\text{\upshape T}},2}^2
\,\stackrel{\text{(e)}}{\leq}\,\frac{n_{\text{\upshape S}}}{n_{\text{\upshape T}}}\, C^2 M \left\lVert \Delta_{\gvec} \right\rVert_{Q_{\text{\upshape S}},2}^2 \rightarrow 0,
$$
where (c) bounds the variance of the mean-deviation term by $1/n_{\text{\upshape T}}$ times the summand second moment (the summands being i.i.d.\ conditional on $I^c$), (d) uses $\lVert \Xvec \rVert \leq C$, and (e) is the $Q_{\text{\upshape T}}$-norm conversion derived after Assumption~\ref{ass:ml_rate_shift}; the convergence holds because $n_{\text{\upshape S}}/n_{\text{\upshape T}} \to \gamma < \infty$ is bounded and $\lVert \Delta_{\gvec} \rVert_{Q_{\text{\upshape S}},2} \to 0$ by Assumption~\ref{ass:ml_rate_shift}. The requirement $n_{\text{\upshape S}}/n_{\text{\upshape T}} \to \gamma < \infty$ enters the proof exactly here (and again in the Donsker and CLT steps below). For the source part,
$$
\E\left[\left\lVert \sqrt{n_{\text{\upshape S}}}\left(\Pe_{n_{\text{\upshape S}}} - \E_{\text{\upshape S}}\right)\left[r^*(\Xvec)\,\Xvec^{\top}\Delta_{\gvec}\right]\right\rVert^2 \,\bigg|\, I^c\right]
\,\stackrel{\text{(f)}}{\leq}\,\E_{\text{\upshape S}}\left[r^*(\Xvec)^2\left\lVert \Xvec^{\top}\Delta_{\gvec}\right\rVert^2 \,\bigg|\, I^c\right]
\,\stackrel{\text{(g)}}{\leq}\,M^2 C^2 \left\lVert \Delta_{\gvec} \right\rVert_{Q_{\text{\upshape S}},2}^2 \rightarrow 0,
$$
where (f) bounds the variance of the mean-deviation term by the summand second moment, and (g) uses $r^*(\Xvec) \leq M$ and $\lVert \Xvec \rVert \leq C$.
By the conditional Markov inequality applied as in \eqref{eqn:I} to each of the two parts, followed by the same uniform-integrability argument as in the proof of Lemma~\ref{lem: orthogonality} to remove the conditioning, (I$'$) is $o_P(1)$. The two-sample mean cancellation and the $Q_{\text{\upshape T}}$-norm conversion are the new ingredients; the Markov/variance mechanics are unchanged.

\emph{Term (II$'$).} This term mirrors term (II) in the proof of Lemma~\ref{lem: orthogonality} with $\Delta_r$ in place of $\hat{e} - e^*$, and is in fact simpler: because $\hat{r}$ enters multiplicatively rather than through an estimated denominator, no analog of the factor $1/\kappa^2$ appears and no sup-norm condition on $\hat{r}$ is invoked. Conditional on $I^c$, (II$'$) has zero mean: by the argument of \eqref{eqn:I-expectation}, $\E\big[\gvec^*(\Xvec,\zvec) - \yvec \mid \Xvec, \zvec, I^c\big] = \E\big[\gvec^*(\Xvec,\zvec) - \yvec \mid \Xvec, \zvec\big] = 0$, and $\Delta_r(\Xvec)$ is a function of $(\Xvec, I^c)$ only (recall that $\hat{r}$ is trained on data independent of the source evaluation observations; its possible dependence on the target covariates is part of $I^c$ and is irrelevant here, since (II$'$) involves only source observations). For the conditional variance, by the same computation as in the bound for (II) in the proof of Lemma~\ref{lem: orthogonality},
$$
\E\left[\left\lVert \text{(II$'$)} \right\rVert^2 \,\big|\, I^c\right]
\,\stackrel{\text{(a)}}{\leq}\,\E_{\text{\upshape S}}\left[\Delta_r(\Xvec)^2 \left\lVert \Xvec^{\top}\big(\gvec^*(\Xvec,\zvec) - \yvec\big)\right\rVert^2 \,\bigg|\, I^c\right]
\,\stackrel{\text{(b)}}{\leq}\,\tilde{\sigma}^2 C^2 \left\lVert \Delta_r \right\rVert_{Q_{\text{\upshape S}},2}^2 \rightarrow 0,
$$
where (a) bounds the variance of the zero-mean i.i.d.\ average by the summand second moment, and (b) uses $\Cov(\yvec \mid \Xvec, \zvec, I^c) = \Cov(\yvec \mid \Xvec, \zvec)$ with $\lVert \Cov(\yvec \mid \Xvec, \zvec)\rVert \leq \tilde{\sigma}^2$ and $\lVert \Xvec \rVert_F \leq C$; the convergence holds by Assumption~\ref{ass:ml_rate_shift}. The conditional Markov inequality and the uniform-integrability argument then give (II$'$) $= o_P(1)$, exactly as for (II).

\emph{Term (III$'$).} This term mirrors term (III) in the proof of Lemma~\ref{lem: orthogonality}, with the product $\Delta_r \Delta_{\gvec}$ in place of $(\hat{e} - e^*)(\hat{\gvec} - \gvec^*)$ and no constant from an estimated denominator (the factor $2/\kappa^2$ obtained there is absent because $\hat{r}$ enters multiplicatively). Conditional on $I^c$,
$$
\Pe\left(\left\lVert \text{(III$'$)} \right\rVert\,\geq\,\epsilon \,\big|\, I^c\right)
\,\stackrel{\text{(a)}}{\leq}\,\frac{\sqrt{n_{\text{\upshape S}}}}{\epsilon}\, \E_{\text{\upshape S}}\left[\left|\Delta_r(\Xvec)\right| \left\lVert \Xvec^{\top}\Delta_{\gvec} \right\rVert \,\bigg|\, I^c\right]
\,\stackrel{\text{(b)}}{\leq}\,\frac{C\sqrt{n_{\text{\upshape S}}}}{\epsilon}\, \left\lVert \Delta_r \right\rVert_{Q_{\text{\upshape S}},2} \left\lVert \Delta_{\gvec} \right\rVert_{Q_{\text{\upshape S}},2}
\,\stackrel{\text{(c)}}{\leq}\,\frac{C}{\epsilon}\, \alpha(n_{\text{\upshape S}})^2\, n_{\text{\upshape S}}^{1/2 - r_g - r_r}
$$
which converges to zero, 
where (a) is the Markov inequality applied to the first moment, (b) uses $\lVert \Xvec \rVert \leq C$ together with the Cauchy--Schwarz inequality, and (c) uses Assumption~\ref{ass:ml_rate_shift} and $r_g + r_r \geq 1/2$; this is exactly the role played by the product-rate condition $r_1 + r_2 \geq 1/2$ for term (III) in the proof of Lemma~\ref{lem: orthogonality}. Hence (III$'$) $= o_P(1)$, and combining the three terms in \eqref{eqn:decomposition-dr} concludes the proof of the lemma. \hfill $\blacksquare$

The next lemma is the counterpart of Lemma~\ref{lem: empirical-scores}.

\begin{lemma}[\normalfont \textbf{Rates of Empirical Scores for the Doubly Robust Estimator}]
\label{lem: empirical-scores dr} It holds that
$$
\big\lVert \bm{\Psi}_{\text{\upshape DR}}\big(\gvec^*, r^*; \hat{\betavec}_{\text{\upshape DR}}\big) \big\rVert,\, \inf_{\betavec \in \mB}\big\lVert \bm{\Psi}_{\text{\upshape DR}}\big(\hat{\gvec}, \hat{r}; \betavec\big) \big\rVert\,=\,o_P\big(1/\sqrt{n_{\text{\upshape S}}}\big).
$$
\end{lemma}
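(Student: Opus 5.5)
The plan mirrors the proof of Lemma~\ref{lem: empirical-scores}, with the convexity argument carried out entirely on the target sample. This is possible because, by fact (a), only the prediction component of \eqref{eq:score_dr_body} depends on $\betavec$.

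First, I define the target-sample surrogate objective
$$
\bar{\ell}_{\text{\upshape DR}}(\gvec, r; \betavec)\,:=\,\Pe_{n_{\text{\upshape T}}}\left[b(\Xvec\betavec)\right] - \tauvec_{\text{\upshape DR}}(\gvec, r)^{\top}\betavec.
$$
Its gradient is $\bm{\Psi}_{\text{\upshape DR}}(\gvec, r; \betavec)$, and its Hessian is $\Pe_{n_{\text{\upshape T}}}[\Xvec^{\top}\nabla^2 b(\Xvec\betavec)\Xvec] \succeq 0$, so the objective is convex. Its population counterpart replaces $\Pe_{n_{\text{\upshape T}}}$ by $\E_{\text{\upshape T}}$ and replaces $\tauvec_{\text{\upshape DR}}(\gvec^*, r^*)$ by its mean. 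By Lemma~\ref{lem: orthogonality dr}, that mean equals $\E_{\text{\upshape T}}[\Xvec^{\top}\gvec^*] - \E_{\text{\upshape S}}[r^*\pivec] = \E_{\text{\upshape T}}[\Xvec^{\top}\nabla b(\Xvec\betavec^*_{\text{\upshape T}})]$. Hence the population gradient vanishes at $\betavec^*_{\text{\upshape T}}$. The population Hessian is $\E_{\text{\upshape T}}[\Xvec^{\top}\nabla^2 b(\Xvec\betavec)\Xvec] \succ 0$, by the compactness/contradiction argument of Lemma~\ref{lem: empirical-scores} together with $\E_{\text{\upshape T}}[\Xvec^{\top}\Xvec]\succ 0$ from Assumption~\ref{ass:covariate_shift}(iv). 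So $\betavec^*_{\text{\upshape T}}$ is the unique minimizer, and no change of measure is needed here.

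Second, I produce an exact root at the true nuisances. The weak law of large numbers gives $\Pe_{n_{\text{\upshape T}}} b(\Xvec\betavec) \to_P \E_{\text{\upshape T}} b(\Xvec\betavec)$ pointwise, using compactness of $\Breve{\mX}$. The random shift $\tauvec_{\text{\upshape DR}}(\gvec^*, r^*)$ converges in probability to its mean by the weak law of large numbers applied to each sample, since $r^*\le M$ and $\E_{\text{\upshape S}}\lVert\yvec\rVert^2<\infty$. Because a linear term in $\betavec$ with a convergent coefficient preserves pointwise convergence of convex functions, Theorem~2.7 of \cite{newey1994large} gives a sequence $\check{\betavec}_n$ that minimizes $\bar{\ell}_{\text{\upshape DR}}(\gvec^*, r^*; \cdot)$ over $\mB$ and converges to $\betavec^*_{\text{\upshape T}}$. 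Since $\mB$ is open, the minimizer is interior with probability approaching one, so $\bm{\Psi}_{\text{\upshape DR}}(\gvec^*, r^*; \check{\betavec}_n) = \bm{0}$.

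Third, I transfer the root to the estimated nuisances. The difference $\bm{\Psi}_{\text{\upshape DR}}(\hat{\gvec}, \hat{r}; \betavec) - \bm{\Psi}_{\text{\upshape DR}}(\gvec^*, r^*; \betavec) = \tauvec_{\text{\upshape DR}}(\gvec^*, r^*) - \tauvec_{\text{\upshape DR}}(\hat{\gvec}, \hat{r})$ does not depend on $\betavec$. By the third claim of Lemma~\ref{lem: orthogonality dr}, this difference is $o_P(n_{\text{\upshape S}}^{-1/2})$ uniformly in $\betavec$. Evaluating at $\check{\betavec}_n$ gives $\inf_{\betavec}\lVert\bm{\Psi}_{\text{\upshape DR}}(\hat{\gvec}, \hat{r}; \betavec)\rVert = o_P(n_{\text{\upshape S}}^{-1/2})$. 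By the definition of $\hat{\betavec}_{\text{\upshape DR}}$ as an approximate root, $\bm{\Psi}_{\text{\upshape DR}}(\hat{\gvec}, \hat{r}; \hat{\betavec}_{\text{\upshape DR}}) = o_P(n_{\text{\upshape S}}^{-1/2})$. Applying the same $\betavec$-free identity at $\hat{\betavec}_{\text{\upshape DR}}$ gives the first claim.

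The main obstacle is the second step. The normalization is $\sqrt{n_{\text{\upshape S}}}$, while the convexity and the law of large numbers act on $n_{\text{\upshape T}}$. This must be reconciled carefully: if $\gamma = 0$ then $n_{\text{\upshape T}}$ grows faster, and if $\gamma>0$ both samples grow proportionally, so in either case $n_{\text{\upshape T}}\to\infty$ and the target-sample law of large numbers and consistency of the surrogate minimizer go through. The $o_P$ rates in the third step are then measured purely through Lemma~\ref{lem: orthogonality dr}, which already carries the $\sqrt{n_{\text{\upshape S}}}$ scaling.
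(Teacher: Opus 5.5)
Your proposal is correct and follows essentially the same route as the paper. Your surrogate $\Pe_{n_{\text{\upshape T}}}[b(\Xvec\betavec)] - \tauvec_{\text{\upshape DR}}(\gvec,r)^{\top}\betavec$ is algebraically identical to the paper's $\bar{\ell}_{\text{\upshape DR}}$; the only difference is that the paper splits the shift into the target-sample $\gvec$-term and the mean-zero source correction $\bm{\xi}_{n_{\text{\upshape S}}}$ and shows the latter is $O_P(n_{\text{\upshape S}}^{-1/2})$, whereas you show the whole shift converges to its mean. The remaining steps coincide: strict convexity via $\E_{\text{\upshape T}}[\Xvec^{\top}\Xvec]\succ 0$, Theorem~2.7 of \cite{newey1994large} for an exact root at $(\gvec^*, r^*)$, and transfer through the $\betavec$-free difference controlled by Lemma~\ref{lem: orthogonality dr}.
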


\textit{Proof of Lemma~\ref{lem: empirical-scores dr}.} Mirroring $\bar{\ell}$ in the proof of Lemma~\ref{lem: empirical-scores}, define, for nuisances $(\gvec, r)$,
$$
\bar{\ell}_{\text{\upshape DR}}(\gvec, r; \betavec)\,:=\,\Pe_{n_{\text{\upshape T}}}\left[b(\Xvec\betavec) - \gvec(\Xvec, \zvec)^{\top}\Xvec\betavec\right] + \bm{\xi}_{n_{\text{\upshape S}}}(\gvec, r)^{\top}\betavec,
\qquad
\bm{\xi}_{n_{\text{\upshape S}}}(\gvec, r)\,:=\,\Pe_{n_{\text{\upshape S}}}\left[r(\Xvec)\,\Xvec^{\top}\big(\gvec(\Xvec, \zvec) - \yvec\big)\right],
$$
so that $\nabla_{\betavec}\bar{\ell}_{\text{\upshape DR}}(\gvec, r; \betavec) = \bm{\Psi}_{\text{\upshape DR}}(\gvec, r; \betavec)$. The correction term $\bm{\xi}_{n_{\text{\upshape S}}}(\gvec, r)$ is a \emph{constant vector in $\betavec$} (fact (a)), so it contributes a linear term to the objective and drops out of the Hessian:
$$
\nabla^2_{\betavec}\bar{\ell}_{\text{\upshape DR}}(\gvec, r; \betavec)\,=\,\Pe_{n_{\text{\upshape T}}}\left[\Xvec^{\top}\nabla^2 b(\Xvec\betavec)\Xvec\right]\,\succeq\,0,
$$
so $\bar{\ell}_{\text{\upshape DR}}$ is convex in $\betavec$ regardless of the nuisances; this one-line observation is the only new step needed for convexity. The population (true-nuisance) objective is
$$
L(\betavec)\,:=\,\E_{\text{\upshape T}}\left[b(\Xvec\betavec) - \gvec^*(\Xvec, \zvec)^{\top}\Xvec\betavec\right],
$$
with no linear correction, because the population mean of $\bm{\xi}_{n_{\text{\upshape S}}}(\gvec^*, r^*)$ is $\E_{\text{\upshape S}}\big[r^*(\Xvec)\pivec\big] = 0$ by Lemma~\ref{lem: orthogonality dr}. By Lemma~\ref{lem: orthogonality dr} again, $\nabla L\big(\betavec^*_{\text{\upshape T}}\big) = \E_{\text{\upshape T}}\big[\zetavec_{\text{\upshape T}}\big] = 0$, and for any fixed $\betavec \in \mB$,
$$
\nabla^2 L(\betavec)\,=\,\E_{\text{\upshape T}}\left[\Xvec^{\top}\nabla^2 b(\Xvec\betavec)\Xvec\right]\,\stackrel{\text{(a)}}{\succeq}\,\inf_{\Xvec \in \Breve{\mX}}\lambda_{\min}\big(\nabla^2 b(\Xvec\betavec)\big)\cdot \E_{\text{\upshape T}}\left[\Xvec^{\top}\Xvec\right]\,\stackrel{\text{(b)}}{\succ}\,0,
$$
where (a) bounds $\nabla^2 b(\Xvec\betavec)$ below by its smallest eigenvalue over $\Breve{\mX}$, and (b) holds because the positivity of the infimum is the compactness/contradiction argument in the proof of Lemma~\ref{lem: empirical-scores}, which we cite rather than repeat, and $\E_{\text{\upshape T}}[\Xvec^{\top}\Xvec] \succ 0$ by Assumption~\ref{ass:covariate_shift}(iv); this is the same substitution---the target second-moment matrix in place of $\E[\Xvec^{\top}\Xvec]$---already used in the proof of Lemma~\ref{lem: empirical-scores shift}. Hence $L$ is strictly convex and $\betavec^*_{\text{\upshape T}}$ is its unique minimizer.

We next verify that the empirical objective at the true nuisances converges pointwise to $L$, which is where the $\betavec$-free correction requires a (short) new argument. The target-sample part $\Pe_{n_{\text{\upshape T}}}\big[b(\Xvec\betavec) - \gvec^{*\top}\Xvec\betavec\big]$ converges in probability to $\E_{\text{\upshape T}}\big[b(\Xvec\betavec) - \gvec^{*\top}\Xvec\betavec\big]$ for each $\betavec$ by the weak law of large numbers, exactly as in the proof of Lemma~\ref{lem: empirical-scores}, with the required moments finite under $Q_{\text{\upshape T}}$ by the boundedness of $\Xvec$ and $\E_{\text{\upshape T}}\lVert\gvec^*\rVert^2 \stackrel{\text{(a)}}{\leq} \E_{\text{\upshape T}}\lVert\yvec\rVert^2 \stackrel{\text{(b)}}{=} \E_{\text{\upshape S}}\big[r^*(\Xvec)\lVert\yvec\rVert^2\big] \stackrel{\text{(c)}}{\leq} M\,\E_{\text{\upshape S}}\lVert\yvec\rVert^2 < \infty$, where (a) is the conditional Jensen inequality applied to $\gvec^*(\Xvec,\zvec) = \E[\yvec\mid\Xvec,\zvec]$, (b) is the change-of-measure identity \eqref{eq:change_of_measure}, and (c) uses $r^*(\Xvec) \leq M$ (the square-integrability of $\yvec$ holding by Assumption~\ref{ass:regularity}(iv)). For the correction term at the true nuisances, $\bm{\xi}_{n_{\text{\upshape S}}}(\gvec^*, r^*)$ is an i.i.d.\ average with mean zero (Lemma~\ref{lem: orthogonality dr}) and bounded summand variance,
$$
\E_{\text{\upshape S}}\left[\left\lVert r^*(\Xvec)\,\pivec \right\rVert^2\right]\,=\,\E_{\text{\upshape S}}\left[r^*(\Xvec)^2 \left\lVert \pivec \right\rVert^2\right]\,\stackrel{\text{(a)}}{\leq}\,M^2 \tilde{\sigma}^2 C^2\,<\,\infty,
$$
where (a) uses $r^*(\Xvec) \leq M$, $\lVert\Cov(\yvec\mid\Xvec,\zvec)\rVert \leq \tilde{\sigma}^2$, and $\lVert\Xvec\rVert \leq C$,
so $\bm{\xi}_{n_{\text{\upshape S}}}(\gvec^*, r^*) = O_P\big(n_{\text{\upshape S}}^{-1/2}\big) = o_P(1)$; consequently the linear term $\bm{\xi}_{n_{\text{\upshape S}}}(\gvec^*, r^*)^{\top}\betavec$ vanishes in probability for each fixed $\betavec \in \mB$, and $\bar{\ell}_{\text{\upshape DR}}(\gvec^*, r^*; \betavec) \to_P L(\betavec)$ for each $\betavec$. (At the estimated nuisances, Lemma~\ref{lem: orthogonality dr} adds only an $o_P\big(n_{\text{\upshape S}}^{-1/2}\big)$ perturbation to $\tauvec_{\text{\upshape DR}}$, so the corresponding statements hold there as well; we only need the true-nuisance version in what follows.) By Theorem 2.7 of \cite{newey1994large}, applied exactly as in the proof of Lemma~\ref{lem: empirical-scores} to the convex objective $\bar{\ell}_{\text{\upshape DR}}(\gvec^*, r^*; \cdot)$ with unique population minimizer $\betavec^*_{\text{\upshape T}}$, there is a random sequence $\{\check{\betavec}_n\}$ (the subscript $n$ indexing the joint asymptotic regime in which $n_{\text{\upshape S}}, n_{\text{\upshape T}} \to \infty$) that solves $\min_{\betavec \in \mB}\bar{\ell}_{\text{\upshape DR}}(\gvec^*, r^*; \betavec)$ and converges to $\betavec^*_{\text{\upshape T}}$ with probability one; in particular, $\bm{\Psi}_{\text{\upshape DR}}(\gvec^*, r^*; \check{\betavec}_n) = \bm{0}$.

The transformation between the $(\hat{\gvec}, \hat{r})$-score and the $(\gvec^*, r^*)$-score is identical in logic to \eqref{eqn:transform} and \eqref{eqn:transform-shift}: since $\bm{\Psi}_{\text{\upshape DR}}(\hat{\gvec}, \hat{r}; \betavec) - \bm{\Psi}_{\text{\upshape DR}}(\gvec^*, r^*; \betavec) = \tauvec_{\text{\upshape DR}}(\gvec^*, r^*) - \tauvec_{\text{\upshape DR}}(\hat{\gvec}, \hat{r})$ does not depend on $\betavec$, Lemma~\ref{lem: orthogonality dr} yields
\begin{align}
\label{eqn:transform-dr}
    \sqrt{n_{\text{\upshape S}}}\,\bm{\Psi}_{\text{\upshape DR}}(\hat{\gvec}, \hat{r}; \check{\betavec}_n)\,=\,\sqrt{n_{\text{\upshape S}}}\,\bm{\Psi}_{\text{\upshape DR}}(\gvec^*, r^*; \check{\betavec}_n) + \sqrt{n_{\text{\upshape S}}}\left(\tauvec_{\text{\upshape DR}}(\gvec^*, r^*) - \tauvec_{\text{\upshape DR}}(\hat{\gvec}, \hat{r})\right)\,=\,o_P(1).
\end{align}
Hence $\inf_{\betavec \in \mB}\big\lVert \bm{\Psi}_{\text{\upshape DR}}(\hat{\gvec}, \hat{r}; \betavec)\big\rVert = o_P\big(1/\sqrt{n_{\text{\upshape S}}}\big)$, which implies $\bm{\Psi}_{\text{\upshape DR}}\big(\hat{\gvec}, \hat{r}; \hat{\betavec}_{\text{\upshape DR}}\big) = o_P\big(1/\sqrt{n_{\text{\upshape S}}}\big)$ by the definition of $\hat{\betavec}_{\text{\upshape DR}}$. Using the argument in \eqref{eqn:transform-dr} again, $\bm{\Psi}_{\text{\upshape DR}}\big(\gvec^*, r^*; \hat{\betavec}_{\text{\upshape DR}}\big) = o_P\big(1/\sqrt{n_{\text{\upshape S}}}\big)$. This concludes the proof of the lemma. \hfill $\blacksquare$

The final lemma is the counterpart of Lemma~\ref{lem: consistency}.

\begin{lemma}[\normalfont \textbf{Consistency of the Doubly Robust Estimator}]
\label{lem: consistency dr} $\hat{\betavec}_{\text{\upshape DR}} \to_P \betavec^*_{\text{\upshape T}}$.
\end{lemma}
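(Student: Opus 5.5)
The plan is to mirror the proof of Lemma~\ref{lem: consistency}, and its known-ratio counterpart Lemma~\ref{lem: consistency shift}, with $\bar{\ell}$ replaced by $\bar{\ell}_{\text{\upshape DR}}(\gvec^*, r^*; \cdot)$, $\Psivec$ by $\bm{\Psi}_{\text{\upshape DR}}(\gvec^*, r^*; \cdot)$, and $\betavec^*$ by $\betavec^*_{\text{\upshape T}}$. The structural fact (a) does most of the work: the bias-correction term $\bm{\xi}_{n_{\text{\upshape S}}}(\gvec^*, r^*)$ is a $\betavec$-free vector. The objective is therefore a target-sample convex M-objective plus a linear term whose coefficient is $O_P(n_{\text{\upshape S}}^{-1/2})$, as shown in the proof of Lemma~\ref{lem: empirical-scores dr}.

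\textbf{Step 1 (uniform convergence).} Fix $\epsilon_1<\epsilon_2$ with $\overline{B(\betavec^*_{\text{\upshape T}},\epsilon_2)}\subset\mB$. We have already established that $\bar{\ell}_{\text{\upshape DR}}(\gvec^*, r^*;\betavec)\to_P L(\betavec)$ pointwise, and that $\bar{\ell}_{\text{\upshape DR}}$ is convex. The convexity lemma of \cite{pollard1991asymptotics} then upgrades this to uniform convergence in probability on the compact ball. One could equally treat the pieces separately: the target-sample part converges uniformly, and the linear part is bounded by $\lVert\bm{\xi}_{n_{\text{\upshape S}}}\rVert\,\sup\lVert\betavec\rVert=o_P(1)$.

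\textbf{Step 2 (quadratic growth and gradient bound).} A Taylor expansion of $L$ around $\betavec^*_{\text{\upshape T}}$, using $\nabla L(\betavec^*_{\text{\upshape T}})=0$ from Lemma~\ref{lem: orthogonality dr}, gives $L(\betavec)\ge L(\betavec^*_{\text{\upshape T}})+\tfrac{c_{\text{\upshape DR}}}{2}\lVert\betavec-\betavec^*_{\text{\upshape T}}\rVert^2$. Here
$$
c_{\text{\upshape DR}}=\inf_{\Xvec\in\Breve{\mX},\,\betavec\in\overline{B(\betavec^*_{\text{\upshape T}},\epsilon_2)}}\lambda_{\min}\big(\nabla^2 b(\Xvec\betavec)\big)\cdot\lambda_{\min}\big(\E_{\text{\upshape T}}[\Xvec^\top\Xvec]\big)>0,
$$
with the first factor positive by the compactness and contradiction argument and the second by Assumption~\ref{ass:covariate_shift}(iv). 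Combining this with Step 1 and the subgradient inequality yields, with probability approaching one, the analogue of \eqref{eqn:gradient-bound} on the annulus: the directional component of $\bm{\Psi}_{\text{\upshape DR}}(\gvec^*, r^*;\betavec)$ along $(\betavec-\betavec^*_{\text{\upshape T}})/\lVert\betavec-\betavec^*_{\text{\upshape T}}\rVert$ is at least $c_{\text{\upshape DR}}\epsilon_1^2/(4\epsilon_2)$.

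\textbf{Step 3 (extension outside the ball).} The ray argument extends the bound to all of $\mB\setminus B(\betavec^*_{\text{\upshape T}},\epsilon_1)$. It uses the integral form of the mean value theorem, the change of variables, and $\nabla^2_{\betavec}\bar{\ell}_{\text{\upshape DR}}=\Pe_{n_{\text{\upshape T}}}[\Xvec^\top\nabla^2 b\,\Xvec]\succeq0$, and it carries over verbatim. This is because the $\betavec$-free shift $\bm{\xi}_{n_{\text{\upshape S}}}$ cancels in differences $\bm{\Psi}_{\text{\upshape DR}}(\betavec)-\bm{\Psi}_{\text{\upshape DR}}(\betavec^*_{\text{\upshape T}})$, so only target-sample Hessians appear. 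The result is that $\lVert\bm{\Psi}_{\text{\upshape DR}}(\gvec^*,r^*;\betavec)\rVert\ge c_{\text{\upshape DR}}\epsilon_1^2/(4\epsilon_2)$ uniformly on that set, with probability approaching one.

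\textbf{Step 4 (conclusion).} Lemma~\ref{lem: empirical-scores dr} gives $\bm{\Psi}_{\text{\upshape DR}}(\gvec^*,r^*;\hat{\betavec}_{\text{\upshape DR}})=o_P(n_{\text{\upshape S}}^{-1/2})=o_P(1)$. Together with Step 3, this forces $\hat{\betavec}_{\text{\upshape DR}}\in B(\betavec^*_{\text{\upshape T}},\epsilon_1)$ with probability approaching one, which proves the lemma.

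The main obstacle is the bookkeeping in Step 1. The linear correction must not spoil the uniform approximation, and because the joint regime is indexed by both $n_{\text{\upshape S}}$ and $n_{\text{\upshape T}}$, the "with probability approaching one" statements must hold as both sizes grow. I expect this to go through because $\bm{\xi}_{n_{\text{\upshape S}}}$ is bounded in probability uniformly in $\betavec$.
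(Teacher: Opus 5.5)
Your proposal is correct and follows essentially the same route as the paper: you transport the proof of Lemma~\ref{lem: consistency} with the target-sample substitutions, and you treat the $\betavec$-free correction $\bm{\xi}_{n_{\text{\upshape S}}}(\gvec^*, r^*) = O_P(n_{\text{\upshape S}}^{-1/2})$ as a uniformly negligible linear perturbation in the uniform-convergence step. You also use the same constant $c_{\text{\upshape DR}}$ built from $\E_{\text{\upshape T}}[\Xvec^\top\Xvec]$ for quadratic growth, extend the gradient bound off the annulus using only $\Pe_{n_{\text{\upshape T}}}[\Xvec^\top\nabla^2 b(\Xvec\betavec)\Xvec]\succeq 0$, and conclude via Lemma~\ref{lem: empirical-scores dr}, exactly as the paper does (your alternative of applying Pollard's convexity lemma directly to the full, still convex, objective $\bar{\ell}_{\text{\upshape DR}}(\gvec^*, r^*;\cdot)$ is equally valid given the pointwise convergence already shown there).
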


\textit{Proof of Lemma~\ref{lem: consistency dr}.} The proof of Lemma~\ref{lem: consistency} carries over with $\bar{\ell}$, $\Psivec$, $\Pe_n$, and $\betavec^*$ replaced by $\bar{\ell}_{\text{\upshape DR}}(\gvec^*, r^*; \cdot)$, $\bm{\Psi}_{\text{\upshape DR}}(\gvec^*, r^*; \cdot)$, $\Pe_{n_{\text{\upshape T}}}$, and $\betavec^*_{\text{\upshape T}}$; we verify the three places where the two-sample structure enters and cite the rest. Fix arbitrary $\epsilon_1 < \epsilon_2$ such that $\overline{B(\betavec^*_{\text{\upshape T}}, \epsilon_2)} \subset \mB$. First, the uniform convergence step: by \cite{pollard1991asymptotics}, the convexity of the target-sample objective $\Pe_{n_{\text{\upshape T}}}\big[b(\Xvec\betavec) - \gvec^*(\Xvec,\zvec)^{\top}\Xvec\betavec\big]$ (established in the proof of Lemma~\ref{lem: empirical-scores dr}) and the compactness of $\overline{B(\betavec^*_{\text{\upshape T}}, \epsilon_2)}$ imply that it converges in probability, uniformly on $\overline{B(\betavec^*_{\text{\upshape T}}, \epsilon_2)}$, to $L(\betavec) = \E_{\text{\upshape T}}\big[b(\Xvec\betavec) - \gvec^*(\Xvec,\zvec)^{\top}\Xvec\betavec\big]$, exactly as in the proof of Lemma~\ref{lem: consistency}, now under $Q_{\text{\upshape T}}$ and with the moment bound on $\gvec^*$ from the proof of Lemma~\ref{lem: empirical-scores dr}. The full empirical objective $\bar{\ell}_{\text{\upshape DR}}(\gvec^*, r^*; \betavec)$ differs from this target-sample objective by the $\betavec$-free linear term $\bm{\xi}_{n_{\text{\upshape S}}}(\gvec^*, r^*)^{\top}\betavec$, which is bounded in norm by $\lVert \bm{\xi}_{n_{\text{\upshape S}}}(\gvec^*, r^*)\rVert\big(\lVert\betavec^*_{\text{\upshape T}}\rVert + \epsilon_2\big) = o_P(1)$ uniformly over the ball since $\bm{\xi}_{n_{\text{\upshape S}}}(\gvec^*, r^*) = O_P\big(n_{\text{\upshape S}}^{-1/2}\big)$; hence $\bar{\ell}_{\text{\upshape DR}}(\gvec^*, r^*; \cdot)$ also converges to $L$ uniformly. This $\betavec$-free perturbation is the only new ingredient in the uniform-convergence step. Second, the quadratic-growth lower bound (inequality (a) in the proof of Lemma~\ref{lem: consistency}): for $\betavec$ in the annulus and the corresponding intermediate point $\tilde{\betavec}$,
$$
(\betavec - \betavec^*_{\text{\upshape T}})^{\top}\, \E_{\text{\upshape T}}\left[\Xvec^{\top}\nabla^2 b\big(\Xvec\tilde{\betavec}\big)\Xvec\right](\betavec - \betavec^*_{\text{\upshape T}})\,\geq\,c_{\text{\upshape DR}}\left\lVert \betavec - \betavec^*_{\text{\upshape T}}\right\rVert^2,
$$
with $c_{\text{\upshape DR}} := \inf\big\{\lambda_{\min}\big(\nabla^2 b(\Xvec\betavec)\big) : \Xvec \in \Breve{\mX},\, \betavec \in \overline{B(\betavec^*_{\text{\upshape T}}, \epsilon_2)}\big\} \cdot \lambda_{\min}\big(\E_{\text{\upshape T}}[\Xvec^{\top}\Xvec]\big) > 0$ by the compactness/contradiction argument cited in the proof of Lemma~\ref{lem: consistency} and Assumption~\ref{ass:covariate_shift}(iv); as in the proof of Lemma~\ref{lem: consistency shift}, the constant runs through the target second-moment matrix $\E_{\text{\upshape T}}[\Xvec^{\top}\Xvec]$. Third, the extension from the annulus to $\mB \setminus \overline{B(\betavec^*_{\text{\upshape T}}, \epsilon_2)}$ via the integral form of the intermediate value theorem uses only the positive semidefiniteness of the empirical Hessian, which here is $\Pe_{n_{\text{\upshape T}}}\big[\Xvec^{\top}\nabla^2 b(\Xvec\betavec)\Xvec\big] \succeq 0$---unaffected by the linear correction term (fact (a)). All remaining steps---the subgradient inequality, the gradient lower bound \eqref{eqn:gradient-bound}, and the conclusion---carry over verbatim with the stated substitutions, the conclusion now invoking $\big\lVert \bm{\Psi}_{\text{\upshape DR}}\big(\gvec^*, r^*; \hat{\betavec}_{\text{\upshape DR}}\big)\big\rVert = o_P\big(1/\sqrt{n_{\text{\upshape S}}}\big)$ from Lemma~\ref{lem: empirical-scores dr}. This concludes the proof of the lemma. \hfill $\blacksquare$

With these three lemmas, the completion of the proof follows the Donsker/Taylor step in Section~\ref{apdx: proof of normality}; we indicate the substitutions and write out in full the two-sample scaling and the two-sample central limit theorem, which are new. Write $\mathbf{f}_{\betavec}(\Xvec) := \Xvec^{\top}\nabla b(\Xvec\betavec)$, the only $\betavec$-dependent part of the score (fact (a)). The local Lipschitz bound in the proof of Theorem~\ref{thm:normality} applies verbatim: for $\betavec_1, \betavec_2 \in B(\betavec^*_{\text{\upshape T}}, \epsilon)$, $\lVert \mathbf{f}_{\betavec_1}(\Xvec) - \mathbf{f}_{\betavec_2}(\Xvec)\rVert \leq C_1 \lVert\Xvec\rVert \lVert\betavec_1 - \betavec_2\rVert$ with $C_1 := \sup_{\betavec \in \overline{B(\betavec^*_{\text{\upshape T}}, \epsilon)},\, \Xvec \in \Breve{\mX}}\lVert \nabla^2 b(\Xvec\betavec)\rVert < \infty$; unlike the weighted score of Section~\ref{apdx: proof normality shift}, no factor $M$ arises, because the target-sample term of \eqref{eq:score_dr_body} carries no weight. Since $\E_{\text{\upshape T}}\big[C_1^2\lVert\Xvec\rVert^2\big] < \infty$ by the boundedness of $\Xvec$ under the target population, Example 19.7 and Theorem 19.5 in \cite{van2000asymptotic} imply that $\left\{\mathbf{f}_{\betavec} : \betavec \in B(\betavec^*_{\text{\upshape T}}, \epsilon)\right\}$ is a Donsker class under $Q_{\text{\upshape T}}$. By Lemma~\ref{lem: consistency dr}, $\hat{\betavec}_{\text{\upshape DR}} \to_P \betavec^*_{\text{\upshape T}}$, so Theorem 19.9 in \cite{van2000asymptotic} gives $\mathbb{G}_{n_{\text{\upshape T}}}\mathbf{f}_{\hat{\betavec}_{\text{\upshape DR}}} - \mathbb{G}_{n_{\text{\upshape T}}}\mathbf{f}_{\betavec^*_{\text{\upshape T}}} \to_P 0$. The new two-sample wrinkle is the scaling: the asymptotic equicontinuity statement is in the $\sqrt{n_{\text{\upshape T}}}$ scale, while the theorem is normalized by $\sqrt{n_{\text{\upshape S}}}$, so we multiply by $\sqrt{n_{\text{\upshape S}}/n_{\text{\upshape T}}}$, which converges to $\sqrt{\gamma} < \infty$; hence
$$
\sqrt{n_{\text{\upshape S}}}\left(\Pe_{n_{\text{\upshape T}}} - \E_{\text{\upshape T}}\right)\left(\mathbf{f}_{\hat{\betavec}_{\text{\upshape DR}}} - \mathbf{f}_{\betavec^*_{\text{\upshape T}}}\right)\,=\,\sqrt{\frac{n_{\text{\upshape S}}}{n_{\text{\upshape T}}}}\left(\mathbb{G}_{n_{\text{\upshape T}}}\mathbf{f}_{\hat{\betavec}_{\text{\upshape DR}}} - \mathbb{G}_{n_{\text{\upshape T}}}\mathbf{f}_{\betavec^*_{\text{\upshape T}}}\right)\,=\,o_P(1).
$$
Define the population score map $\bar{\bm{\Psi}}(\betavec) := \E_{\text{\upshape T}}\big[\Xvec^{\top}\big(\nabla b(\Xvec\betavec) - \gvec^*(\Xvec,\zvec)\big)\big] = \nabla L(\betavec)$; note that differences of $\bm{\Psi}_{\text{\upshape DR}}(\gvec^*, r^*; \cdot)$ across two values of $\betavec$ equal $\Pe_{n_{\text{\upshape T}}}$-differences of $\mathbf{f}_{\betavec}$, since the $\betavec$-free parts cancel. Thus, exactly as in Section~\ref{apdx: proof of normality},
\begin{align*}
\sqrt{n_{\text{\upshape S}}}\left(\bar{\bm{\Psi}}\big(\hat{\betavec}_{\text{\upshape DR}}\big) - \bar{\bm{\Psi}}\big(\betavec^*_{\text{\upshape T}}\big)\right)\,&=\,-\sqrt{\frac{n_{\text{\upshape S}}}{n_{\text{\upshape T}}}}\left(\mathbb{G}_{n_{\text{\upshape T}}}\mathbf{f}_{\hat{\betavec}_{\text{\upshape DR}}} - \mathbb{G}_{n_{\text{\upshape T}}}\mathbf{f}_{\betavec^*_{\text{\upshape T}}}\right) + \sqrt{n_{\text{\upshape S}}}\,\bm{\Psi}_{\text{\upshape DR}}\big(\gvec^*, r^*; \hat{\betavec}_{\text{\upshape DR}}\big) - \sqrt{n_{\text{\upshape S}}}\,\bm{\Psi}_{\text{\upshape DR}}\big(\gvec^*, r^*; \betavec^*_{\text{\upshape T}}\big)\\
\,&=\,-\sqrt{n_{\text{\upshape S}}}\,\bm{\Psi}_{\text{\upshape DR}}\big(\gvec^*, r^*; \betavec^*_{\text{\upshape T}}\big) + o_P(1),
\end{align*}
where the last equality follows from Lemma~\ref{lem: empirical-scores dr}. Applying the Taylor expansion to the left-hand side, by continuous mapping, the Jacobian of $\bar{\bm{\Psi}}$ at $\betavec^*_{\text{\upshape T}}$ is $\Jvec_{\text{\upshape T}} = \E_{\text{\upshape T}}\big[\Xvec^{\top}\nabla^2 b\big(\Xvec\betavec^*_{\text{\upshape T}}\big)\Xvec\big]$, and we obtain the counterpart of \eqref{eqn:expansion}:
\begin{align}
\label{eqn:expansion-dr}
    \sqrt{n_{\text{\upshape S}}}\left[\Jvec_{\text{\upshape T}}\big(\hat{\betavec}_{\text{\upshape DR}} - \betavec^*_{\text{\upshape T}}\big) + o_P(1)\left\lVert \hat{\betavec}_{\text{\upshape DR}} - \betavec^*_{\text{\upshape T}}\right\rVert\right]\,=\,-\sqrt{n_{\text{\upshape S}}}\,\bm{\Psi}_{\text{\upshape DR}}\big(\gvec^*, r^*; \betavec^*_{\text{\upshape T}}\big) + o_P(1)\,=\,O_P(1).
\end{align}
Here $\Jvec_{\text{\upshape T}} = \nabla^2 L\big(\betavec^*_{\text{\upshape T}}\big)$ is invertible by the argument in the proof of Lemma~\ref{lem: empirical-scores dr}, and the $O_P(1)$ claim follows from the two-sample central limit theorem that we now establish; it is the final new step.

By the definitions of $\zetavec_{\text{\upshape T}}$ and $\pivec$ and fact (b),
\begin{align}
\label{eqn:clt-dr}
    \sqrt{n_{\text{\upshape S}}}\,\bm{\Psi}_{\text{\upshape DR}}\big(\gvec^*, r^*; \betavec^*_{\text{\upshape T}}\big)
\,=\,\sqrt{\frac{n_{\text{\upshape S}}}{n_{\text{\upshape T}}}}\cdot\sqrt{n_{\text{\upshape T}}}\,\Pe_{n_{\text{\upshape T}}}\left[\zetavec_{\text{\upshape T}}\right] \;+\; \sqrt{n_{\text{\upshape S}}}\,\Pe_{n_{\text{\upshape S}}}\left[r^*(\Xvec)\,\pivec\right],
\end{align}
a sum of two \emph{independent} terms, each an i.i.d.\ average centered at zero by Lemma~\ref{lem: orthogonality dr}---this is where the separate centering of the two components is indispensable. For the source term, the summands have finite covariance matrix
$$
\E_{\text{\upshape S}}\left[r^*(\Xvec)^2\,\pivec\pivec^{\top}\right],
$$
which is the $\rho = 1$ specialization of the last term of \eqref{eqn:omega-w-decomposition} and is finite since $r^*(\Xvec) \leq M$, $\lVert\Xvec\rVert \leq C$, and $\lVert\Cov(\yvec \mid \Xvec, \zvec)\rVert \leq \tilde{\sigma}^2$; hence by the central limit theorem,
$$
\sqrt{n_{\text{\upshape S}}}\,\Pe_{n_{\text{\upshape S}}}\left[r^*(\Xvec)\,\pivec\right]\,\rightsquigarrow\,N\left(\bm{0},\, \E_{\text{\upshape S}}\left[r^*(\Xvec)^2\,\pivec\pivec^{\top}\right]\right).
$$
For the target term, $\E_{\text{\upshape T}}\big[\zetavec_{\text{\upshape T}}\zetavec_{\text{\upshape T}}^{\top}\big]$ is finite---$\lVert\Xvec\rVert \leq C$ under the target population and $\E_{\text{\upshape T}}\lVert\gvec^*\rVert^2 \leq M\,\E_{\text{\upshape S}}\lVert\yvec\rVert^2 < \infty$ as noted in the proof of Lemma~\ref{lem: empirical-scores dr}---so $\sqrt{n_{\text{\upshape T}}}\,\Pe_{n_{\text{\upshape T}}}\left[\zetavec_{\text{\upshape T}}\right] \rightsquigarrow N\big(\bm{0}, \E_{\text{\upshape T}}\big[\zetavec_{\text{\upshape T}}\zetavec_{\text{\upshape T}}^{\top}\big]\big)$ by the central limit theorem under $Q_{\text{\upshape T}}$. If $\gamma > 0$, then $\sqrt{n_{\text{\upshape S}}/n_{\text{\upshape T}}} \to \sqrt{\gamma}$ and, by Slutsky's theorem, the first term of \eqref{eqn:clt-dr} converges weakly to $N\big(\bm{0}, \gamma\,\E_{\text{\upshape T}}\big[\zetavec_{\text{\upshape T}}\zetavec_{\text{\upshape T}}^{\top}\big]\big)$. If $\gamma = 0$, the first term is $\sqrt{n_{\text{\upshape S}}/n_{\text{\upshape T}}}\cdot O_P(1) \to_P 0$, i.e., it degenerates to zero in probability; this case is still covered, the limit below holding with the first term of $\bm{\Omega}_{\gamma}$ equal to zero. Because the two terms of \eqref{eqn:clt-dr} are independent for every $(n_{\text{\upshape S}}, n_{\text{\upshape T}})$, they converge jointly to independent limits, and their sum satisfies
$$
\sqrt{n_{\text{\upshape S}}}\,\bm{\Psi}_{\text{\upshape DR}}\big(\gvec^*, r^*; \betavec^*_{\text{\upshape T}}\big)\,\rightsquigarrow\,N\left(\bm{0},\, \gamma\,\E_{\text{\upshape T}}\left[\zetavec_{\text{\upshape T}}\zetavec_{\text{\upshape T}}^{\top}\right] + \E_{\text{\upshape S}}\left[r^*(\Xvec)^2\,\pivec\pivec^{\top}\right]\right)\,=\,N\big(\bm{0}, \bm{\Omega}_{\gamma}\big),
$$
for all $\gamma \in [0, \infty)$; in particular it is $O_P(1)$, as claimed in \eqref{eqn:expansion-dr}.

Since $\Jvec_{\text{\upshape T}}$ is invertible, \eqref{eqn:expansion-dr} implies $\sqrt{n_{\text{\upshape S}}}\big\lVert \hat{\betavec}_{\text{\upshape DR}} - \betavec^*_{\text{\upshape T}}\big\rVert = O_P(1)$, and applying \eqref{eqn:expansion-dr} again,
$$
\sqrt{n_{\text{\upshape S}}}\big(\hat{\betavec}_{\text{\upshape DR}} - \betavec^*_{\text{\upshape T}}\big)\,=\,-\Jvec_{\text{\upshape T}}^{-1}\sqrt{n_{\text{\upshape S}}}\,\bm{\Psi}_{\text{\upshape DR}}\big(\gvec^*, r^*; \betavec^*_{\text{\upshape T}}\big) + o_P(1)\,\rightsquigarrow\,N\big(\bm{0},\, \Jvec_{\text{\upshape T}}^{-1}\bm{\Omega}_{\gamma}\Jvec_{\text{\upshape T}}^{-1}\big)\,=\,N\big(\bm{0}, \Sigmavec_{\text{\upshape DR}}\big),
$$
as desired. This completes the proof of Theorem~\ref{thm:normality_dr_body}.

\end{APPENDICES}

\end{document}